  \let\oldparagraph\paragraph
  \renewcommand{\paragraph}{
    \@ifstar
      \xxxParagraphStar
      \xxxParagraphNoStar
  }
  \newcommand{\xxxParagraphStar}[1]{\oldparagraph*{#1}\mbox{}}
  \newcommand{\xxxParagraphNoStar}[1]{\oldparagraph{#1}\mbox{}}
  \let\oldsubparagraph\subparagraph
  \renewcommand{\subparagraph}{
    \@ifstar
      \xxxSubParagraphStar
      \xxxSubParagraphNoStar
  }
  \newcommand{\xxxSubParagraphStar}[1]{\oldsubparagraph*{#1}\mbox{}}
  \newcommand{\xxxSubParagraphNoStar}[1]{\oldsubparagraph{#1}\mbox{}}
\patchcmd\longtable{\par}{\if@noskipsec\mbox{}\fi\par}{}{}
\def\maxwidth{\ifdim\Gin@nat@width>\linewidth\linewidth\else\Gin@nat@width\fi}
\def\maxheight{\ifdim\Gin@nat@height>\textheight\textheight\else\Gin@nat@height\fi}
\def\fps@figure{htbp}
\theoremstyle{definition}
\newtheorem{theorem}{Theorem}
\newtheorem{assumption}{Assumption}
\newtheorem{algorithm}{Algorithm}
\newtheorem{proposition}{Proposition}
\newtheorem{remark}{Remark}
\newtheorem{corollary}{Corollary}
\newtheorem{lemma}{Lemma}
\newtheorem{example}{Example}
\newcommand{\anon}{1}
\begin{document}

\def\spacingset#1{\renewcommand{\baselinestretch}%
{#1}\small\normalsize} \spacingset{1}


\if1\anon
{
  \title{\bf Nested Nonparametric Instrumental Variable Regression}
  \date{Orginal draft: December 2021. This draft: May 2025.}
  \author{Isaac Meza \\
    Department of Economics, Harvard University\\
    \and 
    Rahul Singh\thanks{
    We thank Alberto Abadie, Victor Chernozhukov, Raj Chetty, Avi Feller, Anna Mikusheva, Whitney Newey, James Robins, Andrea Rotnitzky, Vasilis Syrgkanis, and Suhas Vijaykumar for helpful comments. We thank Miriam Nelson and Moses Stewart for excellent research assistance. Rahul Singh thanks the Jerry Hausman Dissertation Fellowship. Part of this work was done while Rahul Singh visited the Simons Institute for the Theory of Computing. }\hspace{.2cm}\\ 
    Society of Fellows and Department of Economics, Harvard University}
  \maketitle
} \fi

\if0\anon
{
  \bigskip
  \bigskip
  \bigskip
  \begin{center}
    {\LARGE\bf Nested Nonparametric Instrumental Variable Regression}
\end{center}
  \medskip
} \fi

\bigskip

\begin{abstract}
  Several causal parameters in short panel data models are functionals of a nested nonparametric instrumental variable regression (nested NPIV). 
Recent examples include mediated, time varying, and long term treatment effects identified using proxy variables.
In econometrics, examples arise in triangular simultaneous equations and hedonic price systems.
However, it appears that explicit mean square convergence rates for nested NPIV are unknown, preventing inference on some of these parameters with generic machine learning. 
A major challenge is compounding ill posedness due to the nested inverse problems. 
To limit how ill posedness compounds, we introduce two techniques: relative well posedness, and multiple robustness to ill posedness.
With these techniques, we provide explicit mean square rates for nested NPIV and efficient inference for recently identified causal parameters. 
Our nonasymptotic analysis accommodates neural networks, random forests, and reproducing kernel Hilbert spaces. It extends to causal functions, e.g. heterogeneous long term treatment effects.
\end{abstract}

\noindent%
{\it Keywords:} heterogeneous treatment effect, ill posed inverse problem, proxy variable, semiparametric efficiency, short panel data
\vfill

\newpage
\spacingset{1.8} 

\section{Introduction and related work}\label{sec:intro}

Mediated, time varying, and long term treatment effects are causal parameters defined in short panel data models. In the presence of unobserved confounding, e.g. latent ability, several recent works have proposed nonparametric identification strategies for these causal parameters using auxiliary variables called proxies that satisfy relevance and exclusion conditions \citep{miao2018identifying,deaner2018nonparametric,dukes2023proximal,ying2023proximal,ghassami2022combining,imbens2022long}. Across settings, each causal parameter $\theta_0$ turns out to be a scalar summary of a \textit{nested} nonparametric instrumental variable regression (nested NPIV) function $h_0$. 
Structural parameters in several econometric models have a similar form, e.g. parameters in triangular simultaneous equations and hedonic price systems \citep{hausman1977errors,wooldridge1996estimating,newey1999nonparametric,ekeland2004identification,ai2007estimation}.

A nested NPIV function $h_0$ is a solution to an inverse problem of the form $\E\{h(B)|C\}=\E\{g_0(A)|C\}$, where $g_0$ is itself an NPIV function that solves an inverse problem. 
For example, $g_0$ is a solution to $\E\{g(A)|C'\}=\E(Y|C')$. In general, $C\not \subset C'$ and $C'\not \subset C$. 
Solving each equation requires inverting a conditional expectation operator, which is ill posed.
The nested NPIV $h_0$ is more challenging than the NPIV $g_0$ because ill posedness of the nested inverse problems may compound, in potentially complex ways.
Our research question is how to derive mean square rates for nested NPIV, and how to conduct inference on its functionals, with machine learning. 

\textbf{Contributions}. Our primary contribution is to derive mean square rates for the nested NPIV function $h_0$ over general function spaces, e.g. neural networks, random forests, and reproducing kernel Hilbert spaces (RKHSs). 
It appears that previous mean square consistency results for the nested NPIV function $h_0$ are specific to series estimation, and do not give an explicit mean square rate of convergence \citep{ai2007estimation}; they could arbitrarily slow. 
Stronger results, namely mean square rates, are necessary to conduct semiparametric inference for the causal parameter $\theta_0$ using machine learning. 

We derive increasingly strong rate results under increasingly strong assumptions. For mean square rates, we assume that (i) the function space $\mathcal{H}$ used in estimation is not too complex, satisfying a critical radius condition that is standard in the $M$ estimation literature; and (ii) $h_0$ is smooth, satisfying a source condition that is standard in the NPIV literature. For faster mean square rates, we formulate what appears to be a new condition: (iii) the nested inverse problems have a well behaved \textit{relative} measure of ill posedness.

As a secondary contribution, we translate our nested NPIV mean square convergence rates into guarantees for causal inference in short panel models, with sharper dependence on ill posedness.
Similar to previous works on targeted and debiased machine learning, we combine $(\hat{h},\hat{g})$ and their dual analogues into an estimator $\hat{\theta}$ \citep{zheng2011cross,chernozhukov2018original,chernozhukov2016locally}. Unlike previous works, we prove $\hat{\theta}$ 
 is \textit{multiple} robust to ill posedness: 
 it tolerates moderate ill posedness of multiple inverse problems, as long as other inverse problems are mildly ill posed, generalizing techniques previously developed in cross sectional models  \citep{chernozhukov2021simple}.

Our results apply to not only causal scalars but also causal functions in panel settings, e.g. heterogeneous long term treatment effects. Our class of causal parameters includes several for which machine learning estimation and inference were not previously given; see Section~\ref{sec:examples}.
%

Our techniques uncover new insights in economic data. We extend the program evaluation of the US Job Corps, which randomly assigned eligibility for job training. A previous, parametric approach corrects for latent motivation with proxy variables, and finds a zero or imprecise change in arrests directly due to job training.\footnote{This direct effect is the component of the total effect that is not mediated by employment.} Our RKHS approach finds a small and precise decrease in arrests directly due to job training.  We also extend the program evaluation of the Tennessee Student Teacher Achievement Ratio experiment (Project STAR), which randomly assigned kindergarten students to small or large class sizes. We document heterogeneity in the long term effects of enrollment in a small class, with the strongest long term effects for students with the lowest prior scores. 


\textbf{Related work.} We contribute new estimation and inference results to the literature on simultaneous equation models with different instruments for different equations \citep{wooldridge1996estimating,ai2007estimation}.  Much early work focused on parametric three stage least squares \citep{amemiya1977maximum} or  optimal instruments when $C'\subset C$ \citep{chamberlain1992comment,brown1998efficient,ai2012semiparametric}. We study a version of the problem motivated by short panel proxy models: an NPIV $g_0$ with instrument $C'$ enters the conditional moment for the nested NPIV $h_0$ with instrument $C$, where possibly $C\not \subset C'$ and $C'\not \subset C$ \citep{dukes2023proximal,ying2023proximal,ghassami2022combining,imbens2022long}. Its essential feature is that ill posedness compounds. This key challenge is the reason why prior works on proxies define the nested NPIV, but pose as a question how to derive its mean square rate.
The key challenge is assumed away in panel models with Markov or linear factor structure \citep{deaner2018nonparametric,imbens2021controlling}.
Compared to \cite{ai2007estimation}'s series analysis, we study a different machine learning estimator and provide different main results: explicit mean square rates, and inference beyond Donsker spaces.

Our critical radius and source assumptions generalize those of classical NPIV  \citep{newey2003instrumental,ai2003efficient,hall2005nonparametric,blundell2007semi,darolles2011nonparametric,chen2011rate,chen2012estimation,santos2012inference,severini2012efficiency}, while our relative well posedness assumption appears to be new. 
Similar to many NPIV papers, we use the source condition to control the bias of Tikhonov regularization \citep{darolles2011nonparametric,hall2005nonparametric,horowitz2005nonparametric,carrasco2007linear, chen2012estimation,gagliardini2012tikhonov,singh2019kernel}.

Whereas we provide adversarial estimation and inference results for nested NPIV and panel models, previous works provide adversarial estimation and inference results for NPIV and  cross sectional models. These earlier results do not face the key challenge of compounding ill posedness, and do not apply to our setting. For NPIV, several works prove projected mean square error \citep{dikkala2020minimax} and mean square error \citep{liao2020provably,bennett2023inference,bennett2023minimax,bennett2023source} rates under critical radius and source conditions.\footnote{See references therein for the vast literature on NPIV with non-adversarial machine learning, as well as adversarial approaches to NPIV without rate guarantees.} 
By contrast, we prove such rates for nested NPIV and introduce relative well posedness. 
%
For treatment effects in cross sections, several works prove Gaussian approximation \citep{hirshberg2021augmented,chernozhukov2020adversarial,kallus2021causal,ghassami2021minimax}. By contrast, we prove Gaussian approximation for treatment effects in short panels, which involve nested inverse problems.

An earlier draft circulated under a different title \citep{singh2021finite}. 
 This paper appears to be the first to:
(i) propose nested NPIV estimators over general machine function spaces with mean square rates;
(ii) define a notion of relative well posedness across inverse problems;
(iii) prove inference for machine learning estimators of proxy mediation analysis, heterogeneous long term effects, and other important causal parameters;
(iv) characterize multiple robustness to ill posedness.
None of (i), (ii), (iii), or (iv) appear to be contained in previous works.

 \textbf{Structure.} Section~\ref{sec:examples} demonstrates that several important causal parameters are functionals of a nested NPIV. 
 Section~\ref{sec:algo} proposes our machine learning procedure. 
 Section~\ref{sec:rate} proves new mean square rate guarantees, with and without the technique of relative well posedness. 
Section~\ref{sec:inference} translates the rates into semiparametric inference on causal parameters, with multiple robustness to ill posedness.
 Section~\ref{sec:experiments} presents real world applications: the proxy direct treatment effect of the US Job Corps, and heterogeneous long term treatment effects of Project STAR.
 
\section{Examples in statistics and econometrics}\label{sec:examples}

The nested NPIV is a recurring challenge in several areas of statistics and econometrics, which motivates our investigation. In this section, we list some salient examples from the modern causal inference literature and the traditional economic literature on supply and demand estimation. To the best of our knowledge, this paper provides the first explicit estimation and inference guarantees with machine learning for Examples~\ref{ex:direct},~\ref{ex:time}, and~\ref{ex:het} below.

The initial examples use proxy variables, i.e. auxiliary variables that satisfy relevance and exclusion conditions, to adjust for unobserved confounding in short panel models. Throughout these examples, we denote covariates, treatments, and outcomes by $(X,D,Y)$. Treatment and outcome proxies are $(Z,W)$. When applicable, the mediators are $M$. The potential outcomes are $Y^{(d)}$ or $Y^{(d,m)}$, and the potential mediators are $M^{(d)}$.

\begin{example}[Proxy mediation analysis]\label{ex:direct}
    Let $h_0$ be an outcome confounding bridge that solves the inverse problem $\E\{h(X,D,W)|X,Z,D=0\}=\E\{g_0(X,D=1,M,W)|X,Z,D=0\}$, where $g_0$ solves the inverse problem
$\E\{g(X,D,M,W)|X,Z,D=1,M\}=\E(Y|X,Z,D=1,M)$. Under proxy variable assumptions \citep{dukes2023proximal}, the pure direct effect $\textsc{direct}=\E[Y^{\{1,M^{(0)}\}}]$ is a functional of $h_0$, and $h_0$ is a term in its efficient influence function.\footnote{This definition follows \cite{robins1992identifiability}. See \cite{richardson2013single} for alternatives.}
\end{example}

Concretely, we may wish to measure the direct effect of US Job Corps job training $D$ on subsequent arrests $Y$ that is not mediated by employment $M$. Unobserved motivation $U$ may confound employment and arrests. Researchers have used the time spent with a Job Corps recruiter as an auxiliary variable $Z$ that reflects motivation yet does not directly cause employment or arrests. Researchers have used pre-training expectations as an auxiliary variable $W$ that reflects motivation yet cannot be caused by training or employment. We replicate and extend this empirical strategy in Section~\ref{sec:experiments}, relaxing previous parametric assumptions.

In the next example, we index variables by the time period when they are observed.

\begin{example}[Proxy time varying treatment effect]\label{ex:time}
     Let $h_0$ be an outcome confounding bridge that solves the inverse problem 
    $\E\{h(X_1,D_1,W_1,D_2=d_2)|X_1,Z_1,D_1=d_1\}=\E\{g_0(X_1,D_1,W_1,X_2,D_2=d_2,W_2)|X_1,Z_1,D_1=d_1\}$, 
    where $g_0$ solves the inverse problem 
    $$\E\{g(X_1,D_1,W_1,X_2,D_2,W_2)|X_1,Z_1,D_1=d_1,X_2,Z_2,D_2=d_2\}=\E(Y|X_1,Z_1,D_1=d_1,X_2,Z_2,D_2=d_2).$$
    Under proxy variable assumptions \citep{ying2023proximal}, the time varying treatment effect $\textsc{time}(d_1,d_2)=\E\{Y^{(d_1,d_2)}\}$ is a functional of $h_0$, and $h_0$ is a term in its efficient influence function.
\end{example}

When combining short term experimental data $(G=\EXP)$ with long term observational data $(G=\OBS)$ for long term causal inference, a similar statistical problem arises.

\begin{example}[Proxy long term effect]\label{ex:long}
    Let $h_0$ be an outcome confounding bridge that solves the problem 
    $
   h_0(X,D,G=\EXP)=\E\{g_0(X,D,M,W)|X,D,G=\EXP\},
    $
where      
    $g_0$ solves the inverse problem 
    $\E\{g(X,D,M,W)|X,D,Z,M,G=\OBS\}=\E(Y|X,D,Z,M,G=\OBS)$.
    Under proxy variable assumptions \citep{ghassami2021minimax,imbens2022long}, the long term effect 
    $
    \textsc{long}=\E\{Y^{(d)}|G=\OBS\}
    $ is a functional of $h_0$, and $h_0$ is a term in its efficient influence function.
\end{example}

Similar to Example~\ref{ex:direct}, Examples~\ref{ex:time} and~\ref{ex:long} use auxiliary variables to detect and correct for unobserved confounding. 
While Examples~\ref{ex:direct} and~\ref{ex:time} each have two inverse problems, Example~\ref{ex:long} has one. 
It is a special case of our framework, since a nonparametric regression is a special case of an NPIV.\footnote{In the notation of Section~\ref{sec:intro}, $B=C$ in Example~\ref{ex:long}, so $h(C)=\E\{h(C)|C\}=\E\{g_0(A)|C\}$.}
Example~\ref{ex:long} was proposed subsequently to the initial draft of this paper, highlighting how proxy panel models are an active area of research, generating new nested NPIV examples.\footnote{Some additional, subsequent examples include \cite{park2024proximal,bai2025proximal}. Neither work provides a mean square rate for the nested NPIV, highlighting the importance of our main result.}
 
The examples so far are causal scalars. Our results extend to causal functions, e.g. heterogeneous treatment effects. For clarity, we set aside proxy variables and let $X_1\in\R$ be the baseline covariate of interest used to index heterogeneity. Note that a nested  regression is another special case of a nested NPIV.\footnote{In the notation of Section~\ref{sec:intro}, $A=C'$ and $B=C$ in Example~\ref{ex:het}.}

\begin{example}[Heterogeneous long term effects]\label{ex:het}
Let $h_0$ be an outcome bridge that solves the problem 
    $h_0(X_1,X_2,D)=\E\{g_0(X_1,X_2,M,G=\OBS)|X_1,X_2,D,G=\EXP\}$, 
    where $g_0$ is the outcome mechanism 
    $g_0(X_1,X_2,M,G)=\E(Y|X_1,X_2,M,G).$
    Under surrogate variable assumptions \citep{athey2019surrogate}, the heterogeneous long term effects $\textsc{long}(x_1)=\E\{Y^{(d)}|X_1=x_1\}=\lim_{\lambda\rightarrow 0} \E\{\ell_{\lambda}(X_1)Y^{(d)}\}$ are functionals of $\ell_{\lambda}h_0$, and $\ell_{\lambda} h_0$ is a term in their influence functions.
\end{example}

Example~\ref{ex:het} generalizes from average to heterogeneous effects by introducing 
a local weighting $\ell_{\lambda}(X_1)$ around the value $x_1$ with bandwidth $\lambda$. Formally, 
 $\ell_{\lambda}(X_1)=\omega^{-1} K\{(X_1-x_1)/\lambda\}$, where $K$ is a bounded symmetric kernel that integrates to one, and $\omega=\E[K\{(X_1-x_1)/\lambda\}]$ is its normalization. Our results complement those of \cite{kallus2018removing}, who study a different problem. Our results also apply to heterogeneous direct effects and heterogeneous time varying effects.

Concretely, we may wish to extrapolate long term effects of kindergarten class size $D$ on middle school test scores $Y$. The experimental group $(G=\EXP)$ are students in Project STAR, for whom we see elementary school test scores $M$ rather than middle school test scores $Y$. The observational group $(G=\EXP)$ are students in New York City public schools, for whom we see elementary and middle school test scores but not kindergarten class size. In Section~\ref{sec:experiments}, we ask whether the long term effects of kindergarten class size vary based on baseline aptitude $X_1$.
 
 By taking $\ell_{\lambda}(X_1)=1$, we provide estimation and inference for the average long term effect with machine learning. This modest yet practical contribution builds on the efficient influence function of \cite{chen2021semiparametric}.\footnote{Contemporaneously to \cite{singh2021finite}, \cite{chen2022semiparametric} provide a similar result for average long term  effects, but not for heterogeneous long term  effects.}

Finally, we turn to a classic example in economics. Consider the equations $Q=\textsc{demand}(P,W_Q)+U_Q$ and $P=\textsc{supply}(Q,W_P)+U_P$, where $Q$ is quantity sold, $P$ is price, $W_Q$ is consumer income, $W_P$ is input price, and $(U_Q,U_P)$ are unobservable demand and supply shocks \citep{matzkin2008identification}. A standard simultaneous equation model would consider $(W_Q,W_P)$ to be exogenous: $\E(U_Q|W_Q,W_P)=0$ and $\E(U_P|W_Q,W_P)=0$. 
A generalization that tolerates omitted variables is  $\E(U_Q|Z_Q,W_P)=0$ and $\E(U_P|W_Q,Z_P)=0$; in the terminology of econometrics, $Z_Q$ instruments for $W_Q$, and $Z_P$ instruments for $W_P$ \citep{hausman1977errors}.

\begin{example}[Simultaneous equations with omitted variables]\label{ex:supply}
    Let $h_0$ solve the inverse problem $\E\{h(P,Q,W_P, W_Q)|W_Q,Z_P\}=\E\{g_0(P,W_Q)|W_Q,Z_P\}$, where $g_0$ solves the inverse problem $\E\{g(P,W_Q)|Z_Q,W_P\}=\E(Q|Z_Q,W_P)$. Under instrumental variable assumptions \citep{hausman1977errors}, the demand curve $\textsc{demand}(P,W_Q)$ is $g_0(P,W_Q)$ and the supply curve $\textsc{supply}(Q,W_P)$ is $h_0(P,Q,W_P, W_Q)-\textsc{demand}(P,W_Q)+P$. Elasticities are their average derivatives.
\end{example}
\section{Nested NPIV estimators over general spaces}\label{sec:algo}

While there are several causal and structural models which give rise to nested NPIVs, they are all unified by the following problem: $h_0$ solves $\E\{h(B)|C\}=\E\{g_0(A)|C\}$, where $g_0$ solves $\E\{g(A)|C'\}=\E(Y|C')$, and in general $C\not \subset C'$ and $C'\not \subset C$.\footnote{In general, the solutions are not unique; our algorithms will target the minimal norms solutions.} 
Conditions for $h_0$ to exist are well documented in prior work, using Picard's criterion \cite[Theorem 15.18]{kress1989linear}. 
In this section, we define two natural estimators of the nested NPIV. Each estimator is defined over a general space $\mathcal{H}$, e.g. a neural network, random forest, or RKHS.

Since the nested NPIV reduces to NPIV when $g_0(A)=Y$, a \textit{sequential} approach replaces $Y$ with an initial NPIV estimator $\hat{g}(A)$ to estimate $\hat{h}(B)$. Building on an extensive literature for adversarial NPIV estimation, \cite[e.g.][]{dikkala2020minimax,liao2020provably,bennett2023minimax,bennett2023source}, we define a sequential nested NPIV estimator as follows. 

\begin{algorithm}[Sequential nested NPIV]\label{algo:sequential}
    Given observations $(A_i,B_i,C_i)$, an initial estimator $\hat{g}$ that may be estimated on the same data, and a hyperparameter $\mu>0$,
    $$
    \hat{h}=\argmin_{h\in\mathcal{H}}\left[\sup_{f\in\mathcal{F}}\left\{ 2\cdot \textsc{loss}(f,\hat{g},h)-\textsc{penalty}(f)\right\}+\mu\cdot \textsc{penalty}(h)\right],
    $$
    where $\textsc{loss}(f,g,h)=\mathbb{E}_n[\{h(B)-g(A)\}f(C)]$, $\textsc{penalty} (f)=\mathbb{E}_n\{f(C)^2\}$, $\textsc{penalty}(h)= \mathbb{E}_n\{h(B)^2\}$, and $\E_n(\cdot)$ averages over observations.
\end{algorithm}

To interpret the estimator, recall that the 
 inverse problem that defines $h_0$ may be viewed as the conditional moment restriction $\mathbb{E}\{h_0(B)-g_0(A)|C\}=0$, which implies a continuum of unconditional moment restrictions $\mathbb{E}[\{h_0(B)-g_0(A)\}f(C)]=0$ for all square integrable $f \in\mathbb{L}_2$.  The adversary attempts to violate the empirical moments by searching for the worst counterexample $f$. Our estimator attempts to preserve the empirical moments, by searching for an estimate $h$ that is the most robust to the adversary's worst counterexample. The following corollary shows that the penalty terms normalize the game in the right way.

 \begin{corollary}[Sequential limit]\label{cor:limit_sequential}
     Consider the population limit of Algorithm~\ref{algo:sequential} where $n\rightarrow \infty$, $\mu\rightarrow0$, and $\hat{g}$ converges to $g_0$: $\sup_{f\in\mathcal{F}} \mathbb{E}[2\{h(B)-g_0(A)\}f(C)-f(C)^2]$. If $\E\{h(B)-h_0(B)|C=(\cdot)\}\in \mathcal{F}$, then this limit equals $\E([\E\{h(B)-h_0(B)|C\}]^2)$.
 \end{corollary}

 In the final expression, the limit is the mean square error of the projection of $h(B)-h_0(B)$ onto the instrument $C$. It is the well known weak metric in the NPIV literature \cite[eq. 14]{ai2003efficient}. Thus Algorithm~\ref{algo:sequential}'s empirical criterion converges to the weak metric, even though an analyst does not have access to unbiased samples from $h_0(B)$.

The sequential approach is largely agnostic about the initial estimator $\hat{g}$. On the one hand, its analysis will be simpler. On the other hand, it presupposes that $\hat{g}$ converges to $g_0$ in a mean square sense, so ill posedness in estimating $g_0$ may sequentially compound when estimating $h_0$. Towards a better dependence on ill posedness, we define a simultaneous approach that jointly estimates $\hat{g}(A)$ and $\hat{h}(B)$, in the spirit of \cite{ai2007estimation}.

\begin{algorithm}[Simultaneous nested NPIV]\label{algo:simultaneous}
      Given observations $(A_i,B_i,C_i,C_i')$ and hyperparameter values $(\mu',\mu)$,
      \begin{align*}
    (\hat{g},\hat{h})&=\argmin_{g\in\mathcal{G}, h\in\mathcal{H}}\bigg[\sup_{f'\in\mathcal{F}'}\left\{ 2\cdot \textsc{loss}(f',Y,g)-\textsc{penalty}(f')\right\}+\mu' \cdot \textsc{penalty}(g) \\
   &\quad +
   \sup_{f\in\mathcal{F}}\left\{ 2\cdot \textsc{loss}(f,g,h)-\textsc{penalty}(f)\right\}+\mu \cdot \textsc{penalty}(h)
   \bigg],
   \end{align*}
    where $\textsc{loss}(f',Y,g)=\mathbb{E}_n[\{g(A)-Y\}f'(C')]$. The remaining terms are defined as before.
\end{algorithm}

As before, the inverse problem that defines $h_0$ implies a continuum of unconditional moment restrictions $\mathbb{E}[\{h_0(B)-g_0(A)\}f(C)]=0$. Now, 
the inverse problem that defines $g_0$ implies another continuum of unconditional moment restrictions $\mathbb{E}[\{g_0(A)-Y\}f'(C')]=0$. The adversary attempts to violate the empirical moments by searching for the worst counterexamples $(f',f)$, while our estimator attempts to preserve them by searching for robust estimates $(g,h)$. Crucially, $g$ must be chosen to balance the two aspects of the game.

\begin{corollary}[Simultaneous limit]\label{cor:limit_simultaneous}
     Consider the population limit of Algorithm~\ref{algo:simultaneous} where $n\rightarrow \infty$, $\mu,\mu'\rightarrow0$: 
     $\sup_{f'\in\mathcal{F}} \mathbb{E}[2\{g(A)-Y\}f'(C')-f'(C')^2]
     +\sup_{f\in\mathcal{F}} \mathbb{E}[2\{h(B)-g(A)\}f(C)-f(C)^2].$
     If $\E\{g(A)-g_0(A)|C'=(\cdot)\}\in\mathcal{F}'$, $\E\{h(B)-h_0(B)|C=(\cdot)\}\in \mathcal{F}$, and   $\E\{g(B)-g_0(B)|C=(\cdot)\}\in \mathcal{F}$, 
     then it equals $\E([\E\{g(A)-g_0(A)|C'\}]^2)+\E\{ (\E[\{h(B)-h_0(B)\}-\{g(A)-g_0(A)\}|C])^2\}$.
\end{corollary}

The limit contains two terms: (i) the bias of $g$ projected onto the initial instrument $C'$; (ii) the gap between the biases of $h$ and $g$ projected onto the nested instrument $C$. The limit coincides with the generalized weak metric of \citet[eq. 14]{ai2003efficient} and \citet[p. 16]{ai2007estimation}, i.e. the metric in which those authors provide rates for series estimators. By contrast, we provide rates in mean square error. Moreover, we show that if $(C',C)$ are compatible in some way, then these various biases can be small and compounding ill posedness can be avoided. 
\section{Main result: Mean square rates}\label{sec:rate}

Our main contribution is explicit mean square rates of convergence for machine learning estimators of nested NPIV, i.e. $\E[\{\hat{h}(B)-h_0(B)\}^2]$. Such a result is a crucial building block for efficient inference on the functionals in Section~\ref{sec:examples}. We provide rates for Algorithm~\ref{algo:sequential} using critical radius and source conditions, then faster rates for Algorithm~\ref{algo:simultaneous} under critical radius, source, and relative well posedness conditions. Relative well posedness appears to be a novel concept.

We denote the conditional expectation operators $T(h,g)=\mathbb{E}\{h(B)-g(A)|C=(\cdot)\}$ and $S(g)=\mathbb{E}\{g(A)|C'=(\cdot)\}$.
Since $(g_0,h_0)$ may be non-unique, we take $(g_0,h_0)$ to be solutions to $T(h,g)=0$ and $S(g)=\mathbb{E}(Y|C'=\cdot)$ with the smallest second moments. 
We assume they exist. Explicit conditions for their existence are given in the various identification papers listed in Section~\ref{sec:examples}.
To lighten notation, we also write $T_h(h)=T(h,0)$ and $T_g(g)=T(0,g)$. Finally, we write the $\mathbb{L}_2$ norm as $\|\tilde{f}\|^2=\E\{\tilde{f}(\tilde{A})^2\}$.

Throughout, we assume that the data are independent and identically distributed. Future work may relax this simplifying assumption.

\subsection{Rates with compounding ill posedness}

\textbf{Permissive assumption: Critical radius.} We assume that our estimator $\hat{h}$ is not too complex in a familiar sense. 
Consider a generic function space $\tilde{\mathcal{F}}$ that contains functions $f:\tilde{\mathcal{A}}\rightarrow \mathbb{R}$ which are uniformly and absolutely bounded by one.\footnote{This can be relaxed to functions that have some finite bound by a rescaling argument.} Let $\epsilon_i$ be independent random variables taking values on $\{-1,1\}$ equiprobably. The local Rademacher complexity of $\tilde{\mathcal{F}}$ over a neighborhood of radius $\delta$ is $\mathcal{R}_n(\tilde{\mathcal{F}},\delta)= \mathbb{E} \left\{ \sup_{f\in\tilde{\mathcal{F}}:\|f\|\leq\delta} \frac{1}{n}\sum_{i=1}^n \epsilon_i f(\tilde{A}_i)\right\}$. Building on a mature $M$ estimation literature, we bound $\|\hat{h}-h_0\|$ in terms of local Rademacher complexities.

We optimize these bounds into fast rates by an appropriate choice of the radius $\delta$, called the critical radius.  The critical radius $\delta_n$ is the smallest possible solution to the inequality $\mathcal{R}_n\{\textsc{star}(\tilde{\mathcal{F}}),\delta\}\leq\delta^2$, where $\textsc{star}(\tilde{\mathcal{F}})=\{cf:f\in\tilde{\mathcal{F}},c\in [0,1]\}$ is the star hull of $\tilde{\mathcal{F}}$. For many spaces, it is a converging sequence as $n\uparrow \infty$.

\begin{assumption}[Critical radius]\label{assumption:critical}
   For the space $\tilde{\mathcal{F}}$, which we will instantiate in various ways below, its critical radius converges: $\delta_n=\tilde{O}(n^{-\alpha})$ for some $\alpha\in(0,1/2]$, where $\tilde{O}(\cdot)$ means $O(\cdot)$ up to logarithmic factors.
\end{assumption}

\begin{example}[Gaussian kernel]\label{ex:kernel}
    Let $\tilde{\mathcal{F}}$ be the unit ball in the RKHS with the Gaussian kernel, with uniform data over $[0,1]$. Then $\delta_n=\sqrt{\frac{\ln(n)}{n}}$ and $\alpha=1/2$.
\end{example}

\begin{example}[Neural network]\label{ex:net}
    Let $\tilde{\mathcal{F}}$ be a rectified linear unit (ReLU) neural network with depth $L$ and a total of $W$ parameters.\footnote{For a multilayer perceptron with depth $L$ and width $H$, $W=O(H^2L)$.} Then $\delta_n=\sqrt{\frac{LW \ln(W)\ln(n)}{n}}$ and $\alpha<1/2$.
\end{example}

See e.g. \cite{foster2023orthogonal} for the critical radii of other RKHSs, other neural networks, random forests, and sparse linear spaces, demonstrating that the condition is quite permissive. See e.g. \cite{wainwright2019high} for an equivalent entropy integral condition. See \cite{chernozhukov2020adversarial} for a detailed exposition of how the critical radius condition relaxes the Donsker condition to accommodate ``simple'' machine learning. 

Appendix~\ref{sec:sourceless} gives projected mean squares rates using only critical radius conditions.

\textbf{Familiar assumption: Source condition.} To pass from a projected mean square rate to a mean square rate, i.e. from $\|T_h(\hat{h}-h_0)\|^2=\mathbb{E}([\E\{\hat{h}(B)-h_0(B)|C\}]^2)$ to  $\|\hat{h}-h_0\|^2=\mathbb{E}[\{\hat{h}(B)-h_0(B)\}^2]$, 
we assume that the nested NPIV $h_0$ is smooth in a familiar sense. In particular, we assume it well approximated by the top of the spectrum of the conditional expectation operator $T_h(h)=\mathbb{E}\{h(B)|C=(\cdot)\}$. To state the condition, we denote the adjoint operator by $T_h^*$. 
\begin{assumption}[Source]\label{assumption:source}
   $h_0=(T_h^*T_h)^{\beta_h/2}w_h$ for some $w_h\in\mathcal{H}$ and some $\beta_h\in (0,\infty)$. 
\end{assumption}

To interpret Assumption~\ref{assumption:source}, suppose $T_h$ admits a singular value decomposition $(\sigma_j,u_j,v_j)$, so that $T_hh=\sum_{j=1}^{\infty} \sigma_j \langle h,v_i\rangle_{\mathbb{L}_2} u_j$.  Assumption~\ref{assumption:source} is equivalent to imposing that $h_0\in \mathcal{H}^{\beta_h}= \{h\in \mathbb{L}_2: \sum_{j=1}^{\infty} 1_{\sigma_j\neq 0} \cdot  \sigma_j^{-2\beta_h} \langle h,v_j \rangle_{\mathbb{L}_2}^2<\infty \}$, where $\mathbb{L}_2$ is the space of square integrable functions. Clearly, $\mathcal{H}^0=\mathbb{L}_2$ when each $\sigma_j\neq 0$. For $\beta_h>0$, $\mathcal{H}^{\beta_h}$ is a subset of $\mathcal{H}^0$. Due to the singular value penalty $\sigma_j^{-2\beta_h}$, it consists of functions that are not too aligned with the higher order right singular functions $(v_j)$. Intuitively, it rules out ``rough'' functions primarily supported on the tail of the spectrum of $T_h$. In this way, it limits the ill posedness of the inverse problem. 

To lighten notation, let $\textsc{well}(\beta)=\frac{\min(\beta,1)}{\min(\beta,1)+1}$ be a measure of well posedness. At best, $\beta\geq 1$ and $\textsc{well}(\beta)=1/2$. At worst, $\beta\rightarrow 0$ and $\textsc{well}(\beta)\rightarrow 0$. Our final rates will depend on the well posedness $\textsc{well}(\beta_h)$.

See e.g. \cite{chen2011rate}
for comparisons of source conditions employed in the NPIV 
literature. 
In their series analysis, \citet[Assumption 3.8.3]{ai2007estimation} place an assumption equivalent to $\beta_h=2$ \cite[Proposition 5]{bennett2023inference} and prove mean square consistency, whereas we consider $\beta_h< 2$ and prove mean square rates.

\textbf{Compounding ill posedness throttles rates.} Under Assumptions~\ref{assumption:critical} and~\ref{assumption:source}, we derive rates for $\|\hat{h}-h_0\|^2$ that are throttled twice: by the ill posedness of inverting $S$ and then $T$. To simplify exposition, we assume that the conditional expectation operator $T$ is correctly specified by the adversary's function space $\mathcal{F}$ in Algorithm~\ref{algo:sequential}. This simplifying condition can be relaxed, incurring an additive approximation error; see our earlier draft.  

\begin{assumption}[Closedness]\label{assumption:closed}
$T(h-h_0,g-g_0) \in \mathcal{F}$ for all $h\in \mathcal{H}$ and $g\in\mathcal{G}$.
\end{assumption}

\begin{theorem}[Bound for Algorithm~\ref{algo:sequential}]\label{theorem:L2}
    Suppose Assumption~\ref{assumption:critical} holds for $\mathcal{F}$, $\mathcal{G}$, $\mathcal{H}$, and $\mathcal{H}\times \mathcal{F}$;
        Assumption~\ref{assumption:source} holds; and Assumption~\ref{assumption:closed} holds. Then with probability $1-\zeta$, when $\mu=O(1)$ and $\delta_n =\Omega[\{\ln\ln(n)+\ln(1/\zeta)\}^{1/2}n^{-1/2}]$, we have 
    $\|T_h(\hat{h}-h_0)\|^2=O(R_n)$ and $\|\hat{h}-h_0\|^2=O(\mu^{-1}R_n)$, where 
   $R_n=\mu^{\min(\beta_h+1,2)}\|w_h\|^2+\delta_n^2+\|\hat{g}-g_0\|^2.$ The former conclusion, $\|T_h(\hat{h}-h_0)\|^2=O(R_n)$, also holds without Assumption~\ref{assumption:critical} applied to $\mathcal{H}\times \mathcal{F}$.
\end{theorem}

Theorem~\ref{theorem:L2} is our first main result: a mean square rate for nested NPIV. 
The rate $R_n$ contains bias $\mu^{\min(\beta_h+1,2)}\|w_h\|$, variance $\delta_n^2$, and initial estimation error $\|\hat{g}-g_0\|^2$. 
It generalizes known results for NPIV: without the initial estimation error term, it recovers NPIV rates in mean square error \citep{liao2020provably,bennett2023minimax,bennett2023source} and projected mean square error \citep{dikkala2020minimax}. It seems to strengthen known projected mean square results for NPIV since it relaxes Assumption~\ref{assumption:critical} from $\mathcal{H}\times \mathcal{F}$ to $\mathcal{H}$, which may be of independent interest.

Theorem~\ref{theorem:L2} cleanly separates the ill posedness of $\hat{g}$ from $\hat{h}$.  The ill posedness of $\hat{g}$ appears via $\|\hat{g}-g_0\|^2$, which is additively separable from the other terms in $R_n$. The projected mean square rate $R_n$ is not affected by the ill posedness of $\hat{h}$. However, the rate for mean square error is slower than the rate for projected mean square error by a factor of $\mu^{-1}$, encoding the ill posedness of inverting $T_h$ to isolate the nested NPIV. 

The rates may be optimized by choosing $\mu$ to balance the bias term with the other terms. 

\begin{corollary}[Rate for Algorithm~\ref{algo:sequential}]~\label{cor:L2}
   Suppose the conditions of Theorem~\ref{theorem:L2} hold. Take $\mu=\max(\delta_n,\|\hat{g}-g_0\|)^{\frac{2}{\min(\beta_h,1)+1}}$. Then with probability $1-\zeta$, $\|T_h(\hat{h}-h_0)\|^2=O \left\{\max(\delta_n^2,\|\hat{g}-g_0\|^2)\right\} $ and $\|\hat{h}-h_0\|^2=O\left\{\max(\delta_n^2,\|\hat{g}-g_0\|^2)^{\textsc{well}(\beta_h)}\right\}$.
\end{corollary}

For concreteness, we instantiate a well known NPIV estimator for $g_0$, namely the NPIV special case of Algorithm~\ref{algo:sequential}. We place assumptions for $g_0$ analogous to those for $h_0$. 

\begin{assumption}[Source]\label{assumption:source2}
   $g_0=(S^*S)^{\beta'_g/2}w'_g$ for some $w_g'\in \mathcal{G}$ and some $\beta'_g\in(0,\infty)$.
\end{assumption}

\begin{assumption}[Closedness]\label{assumption:closed2}
$S(g-g_0) \in \mathcal{F}'$ for all $g\in \mathcal{G}$.
\end{assumption}

\begin{corollary}[Compounding ill posedness]~\label{cor:L2_compound}
    Suppose the conditions of Theorem~\ref{theorem:L2} hold, extending Assumption~\ref{assumption:critical} to $\mathcal{F}'$ and $\mathcal{G}\times\mathcal{F}'$; and 
    Assumptions~\ref{assumption:source2} and~\ref{assumption:closed2} hold. Write the regularizations as $(\mu_g,\mu_h)$. Set $\mu_g=\delta_n^{\frac{2}{\min(\beta'_g,1)+1}}$ and $\mu_h=\delta_n^{\frac{2}{\min(\beta_h,1)+1}\textsc{well}(\beta'_g)}$. Then with probability $1-\zeta$, $\|T_h(\hat{h}-h_0)\|^2=O \left\{\delta_n^{2\textsc{well}(\beta'_g)}\right\}  $ and $\|\hat{h}-h_0\|^2=O \left\{\delta_n^{2\textsc{well}(\beta_h)\textsc{well}(\beta'_g)}\right\}$.
\end{corollary}

\addtocounter{example}{-2}

\begin{example}[Gaussian kernel]
   By Corollary~\ref{cor:L2_compound}, 
    $\|\hat{h}-h_0\|^2=O_p \left[\left\{\frac{\ln(n)}{n}\right\}^{\textsc{well}(\beta_h)\textsc{well}(\beta'_g)}\right]$.
\end{example}

\begin{example}[Neural network]
     By Corollary~\ref{cor:L2_compound}, 
    $\|\hat{h}-h_0\|^2=O_p \left[\left\{\frac{LW \ln(W)\ln(n)}{n}\right\}^{\textsc{well}(\beta_h)\textsc{well}(\beta'_g)}\right]$.
\end{example}

Ill posedness compounds in a simple way for Algorithm~\ref{algo:sequential}: our mean square rate is a ``base rate'' $\delta_n^2$ slowed by the ill posedness of each inverse problem via the product $\textsc{well}(\beta'_g)\cdot \textsc{well}(\beta_h)$. Recall that $\delta_n^2$ is $\tilde{O}(n^{-1})$ for parametric classes, and it converges at well known, often optimal regression rates for many nonparametric classes.  At best, $\beta'_g ,\beta_h \geq 1$ and Algorithm~\ref{algo:sequential}'s mean square rate is $O(\delta_n^{1/2})$. Our mean square rates do not further improve for stronger source conditions, echoing the saturation effect of ridge regression \citep{bauer2007regularization}.  By contrast, Algorithm~\ref{algo:simultaneous}'s mean square rate will be $O(\delta_n)$ at best, under a new assumption.

\subsection{Rates without compounding ill posedness}

\textbf{New assumption: Relative well posedness.} For faster rates of convergence, we place what appears to be a new assumption: the $g_0$ inverse problem is relatively well posed compared to the $h_0$ inverse problem. For readability, we focus on the main conditions below and defer an additional, technical condition to Appendix~\ref{sec:extra}. Denote the singular values of the conditional expectation operator $S$ by $(\sigma_j')$ and those of $T_g$ by $(\sigma_j)$.

\begin{assumption}[Relative well posedness]\label{assumption:posedness}
$\|(S^*S+\mu'I)^{-1}T_g^*T_g\|_{\op}
=O(1)$ for all $\mu'>0$.
\end{assumption}

If $S$ and $T_g$ have the same right singular functions, then $\|(S^*S+\mu'I)^{-1}T_g^*T_g\|_{\op}=\sup_j\frac{\sigma_j^2}{(\sigma_j')^2+\mu'}$ is clearly recognizable as a relative measure of well posedness for the conditional expectation operators $S$ and $T_g$, up to a tolerance $\mu'$. Assumption~\ref{assumption:posedness} imposes that the singular values of $S$, aided by $\mu'>0$, are not too small relative to the singular values of $T_g$. 
It essentially requires that $C'$ is a strong enough instrument relative to $C$.

\begin{proposition}[Linear models with scaled data]\label{prop:linear}
    Suppose that $(g_0,h_0)$ are linear and $(A,B,C,C')$ are scalars with mean zero, unit variance,  $\textsc{corr}(A,C)=\rho_A$, $\textsc{corr}(B,C)=\rho_B$, and
    $\textsc{corr}(A,C')=\rho'$, where the correlations may be sequences decreasing in $n$. Assumption~\ref{assumption:posedness} holds when 
    $|\rho_A/\rho'|$ is $O(1)$. 
\end{proposition}

\begin{remark}[Intuition]\label{remark:intuition}
    Here, $g_0=\frac{\mathbb{E}(YC')}{\rho'}$ and $h_0=\frac{\mathbb{E}(YC')}{\rho'}\frac{\rho_A}{\rho_B}$. Without our condition, ill posedness manifests as the \textit{product} of instrument strength $|\rho'|\cdot |\rho_B|$. With our condition, it manifests as the \textit{minimum} of $|\rho'|$ and $|\rho_B|$, since $|g_0|\leq \frac{|\mathbb{E}(YC')|}{|\rho'|}$ and $|h_0| \lesssim \frac{|\mathbb{E}(YC')|}{|\rho_B|}$.
\end{remark}

\begin{proposition}[Nonlinear models with Gaussian data]\label{prop:gaussian}
    Suppose that $(A,B,C,C')$ are jointly normal scalars with mean zero, unit variance, 
    $\textsc{corr}(A,C)=\rho_A$,
   and $\textsc{corr}(A,C')=\rho'$.
   Assumption~\ref{assumption:posedness} holds when $|\rho_A/\rho'|$ is $O(1)$.
\end{proposition}

When $(A,B,C,C')$ are standard normal random vectors, then relative well posedness imposes that the canonical correlations between elements of $A$ and $C'$ 
are at least the same order as the canonical correlations between elements of $A$ and $C$. See Appendix~\ref{sec:analytic} for details.

In addition, we impose what appears to be a new notion of completeness.

\begin{assumption}[Relative completeness]\label{assumption:completeness}
    If $Sg=0$ then $T_g g=0$. The range of $S$ is closed.
\end{assumption}

Under Assumption~\ref{assumption:completeness}, if a function is indistinguishable to the initial conditional expectation operator $S:g(\cdot)\mapsto \E\{g(A)|C'=(\cdot)\}$, then it is also indistinguishable to the nested conditional expectation operator $T(0,g):g(\cdot)\mapsto \E\{0-g(A)|C=(\cdot)\}$. Intuitively, Assumption~\ref{assumption:completeness} ensures that solving the nested NPIV operator equation does not require information that cannot be learned from the initial NPIV operator equation. 

Assumption~\ref{assumption:completeness} naturally extends the standard completeness condition for NPIV: $Sg=0$ implies $g=0$ \citep{newey2003instrumental}. 
Our condition is weaker. It requires injectivity only on a restricted subspace defined relative to the next operator, rather than on the entire function space.  

\textbf{Faster rates via relative well posedness.} Assumption~\ref{assumption:posedness} improves convergence rates by limiting how the ill posedness of $g_0$ compounds the ill posedness of $h_0$. We arrive at a somewhat surprising result: under this auxiliary condition, the final rate for $h_0$ depends on the \textit{minimum} of $\textsc{well}(\beta_h)$ and $\textsc{well}(\beta_g')$ rather than their product, which is a dramatic improvement. In other words, rates are throttled only once. Such a result requires that $g_0$ is smooth with respect to not only $S$ but also $T$, i.e. another source condition.

\begin{assumption}[Source]\label{assumption:source3}
   $g_0=(T_g^*T_g)^{\beta_g/2}w_g$ for some $w_g\in\mathcal{G}$ and some $\beta_g\in (0,\infty)$.
\end{assumption}

\begin{theorem}[Bound for Algorithm~\ref{algo:simultaneous}]\label{theorem:joint}
    Suppose Assumption~\ref{assumption:critical} holds for $\mathcal{F}$, $\mathcal{G}$,  $\mathcal{H}$, $\mathcal{H}\times \mathcal{F}$, $\mathcal{F}'$, and $\mathcal{G}\times \mathcal{F}'$; 
    Assumptions~\ref{assumption:source} through~\ref{assumption:source3} hold; and an additional regularity condition holds given in Appendix~\ref{sec:extra}. Then with probability $1-\zeta$, when $\mu=\mu'=O(1)$ and $\delta_n =\Omega[\{\ln\ln(n)+\ln(1/\zeta)\}^{1/2}n^{-1/2}]$ (a simplifying lower bound), we have $\|S(\hat{g}-g_0)\|^2=O(R_n)$, $\|T(\hat{h}-h_0,\hat{g}-g_0)\|^2=O(R_n)$, $\|\hat{g}-g_0\|^2=O(\mu^{-1}R_n)$,  and $\|\hat{h}-h_0\|^2=O(\mu^{-1}R_n)$, where 
    $R_n= \mu^{\min(\beta_h+1,2)}\|w_h\|^2+\mu^{\min(\beta_g'+1,2)}\|w_g'\|^2+\mu^{\min(\beta_g+1,2)}\|w_g\|^2 +\delta_n^2$.
\end{theorem}

Theorem~\ref{theorem:joint} is a new result for the simultaneous  nested NPIV. The rate $R_n$ contains the bias terms $\mu^{\min(\beta_h+1,2)}\|w_h\|^2+\mu^{\min(\beta_g'+1,2)}\|w_g'\|^2+\mu^{\min(\beta_g+1,2)}\|w_g\|^2$ and a variance term $\delta_n^2$

Theorem~\ref{theorem:joint} cleanly separates the ill posedness of $\hat{g}$ from $\hat{h}$, and in a different way than Theorem~\ref{theorem:L2}.  The ill posedness of $\hat{g}$ appears via the second and third bias terms, which are additively separable from the other terms in $R_n$. 

The rate for mean square error is slower than the rate for projected mean square error by a factor of $\mu^{-1}$. Rates may be optimized by choosing $\mu$ to balance the bias and variance. Recall that $\textsc{well}(\beta)=\frac{\min(\beta,1)}{\min(\beta,1)+1}\in(0,1/2]$ measures well posedness. 


\begin{corollary}[Better rates due to relative well posedness]~\label{cor:L2_compound2}
   Suppose the conditions of Theorem~\ref{theorem:joint} hold. Take $\mu=\delta_n^{\frac{2}{\min(\underline{\beta},1)+1}}$ where $\underline{\beta}=\min(\beta_h,\beta_g',\beta_g)$. Then with probability $1-\zeta$, $\|T_h(\hat{h}-h_0)\|^2=O(\delta_n^2)$ and $\|\hat{h}-h_0\|^2=O\left\{\delta_n^{2\textsc{well}(\underline{\beta})}\right\}$.
\end{corollary}

\addtocounter{example}{-2}

\begin{example}[Gaussian kernel]
    By Corollary~\ref{cor:L2_compound2}, 
    $\|\hat{h}-h_0\|^2=O_p \left[\left\{\frac{\ln(n)}{n}\right\}^{\textsc{well}(\underline{\beta})}\right]$.
\end{example}

\begin{example}[Neural network]
     By Corollary~\ref{cor:L2_compound2}, 
    $\|\hat{h}-h_0\|^2=O_p \left[\left\{\frac{LW \ln(W)\ln(n)}{n}\right\}^{\textsc{well}(\underline{\beta})}\right]$.
\end{example}

Ill posedness compounds in a simple way for Algorithm~\ref{algo:simultaneous}: our mean square rate is a ``base rate'' $\delta_n^2$ slowed by the minimal well posedness $\textsc{well}(\beta_h \wedge \beta_g'\wedge \beta_g)$. At best, $\beta_h,\beta'_g,\beta_g \geq 1$ and the mean square rate is $O(\delta_n)$. Meanwhile, the projected mean square rate is $O(\delta_n^2)$.

Comparing Corollary~\ref{cor:L2_compound2} with Corollary~\ref{cor:L2_compound}, we see that Assumption~\ref{assumption:posedness} improves rates by restricting how ill posedness may compound. Whereas Corollary~\ref{cor:L2_compound} gives $\|\hat{h}-h_0\|^2=O \left\{\delta_n^{2\textsc{well}(\beta_h)\textsc{well}(\beta'_g)}\right\}$, Corollary~\ref{cor:L2_compound2} gives $\|\hat{h}-h_0\|^2=O\left\{\delta_n^{2\textsc{well}(\beta_h \wedge \beta_g'\wedge \beta_g)}\right\}$. The base rate $\delta_n^2$ is throttled less in the latter. By placing an assumption on the relative measure of well posedness, we ensure that the ill posedness compounds in a much more benign way. Remarkably, it is the \textit{minimum} rather than the \textit{product}. Remark~\ref{remark:intuition} gives intuition.

Comparing Corollary~\ref{cor:L2_compound2} with previous series analysis, we see the main assumptions generally align, yet Corollary~\ref{cor:L2_compound2} strengthens asymptotic mean square consistency \cite[Lemma 3.1]{ai2007estimation} to a nonasymptotic mean square rate. It also sharpens the projected mean square rate from $o_p(n^{-1/2})$ \cite[Theorem 3.1]{ai2007estimation} to $O(\delta_n^2)$. Assumption~\ref{assumption:critical} is similar to series complexity, as measured by covering numbers \cite[Assumption 3.7]{ai2007estimation}. Assumption~\ref{assumption:source} relaxes previous source conditions from $\beta=2$ to $\beta<2$ \cite[Assumption 3.8.3]{ai2007estimation}. 
Assumption~\ref{assumption:posedness} is not necessary for the mean square rate in Theorem~\ref{theorem:L2}. We conjecture that it may verify previous high level conditions \cite[Assumption 4.1]{ai2007estimation}.
\section{Semiparametric inference}\label{sec:inference}

Theorem~\ref{theorem:joint} alleviates some of the compounding ill posedness of nested NPIV that appears in Theorem~\ref{theorem:L2}. In this section, we further alleviate how compounding ill posedness affects inference on causal parameters that are functionals of the nested NPIV. In particular, we characterize a multiple robustness to ill posedness: multiple inverse problems may be moderately ill posedness, as long as other inverse problems are mildly ill posed.

It seems that multiple robustness to ill posedness has not been characterized in previous work on generic functionals of machine learning nuisances \citep{zheng2011cross,chernozhukov2018original,chernozhukov2016locally}. 
 Our multiple robustness to ill posedness generalizes the double robustness to ill posedness known for linear functionals of NPIV \citep{chernozhukov2021simple} such as proxy treatment effects in cross sectional data \citep{kallus2021causal,ghassami2021minimax}. 

\textbf{Bilinear influence functions.} We study causal parameters with bilinear influence functions, e.g. mediated, time varying, and long term effects, since these motivate our interest in nested NPIV. This class of parameters includes causal scalars, e.g. the proxy direct effect (Example~\ref{ex:direct}), and causal functions, e.g. heterogeneous long term effects (Example~\ref{ex:het}). We develop both of these examples to make our general results concrete.

For this class of causal parameters, four nuisances appear in the influence function, which we denote by $(h_1,h_2,h_3,h_4)$. Below, we will see that $(h_1,h_2,h_3,h_4)$ are simple transformations of $(h_0,g_0)$ and their dual analogues. For simplicity, let the argument for $h_j$ be $B_j$, and let its conditional expectation operator be $T_j$.\footnote{If $\mathbb{E}\{h_0(B)|C\}=\mathbb{E}\{g_0(A)|C\}$ and $\mathbb{E}\{g_0(A)|C'\}=\mathbb{E}(Y|C')$, then $(B_1,B_2,B_3,B_4)=(B,A,C',C)$.} We study parameters whose influence functions take the form
$$
\psi(B_1,B_2,B_3,B_4)=h_1(B_1)+h_3(B_3)\{Y-h_2(B_2)\}+h_4(B_4)\{h_2(B_2)-h_1(B_1)\}-\theta_0.
$$

\addtocounter{example}{-7}

\begin{example}[Proxy mediation analysis]
    Recall the definitions of $(h_0,g_0)$ from Section~\ref{sec:examples}.
Now let $h_0'$ be a treatment confounding bridge that solves 
$\mathbb{E}\{h'(X,Z,D,M)|X,D=1,M,W\}=\mathbb{E}\left\{g_0'(X,Z,D=0)\frac{\mathbb{P}(D=0|X,M,W)}{\mathbb{P}(D=1|X,M,W)}|X,D=1,M,W\right\}$, 
where $g_0'$ solves
$\mathbb{E}\{g'(X,Z,D)|X,D=0,W\}=\mathbb{E}\left\{\frac{1}{\mathbb{P}(D=0|X,W)}|X,D=0,W\right\}$. 
Then
$h_1(X,W)=h_0(X,D=0,W)$, 
    $h_2(X,M,W)=g_0(X,D=1,M,W)$, 
    $h_3(X,Z,D,M)=1_{D=1}h_0'(X,Z,D=1,M)$, and   
    $h_4(X,Z,D)=1_{D=0}g_0'(X,Z,D=0)$. 
\end{example}

\addtocounter{example}{2}

\begin{example}[Heterogeneous long term effects]
     Recall the definitions of $(h_0,g_0)$ from Section~\ref{sec:examples}.
Then 
    $h_1(X_1,X_2)=\ell_{\lambda}(X_1)h_0(X_1,X_2,D=d)$ and 
    $h_2(X_1,X_2,M)=\ell_{\lambda}(X_1)g_0(X_1,X_2,M,G=\OBS)$ localize the outcome bridge and outcome mechanism. Now define
    $h_3(X_1,X_2,M,G)=\frac{1_{G=\OBS}}{\mathbb{P}(G=\OBS|X_1,X_2,M)}\frac{\mathbb{P}(D=d|X_1,X_2,M,G=\EXP)\mathbb{P}(G=\EXP|X_1,X_2,M)}{\mathbb{P}(D=d|X_1,X_2,G=\EXP)\mathbb{P}(G=\EXP|X_1,X_2)}$ and  
    $h_4(X_1,X_2,D,G)=)\frac{1_{G=\EXP}1_{D=d}}{\mathbb{P}(D=d|X_1,X_2,G=\EXP)\mathbb{P}(G=\EXP|X_1,X_2)}$, which contain the treatment and selection mechanisms.
\end{example}

For such parameters, we estimate the causal parameter according to a standard procedure.

\begin{algorithm}[Causal parameter]\label{algo:dml}
    Split the sample into $\tr$ and $\te$ folds. Given nested NPIV estimators $(\hat{h}_1,\hat{h}_2,\hat{h}_3,\hat{h}_4)$ estimated from observations in $\tr$, calculate the empirical influence of observation $i\in \te$ as
     $$
    \hpsi_i=\hat{h}_1(B_{1i})+\hat{h}_3(B_{3i})\{Y_i-\hat{h}_2(B_{2i})\}+\hat{h}_4(B_{4i})\{\hat{h}_2(B_{2i})-\hat{h}_1(B_{1i})\}.
    $$
    This process generates a vector $\hpsi\in\R^{n/2}$. Reversing the roles of $\tr$ and $\te$, we generate another such vector. Slightly abusing notation, we concatenate the two to obtain  $\hpsi \in \R^{n}$. Estimate $\htheta=\textsc{mean}(\hpsi)$ and $\hsigma^2=\textsc{var}(\hpsi)$, and return the confidence interval $\textsc{CI}=\htheta \pm 1.96 \hsigma n^{-1/2}$.
\end{algorithm}

\textbf{Multiple robustness to ill posedness.} Algorithm~\ref{algo:dml} is well known in semiparametric theory. Our secondary contribution is to demonstrate that it has stronger properties than previously documented, which help to alleviate the compounding ill posedness of nested NPIV. Similar to many previous works, we leverage Neyman orthogonality.

\begin{assumption}[Neyman orthogonality]\label{assumption:orthogonal}
    (i) For all $\tilde{h}_1\in\mathcal{H}_1$, $\mathbb{E}[\tilde{h}_1(B_1)\{1-h_4(B_4)\}]=0$. 
     (ii) For all $\tilde{h}_2\in\mathcal{H}_2$, $\mathbb{E}[\tilde{h}_2(B_2)\{h_4(B_4)-h_3(B_3)\}]=0$.
     (iii) For all $\tilde{h}_3\in\mathcal{H}_3$, $\mathbb{E}[\tilde{h}_3(B_3)\{Y-h_2(B_2)\}]=0$.
      (iv) For all $\tilde{h}_4\in\mathcal{H}_4$, $\mathbb{E}[\tilde{h}_4(B_4)\{h_2(B_2)-h_1(B_1)\}]=0$.
\end{assumption}

Assumption~\ref{assumption:orthogonal} implies Neyman orthogonality in our bilinear setting. It is straightforward to verify Assumption~\ref{assumption:orthogonal} by the law of iterated expectations, for all of the motivating examples, e.g. mediated, time varying, and long term treatment effects with or without proxies. Finally, we place some weak regularity conditions.

\begin{assumption}[Regularity conditions]\label{assumption:regular}
   (i) The residual variances are bounded: $\mathbb{E}[\{Y-h_2(B_2)\}^2|B_2]\leq \bar{\sigma}_y^2$ and $\mathbb{E}[\{h_2(B_2)-h_1(B_1)\}^2|B_1]\leq \bar{\sigma}_2^2$. (ii) The generalized balancing weights are bounded: $\|h_3\|_{\infty}\leq \bar{h}_3$ and $\|h_4\|_{\infty}\leq \bar{h}_4$. (iii) The generalized balancing weights are censored: $\|\hat{h}_3\|_{\infty}\leq \bar{h}'_3$ and $\|\hat{h}_4\|_{\infty}\leq \bar{h}'_4$. Here, $(\bar{\sigma}_y^2,\bar{\sigma}_2^2,\bar{h}_3,\bar{h}_4,\bar{h}'_3,\bar{h}'_4)$ may be diverging sequences.
\end{assumption}

Assumption~\ref{assumption:regular}(i) is weak and standard. Assumption~\ref{assumption:regular}(ii) encodes a  familiar condition necessary for regular estimation. In Example~\ref{ex:het}, it imposes that the treatment and selection propensity scores are bounded away from zero and one. Assumption~\ref{assumption:regular}(iii) can be achieved by censoring extreme values in way that is asymptotically negligible. For causal functions, $(\bar{\sigma}_y^2,\bar{\sigma}_2^2,\bar{h}_3,\bar{h}_4,\bar{h}'_3,\bar{h}'_4)$ diverge as the bandwidth $\lambda$ vanishes, which our inference result tolerates.

To state our final result, let $\sigma^2$, $\kappa^3$, and $\chi^4$ be the second, third, and fourth moments of $\psi(B_1,B_2,B_3,B_4)$ defined above. Like Assumption~\ref{assumption:regular}, these quantities are fixed for causal scalars and diverging for causal functions,  which our inference result tolerates.

\begin{theorem}[Multiple robustness to ill posedness]\label{theorem:ci}
    Suppose Assumptions~\ref{assumption:orthogonal} and~\ref{assumption:regular} hold, as well as the moment regularity $
\{\left(\kappa/\sigma\right)^3+\chi^2\}n^{-1/2}\rightarrow0.
$ Suppose the following quantities are $o_p(1)$: the individual rates {\small $\left(1+\bar{h}_4/\sigma+\bar{h}_4'/\sigma\right)\|\hat{h}_1-h_1\|$,
     $\left(\bar{h}_3/\sigma+\bar{h}_3'+\bar{h}_4/\sigma+\bar{h}_4'\right)\|\hat{h}_2-h_2\|$,
     $(\bar{h}_4'+\bar{\sigma}_y)\|\hat{h}_3-h_3\|$,
     $\bar{\sigma}_2\|\hat{h}_4-h_4\|$}; and the product rates
\begin{enumerate}
    \item {\small $n^{1/2}\sigma^{-1}\{ \|T_1(\hat{h}_1-h_1)\|\|\hat{h}_4-h_4\| \wedge \|\hat{h}_1-h_1\|\|T_4(\hat{h}_4-h_4)\|\} 
    $},
     \item {\small $n^{1/2}\sigma^{-1}\{\|T_2(\hat{h}_2-h_2)\|\|\hat{h}_3-h_3\| \wedge \|\hat{h}_2-h_2\|\|T_3(\hat{h}_3-h_3)\|\} 
     $},
      \item {\small $n^{1/2}\sigma^{-1}\{ \|T_2(\hat{h}_2-h_2)\|\|\hat{h}_4-h_4\| \wedge \|\hat{h}_2-h_2\|\|T_4(\hat{h}_4-h_4)\|\}
      $}.
\end{enumerate}
Then $
\hat{\theta}\overset{p}{\rightarrow}\theta_0,$ 
$\frac{\sqrt{n}}{\sigma}(\hat{\theta}-\theta_0)\overset{d}{\rightarrow}\mathcal{N}(0,1),$ and 
$
\mathbb{P} \{\theta_0 \in (\hat{\theta}\pm 1.96\hat{\sigma} n^{-1/2})\} \rightarrow 0.95.
$
\end{theorem}

Theorem~\ref{theorem:ci} summarizes nonasymptotic Gaussian approximation and variance estimation for mediated, time varying, and long term effects with generic machine learning in Appendix~\ref{sec:ci}. Whereas previous nonasymptotic results are limited to cross sectional parameters and NPIV \citep{chernozhukov2021simple}, we study short panel parameters and nested NPIV. Theorem~\ref{theorem:ci} holds with or without proxy variables, and applies to nonparametric causal functions. 
Compared to previous asymptotic results, we handle new cases, e.g. the proxy direct effect (Example~\ref{ex:direct}), heterogeneous
long term  treatment effects (Example~\ref{ex:het}), and many more; see Section~\ref{sec:examples}. 

See Appendix~\ref{sec:local} for a more explicit version of Theorem~\ref{theorem:ci} for causal functions, including conditions on how the bandwidth $\lambda$ vanishes. When $\lambda \asymp n^{-1/5}$, $\hat{\theta}(x_1)$ converges to $\theta_0(x_1)$ at the rate $n^{-2/5}$, which is familiar in Nadaraya-Watson estimation.

Our results confer a multiple robustness to ill posedness for functionals of nested NPIV.
In particular, Theorem~\ref{theorem:ci}'s product rate conditions partly ameliorate how the ill posedness of the nested NPIV $h_0$ affects inference for the causal parameter $\theta_0$. Each rate condition multiplies a projected mean square rate with a mean square rate. The former sidesteps the ill posedness of $\hat{h}$, extending classic results for functionals of NPIV \citep{blundell2007semi}. 

Comparing Theorem~\ref{theorem:ci} with previous series analysis, the estimating equation is different, the complexity assumption is more lax, and the rate conditions are more strict. Algorithm~\ref{algo:dml} is constructed from a Neyman orthogonal moment, rather than a plug-in moment \cite[eq. 7]{ai2007estimation}. Theorem~\ref{theorem:ci} does not impose the restrictive Donsker condition \cite[Assumption 4.5]{ai2007estimation}. It requires product conditions involving not only projected mean square rates \cite[Assumption 4.2]{ai2007estimation} but also mean square rates.

\textbf{Ill posedness saved is flexibility earned.} 
Corollary~\ref{cor:L2_compound2} improves rates of convergence for nested NPIV estimators. Theorem~\ref{theorem:ci} refines the nested NPIV rate conditions for inference on the causal parameter. We combine these results to concretely illustrate how some inverse problems may be moderately ill posed, as long as others are sufficiently well posed. By sharpening the dependence on ill posedness, we allow for estimation over more complex function spaces.

Given the compounding ill posedness, it is not obvious that machine learning estimation of nested NPIV could culminate in $n^{-1/2}$ inference for the causal parameter. Indeed, previous machine learning inference results for causal scalars may be pessimistic.

\begin{proposition}[A negative result]\label{prop:negative}
   Some previous rate conditions for causal inference are very general, yet in terms of mean square error only, e.g. \cite{zheng2011cross,chernozhukov2018original} and various works that build on them. The mean square rates of Corollary~\ref{cor:L2_compound2} 
    fail to satisfy such rate conditions.
\end{proposition}

By contrast, we provide a positive end-to-end result, which is useful for short panel data models with proxies and compounding ill posedness. In full generality, $(h_1,h_2,h_3,h_4)$ may each be a nested NPIV.%
\footnote{For several examples, $(h_1,h_3)$ are nested NPIVs while $(h_2,h_4)$ are NPIVs.  Proposition~\ref{prop:robust2} remains the same, replacing $(\vec{\beta}_2,\vec{\beta}_4)$ with $(\beta_2,\beta_4)$ and using fewer source conditions.}
 To lighten notation, write $\vec{\beta}_j=(\beta_{jg},\beta'_{jg},\beta_{jh})$ with smallest entry $\underline{\beta}_j$.

\begin{proposition}[A positive result]\label{prop:robust2}
    Suppose the conditions of Corollary~\ref{cor:L2_compound2} hold for $(\hat{h}_1,\hat{h}_2,\hat{h}_3,\hat{h}_4)$. Write the largest critical radius as $\bar{\delta}_n=\tilde{O}(n^{-\alpha})$, and the source conditions as $(\vec{\beta}_1,\vec{\beta}_2,\vec{\beta}_3,\vec{\beta}_4)$. Set regularizations as in Corollary~\ref{cor:L2_compound2}. Suppose $\sigma\asymp n^{\gamma}$. Then product rate conditions of Theorem~\ref{theorem:ci} are satisfied if
    (i)
    $ \gamma + \alpha \{
\textsc{well}(\underline{\beta}_1) \vee \textsc{well}(\underline{\beta}_4) +1
  \}>1/2;
    $
    (ii)
    $ \gamma + \alpha\{
\textsc{well}(\underline{\beta}_2) \vee \textsc{well}(\underline{\beta}_3) +1
  \}>1/2;
    $
    (iii)
    $
    \gamma + \alpha \{
 \textsc{well}(\underline{\beta}_2) \vee \textsc{well}(\underline{\beta}_4) +1
 \} >1/2.$
\end{proposition}

We interpret these inequalities, and confirm that the set of values $(\alpha,\vec{\beta},\gamma)$ satisfying them is nonempty. 
Each right hand side is a constant, so each inequality is a joint requirement on the critical radius via $\alpha$, the source conditions via $\vec{\beta}$, and the asymptotic variance via $\gamma$.

The quantity $\alpha$ measures the complexity of the function classes. At best, $\alpha=1/2$ for parametric function classes. For nonparametric classes, $\alpha<1/2$.

Each quantity $\textsc{well}(\beta)$ measures the well posedness of an inverse problem. At best, $\beta\geq 1$ and $\textsc{well}(\beta)=1/2$. For severely ill posed inverse problems, $\beta\rightarrow 0$ and $\textsc{well}(\beta)\rightarrow 0$. Our conditions allow the well posedness of some inverse problem to compensate the ill posedness of others. 
Hence Proposition~\ref{prop:robust2} clarifies multiple robustness to ill posedness.
    
    For causal scalars, under mild regularity conditions, $\sigma\asymp 1$ and hence $\gamma=0$. For causal functions, under the regularity conditions given in Appendix~\ref{sec:local}, $\sigma_{\lambda}(x_1)\asymp \lambda^{-1/2}$ and hence for bandwidth $\lambda=n^{-1/5}$ we have $\gamma=1/10$. Echoing the work of \cite{kennedy2023towards} on heterogeneous treatment effects, and many references therein, we derive product rate conditions that are \textit{weaker} for causal functions than for causal scalars. Unlike \cite{kennedy2023towards}, we study causal functions in short panel data e.g. heterogeneous long term  treatment effects.

Finally, we demonstrate that the set of $(\alpha,\vec{\beta},\gamma)$ is non-empty. For simplicity, saturate each source condition with $\beta\geq 1$ and suppose we are studying causal scalars with $\gamma=0$. Then the single sufficient condition is $\alpha>1/3$ for Proposition~\ref{prop:robust2}, which allows many nonparametric function classes. Without Assumption~\ref{assumption:posedness}, the condition is $\alpha>2/5$, i.e. less complexity; see Appendix~\ref{sec:analytic}.

Proposition~\ref{prop:robust2} is a consequence of our two technical innovations. It uses rates for nested NPIV in which the ill posedness does not compound too much (our first technique). It also uses product rate conditions involving projected mean square rates and mean square rates (our second technique). It allows non-Donsker spaces and partly ameliorates ill posedness.

\section{Simulated and real data analysis}\label{sec:experiments}

Our nested NPIV method may improve mean square error and coverage in nonlinear simulations, compared to some previous series methods. Using real data, we demonstrate that our method improves precision for the proxy direct treatment effect of the US Job Corps, compared to a previous parametric method. We also uncover heterogeneity in the long term effects of Project STAR: students with the lowest prior ability benefit the most from small class sizes. 

For brevity, we focus on Algorithms~\ref{algo:simultaneous} and~\ref{algo:dml}. Results for Algorithm~\ref{algo:sequential} are similar.

\textbf{Estimation with general function spaces may reduce mean square error.} Algorithm~\ref{algo:simultaneous} outperforms some benchmarks in mean square error across several nonlinear data generating processes (DGPs) for nested NPIV. For simplicity, we focus on the equal dimensional setting:   $dim(A)=dim(B)=dim(C)=dim(C')=10$.
We fix the initial NPIV $g_0$ as a cubic function, and let $h_0$ be one of four different nonlinear functions, inspired by \cite{dikkala2020minimax}.
Each sample has $n=2000$ observations.

For each of these four variations of the DGP, we implement two versions of our estimator: an RKHS or neural network estimator with $\mathcal{F}=\mathcal{F}'=\mathcal{G}=\mathcal{H}$. Appendices~\ref{sec:details} and~\ref{sec:tuning} give implementation details, including a closed form for the RKHS version and principled tuning. As benchmarks, we also implement nested 2SLS, nested series, and nested series with regularization. 

\begin{table}
    \centering
   \begin{tabular}{cccccc}
        & \multicolumn{3}{c}{Benchmarks} & \multicolumn{2}{c}{Proposals} \tabularnewline
        \cmidrule(lr){2-4}\cmidrule(lr){5-6}
        & 2SLS & series & reg. & RKHS & neural  \tabularnewline
        \hline
        linear & 0.006 & 1$\times 10^5$ & 0.458 & 0.001 & 0.007  \tabularnewline
        piecewise linear & 0.009 & 2$\times 10^5$  & 0.191 & 0.008 & 0.013  \tabularnewline
        sigmoid & 0.006 & 8$\times 10^4$  & 0.076 & 0.005 & 0.011  \tabularnewline
        exponential & 0.080 & 2$\times 10^3$  & 2$\times10^2$ & 0.030 & 0.020 \tabularnewline
        \hline
    \end{tabular}
    \caption{Nested NPIV mean square error simulations}    \label{tab:npiv}
\end{table}

\begin{table}
\begin{subtable}[t]{0.48\textwidth}
         \centering
        \resizebox{\textwidth}{!}{%
        \begin{tabular}{cccccc}
            & \multicolumn{3}{c}{Benchmarks}  & \multicolumn{2}{c}{Proposals}  \tabularnewline
            \cmidrule(lr){2-4}\cmidrule(lr){5-6}
             & 2SLS & series & reg. & RKHS  & neural \tabularnewline
            \hline
            bias & -0.00 & -2$\times 10^2$ & -0.01 & 0.02 & 0.02 \tabularnewline
            variance & 27.24 & 7$\times 10^{11}$ & 2$\times 10^{2}$ & 22.41 & 25.24 \tabularnewline
           coverage & 0.95 & 0.94 & 0.96 & 0.92 & 0.90 \tabularnewline
           length & 0.46 & 5$\times 10^4$ & 1.16 & 0.42 & 0.44 \tabularnewline
            \hline
        \end{tabular}
        }
    \caption{Linear DGP}
\end{subtable}\hspace{\fill} 
\begin{subtable}[t]{0.48\textwidth}
          \centering
        \resizebox{\textwidth}{!}{%
      \begin{tabular}{cccccc}
          & \multicolumn{3}{c}{Benchmarks}  & \multicolumn{2}{c}{Proposals}  \tabularnewline
            \cmidrule(lr){2-4}\cmidrule(lr){5-6}
             & 2SLS & series & reg. & RKHS  & neural \tabularnewline
            \hline
            bias & -0.02 & 8$\times 10^2$ & -0.01 & 0.01 & 0.00 \tabularnewline
            variance & 32.33 & 4$\times 10^{12}$ & 27.34 & 24.68 & 27.74 \tabularnewline
           coverage & 0.94 & 0.92 & 0.95 & 0.92 & 0.93 \tabularnewline
           length & 0.50 & 3$\times 10^5$ & 0.46 & 0.44 & 0.46 \tabularnewline
            \hline
        \end{tabular}
        }
    \caption{Piecewise linear DGP}
\end{subtable}

\medskip 
\begin{subtable}[t]{0.48\textwidth}
          \centering
        \resizebox{\textwidth}{!}{%
        \begin{tabular}{cccccc}
           & \multicolumn{3}{c}{Benchmarks}  & \multicolumn{2}{c}{Proposals}  \tabularnewline
            \cmidrule(lr){2-4}\cmidrule(lr){5-6}
             & 2SLS & series & reg. & RKHS  & neural \tabularnewline
            \hline
            bias & -0.02 & -2$\times 10^2$ & 0.00 & 0.02 & -0.00 \tabularnewline
            variance & 31.52 & 4$\times 10^{11}$ & 34.92 & 25.00 & 27.69 \tabularnewline
           coverage & 0.94 & 0.93 & 0.95 & 0.92 & 0.93 \tabularnewline
           length & 0.49 & 5$\times 10^4$ & 0.52 & 0.44 & 0.46 \tabularnewline
            \hline
        \end{tabular}
        }
    \caption{Sigmoid DGP}
\end{subtable}\hspace{\fill} 
\begin{subtable}[t]{0.48\textwidth}
          \centering
        \resizebox{\textwidth}{!}{%
       \begin{tabular}{cccccc}
            & \multicolumn{3}{c}{Benchmarks}  & \multicolumn{2}{c}{Proposals}  \tabularnewline
            \cmidrule(lr){2-4}\cmidrule(lr){5-6}
             & 2SLS & series & reg. & RKHS  & neural \tabularnewline
            \hline
            bias & 0.07 & 6$\times 10^4$ & -1$\times 10^{10}$ & 0.47 & -0.12 \tabularnewline
            variance & 4$\times 10^3$ & 5$\times 10^{15}$ & 7$\times 10^{27}$ & 1$\times 10^4$ & 81.75 \tabularnewline
           coverage & 0.97 & 0.98 & 0.87 & 0.97 & 0.94 \tabularnewline
           length & 5.72 & 1$\times 10^7$ & 1$\times 10^{13}$ & 6.36 & 0.78 \tabularnewline
            \hline
\end{tabular}

        }
    \caption{Exponential DGP}
\end{subtable}
\caption{Proxy mediation analysis coverage simulations
}\label{tab:cov}
\end{table}

Table~\ref{tab:npiv} summarizes results of nested NPIV simulations. Each row corresponds to a different nonlinear function $h_0$. Each column corresponds to a different estimator. We report the empirical mean square error, averaged across 500 samples. 
Series estimators perform relatively poorly across these DGPs. 
Our proposed RKHS estimator outperforms every benchmark. In the particularly challenging exponential DGP, our proposed neural network estimator performs best.

\textbf{Inference with general function spaces may improve coverage and interval length.} Algorithms~\ref{algo:simultaneous} and~\ref{algo:dml} outperform some benchmarks in confidence interval coverage and length across several nonlinear DGPs for the proxy direct effect, i.e. Example~\ref{ex:direct}. We modify the simulation design of \cite{dukes2023proximal}, introducing the same four nonlinearities studied in Table~\ref{tab:npiv}. All variables are scalars except $X\in\mathbb{R}^2$; see Appendix~\ref{sec:details} for details. As before, each sample has $n=2000$ observations. We implement two versions of our estimator as well as three benchmarks.

Table~\ref{tab:cov} summarizes results of coverage experiments. Each table corresponds to a different nonlinear function $h_0$. Each column corresponds to a different estimator. Each row corresponds to a different performance metric,
averaged across 500 samples.

The various estimators often obtain nominal coverage, but starkly differ in their bias, variance, and interval length. The unregularized series estimator performs relatively poorly across DGPs. In the linear, piecewise linear, and sigmoid DGPs, our proposals consistently have shorter confidence intervals than the benchmarks. In the particularly challenging exponential DGP, only our proposed neural network procedure performs well.

In summary, Algorithms~\ref{algo:simultaneous} and~\ref{algo:dml} work well across nonlinear DGPs and across machine learning function spaces. They repeatedly outperform 2SLS and series benchmarks in some nonlinear, heterogeneous causal models using short panel data and proxy variables.

\textbf{Proxy mediation analysis of US Job Corps.} We turn to a motivating real world application: how to flexibly measure direct treatment effects while using proxies for unobserved confounding (Example~\ref{ex:direct}). We replicate an influential program evaluation \citep{dukes2023proximal} that studies the direct effect of job training on arrests later in life, i.e. the effect that is not through the mechanism of employment. The answer sheds light on the development of worker skills. We extend previous parametric estimation to semiparametric estimation. Our flexible method documents a significant, negative direct effect. These empirical results represent a realistic use case: causal estimation in short panel data models with proxy variables.

\cite{dukes2023proximal} propose a parametric estimator for the proxy direct treatment effect. Continuing an extensive literature on the non-employment effects of the US Job Corps, the treatment $D$ is job training in the year following randomization, the mechanism $M$ is employment two years after randomization, and the outcome $Y$ is arrests four years after randomization. The auxiliary variables $(Z,W)$ are assumed to be relevant to unobserved motivation $U$, and also to satisfy exclusion restrictions: time spent with the Job Corps recruiter $Z$ does not directly cause employment or arrests, and pre-training expectations $W$ are not directly caused by training or employment. The final estimate is a scalar.

\begin{figure}
    \centering
    \includegraphics[width=0.5\linewidth]{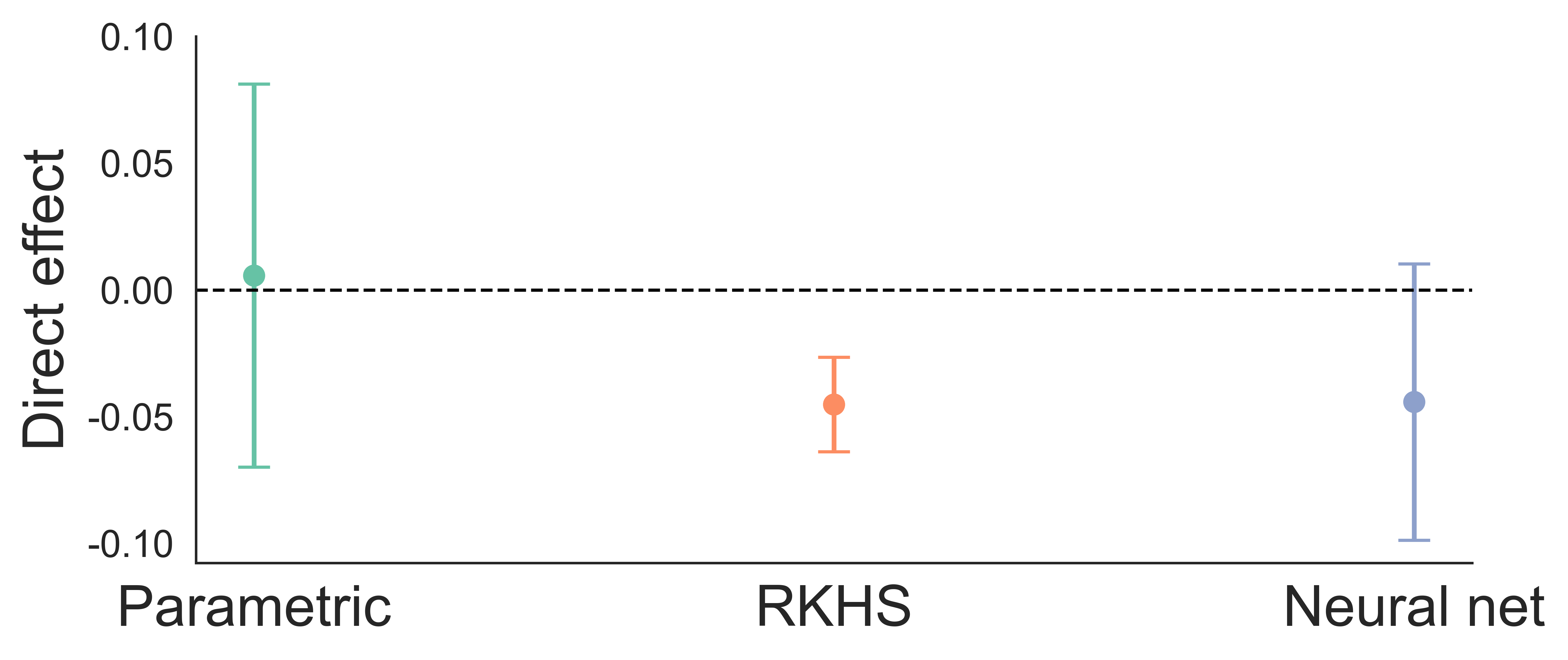}
    \caption{Direct effect of job training on arrests, using proxies for unobserved motivation}\label{fig:job} 
\end{figure}

Figure~\ref{fig:job} compares the previous parametric approach with our semiparametric approach. The previous parametric approach found statistically insignificant effects, possibly due to approximation error of the parametric models for the outcome confounding bridge and treatment confounding bridge. By allowing for flexible nonparametric estimation of the confounding bridges as nested NPIVs, our approach incurs a smaller approximation error. In this setting, it appears to improve statistical precision, suggesting a negative and possibly significant direct effect of job training on arrests. This empirical result is possible due to our new theoretical results for nested NPIV and its functionals.

\textbf{Heterogeneous long term effects of Project STAR.} Finally, we turn to another motivating real world application: how to flexibly measure long term treatment effects by combining short term experimental data with long term observational data (Example~\ref{ex:het}). We replicate an influential program evaluation \citep{athey2020combining} that combines Project STAR experimental data with New York City (NYC) observational data to study the average long term  treatment effect of kindergarten class size on test scores later in life. We ask an additional question: is there meaningful heterogeneity in those long term effects? Our empirical results represent a realistic use case of our proposal. We document substantial heterogeneity: students who have the lowest prior ability benefit the most from small class sizes.

\cite{athey2020combining} propose a parametric estimator for the average long term  treatment effect and validate their results through an intuitive exercise, which we extend. Though Project STAR data include kindergarten class size $D$, elementary school test scores $M$, and middle school test scores $Y$, we suppose that the researcher sees $(D,M)$ but not $Y$. The researcher combines the short term experimental $(D,M)$ from Project STAR with the long term observational $(M,Y)$ from NYC to estimate the average long term  treatment effect. These estimates may be validated by comparing them with the ``oracle'' average long term  treatment effect that an ``oracle'' who sees $(D,Y)$ in the Project STAR data would obtain.

\begin{figure}
   \captionsetup[subfigure]{justification=Centering}
\begin{subfigure}[t]{0.48\textwidth}
         \centering
        \resizebox{\textwidth}{!}{%
       \includegraphics[width=\textwidth]{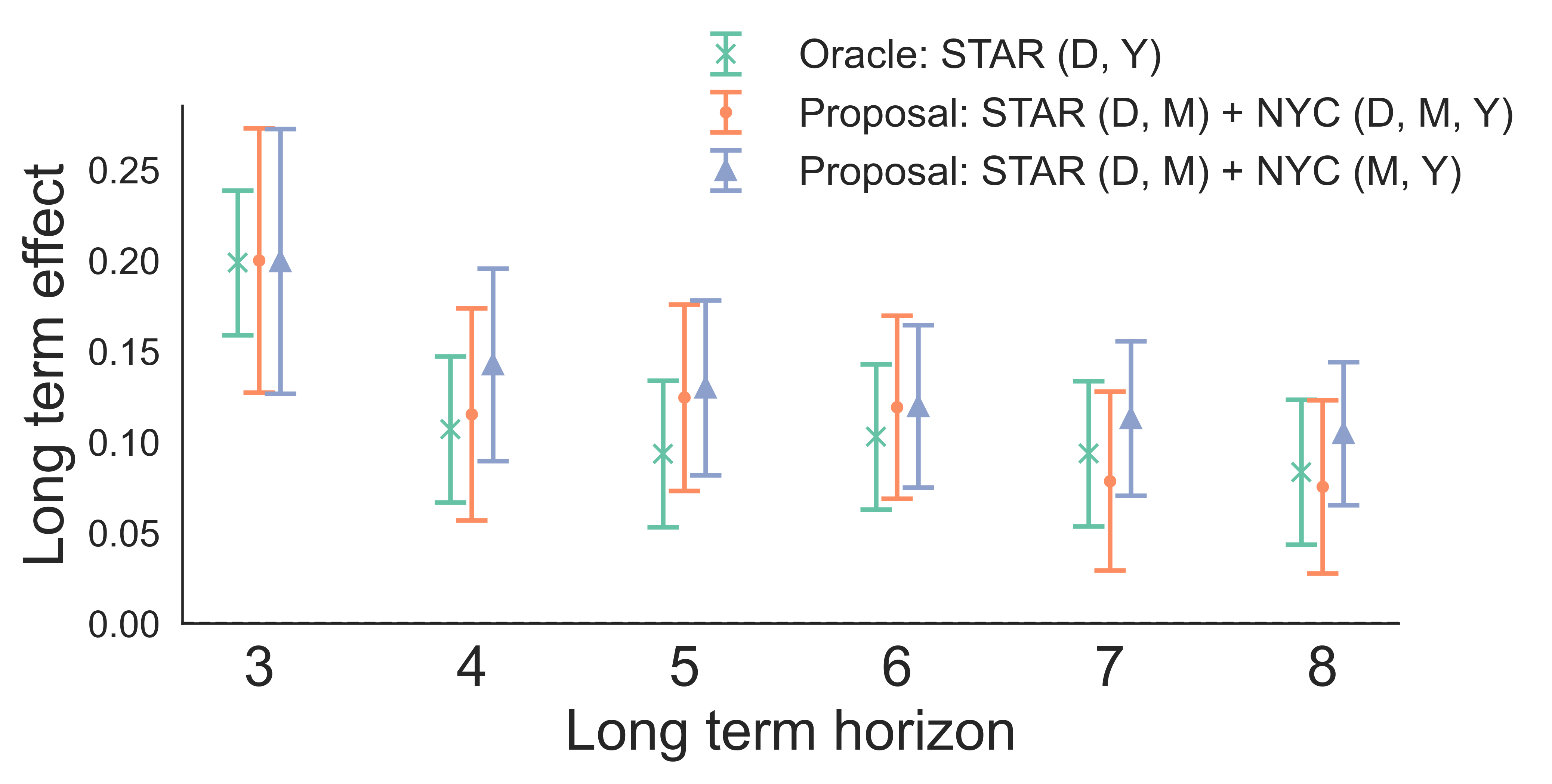}
        }
    \caption{RKHS}
\end{subfigure}\hspace{\fill} 
\begin{subfigure}[t]{0.48\textwidth}
          \centering
        \resizebox{\textwidth}{!}{%
      \includegraphics[width=\textwidth]{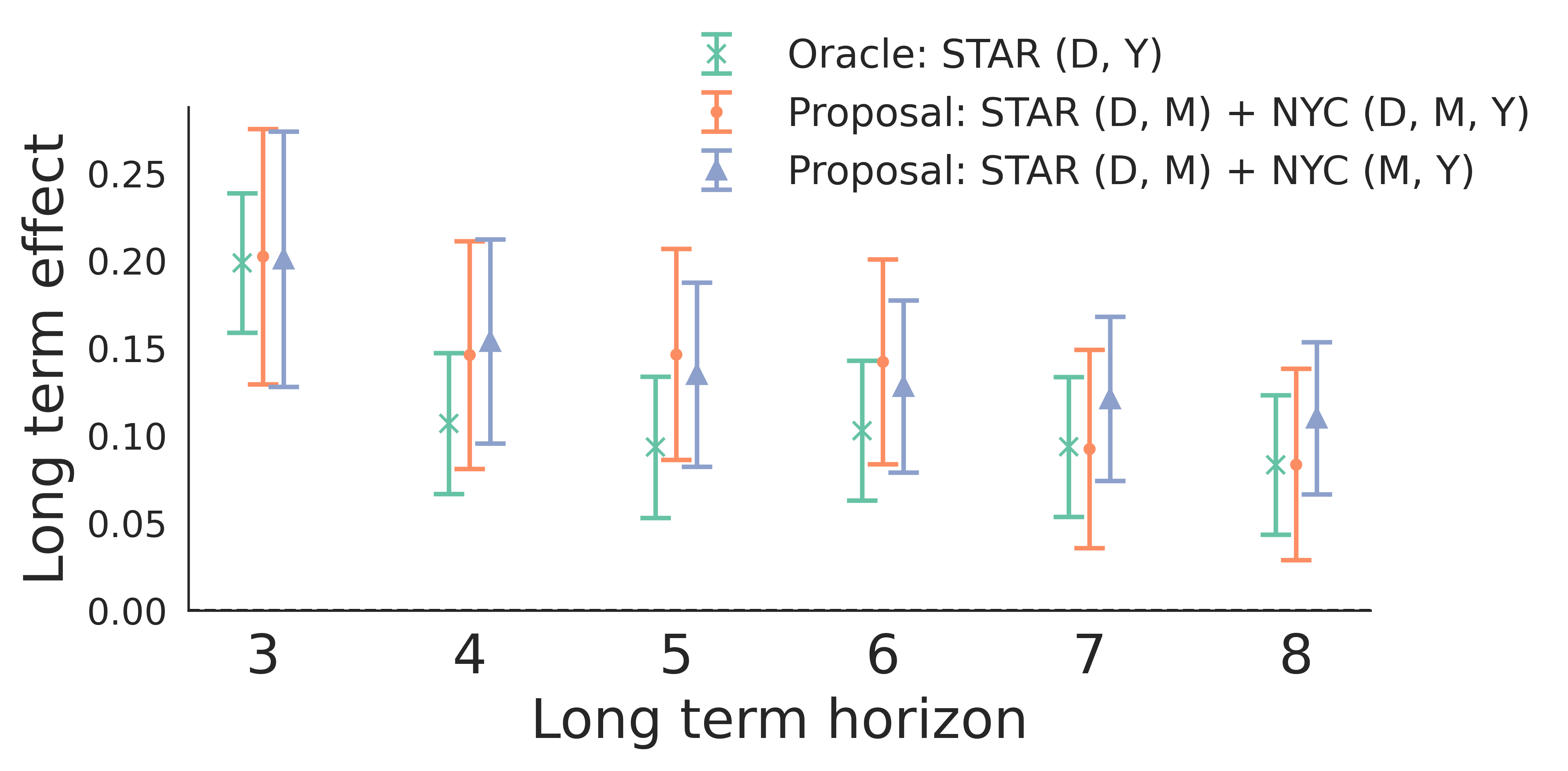}
        }
    \caption{Neural network}
\end{subfigure}
\caption{Average long term  treatment effect over different horizons}\label{fig:star_ate}
\end{figure}

A similar exercise may be conducted in a closely related variation of the problem. In this variation, the researcher sees $(D,M)$ but not $Y$ in Project STAR, and $(D,M,Y)$ in NYC. Both variations of the problem \citep{athey2019surrogate,athey2020combining} belong to the class of parameters we study, as well as their generalizations to causal functions.

Figure~\ref{fig:star_ate} demonstrates that a machine learning approach to average long term  treatment effect estimation performs well in the validation exercise. Following \cite{athey2020combining}, we fix $M$ as third grade test scores and take $Y$ to be third, fourth, fifth, sixth, seventh, or eighth grade test scores. Across choices of $Y$, i.e. across horizons of extrapolation, the average long term  treatment effect estimates recover the oracle estimates.

\begin{figure}
   \captionsetup[subfigure]{justification=Centering}
\begin{subfigure}[t]{0.48\textwidth}
         \centering
        \resizebox{\textwidth}{!}{%
       \includegraphics[width=\textwidth]{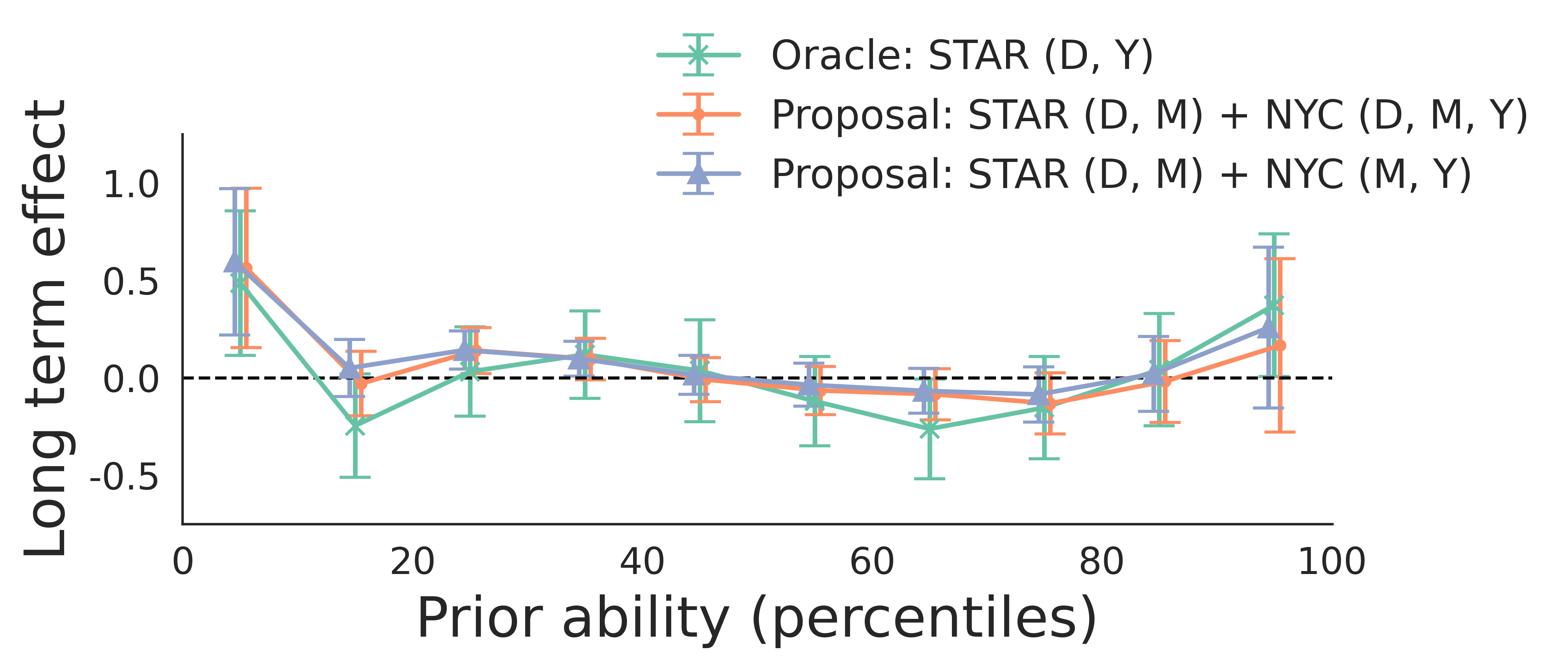}
        }
    \caption{RKHS}
\end{subfigure}\hspace{\fill} 
\begin{subfigure}[t]{0.48\textwidth}
          \centering
        \resizebox{\textwidth}{!}{%
      \includegraphics[width=\textwidth]{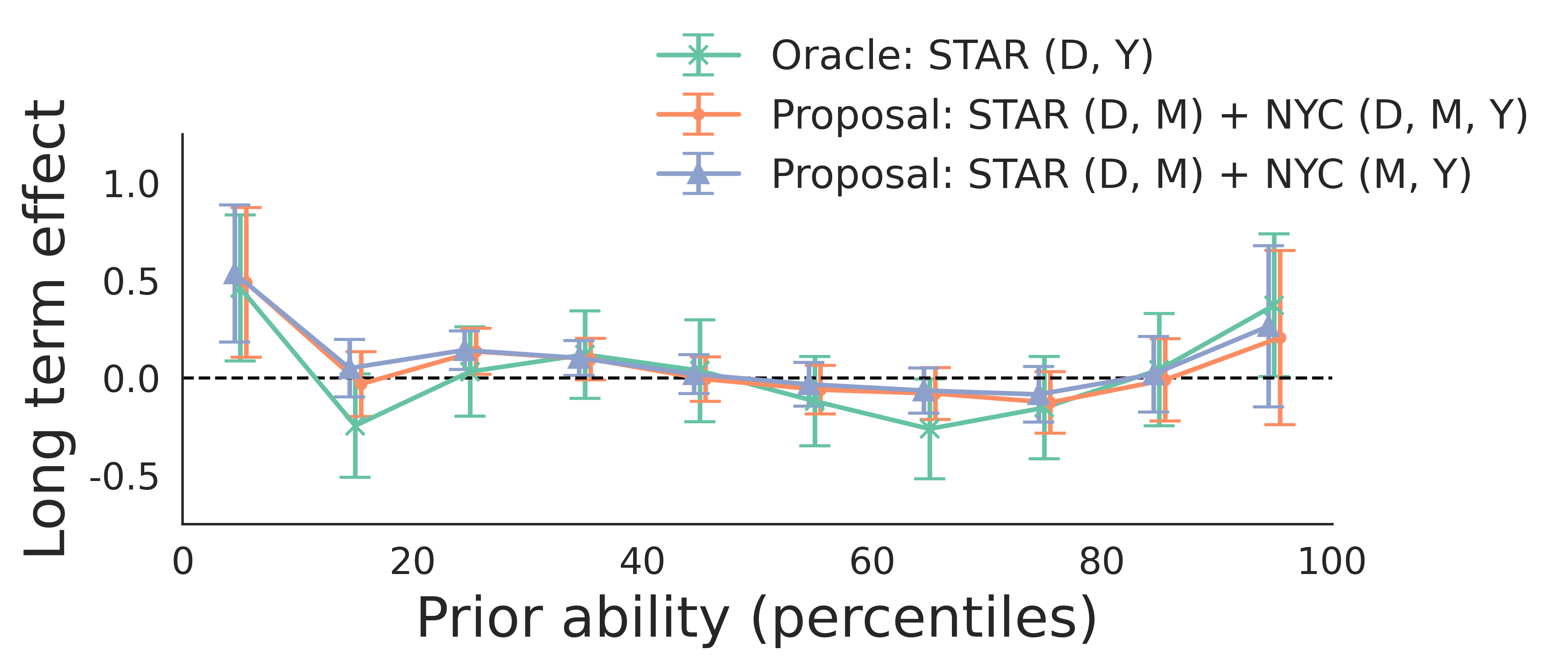}
        }
    \caption{Neural network}
\end{subfigure}
\caption{Heterogeneous long term treatment effects with respect to prior ability}\label{fig:star_cate}
\end{figure}

Figure~\ref{fig:star_cate} goes deeper, from average effects to heterogeneous effects with respect to student aptitude. In particular, we examine heterogeneity with respect to prior ability $X_1$, measured in percentiles before the intervention. 
For simplicity, we continue to fix $M$ as third grade test scores and now fix $Y$ to be seventh grade test scores. 

The students with lowest prior ability benefit the most from enrollment in a small kindergarten class. The results may be statistically significant, with pointwise confidence intervals that exclude zero. 
This empirical insight appears to be new, and it is possible due to our new theoretical results for causal functions such as heterogeneous long term  treatment effects. Appendix~\ref{sec:details} shows similar results across different horizons of extrapolation. 
\section{Discussion}\label{sec:conclusion}

A growing literature identifies parameters in nonlinear, heterogeneous causal models using short panel data and proxy variables. 
These identifications motivate us to study nested nonparametric instrumental variable regression (nested NPIV), which also arises in economic models of simultaneous equations and price systems. 
Our method allows researchers to conduct inference on the new causal models with machine learning, tolerating moderate ill posedness among some inverse problems. 
We provide explicit mean square convergence rates for nested NPIV and introduce two techniques: relative well posedness, and multiple robustness to ill posedness. 
Our new estimators detect direct effects of the US Job Corps with proxies for motivation, and long term effects Project STAR that are heterogeneous by prior ability.

\newpage

\appendix

\section{Additional condition: Relative alignment}\label{sec:extra}

For faster rates, our key assumptions are relative well posedness (Assumption~\ref{assumption:posedness}) and relative completeness (Assumption~\ref{assumption:completeness}). We highlight a consequence of the latter, which allows us to articulate an additional regularity condition that we call relative alignment. Finally, we relate this regularity condition to existing regularity conditions in the NPIV literature.

\subsection{A consequence of relative completeness}

Recall that relative completeness means that $S(g)=0$ implies $T_g(g)=0$.

\begin{lemma}\label{lemma:consequence}
   Under Assumption~\ref{assumption:completeness}, there exists a bounded linear operator $M$ such that $T_g=MS$. 
\end{lemma}

By Lemma~\ref{lemma:consequence}, the nested conditional expectation operator, with instrument $C$, can be expressed as a some operation $M$ applied to the initial conditional expectation operator, with instrument $C'$. In particular, the operation $M$ is  bounded. The intuition is as before:  Assumption~\ref{assumption:completeness} ensures that solving the nested NPIV operator equation does not require information that cannot be learned from the initial NPIV operator equation.

\subsection{Relative alignment}

Having shown that the operation $M$ exists, we use it to articulate a technical regularity condition that rules out sign flipping, from the initial problem to the nested problem. We present a stronger sufficient condition, then a weaker sufficient condition.

\begin{assumption}[Strong relative alignment]\label{assumption:alignment}
Suppose that $T$ preserves alignment relative to $S$ in the following pointwise sense.
    \begin{enumerate}
        \item For all functions of the initial instrument $f,\tilde{f}\in\mathcal{F}'$, $\langle f,\tilde{f} \rangle \geq 0$ implies $\langle M f, M \tilde{f} \rangle\geq 0 $ 
        and $\langle M_hf,M_h \tilde{f} \rangle\geq 0 $, where $M_h=\mu^{1/2}(T_hT_h^*+\mu I)^{-1/2} M$.
        \item For all functions of the nested instrument $f,\tilde{f}\in\mathcal{F}$, 
        $\langle f,\tilde{f} \rangle \geq 0$ implies $\langle f, \{(1-c)I-Q\}\tilde{f} \rangle\geq 0$.  Here, $c$ is a scalar, $I$ is the identity operator, and $Q$ is an operator defined in Appendix~\ref{sec:bias}. Under Assumption~\ref{assumption:posedness}, we later show $(1-c)I-Q$ is positive semidefinite.
    \end{enumerate}
\end{assumption}

When $f=\tilde{f}$, each statement automatically holds. When $f\neq \tilde{f}$, Assumption~\ref{assumption:alignment} imposes that if the functions have a positive angle, then they also have a positive angle after certain transformations involving $S$ and $T$.

As shown in Appendix~\ref{sec:bias}, Assumption~\ref{assumption:alignment} is sufficient to control a few of the terms that appear in the bias of Algorithm~\ref{algo:simultaneous}. Simulations suggest that Assumption~\ref{assumption:alignment} is not necessary. 

In what follows, we relax the strong, pointwise alignment condition (Assumption~\ref{assumption:alignment}) to a weaker, average alignment condition (Assumption~\ref{assumption:weak-alignment}) that rules out sign flipping on average.
Let $\mathcal P'={t\mapsto f_t\in\mathcal F'}$ and
$\mathcal P ={t\mapsto f_t\in\mathcal F}$ be the collections of measurable, square integrable paths on $[0,1]$.

\begin{assumption}[Weak relative alignment]\label{assumption:weak-alignment}
Suppose that $T$ preserves alignment relative to $S$ in the following average sense.

\begin{enumerate}
\item 
For all initial instrument paths $\{f_{(\cdot)},\tilde f_{(\cdot)}\}\in\mathcal P'$ satisfying 
$\langle f_t,\tilde f_t\rangle\ge0$ for almost every  $t\in[0,1]$, we have that 
$\int_0^1\left\langle M f_t, M \tilde{f}_t\right\rangle \mathrm{d} t \geq 0$ and $\int_0^1\left\langle M_h f_t,  M_h \tilde{f}_t\right\rangle \mathrm{d} t \geq 0$, where $M_h$ is defined in Assumption~\ref{assumption:alignment}.
\item For all nested instrument paths $\{f_{(\cdot)},\tilde f_{(\cdot)}\}\in\mathcal P$ satisfying $\langle f_t,\tilde f_t\rangle\ge0$ for almost every  $t\in[0,1]$,  we have that 
$\int_0^1\left\langle f_t,\{(1-c) I-Q\} \tilde{f}_t\right\rangle \mathrm{d} t \geq 0$, where the scalar $c$ and operator $Q$ are as in
Assumption \ref{assumption:alignment}.
\end{enumerate}
\end{assumption}

As before, each statement automatically holds when $f=\tilde{f}$. When $f\neq \tilde{f}$, Assumption~\ref{assumption:weak-alignment} imposes that if the functions have a positive angle on a path, then they also have a positive angle after integration along the path. As shown in Appendix~\ref{sec:bias}, Assumption~\ref{assumption:weak-alignment} is also sufficient to control a few of the terms that appear in the bias of Algorithm~\ref{algo:simultaneous}.\footnote{This technical regularity condition guarantees that $\int_0^t\gamma(s)\partial s>0$ in Lemma~\ref{lemma:alignment}.}

\subsection{Comparison to previous conditions}

Assumption~\ref{assumption:weak-alignment} plays the same role as coercivity, strong curvature, and strong identification conditions in the functional analysis and NPIV literatures. Similar to those works, we use it to guarantee stability of the solution to the inverse problem.

As a first step towards the comparison, we rephrase Assumption~\ref{assumption:weak-alignment} as a kind of positivity on average. Note that, for every bounded operator $B:\mathcal{F} \rightarrow \mathcal{F} $,
$$
\int_0^1\left\langle f_t, B \tilde{f}_t\right\rangle \mathrm{d} t=\int\langle f, B \tilde{f}\rangle d \nu(f, \tilde{f})=\operatorname{trace}\left(B \Gamma \right),\quad \Gamma=\int \tilde{f} \otimes f d \nu(f, \tilde{f}).
$$ 
In the first equality, we introduce $\nu$ as the push-forward of Lebesgue measure on $(0,1)$ by the path $\nu(A):=\int_0^1 1_A\{x(t), y(t)\} \mathrm{d} t$. In the second equality, use outer product notation. Assumption~\ref{assumption:weak-alignment} requires that the eigenvalues are positive on average, rather than positive everywhere.

For further interpretation, simplify the averaging measure $\nu$ to be isotropic, so that $\operatorname{trace}(B \Gamma)=\kappa \operatorname{trace}(B)$ with $\kappa>0$. Then Assumption~\ref{assumption:weak-alignment} reduces to $\operatorname{trace}(B)\geq 0$. In this setting, any coercivity condition that is classically used to guarantee uniqueness of weak solutions
immediately implies weak relative alignment. See, e.g., the Minty-Browder, Babuška-Lax-Milgram, and Lions-Lax-Milgram theorems \cite[Theorem 13.23]{kress1989linear}.

By similar logic, Assumption~\ref{assumption:weak-alignment} resembles conditions in the NPIV literature. See, e.g., the strong curvature requirement on the weak metric \cite[Assumption 3.8(iii)]{ai2007estimation},  and the functional strong identification condition \cite[Definition 1]{bennett2023inference}. Each condition guarantees solvability and stability of the associated inverse problem by imposing some form of positivity.

From a researcher's standpoint, if Assumptions~\ref{assumption:posedness},~\ref{assumption:completeness}, or~\ref{assumption:weak-alignment} is implausible in a particular setting, then we recommend using the rates of Theorem~\ref{theorem:L2} rather than Theorem~\ref{theorem:joint}. Theorem~\ref{theorem:L2} does not require these additional regularity conditions.
\section{Projected rates without source conditions}\label{sec:sourceless}

To clarify the different roles played by Assumptions~\ref{assumption:critical} and~\ref{assumption:source}, we derive projected mean square rates using only Assumption~\ref{assumption:critical}. Assumption~\ref{assumption:source} is essentially a technique to translate projected mean square rates into mean square rates.

\subsection{Beyond ridge regularization}

We study a sequential estimator with regularization besides ridge.

\begin{algorithm}[Nested NPIV with generic regularization]\label{algo:npiv_general}
    Given observations $(A_i,B_i,C_i)$, an initial estimator $\hat{g}$ that may be estimated on the same data, and hyperparameter values $(\mu,\mu'')>0$,
    $$
   \hat{h}=\argmin_{h\in\mathcal{H}}\left[\sup_{f\in\mathcal{F}}\left\{ 2\cdot \textsc{loss}(f,\hat{g},h)-\textsc{penalty} (f,\mu'')\right\}+\textsc{penalty}(h,\mu) \right]$$
    where $\textsc{penalty} (f,\mu'')=\mathbb{E}_n\{f(C)^2\}+\mu''\cdot \|f\|^2_{\mathcal{F}}$ and $\textsc{penalty}(h,\mu)=\mu\cdot \|h\|^2_{\mathcal{H}} $.
\end{algorithm}

Taking $\mu''=0$ and $\|h\|^2_{\mathcal{H}}=\mathbb{E}_n\{h(B)^2\}$ reduces Algorithm~\ref{algo:npiv_general} to Algorithm~\ref{algo:sequential}. More generally,  Algorithm~\ref{algo:npiv_general} allows for complex regularization, e.g. $\ell_1$ norm regularization in sparse linear function spaces, and reproducing kernel Hilbert space (RKHS) norm regularization in RKHSs. Under Assumption~\ref{assumption:critical}, we will prove projected mean square error rates. 
Our analysis of Algorithm~\ref{algo:npiv_general} avoids Assumption~\ref{assumption:source} and accommodates more regularization types. 

\subsection{Projected mean square rate}

\begin{theorem}[Bound for Algorithm~\ref{algo:npiv_general}]\label{theorem:general}
    Suppose Assumption~\ref{assumption:critical} holds for $\mathcal{F}$, $\mathcal{G}$, and $\mathcal{H}\times \mathcal{F}$; and Assumption~\ref{assumption:closed} holds. 
    Further assume $h_0\in \mathcal{H}$ and $\|T(h-h_0)\|^2_{\mathcal{F}}\leq \textsc{lip} \|h-h_0\|^2_{\mathcal{H}}$ for some Lipschitz constant $\textsc{lip}<\infty$.
    Then with probability $1-\zeta$, when $\mu\geq 2\mu'' \cdot  \textsc{lip}$ and $\delta_n=\Omega[\{\log\log(n)+\log(1/\zeta)\}^{1/2}n^{-1/2}]$, we have $\|T(\hat{h}-h_0)\|^2_2=O(R_n)$, where $R_n=\mu\|h_0\|^2_{\mathcal{H}}+\delta_n^2+\|\hat{g}-g_0\|^2_2$.
\end{theorem}

Theorem~\ref{theorem:general} is a projected mean square rate for nested NPIV. It does not require a source condition, and it allows for generic regularization. It generalizes known results for NPIV when $\hat{g}(A)=g_0(A)=Y$ \citep{dikkala2020minimax}. The rate $R_n$ has three terms: bias $\mu\|h_0\|^2_{\mathcal{H}}$, variance $\delta_n^2$, and initial estimation error $\|\hat{g}-g_0\|_2^2$. 
Only the ill posedness of $\hat{g}$ appears in Theorem~\ref{theorem:general}, because the definition of projected mean square error sidesteps the ill posedness of $\hat{h}$.

\begin{corollary}[Rate for Algorithm~\ref{algo:npiv_general}]\label{cor:general}
     Suppose the conditions of Theorem~\ref{theorem:general} hold. Set $
    \mu=O(\delta_n^2)$. Then with probability $1-\zeta$, $\|T(\hat{h}-h_0)\|^2_2=O\left(\delta_n^2+\|\hat{g}-g_0\|^2_2\right)$.
\end{corollary}

Future work may strengthen Theorem~\ref{theorem:general} and Corollary~\ref{cor:general} to mean square rates by placing further approximation assumptions, e.g. a restricted eigenvalue condition \citep{gautier2011high,gautier2018high}.

\section{Proof of Theorem~\ref{theorem:L2}}\label{sec:rate1}

Let $S:\mathcal{G}\rightarrow \mathbb{L}_2 $ be the operator $g\mapsto \E\{g(A)|C' = \cdot\}$. Let $g_0$ be the minimal $\mathbb{L}_2$ norm solution of $S(g)-Y=0$. 

Let $T:\mathcal{H}\times\mathcal{G}\rightarrow \mathbb{L}_2 $ be the operator $(h,g)\mapsto \E\{h(B)-g(A)|C = \cdot\}$. To lighten notation, we abbreviate $Th=T(h,0)$. Let $h_0$ be the minimal $\mathbb{L}_2$ norm solution of $T(h,g_0)=0$.

Let $T^*$ be the adjoint of $T$.

\begin{remark}[AM-GM inequality]
    If $a=O(b\cdot c)$ then $a\leq \frac{b^2}{2}+O(c^2)$.
\end{remark}

\subsection{High probability events}

Consider the space $\mathcal{Q}=\prod_{j=1}^J \mathcal{Q}_j$ of vector valued functions $q(W)=\{q_1(W),...,q_J(W)\}$, where each component is almost surely bounded. Let $\ell\{ W; q(W)\}$ be a loss.

\begin{lemma}[Concentration; Lemma 14 of \cite{foster2023orthogonal}]\label{lemma:concentration}
Suppose Assumption~\ref{assumption:critical} holds for each $\mathcal{Q}_j$. Further suppose $\ell$ is $O(1)$ Lipschitz in its second argument with respect to $\ell_2$ norm. With probability $1-\zeta$, for any fixed $q_0\in\mathcal{Q}$ independent of data and for all $q\in\mathcal{Q}$, when $\delta_n=\Omega[\{J\log\log(n)+\log(1/\zeta)\}^{1/2}n^{-1/2}]$,
$$
\left|\left(\mathbb{E}_n-\mathbb{E}\right)[\ell\{ W ; q(W)\}-\ell\{ W ; q_0(W)\}]\right|=O\left(J \delta_n \sum_{j=1}^J\left\|q_j-q_{0,j}\right\|_2+J \delta_n^2\right).
$$
\end{lemma}

\begin{lemma}[High probability events]\label{lemma:upper_bound_empirical}
    Suppose Assumption~\ref{assumption:critical} holds for $\mathcal{F}$, $\mathcal{G}$, and $\mathcal{H\times \mathcal{F}}$. With probability $1-\zeta$, when $\delta_n=\Omega[\{\log\log(n)+\log(1/\zeta)\}^{1/2}n^{-1/2}]$,
    $$
    \left|\left(\mathbb{E}_n-\mathbb{E}\right)[ 
2\{h(B)-g(A)\}f(C)-f(C)^2
    ]\right|=O\left(\delta_n \left\|g-g_0\right\|_2+\delta_n\|f\|_2+\delta_n^2\right).
    $$
\end{lemma}

\begin{proof}
    We appeal to Lemma~\ref{lemma:concentration} for each term in the empirical process.
    \begin{enumerate}
        \item Consider $\left(\mathbb{E}_n-\mathbb{E}\right)\{h(B)f(C)\}$. Let $q(W)=h(B)f(C)$, $q_0(W)=0$, and $\ell\{W,q(W)\}=h(B)f(C)$, which has derivative $1$ in its second argument. 
        Then $\left|\left(\mathbb{E}_n-\mathbb{E}\right)\{h(B)f(C)-0\}\right|=O\left(\delta_n\|hf\|_2+\delta_n^2\right)=O\left(\delta_n\|f\|_2+\delta_n^2\right)
    $ since $\mathcal{H}$ is almost surely bounded.
        \item Consider $\left(\mathbb{E}_n-\mathbb{E}\right)\{g(A)f(C)\}$. Let $q(W)=\{g(A),f(C)\}$, $q_0(W)=\{g_0(A),0\}$, and $\ell\{W,q(W)\}=g(A)f(C)$, which has derivative $\{f(C),g(A)\}$ in its second argument. 
        Then $\left|\left(\mathbb{E}_n-\mathbb{E}\right)\{g(A)f(C)-0\}\right|=O\left(\delta_n\|g-g_0\|_2+\delta_n\|f\|_2+\delta_n^2\right).
    $ 
        \item Consider $\left(\mathbb{E}_n-\mathbb{E}\right)\{f(C)^2\}$. Let $q(W)=f(C)$, $q_0(W)=0$, and $\ell\{W,q(W)\}=f(C)^2$, which has derivative $2f(C)$ in its second argument. 
        Then $\left|\left(\mathbb{E}_n-\mathbb{E}\right)\{f(C)^2-0\}\right|=O\left(\delta_n\|f\|_2+\delta_n^2\right). \qedhere
    $ 
    \end{enumerate}
\end{proof}

\begin{lemma}[High probability events under weaker conditions]\label{lemma:upper_bound_empirical_weak}
     Suppose Assumption~\ref{assumption:critical} holds for $\mathcal{F}$, $\mathcal{G}$, and $\mathcal{H}$. With probability $1-\zeta$, when $\delta_n=\Omega[\{\log\log(n)+\log(1/\zeta)\}^{1/2}n^{-1/2}]$,  for a data independent hypothesis $h_*\in\mathcal{H}$,
    $$
    \left|\left(\mathbb{E}_n-\mathbb{E}\right)[ 
2\{h(B)-g(A)\}f(C)-f(C)^2
    ]\right|=O\left(\delta_n  \left\|h-h_*\right\|_2 + \delta_n \left\|g-g_0\right\|_2+\delta_n\|f\|_2+\delta_n^2\right).
    $$
\end{lemma}

\begin{proof}
     The proof is identical to Lemma~\ref{lemma:upper_bound_empirical} except for the first empirical process $\left(\mathbb{E}_n-\mathbb{E}\right)\{h(B)f(C)\}$. Let $q(W)=\{h(B),f(C)\}$, $q_0(W)=\{h_*(B),0\}$, and $\ell\{W,q(W)\}=h(B)f(C)$, which has derivative $\{f(C),h(B)\}$ in its second argument. 
        Then by Lemma~\ref{lemma:concentration}, $\left|\left(\mathbb{E}_n-\mathbb{E}\right)\{h(B)f(C)-0\}\right|=O\left(\delta_n\|h-h_*\|_2+\delta_n\|f\|_2+\delta_n^2\right).
    $ 
\end{proof}

Let $\|h\|^2_{2,n}=\mathbb{E}_n\{h(B)^2\}$, $I_n=2\mu(\|h\|^2_{2,n}-\|h_*\|_{2,n}^2)$,  and $I=2\mu(\|h\|^2_2-\|h_*\|_2^2)$ for some data independent $h_*\in\mathcal{H}$.

\begin{lemma}[High probability event for regularization]\label{lemma:bound_regularization}
    Suppose Assumption~\ref{assumption:critical} holds for $\mathcal{H}$. With probability $1-\zeta$, when $\delta_n=\Omega[\{\log\log(n)+\log(1/\zeta)\}^{1/2}n^{-1/2}]$,  for a data independent hypothesis $h_*\in\mathcal{H}$, $|I_n-I|=O(\mu\delta_n \|h-h_*\|_2+\mu\delta_n^2)$.
\end{lemma}

\begin{proof}
    Let $q(W)=h(B)$, $q_0(W)=h_*(B)$, and $\ell\{W,q(W)\}=h(B)^2$, which has derivative $2h(B)$ in its second argument. 
        Then by Lemma~\ref{lemma:concentration}, $|\|h\|^2_{2,n}-\|h_*\|_{2,n}^2-(\|h\|^2_2-\|h_*\|_2^2)|=\left|\left(\mathbb{E}_n-\mathbb{E}\right)\{h(B)^2-h_*(B)^2\}\right|=O\left(\delta_n\|h-h_*\|_2+\delta_n^2\right).
    $ 
\end{proof}


\subsection{Adversarial maximization}

Let $\omega>0$ and define
$
L_{\omega}(h,g)=\sup_{f \in \mathcal{F}} \mathbb{E}\left[2\left\{h(B)-g(A) \right\}f(C)-\omega f(C)^2\right]
$. Let $f_h=T(h-h_0,0)$ and $f_g=T(0,g-g_0)$ so that $f_h+f_g=T(h-h_0,g-g_0)$. 

\begin{lemma}[Maximization identity]\label{lemma:innermax_weak}
If $f_h,f_g\in \mathcal{F}$ for any $h\in\mathcal{H}$ and $g\in\mathcal{G}$, then 
$$
L_{\omega}(h,g) 
        = \omega^{-1}\mathbb{E}\left[2\left\{h(B)-g(A)\right\}(f_h+f_g)(C)- (f_h+f_g)^2(C)\right]=\omega^{-1}\left\|T\left(h-h_0,g-g_0\right)\right\|_2^2.
$$
In particular, since $\mathbb{E}[\{h_0(B)-g_0(A)\}f(C)]=0$,
\begin{align*}
    L_{\omega}(h,g_0) &=  \omega^{-1}\mathbb{E}\left[2\left\{h(B)-h_0(B)\right\}f_h(C)-  f_h^2(C)\right]=\omega^{-1}\left\|T\left(h-h_0,0\right)\right\|_2^2,  \\
L_{\omega}(h_0,g) &= \omega^{-1}\mathbb{E}\left[2\left\{g_0(A)-g(A)\right\}f_g(C)-  f_g^2(C)\right]=\omega^{-1}\left\|T\left(0,g-g_0\right)\right\|_2^2.
\end{align*}
\end{lemma}

\begin{proof}
    By the law of iterated expectations, write $L_{\omega}(h,g)$ equal to
\begin{align*}
&\sup_{f \in \mathcal{F}} \mathbb{E}\left[2\left\{h(B)-h_0(B)+g_0(A)-g(A) \right\}f(C)-\omega f(C)^2\right]  \\
&=\sup_{f \in \mathcal{F}} \mathbb{E}\left\{2\cdot T(h-h_0,g-g_0)f(C)\right\}-\omega \mathbb{E}\{f(C)^2\}
=\sup_{f\in\mathcal{F}}2 \langle T(h-h_0,g-g_0),f \rangle_2-\omega \langle f,f \rangle_2.
\end{align*}
Taking the Gateaux derivative with respect to $f$, we see that the first order condition is $2T(h-h_0,g-g_0)-2\omega f^*=0$. Rearranging, $f^*=\omega^{-1}T(h-h_0,g-g_0)$. Substitute $f^*$ into initial and final expressions in the display, and recall $T(h-h_0,g-g_0)=f_h+f_g$.
\end{proof}

\subsection{Algorithm~\ref{algo:sequential}}

We study the ridge regularized estimator and its population analogue:
\begin{align*}
    \hat{h}&=\arg \min _{h \in \mathcal{H}} \sup_{f \in \mathcal{F}} \mathbb{E}_n\left[2\left\{h(B)-\hat{g}(A)\right\} f(C)-f(C)^2\right]+\mu\E_n\{h(B)^2\}, \\
     h_{\mu}&=\arg \min _{h \in \mathcal{H}} \sup_{f \in \mathcal{F}} \mathbb{E}\left[2\left\{h(B)-g_0(A)\right\} f(C)-f(C)^2\right]+\mu\E\{h(B)^2\}.
\end{align*}

\begin{lemma}[From weak to strong metric]\label{lemma:strong_conv}
    For any $\mu>0$,
    $$
\|T(\hat{h}-h_{2\mu})\|^2_2+2\mu\|\hat{h}-h_{2\mu}\|^2_2 =\|T(\hat{h}-h_0)\|^2_2-\|T(h_{2\mu}-h_0)\|^2_2+2\mu(\|\hat{h}\|^2_2-\|h_{2\mu}\|^2_2).
$$
\end{lemma}

\begin{proof}
To lighten notation, let $h_{(\tau)}=h_{2\mu}+\tau (\hat{h}-h_{2\mu})$.
Define
$
W(\tau)=\|T\{h_{(\tau)}-h_0\}\|^2_2+2\mu\|h_{(\tau)}\|^2_2
$. Clearly $W(\tau)$ is quadratic in $\tau$ and strongly convex. By Lemma~\ref{lemma:innermax_weak},
\begin{align*}
    W(\tau)&=L_1\{h_{(\tau)},g_0\}+2\mu\|h_{(\tau)}\|^2_2 
    =\sup_{f \in \mathcal{F}} \mathbb{E}\left[2\left\{h_{(\tau)}(B)-g_0(A) \right\}f(C)-f(C)^2\right]+2\mu\|h_{(\tau)}\|^2_2
\end{align*}
which is minimized at $\tau=0$ by the definition of $h_{2\mu}$. Therefore by an exact Taylor expansion,
$
 \frac{1}{2}\partial_{\tau}^2 W(0) = \partial_{\tau} W(0)+\frac{1}{2}\partial_{\tau}^2 W(0)  = W(1)-W(0).
$
The derivatives of $W(\tau)$ are
$$
    \partial_{\tau}W(\tau)=2\langle T\{h_{(\tau)}-h_0\},T(\hat{h}-h_{2\mu})\rangle_2+4\mu\langle h_{(\tau)},\hat{h}-h_{2\mu}\rangle_2$$ 
    and 
    $
\partial_{\tau}^2 W(\tau)=2\|T (\hat{h}-h_{2\mu})\|_2^2+4\mu\|\hat{h}-h_{2\mu}\|_2^2.$
  Substituting in $\partial_{\tau}^2 W(0)$, $W(1)$, and $W(0)$ into the Taylor expansion yields the result.
\end{proof}

\begin{lemma}[Relating weak metrics]\label{lemma:main}
    Suppose the conditions of Lemmas~\ref{lemma:innermax_weak} and~\ref{lemma:upper_bound_empirical} hold. With probability $1-\zeta$, when $\delta_n=\Omega[\{\log\log(n)+\log(1/\zeta)\}^{1/2}n^{-1/2}]$, 
    $\|T(\hat{h}-h_0)\|_2^2-\|T(h_*-h_0)\|_2^2\leq 8\|T(h_*-h_0)\|_2^2+2\mu(\|h_*\|^2_{2,n}-\|\hat{h}\|^2_{2,n})+O\{\|\hat{g}-g_0\|_2^2+\delta_n\|T(\hat{h}-h_*)\|_2+\delta_n^2\}.$
\end{lemma}

\begin{proof}
    We proceed in steps.
    \begin{enumerate}
        \item  By Lemma~\ref{lemma:innermax_weak},
    \begin{align*}
        \|T(\hat{h}-h_0)\|_2^2
        &=L_1(\hat{h},g_0) 
        =\mathbb{E}\left[2\left\{\hat{h}(B)-h_0(B)\right\}f_{\hat{h}}(C)-  f_{\hat{h}}(C)^2\right] \\
        &=\mathbb{E}\left[2\left\{\hat{h}(B)-\hat{g}(A)+\hat{g}(A)-g_0(A)\right\}f_{\hat{h}}(C)-  f_{\hat{h}}(C)^2\right].
    \end{align*}
        \item Focusing on the third and fourth term, by Lemma~\ref{lemma:innermax_weak}, Cauchy Schwarz, and AM-GM
        \begin{align*}
            \mathbb{E}\left[2\left\{\hat{g}(A)-g_0(A)\right\}f_{\hat{h}}(C)\right]
            &\leq  2\|\hat{g}-g_0\|_2 \|T(\hat{h}-h_0)\|_2 
            \leq 2\|\hat{g}-g_0\|_2^2+\frac{1}{2}\|T(\hat{h}-h_0)\|^2_2.
        \end{align*}
        \item Focusing on the remaining terms, by Lemma~\ref{lemma:upper_bound_empirical} with probability $1-\zeta$
        \begin{align*}
            &\mathbb{E}\left[2\left\{\hat{h}(B)-\hat{g}(A)\right\}f_{\hat{h}}(C)-  f_{\hat{h}}(C)^2\right]  \\
            &\leq \mathbb{E}_n\left[2\left\{\hat{h}(B)-\hat{g}(A)\right\}f_{\hat{h}}(C)-  f_{\hat{h}}(C)^2\right]  + O\left(\delta_n \left\|\hat{g}-g_0\right\|_2+\delta_n\|f_{\hat{h}}\|_2+\delta_n^2\right).
        \end{align*}
        \begin{enumerate}
            \item For the empirical expectation, by Assumption~\ref{assumption:closed}, the definition of $\hat{h}$, Lemma~\ref{lemma:upper_bound_empirical}, and the AM-GM inequality $O(\delta_n\|f\|_2)\leq \frac{1}{2}\|f\|^2_2+O(\delta_n^2)$, with probability $1-\zeta$,
             {
        \begin{align*}
            &\mathbb{E}_n\left[2\left\{\hat{h}(B)-\hat{g}(A)\right\}f_{\hat{h}}(C)-  f_{\hat{h}}(C)^2\right]
            \leq \sup_{f\in\mathcal{F}}  \mathbb{E}_n\left[2\left\{\hat{h}(B)-\hat{g}(A)\right\}f(C)-  f(C)^2\right] \\
            &\leq \sup_{f\in\mathcal{F}}  \mathbb{E}_n\left[2\left\{h_*(B)-\hat{g}(A)\right\}f(C)-  f(C)^2\right] +\mu(\|h_*\|^2_{2,n}-\|\hat{h}\|^2_{2,n}) \\
            &\leq \sup_{f\in\mathcal{F}}  \mathbb{E}\left[2\left\{h_*(B)-\hat{g}(A)\right\}f(C)-  f(C)^2\right]  +O\left(\delta_n \left\|\hat{g}-g_0\right\|_2+\delta_n\|f\|_2+\delta_n^2\right)  \\
            &\quad +\mu(\|h_*\|^2_{2,n}-\|\hat{h}\|^2_{2,n}) \\
            &\leq \sup_{f\in\mathcal{F}}  \mathbb{E}\left[2\left\{h_*(B)-\hat{g}(A)\right\}f(C)-  \frac{1}{2}f(C)^2\right]  +O\left(\delta_n \left\|\hat{g}-g_0\right\|_2+\delta_n^2\right)+\mu(\|h_*\|^2_{2,n}-\|\hat{h}\|^2_{2,n}).
        \end{align*}
        }
        By triangle inequality, Lemma~\ref{lemma:innermax_weak}, and Jensen's inequality
         {
        \begin{align*}
            &\sup_{f\in\mathcal{F}}  \mathbb{E}\left[2\left\{h_*(B)-\hat{g}(A)\right\}f(C)-  \frac{1}{2}f(C)^2\right] \\
            &\leq \sup_{f\in\mathcal{F}}  \mathbb{E}\left[2\left\{h_*(B)-g_0(A)\right\}f(C)-  \frac{1}{4}f(C)^2\right]
            + \sup_{f\in\mathcal{F}}  \mathbb{E}\left[2\left\{h_0(B)-\hat{g}(A)\right\}f(C)-  \frac{1}{4}f(C)^2\right] \\
            &=L_{1/4}(h_*,g_0)+L_{1/4}(h_0,\hat{g}) 
            =4 \|T(h_*-h_0,0)\|_2^2+4\|T(0,\hat{g}-g_0)\|_2^2 \\
            &\leq 4 \|T(h_*-h_0)\|_2^2+4\|\hat{g}-g_0\|_2^2.
        \end{align*}
        }
        In summary, $\mathbb{E}_n\left[2\left\{\hat{h}(B)-\hat{g}(A)\right\}f_{\hat{h}}(C)-  f_{\hat{h}}(C)^2\right] $ is bounded by
        $$
4 \|T(h_*-h_0)\|_2^2+O\left(\|\hat{g}-g_0\|_2^2+\delta_n^2\right)+\mu(\|h_*\|^2_{2,n}-\|\hat{h}\|^2_{2,n}).
$$
            \item Consider the penultimate term. 
             By Lemma~\ref{lemma:innermax_weak}, triangle inequality, and AM-GM,
        \begin{align*}
              O(\delta_n \|f_{\hat{h}}\|_2)
              &= O\{\delta_n \|T(\hat{h}-h_0)\|_2\} = O\{\delta_n \|T(\hat{h}-h_*)\|_2+ \delta_n\|T(h_*-h_0)\|_2\}\\
              &\leq \frac{1}{2}\|T(h_*-h_0)\|^2_2+O\{\delta_n \|T(\hat{h}-h_*)\|_2+ \delta_n^2\}.
        \end{align*}
        \end{enumerate}
       In summary, we bound $\mathbb{E}\left[2\left\{\hat{h}(B)-\hat{g}(A)\right\}f_{\hat{h}}(C)-  f_{\hat{h}}(C)^2\right]$ by 
      \begin{align*}
           \frac{9}{2} \|T(h_*-h_0)\|_2^2+\mu(\|h_*\|^2_{2,n}-\|\hat{h}\|^2_{2,n})+
        O\{\|\hat{g}-g_0\|_2^2+\delta_n \|T(\hat{h}-h_*)\|_2+\delta_n^2\}.
      \end{align*}
      \item Collecting results,
      \begin{align*}
       \|T(\hat{h}-h_0)\|_2^2
       &\leq \frac{1}{2}\|T(\hat{h}-h_0)\|^2_2+ \frac{9}{2} \|T(h_*-h_0)\|_2^2+\mu(\|h_*\|^2_{2,n}-\|\hat{h}\|^2_{2,n}) \\
           &\quad +
        O\{\|\hat{g}-g_0\|_2^2+\delta_n \|T(\hat{h}-h_*)\|_2+\delta_n^2\}. \qedhere
       \end{align*}
    \end{enumerate}
   
\end{proof}

\begin{lemma}[Relating weak metrics under weaker conditions]\label{lemma:main_weak}
    Suppose the conditions of Lemmas~\ref{lemma:innermax_weak} and~\ref{lemma:upper_bound_empirical_weak} hold. With probability $1-\zeta$, when $\delta_n=\Omega[\{\log\log(n)+\log(1/\zeta)\}^{1/2}n^{-1/2}]$, 
    $\|T(\hat{h}-h_0)\|_2^2-\|T(h_*-h_0)\|_2^2\leq 8\|T(h_*-h_0)\|_2^2+2\mu(\|h_*\|^2_{2,n}-\|\hat{h}\|^2_{2,n})+O\{\|\hat{g}-g_0\|_2^2+\delta_n\|T(\hat{h}-h_*)\|_2+\delta_n\|\hat{h}-h_*\|_2+\delta_n^2\}$.
\end{lemma}

\begin{proof}
    The argument is identical to Lemma~\ref{lemma:main}, using Lemma~\ref{lemma:upper_bound_empirical_weak}  instead of~\ref{lemma:upper_bound_empirical}.
\end{proof}

\begin{lemma}[Regularization bias; Lemma 3 of \cite{bennett2023source}]\label{lemma:bias}
    If Assumption~\ref{assumption:source} holds then $\|h_{\mu}-h_0\|_2^2\leq \|w_h\|^2_2 \mu^{\min(\beta,2)}$ and $\|T(h_{\mu}-h_0)\|_2^2\leq \|w_h\|^2_2 \mu^{\min(\beta+1,2)}$.
\end{lemma}

\begin{proof}
    For completeness, we present the proof in Appendix~\ref{sec:bias}.
\end{proof}

\begin{proof}[Proof of Theorem~\ref{theorem:L2}]
    We consolidate both versions of the result, either placing the stronger assumption on the product space ($\chi=0$, Lemma~\ref{lemma:upper_bound_empirical}) or not ($\chi=1$, Lemma~\ref{lemma:upper_bound_empirical_weak}). Take $h_*=h_{2\mu}$. 
    \begin{enumerate}
        \item By Lemmas~\ref{lemma:strong_conv},~\ref{lemma:main},~\ref{lemma:main_weak}, and~\ref{lemma:bound_regularization}, AM-GM inequality, and $\mu+\chi=O(1)$, we bound the quantity $\|T(\hat{h}-h_{2\mu})\|^2_2+2\mu\|\hat{h}-h_{2\mu}\|^2_2$ by
         { 
         \begin{align*}
             &\|T(\hat{h}-h_0)\|^2_2-\|T(h_{2\mu}-h_0)\|^2_2+2\mu(\|\hat{h}\|^2_2-\|h_{2\mu}\|^2_2) \\
             &\leq 8\|T(h_{2\mu}-h_0)\|_2^2+2\mu(\|h_{2\mu}\|^2_{2,n}-\|\hat{h}\|^2_{2,n})+2\mu(\|\hat{h}\|^2_2-\|h_{2\mu}\|^2_2)  \\
             &\quad +O\{\|\hat{g}-g_0\|_2^2+\delta_n\|T(\hat{h}-h_{2\mu})\|_2+\chi\delta_n\|\hat{h}-h_{2\mu}\|_2+ \delta_n^2\}  \\
             &=8\|T(h_{2\mu}-h_0)\|_2^2-I_n+I+O\{\|\hat{g}-g_0\|_2^2+\delta_n\|T(\hat{h}-h_{2\mu})\|_2+\chi\delta_n\|\hat{h}-h_{2\mu}\|_2+\delta_n^2\} \\
             &\leq 8\|T(h_{2\mu}-h_0)\|_2^2+O\{\|\hat{g}-g_0\|_2^2+\delta_n\|T(\hat{h}-h_{2\mu})\|_2+(\mu+\chi)\delta_n\|\hat{h}-h_{2\mu}\|_2+\delta_n^2\} \\
             &\leq 8\|T(h_{2\mu}-h_0)\|_2^2+\frac{1}{2}\|T(\hat{h}-h_{2\mu})\|^2_2+O\{\|\hat{g}-g_0\|_2^2+(\sqrt{\mu+\chi})^2\delta_n\|\hat{h}-h_{2\mu}\|_2+\delta_n^2\} \\
             &\leq 8\|T(h_{2\mu}-h_0)\|_2^2+\frac{1}{2}\|T(\hat{h}-h_{2\mu})\|^2_2+\frac{\mu+\chi}{2}\|\hat{h}-h_{2\mu}\|^2_2+O(\|\hat{g}-g_0\|_2^2+\delta_n^2).
         \end{align*}
         }
        Rearranging yields
         $$
         \frac{1}{2}\|T(\hat{h}-h_{2\mu})\|^2_2+\frac{3\mu-\chi}{2}\|\hat{h}-h_{2\mu}\|^2_2
         \leq 
         8\|T(h_{2\mu}-h_0)\|_2^2+O(\|\hat{g}-g_0\|_2^2+\delta_n^2),
         $$
         \begin{align*}
           \text{ hence }  \|T(\hat{h}-h_{2\mu})\|^2_2&\leq 16\|T(h_{2\mu}-h_0)\|_2^2+O(\|\hat{g}-g_0\|_2^2+\delta_n^2),\\
             \|\hat{h}-h_{2\mu}\|^2_2&\leq  \frac{16}{3\mu-\chi}\|T(h_{2\mu}-h_0)\|_2^2+\frac{2}{3\mu-\chi}\cdot O(\|\hat{g}-g_0\|_2^2+\delta_n^2).
         \end{align*}
        \item For the weak metric, we use triangle inequality and Lemma~\ref{lemma:bias}:
        \begin{align*}
            \|T(\hat{h}-h_0)\|_2^2&\leq 2 \|T(\hat{h}-h_{2\mu})\|_2^2 + 2 \|T(h_{2\mu}-h_0)\|_2^2
            \leq  18\|T(h_{2\mu}-h_0)\|_2^2+O(\|\hat{g}-g_0\|_2^2+\delta_n^2) \\
            &=O\{\|w_h\|^2_2 \mu^{\min(\beta+1,2)}+\|\hat{g}-g_0\|_2^2+\delta_n^2\}. 
        \end{align*}
        \item For the strong metric, we use triangle inequality and Lemma~\ref{lemma:bias}:
        \begin{align*}
            \|\hat{h}-h_0\|_2^2 &\leq 2 \|\hat{h}-h_{2\mu}\|_2^2 + 2 \|h_{2\mu}-h_0\|_2^2 \\
            &\leq \frac{32}{3\mu-\chi}\|T(h_{2\mu}-h_0)\|_2^2+2 \|h_{2\mu}-h_0\|_2^2+\frac{4}{3\mu-\chi}\cdot O(\|\hat{g}-g_0\|_2^2+\delta_n^2)  \\
            &=O\left\{\frac{\|w_h\|^2_2 \mu^{\min(\beta+1,2)}}{\mu-\chi/3}+\|w_h\|^2_2 \mu^{\min(\beta,2)}  +\frac{\|\hat{g}-g_0\|_2^2+\delta_n^2}{\mu-\chi/3}\right\}.
        \end{align*}
        When $\chi=0$, $\|\hat{h}-h_0\|_2^2=\{\|w_h\|^2_2 \mu^{\min(\beta,1)}+\mu^{-1}\|\hat{g}-g_0\|_2^2+\mu^{-1}\delta_n^2\}$. \qedhere
    \end{enumerate}

\end{proof}

\subsection{Algorithm~\ref{algo:npiv_general}}

We now study
$$
 \hat{h}=\arg \min _{h \in \mathcal{H}} \sup_{f \in \mathcal{F}} \mathbb{E}_n\left[2\left\{h(B)-\hat{g}(A)\right\} f(C)-f(C)^2\right]-\mu''\left\|f\right\|_{\mathcal{F}}^2+\mu\left\|h\right\|_{\mathcal{H}}^2.
$$

\begin{lemma}[Relating weak metrics]\label{lemma:main_general}
    Suppose the conditions of Lemma~\ref{lemma:main} hold and $\|T(h-h_0)\|^2_{\mathcal{F}}\leq \textsc{lip} \|h-h_0\|^2_{\mathcal{H}}$. With probability $1-\zeta$, when $\delta_n=\Omega[\{\log\log(n)+\log(1/\zeta)\}^{1/2}n^{-1/2}]$, 
    $\|T(\hat{h}-h_0)\|_2^2-\|T(h_*-h_0)\|_2^2\leq 8\|T(h_*-h_0)\|_2^2+2\mu(\|h_*\|^2_{\mathcal{H}}-\|\hat{h}\|^2_{\mathcal{H}})+2\mu'' \textsc{lip} \|\hat{h}-h_0\|_{\mathcal{H}}^2+O\{\|\hat{g}-g_0\|_2^2+\delta_n\|T(\hat{h}-h_*)\|_2+\delta_n^2\}.$
\end{lemma}

\begin{proof}
    We proceed in steps similar to Lemma~\ref{lemma:main}.
    \begin{enumerate}
        \item  As before,
$
        \|T(\hat{h}-h_0)\|_2^2=\mathbb{E}\left[2\left\{\hat{h}(B)-\hat{g}(A)+\hat{g}(A)-g_0(A)\right\}f_{\hat{h}}(C)-  f_{\hat{h}}(C)^2\right].
 $
        \item As before,
     $
            \mathbb{E}\left[2\left\{\hat{g}(A)-g_0(A)\right\}f_{\hat{h}}(C)\right]
            \leq 2\|\hat{g}-g_0\|_2^2+\frac{1}{2}\|T(\hat{h}-h_0)\|^2_2.
       $
        \item As before, with probability $1-\zeta$, we bound $\mathbb{E}\left[2\left\{\hat{h}(B)-\hat{g}(A)\right\}f_{\hat{h}}(C)-  f_{\hat{h}}(C)^2\right] $ by 
        \begin{align*}
            \mathbb{E}_n\left[2\left\{\hat{h}(B)-\hat{g}(A)\right\}f_{\hat{h}}(C)-  f_{\hat{h}}(C)^2\right]  + O\left(\delta_n \left\|\hat{g}-g_0\right\|_2+\delta_n\|f_{\hat{h}}\|_2+\delta_n^2\right).
        \end{align*}
        \begin{enumerate}
            \item  Consider the empirical expectation. By Assumption~\ref{assumption:closed}, the definition of $\hat{h}$, Lemma~\ref{lemma:upper_bound_empirical}, and the AM-GM inequality $O(\delta_n\|f\|_2)\leq \frac{1}{2}\|f\|^2_2+O(\delta_n^2)$, with probability $1-\zeta$,
             { 
        \begin{align*}
            &\mathbb{E}_n\left[2\left\{\hat{h}(B)-\hat{g}(A)\right\}f_{\hat{h}}(C)-  f_{\hat{h}}(C)^2\right] -\mu'' \|f_{\hat{h}}\|^2_{\mathcal{F}} \\
            &\leq \sup_{f\in\mathcal{F}}  \mathbb{E}_n\left[2\left\{\hat{h}(B)-\hat{g}(A)\right\}f(C)-  f(C)^2\right]-\mu'' \|f\|^2_{\mathcal{F}} \\
            &\leq \sup_{f\in\mathcal{F}}  \mathbb{E}_n\left[2\left\{h_*(B)-\hat{g}(A)\right\}f(C)-  f(C)^2\right] +\mu(\|h_*\|^2_{\mathcal{H}}-\|\hat{h}\|^2_{\mathcal{H}}) -\mu'' \|f\|^2_{\mathcal{F}}\\
            &\leq \sup_{f\in\mathcal{F}}  \mathbb{E}\left[2\left\{h_*(B)-\hat{g}(A)\right\}f(C)-  f(C)^2\right]  +O\left(\delta_n \left\|\hat{g}-g_0\right\|_2+\delta_n\|f\|_2+\delta_n^2\right) \\
            &\quad +\mu(\|h_*\|^2_{\mathcal{H}}-\|\hat{h}\|^2_{\mathcal{H}})-\mu'' \|f\|^2_{\mathcal{F}} \\
            &\leq \sup_{f\in\mathcal{F}}  \mathbb{E}\left[2\left\{h_*(B)-\hat{g}(A)\right\}f(C)-  \frac{1}{2}f(C)^2\right]  +O\left(\delta_n \left\|\hat{g}-g_0\right\|_2+\delta_n^2\right)+\mu(\|h_*\|^2_{\mathcal{H}}-\|\hat{h}\|^2_{\mathcal{H}}).
        \end{align*}
        }
        As before,
        \begin{align*}
            \sup_{f\in\mathcal{F}}  \mathbb{E}\left[2\left\{h_*(B)-\hat{g}(A)\right\}f(C)-  \frac{1}{2}f(C)^2\right] \leq 4 \|T(h_*-h_0)\|_2^2+4\|\hat{g}-g_0\|_2^2.
        \end{align*}
        Moreover,
        $
        \|f_{\hat{h}}\|^2_{\mathcal{F}}=\|T(\hat{h}-h_0)\|^2_{\mathcal{F}}\leq \textsc{lip} \|\hat{h}-h_0\|_{\mathcal{H}}^2.
        $
        We conclude that the quantity $\mathbb{E}_n\left[2\left\{\hat{h}(B)-\hat{g}(A)\right\}f_{\hat{h}}(C)-  f_{\hat{h}}(C)^2\right] $ is bounded by
        $$
4 \|T(h_*-h_0)\|_2^2+O\left(\|\hat{g}-g_0\|_2^2+\delta_n^2\right)+\mu(\|h_*\|^2_{\mathcal{H}}-\|\hat{h}\|^2_{\mathcal{H}})+\mu'' \textsc{lip} \|\hat{h}-h_0\|_{\mathcal{H}}^2.
$$
            \item Consider the penultimate term. 
             As before,
        $
              O(\delta_n \|f_{\hat{h}}\|_2)
              \leq \frac{1}{2}\|T(h_*-h_0)\|^2_2+O\{\delta_n \|T(\hat{h}-h_*)\|_2+ \delta_n^2\}.
        $
        \end{enumerate}
       In summary, we bound $\mathbb{E}\left[2\left\{\hat{h}(B)-\hat{g}(A)\right\}f_{\hat{h}}(C)-  f_{\hat{h}}(C)^2\right]$ by 
      \begin{align*}
           \frac{9}{2} \|T(h_*-h_0)\|_2^2+\mu(\|h_*\|^2_{\mathcal{H}}-\|\hat{h}\|^2_{\mathcal{H}})+\mu'' \textsc{lip} \|\hat{h}-h_0\|_{\mathcal{H}}^2+
        O\{\|\hat{g}-g_0\|_2^2+\delta_n \|T(\hat{h}-h_*)\|_2+\delta_n^2\}.
      \end{align*}
      \item Collecting results,
      \begin{align*}
       \|T(\hat{h}-h_0)\|_2^2
       &\leq \frac{1}{2}\|T(\hat{h}-h_0)\|^2_2+ \frac{9}{2} \|T(h_*-h_0)\|_2^2+\mu(\|h_*\|^2_{\mathcal{H}}-\|\hat{h}\|^2_{\mathcal{H}})+\mu'' \textsc{lip} \|\hat{h}-h_0\|_{\mathcal{H}}^2 \\
           &\quad +
        O\{\|\hat{g}-g_0\|_2^2+\delta_n \|T(\hat{h}-h_*)\|_2+\delta_n^2\}.
       \end{align*}
       Finally, rearrange as before. \qedhere
    \end{enumerate}
   
\end{proof}

\begin{proof}[Proof of Theorem~\ref{theorem:general}]
    Take $h_*=h_{0}$.  By Lemma~\ref{lemma:main_general},  we bound $\|T(\hat{h}-h_0)\|_2^2$ by
        $$
        2\mu(\|h_0\|^2_{\mathcal{H}}-\|\hat{h}\|^2_{\mathcal{H}})+2\mu'' \textsc{lip} \|\hat{h}-h_0\|_{\mathcal{H}}^2+O\{\|\hat{g}-g_0\|_2^2+\delta_n\|T(\hat{h}-h_0)\|_2+\delta_n^2\}.
        $$
Since $
2\mu'' \textsc{lip} \|\hat{h}-h_0\|_{\mathcal{H}}^2\leq 4\mu'' \textsc{lip} (\|\hat{h}\|_{\mathcal{H}}^2+\|h_0\|_{\mathcal{H}}^2) \leq 2\mu (\|\hat{h}\|_{\mathcal{H}}^2+\|h_0\|_{\mathcal{H}}^2)
$, we have
$$
\|T(\hat{h}-h_0)\|_2^2 \leq 4\mu \|h_0\|^2_{\mathcal{H}}+O\{\|\hat{g}-g_0\|_2^2+\delta_n\|T(\hat{h}-h_0)\|_2+\delta_n^2\}.
$$
By AM-GM inequality,
$$
\|T(\hat{h}-h_0)\|_2^2 \leq 4\mu \|h_0\|^2_{\mathcal{H}}+\frac{1}{2}\|T(\hat{h}-h_0)\|_2^2+ O\{\|\hat{g}-g_0\|_2^2+\delta_n^2\}. \qedhere 
$$
\end{proof}
\section{Proof of Theorem~\ref{theorem:joint}}\label{sec:rate2}

As before, let $S:g\mapsto \E\{g(A)|C' = \cdot\}$ and $T:(h,g)\mapsto \E\{h(B)-g(A)|C = \cdot\}$. Let $(h_0,g_0)$ be the minimal $\mathbb{L}_2$ norm solutions to $S(g)-Y=0$ and $T(h,g)=0$.

\subsection{High probability events}

\begin{lemma}[High probability events]\label{lemma:upper_bound_empirical2}
    Suppose Assumption~\ref{assumption:critical} holds for $\mathcal{F}$, $\mathcal{F}'$, $\mathcal{G}\times\mathcal{F}$, 
 $\mathcal{G}\times\mathcal{F}'$, and $\mathcal{H}\times\mathcal{F}$. With probability $1-\zeta$, when $\delta_n=\Omega[\{\log\log(n)+\log(1/\zeta)\}^{1/2}n^{-1/2}]$,
    \begin{align*}
        &\left|\left(\mathbb{E}_n-\mathbb{E}\right)[ 
2\{g(A)-Y\}f'(C')-f'(C')^2
    ]\right|=O\left(\delta_n\|f'\|_2+\delta_n^2\right), \\
    &\left|\left(\mathbb{E}_n-\mathbb{E}\right)[ 
2\{h(B)-g(A)\}f(C)-f(C)^2
    ]\right|=O\left(\delta_n\|f\|_2+\delta_n^2\right).
    \end{align*}
\end{lemma}

\begin{proof}
    We appeal to Lemma~\ref{lemma:concentration} for each term in the former empirical process, similar to Lemma~\ref{lemma:upper_bound_empirical}.
    \begin{enumerate}
        \item Consider $\left(\mathbb{E}_n-\mathbb{E}\right)\{g(A)f'(C')\}$. Let $q(W)=g(A)f'(C')$, $q_0(W)=0$, and $\ell\{W,q(W)\}=g(A)f'(C')$, which has derivative $1$ in its second argument. 
        Then $\left|\left(\mathbb{E}_n-\mathbb{E}\right)\{g(A)f'(C')-0\}\right|=O\left(\delta_n\|gf'\|_2+\delta_n^2\right)=O\left(\delta_n\|f'\|_2+\delta_n^2\right)
    $ since $\mathcal{G}$ is almost surely bounded.
        \item Consider $\left(\mathbb{E}_n-\mathbb{E}\right)\{Yf'(C')\}$. Let $q(W)=f'(C')$, $q_0(W)=0$, and $\ell\{W,q(W)\}=Yf'(C')$, which has derivative $Y$ in its second argument. 
        Then $\left|\left(\mathbb{E}_n-\mathbb{E}\right)\{Yf'(C')-0\}\right|=O\left(\delta_n\|f'\|_2+\delta_n^2\right).
    $ 
        \item As before, $\left|\left(\mathbb{E}_n-\mathbb{E}\right)\{f'(C')^2-0\}\right|=O\left(\delta_n\|f'\|_2+\delta_n^2\right).
    $     
    \end{enumerate}
    Next we turn to the latter empirical process.
    \begin{enumerate}
        \item As before, $\left|\left(\mathbb{E}_n-\mathbb{E}\right)\{h(B)f(C)-0\}\right|=O\left(\delta_n\|f\|_2+\delta_n^2\right).$
        \item Similarly, $\left|\left(\mathbb{E}_n-\mathbb{E}\right)\{g(A)f(C)-0\}\right|=O\left(\delta_n\|f\|_2+\delta_n^2\right).$
        \item As before, $\left|\left(\mathbb{E}_n-\mathbb{E}\right)\{f(C)^2-0\}\right|=O\left(\delta_n\|f\|_2+\delta_n^2\right). \qedhere
    $ 
    \end{enumerate}
\end{proof}

Let $\|g\|^2_{2,n}=\mathbb{E}_n\{g(A)^2\}$, $I_n'=2\mu'(\|g\|^2_{2,n}-\|g_*\|_{2,n}^2)$,  and $I'=2\mu'(\|g\|^2_2-\|g_*\|_2^2)$ for some data independent $g_*\in\mathcal{G}$.

\begin{lemma}[High probability event for regularization]\label{lemma:bound_regularization2}
    Suppose Assumption~\ref{assumption:critical} holds for $\mathcal{G}$. With probability $1-\zeta$, when $\delta_n=\Omega[\{\log\log(n)+\log(1/\zeta)\}^{1/2}n^{-1/2}]$,  for a data independent hypothesis $g_*\in\mathcal{G}$, $|I'_n-I'|=O(\mu'\delta_n \|g-g_*\|_2+\mu'\delta_n^2)$.
\end{lemma}

\begin{proof}
   The argument is identical to Lemma~\ref{lemma:bound_regularization}.
\end{proof}

\subsection{Adversarial maximization}

Let $\omega>0$,
$
L'_{\omega}(g)=\sup_{f' \in \mathcal{F}'} \mathbb{E}\left[2\left\{g(A)-Y \right\}f'(C')-\omega f'(C')^2\right]
$, and $f'_g=S(g-g_0)$.

\begin{lemma}[Maximization identity]\label{lemma:innermax_weak2}
If $f'_g\in \mathcal{F}'$ for any $g\in\mathcal{G}$, then 
$$
L'_{\omega}(g) 
        = \omega^{-1}\mathbb{E}\left[2\left\{g(A)-Y\right\}f'_g(C')- (f'_g)^2(C')\right]=\omega^{-1}\left\|S\left(g-g_0\right)\right\|_2^2.
$$
\end{lemma}

\begin{proof}
     By the law of iterated expectations, $L'_{\omega}(g)$ equals
\begin{align*}
&\sup_{f \in \mathcal{F}'} \mathbb{E}\left[2\left\{g(A)-g_0(A)\right\}f(C')-\omega f(C')^2\right]
=\sup_{f \in \mathcal{F}'} \mathbb{E}\left\{2\cdot S(g-g_0)f(C')\right\}-\omega \mathbb{E}\{f(C')^2\} \\
&=\sup_{f\in\mathcal{F}'}2 \langle S(g-g_0),f \rangle_2-\omega \langle f,f \rangle_2.
\end{align*}
Taking the Gateaux derivative with respect to $f$, we see that the first order condition is $2S(g-g_0)-2\omega f^*=0$. Rearranging, $f^*=\omega^{-1}S(g-g_0)$. Substitute $f^*$ into initial and final expressions in the display, and recall $S(g-g_0)=f'_g$.
\end{proof}

\subsection{Algorithm~\ref{algo:simultaneous}}

We study Algorithm~\ref{algo:simultaneous} and its population analogue:
\begin{align*}
    (\hat{g},\hat{h})&=\arg \min _{g\in\mathcal{G}, h \in \mathcal{H}} 
    \sup_{f' \in \mathcal{F}} \mathbb{E}_n\left[2\left\{g(A)-Y\right\} f'(C')-f'(C')^2\right]
     +\mu'\E_n\{g(A)^2\} \\
    &\quad +
    \sup_{f \in \mathcal{F}} \mathbb{E}_n\left[2\left\{h(B)-g(A)\right\} f(C)-f(C)^2\right]   
    +\mu\E_n\{h(B)^2\}, \\
    \{g_{(2\mu',2\mu)},h_{(2\mu',2\mu)}\}&=\arg \min _{g\in\mathcal{G}, h \in \mathcal{H}} 
      \sup_{f' \in \mathcal{F}} \mathbb{E}\left[2\left\{g(A)-Y\right\} f'(C')-f'(C')^2\right]+2\mu'\E\{g(A)^2\} \\
      &\quad +
     \sup_{f \in \mathcal{F}} \mathbb{E}\left[2\left\{h(B)-g(A)\right\} f(C)-f(C)^2\right]+2\mu\E\{h(B)^2\}.
\end{align*}

\begin{lemma}[From weak to strong metric]\label{lemma:strong_conv2}
    For any $\mu',\mu>0$,
   \begin{align*}
       &\|S\{\hat{g}-g_{(2\mu',2\mu)}\}\|_2^2
     +2\mu'\|\hat{g}-g_{(2\mu',2\mu)}\|_2^2+\|T \{\hat{h}-h_{(2\mu',2\mu)},\hat{g}-g_{(2\mu',2\mu)}\}\|_2^2
     +2\mu\|\hat{h}-h_{(2\mu',2\mu)}\|_2^2\\
     &=\|S(\hat{g}-g_0)\|^2_2-\|S\{g_{(2\mu',2\mu)}-g_0\}\|^2_2+2\mu'\{\|\hat{g}\|^2_2-\|g_{(2\mu',2\mu)}\|^2_2\}\\
     &+\|T(\hat{h}-h_0,\hat{g}-g_0)\|^2_2-\|T\{h_{(2\mu',2\mu)}-h_0,g_{(2\mu',2\mu)}-g_0\}\|^2_2 +2\mu\{\|\hat{h}\|^2_2-\|h_{(2\mu',2\mu)}\|^2_2\}.
   \end{align*}
\end{lemma}

\begin{proof}
Let $g_{(\tau)}=g_{(2\mu',2\mu)}+\tau \{\hat{g}-g_{(2\mu',2\mu)}\}$, $h_{(\tau)}=h_{(2\mu',2\mu)}+\tau \{\hat{h}-h_{(2\mu',2\mu)}\}$, and
$$
W(\tau)=\|S\{g_{(\tau)}-g_0\}\|^2_2+2\mu'\|g_{(\tau)}\|^2_2+\|T\{h_{(\tau)}-h_0,g_{(\tau)}-g_0\}\|^2_2+2\mu\|h_{(\tau)}\|^2_2.
$$ 
Clearly $W(\tau)$ is quadratic in $\tau$ and strongly convex. By Lemmas~\ref{lemma:innermax_weak} and~\ref{lemma:innermax_weak2},
\begin{align*}
    W(\tau)&=L'_1\{g_{(\tau)}\}+2\mu\|g_{(\tau)}\|^2_2
    +L_1\{h_{(\tau)},g_{(\tau)}\}+2\mu\|h_{(\tau)}\|^2_2  \\
    &=\sup_{f' \in \mathcal{F}} \mathbb{E}\left[2\left\{g_{(\tau)}(A)-Y \right\}f'(C')-f'(C')^2\right]+2\mu'\|g_{(\tau)}\|^2_2 \\
    &+ \sup_{f \in \mathcal{F}} \mathbb{E}\left[2\left\{h_{(\tau)}(B)-g_{(\tau)}(A)\right\}f(C)-f(C)^2\right]+2\mu\|h_{(\tau)}\|^2_2
\end{align*}
which is minimized at $\tau=0$ by the definition of $\{g_{(2\mu',2\mu)},h_{(2\mu',2\mu)}\}$. Therefore by an exact Taylor expansion,
$
 \frac{1}{2}\partial_{\tau}^2 W(0) = \partial_{\tau} W(0)+\frac{1}{2}\partial_{\tau}^2 W(0)  = W(1)-W(0).
$
The derivatives are
\begin{align*}
     \partial_{\tau}W(\tau)
     &=2\langle S\{g_{(\tau)}-g_0\},S\{\hat{g}-g_{(2\mu',2\mu)}\}\rangle_2
     +4\mu'\langle g_{(\tau)},\hat{g}-g_{(2\mu',2\mu)}\rangle_2\\
     &+2\langle T\{h_{(\tau)}-h_0,g_{(\tau)}-g_0\},T\{\hat{h}-h_{(2\mu',2\mu)},\hat{g}-g_{(2\mu',2\mu)}\}\rangle_2
     +4\mu\langle h_{(\tau)},\hat{h}-h_{(2\mu',2\mu)}\rangle_2, \\
     \partial_{\tau}^2 W(\tau)
     &=2\|S\{\hat{g}-g_{(2\mu',2\mu)}\}\|_2^2
     +4\mu'\|\hat{g}-g_{(2\mu',2\mu)}\|_2^2\\
     &+2\|T \{\hat{h}-h_{(2\mu',2\mu)},\hat{g}-g_{(2\mu',2\mu)}\}\|_2^2
     +4\mu\|\hat{h}-h_{(2\mu',2\mu)}\|_2^2.
\end{align*}
  Substituting in $\partial_{\tau}^2 W(0)$, $W(1)$, and $W(0)$ into the Taylor expansion yields the result.
\end{proof}

\begin{lemma}[Relating weak metrics]\label{lemma:main2}
    Suppose the conditions of Lemmas~\ref{lemma:innermax_weak},~\ref{lemma:innermax_weak2}, and~\ref{lemma:upper_bound_empirical2} hold. With probability $1-\zeta$, when $\delta_n=\Omega[\{\log\log(n)+\log(1/\zeta)\}^{1/2}n^{-1/2}]$, 
    \begin{align*}
        &\|S(\hat{g}-g_0)\|^2_2-\|S(g_*-g_0)\|^2_2
        +\|T(\hat{h}-h_0,\hat{g}-g_0)\|^2_2-\|T(h_*-h_0,g_*-g_0)\|^2_2  \\
      &\leq \frac{3}{2}\|S(g_*-g_0)\|^2_2+\frac{3}{2}\|T(h_*-h_0,g_*-g_0)\|^2_2 +2\mu'(\|g_*\|^2_{2,n}-\|\hat{g}\|^2_{2,n})+2\mu(\|h_*\|^2_{2,n}-\|\hat{h}\|^2_{2,n}) \\
      &+O\{\delta_n\|S(\hat{g}-g_*)\|_2+\delta_n\|T(\hat{h}-h_*,\hat{g}-g_*)\|_2+\delta_n^2\}. 
    \end{align*}
\end{lemma}

\begin{proof}
    We proceed in steps.
    \begin{enumerate}
        \item  By Lemma~\ref{lemma:innermax_weak2},
 $
        \|S(\hat{g}-g_0)\|_2^2
        =L'_1(\hat{g}) 
        =\mathbb{E}\left[2\left\{\hat{g}(A)-Y\right\}f'_{\hat{g}}(C')-  f'_{\hat{g}}(C')^2\right].$
    By Lemma~\ref{lemma:innermax_weak},
   $
        \|T(\hat{h}-h_0,\hat{g}-g_0)\|_2^2
        =L_1(\hat{h},\hat{g}) 
        =\mathbb{E}\left[2\left\{\hat{h}(B)-\hat{g}(A)\right\}(f_{\hat{h}}+f_{\hat{g}})(C)-  (f_{\hat{h}}+f_{\hat{g}})^2(C)\right]. 
      $
        \item By Lemma~\ref{lemma:upper_bound_empirical2} with probability $1-\zeta$,
        \begin{align*}
            &\mathbb{E}\left[2\left\{\hat{g}(A)-Y\right\}f'_{\hat{g}}(C')-  f'_{\hat{g}}(C')^2\right]  
            \leq \mathbb{E}_n\left[2\left\{\hat{g}(A)-Y\right\}f'_{\hat{g}}(C')-  f'_{\hat{g}}(C')^2\right]  + O\left(\delta_n\|f'_{\hat{g}}\|_2+\delta_n^2\right),\\
            &\mathbb{E}\left[2\left\{\hat{h}(B)-\hat{g}(A)\right\}(f_{\hat{h}}+f_{\hat{g}})(C)-  (f_{\hat{h}}+f_{\hat{g}})^2(C)\right]  \\
            &\leq \mathbb{E}_n\left[2\left\{\hat{h}(B)-\hat{g}(A)\right\}(f_{\hat{h}}+f_{\hat{g}})(C)-  (f_{\hat{h}}+f_{\hat{g}})^2(C)\right]  + O\left(\delta_n\|f_{\hat{h}}+f_{\hat{g}}\|_2+\delta_n^2\right).
        \end{align*}

        \item By Assumptions~\ref{assumption:closed} and~\ref{assumption:closed2},
        \begin{align*}
            &\mathbb{E}_n\left[2\left\{\hat{g}(A)-Y\right\}f'_{\hat{g}}(C')-  f'_{\hat{g}}(C')^2\right]
            \leq \sup_{f'\in\mathcal{F}'}   \mathbb{E}_n\left[2\left\{\hat{g}(A)-Y\right\}f'(C')-  f'(C')^2\right] \\
            &\mathbb{E}_n\left[2\left\{\hat{h}(B)-\hat{g}(A)\right\}(f_{\hat{h}}+f_{\hat{g}})(C)-  (f_{\hat{h}}+f_{\hat{g}})^2(C)\right]
            \leq \sup_{f\in\mathcal{F}}\mathbb{E}_n\left[2\left\{\hat{h}(B)-\hat{g}(A)\right\}f(C)-  f(C)^2\right].
        \end{align*}

        \item By the definition of $(\hat{g},\hat{h})$, Lemma~\ref{lemma:upper_bound_empirical2}, AM-GM inequality, and Lemmas~\ref{lemma:innermax_weak} and~\ref{lemma:innermax_weak2}, with probability $1-\zeta$,
        \begin{align*}
            &\sup_{f'\in\mathcal{F}'}   \mathbb{E}_n\left[2\left\{\hat{g}(A)-Y\right\}f'(C')-  f'(C')^2\right] + \sup_{f\in\mathcal{F}}\mathbb{E}_n\left[2\left\{\hat{h}(B)-\hat{g}(A)\right\}f(C)-  f(C)^2\right] \\
            &\leq \sup_{f'\in\mathcal{F}'}   \mathbb{E}_n\left[2\left\{g_*(A)-Y\right\}f'(C')-  f'(C')^2\right] + \sup_{f\in\mathcal{F}}\mathbb{E}_n\left[2\left\{h_*(B)-g_*(A)\right\}f(C)-  f(C)^2\right] \\
            &\quad + \mu'(\|g_*\|^2_{2,n}-\|\hat{g}\|^2_{2,n})+\mu(\|h_*\|^2_{2,n}-\|\hat{h}\|^2_{2,n}) \\
            &\leq \sup_{f'\in\mathcal{F}'}   \mathbb{E}\left[2\left\{g_*(A)-Y\right\}f'(C')-  f'(C')^2\right] + \sup_{f\in\mathcal{F}}\mathbb{E}\left[2\left\{h_*(B)-g_*(A)\right\}f(C)-  f(C)^2\right] \\
            &\quad + \mu'(\|g_*\|^2_{2,n}-\|\hat{g}\|^2_{2,n})+\mu(\|h_*\|^2_{2,n}-\|\hat{h}\|^2_{2,n}) +O(\delta_n\|f'\|_2+\delta_n\|f\|_2+\delta_n^2) \\
             &\leq \sup_{f'\in\mathcal{F}'}   \mathbb{E}\left[2\left\{g_*(A)-Y\right\}f'(C')-  \frac{1}{2}f'(C')^2\right] + \sup_{f\in\mathcal{F}}\mathbb{E}\left[2\left\{h_*(B)-g_*(A)\right\}f(C)-  \frac{1}{2}f(C)^2\right] \\
            &\quad + \mu'(\|g_*\|^2_{2,n}-\|\hat{g}\|^2_{2,n})+\mu(\|h_*\|^2_{2,n}-\|\hat{h}\|^2_{2,n}) +O(\delta_n^2) \\
            &=L'_{1/2}(g_*) + L_{1/2}(h_*,g_*) + \mu'(\|g_*\|^2_{2,n}-\|\hat{g}\|^2_{2,n})+\mu(\|h_*\|^2_{2,n}-\|\hat{h}\|^2_{2,n}) +O(\delta_n^2) \\
            &=2\|S(g_*-g_0)\|_2^2+2\|T(h_*-h_0,g_*-g_0)\|_2^2+ \mu'(\|g_*\|^2_{2,n}-\|\hat{g}\|^2_{2,n})+\mu(\|h_*\|^2_{2,n}-\|\hat{h}\|^2_{2,n}) +O(\delta_n^2).
        \end{align*}
\item By Lemmas~\ref{lemma:innermax_weak} and~\ref{lemma:innermax_weak2}, triangle inequality, and AM-GM inequality,
\begin{align*}
              O(\delta_n \|f'_{\hat{g}}\|_2)
              &= O\{\delta_n \|S(\hat{g}-g_0)\|_2\} = O\{\delta_n \|S(\hat{g}-g_*)\|_2+ \delta_n\|S(g_*-g_0)\|_2\}\\
              &\leq \frac{1}{2}\|S(g_*-g_0)\|^2_2+O\{\delta_n \|S(\hat{g}-g_*)\|_2+ \delta_n^2\}, \\
               O(\delta_n \|f_{\hat{h}}+f_{\hat{g}}\|_2)
              &= O\{\delta_n \|T(\hat{h}-h_0,\hat{g}-g_0)\|_2\} \\
              &= O\{\delta_n \|T(\hat{h}-h_*,\hat{g}-g_*)\|_2+ \delta_n\|T(h_*-h_0,g_*-g_0)\|_2\}\\
              &\leq \frac{1}{2}\|T(h_*-h_0,g_*-g_0)\|^2_2+O\{\delta_n \|T(\hat{h}-h_*,\hat{g}-g_*)\|_2+ \delta_n^2\}.
        \end{align*}
        \item Collecting results,
        \begin{align*}
            &\|S(\hat{g}-g_0)\|_2^2 + \|T(\hat{h}-h_0,\hat{g}-g_0)\|_2^2 \\
            &\leq 2\|S(g_*-g_0)\|_2^2+2\|T(h_*-h_0,g_*-g_0)\|_2^2+ \mu'(\|g_*\|^2_{2,n}-\|\hat{g}\|^2_{2,n})+\mu(\|h_*\|^2_{2,n}-\|\hat{h}\|^2_{2,n}) +O(\delta_n^2) \\
            &\quad +\frac{1}{2}\|S(g_*-g_0)\|^2_2+O\{\delta_n \|S(\hat{g}-g_*)\|_2+ \delta_n^2\} \\
            &\quad +\frac{1}{2}\|T(h_*-h_0,g_*-g_0)\|^2_2+O\{\delta_n \|T(\hat{h}-h_*,\hat{g}-g_*)\|_2+ \delta_n^2\} \\
            &=\frac{5}{2}\|S(g_*-g_0)\|_2^2+\frac{5}{2}\|T(h_*-h_0,g_*-g_0)\|_2^2+ \mu'(\|g_*\|^2_{2,n}-\|\hat{g}\|^2_{2,n})+\mu(\|h_*\|^2_{2,n}-\|\hat{h}\|^2_{2,n}) \\
            &\quad+O(\delta_n \|S(\hat{g}-g_*)\|_2+\delta_n \|T(\hat{h}-h_*,\hat{g}-g_*)\|_2+\delta_n^2) \qedhere.
        \end{align*}
    \end{enumerate}
   
\end{proof}

\begin{lemma}[Regularization bias]\label{lemma:bias3}
    Suppose Assumptions~\ref{assumption:source},~\ref{assumption:source2},~\ref{assumption:posedness},~\ref{assumption:completeness},~\ref{assumption:source3}, and~\ref{assumption:alignment} hold. Then 
    \begin{align*}
    \|h_{\mu',\mu}-h_0\|_2^2&=O\left\{\|w_h\|_2^2 \mu^{\min(\beta_h,1)} +\|w_g\|^2_2\mu^{-1}(\mu')^{\min(\beta_g+1,2)} \right\}\\
  \|T_h(h_{\mu',\mu}-h_0)\|_2^2&=O\left\{\|w_h\|_2^2 \mu^{\min(\beta_h+1,2)}+ \|w_g\|^2_2(\mu')^{\min(\beta_g+1,2)} \right\}\\
  \|g_{\mu',\mu}-g_0\|_2^2&=O\left\{\|w_h\|_2^2 (\mu')^{-1}\mu^{\min(\beta_h+1,2)}+\|w_g'\|^2_2(\mu')^{\min(\beta_g',1)} \right\}\\
   \|S(g_{\mu',\mu}-g_0)\|_2^2&=O\left\{\|w_h\|_2^2 \mu^{\min(\beta_h+1,2)}+\|w_g'\|^2_2(\mu')^{\min(\beta_g'+1,2)}\right\} \\
   \|T_g(g_{\mu',\mu}-g_0)\|_2^2&=O\left\{\|w_h\|_2^2 \mu^{\min(\beta_h+1,2)}+ \|w_g\|^2_2(\mu')^{\min(\beta_g+1,2)} \right\}.
\end{align*}
\end{lemma}

\begin{proof}
Appendix~\ref{sec:bias} provides the proof, which is quite involved and invokes the assumption discussed in Appendix~\ref{sec:extra}. Our argument develops what appear to be new techniques for the nested NPIV problem. 
\end{proof}

\begin{proof}[Proof of Theorem~\ref{theorem:joint}]
   Take $(g_*,h_*)=\{g_{(2\mu',2\mu)},h_{(2\mu',2\mu)}\}$.
    \begin{enumerate}
        \item By Lemmas~\ref{lemma:strong_conv2},~\ref{lemma:main2},~\ref{lemma:bound_regularization} and~\ref{lemma:bound_regularization2}, AM-GM inequality, and $\mu',\mu=O(1)$, we bound
\begin{align*}
     &\|S\{\hat{g}-g_{(2\mu',2\mu)}\}\|_2^2
     +2\mu'\|\hat{g}-g_{(2\mu',2\mu)}\|_2^2+\|T \{\hat{h}-h_{(2\mu',2\mu)},\hat{g}-g_{(2\mu',2\mu)}\}\|_2^2
     +2\mu\|\hat{h}-h_{(2\mu',2\mu)}\|_2^2\\
     &=\|S(\hat{g}-g_0)\|^2_2-\|S\{g_{(2\mu',2\mu)}-g_0\}\|^2_2+2\mu'\{\|\hat{g}\|^2_2-\|g_{(2\mu',2\mu)}\|^2_2\}\\
     &+\|T(\hat{h}-h_0,\hat{g}-g_0)\|^2_2-\|T\{h_{(2\mu',2\mu)}-h_0,g_{(2\mu',2\mu)}-g_0\}\|^2_2 +2\mu\{\|\hat{h}\|^2_2-\|h_{(2\mu',2\mu)}\|^2_2\} \\
     &\leq 
\frac{3}{2}\|S\{g_{(2\mu',2\mu)}-g_0\}\|^2_2+\frac{3}{2}\|T\{h_{(2\mu',2\mu)}-h_0,g_{(2\mu',2\mu)}-g_0\}\|^2_2 \\
&+2\mu'\{\|g_{(2\mu',2\mu)}\|^2_{2,n}-\|\hat{g}\|^2_{2,n}+\|\hat{g}\|^2_2-\|g_{(2\mu',2\mu)}\|^2_2\} \\
&+2\mu\{\|h_{(2\mu',2\mu)}\|^2_{2,n}-\|\hat{h}\|^2_{2,n}+\|\hat{h}\|^2_2-\|h_{(2\mu',2\mu)}\|^2_2\} \\
      &+O[\delta_n\|S\{\hat{g}-g_{(2\mu',2\mu)}\}\|_2+\delta_n\|T\{\hat{h}-h_{(2\mu',2\mu)},\hat{g}-g_{(2\mu',2\mu)}\}\|_2+\delta_n^2] \\
      &= \frac{3}{2}\|S\{g_{(2\mu',2\mu)}-g_0\}\|^2_2+\frac{3}{2}\|T\{h_{(2\mu',2\mu)}-h_0,g_{(2\mu',2\mu)}-g_0\}\|^2_2 \\
&-I_n'+I'-I_n+I+O[\delta_n\|S\{\hat{g}-g_{(2\mu',2\mu)}\}\|_2+\delta_n\|T\{\hat{h}-h_{(2\mu',2\mu)},\hat{g}-g_{(2\mu',2\mu)}\}\|_2+\delta_n^2] \\
      &=\frac{3}{2}\|S\{g_{(2\mu',2\mu)}-g_0\}\|^2_2+\frac{3}{2}\|T\{h_{(2\mu',2\mu)}-h_0,g_{(2\mu',2\mu)}-g_0\}\|^2_2 \\
&+O\bigg[\sqrt{\mu'}^2\delta_n \|\hat{g}-g_{(2\mu',2\mu)}\|_2+\sqrt{\mu}^2\delta_n \|\hat{h}-h_{(2\mu',2\mu)}\|_2\\
&+\delta_n\|S\{\hat{g}-g_{(2\mu',2\mu)}\}\|_2+\delta_n\|T\{\hat{h}-h_{(2\mu',2\mu)},\hat{g}-g_{(2\mu',2\mu)}\}\|_2+\delta_n^2\bigg] \\
&\leq \frac{3}{2}\|S\{g_{(2\mu',2\mu)}-g_0\}\|^2_2+\frac{3}{2}\|T\{h_{(2\mu',2\mu)}-h_0,g_{(2\mu',2\mu)}-g_0\}\|^2_2 \\
&+\frac{\mu'}{2}\|\hat{g}-g_{(2\mu',2\mu)}\|^2_2
+\frac{\mu}{2}\|\hat{h}-h_{(2\mu',2\mu)}\|^2_2 \\
&+\frac{1}{2}\|S\{\hat{g}-g_{(2\mu',2\mu)}\}\|^2_2
+\frac{1}{2}\|T\{\hat{h}-h_{(2\mu',2\mu)},\hat{g}-g_{(2\mu',2\mu)}\}\|^2_2+O(\delta_n^2).
\end{align*}
Rearranging yields
\begin{align*}
     &\frac{1}{2}\|S\{\hat{g}-g_{(2\mu',2\mu)}\}\|_2^2
     +\frac{1}{2}\|T \{\hat{h}-h_{(2\mu',2\mu)},\hat{g}-g_{(2\mu',2\mu)}\}\|_2^2 \\
     &+\frac{3\mu'}{2}\|\hat{g}-g_{(2\mu',2\mu)}\|_2^2
     +\frac{3\mu}{2}\|\hat{h}-h_{(2\mu',2\mu)}\|_2^2\\
     &\leq 
     \frac{3}{2}\|S\{g_{(2\mu',2\mu)}-g_0\}\|^2_2+\frac{3}{2}\|T\{h_{(2\mu',2\mu)}-h_0,g_{(2\mu',2\mu)}-g_0\}\|^2_2+O(\delta_n^2).
\end{align*}
Therefore $\|S\{\hat{g}-g_{(2\mu',2\mu)}\}\|_2^2$ and $\|T \{\hat{h}-h_{(2\mu',2\mu)},\hat{g}-g_{(2\mu',2\mu)}\}\|_2^2$ are each bounded by
\begin{align*}
    & 3\|S\{g_{(2\mu',2\mu)}-g_0\}\|^2_2+3\|T\{h_{(2\mu',2\mu)}-h_0,g_{(2\mu',2\mu)}-g_0\}\|^2_2+O(\delta_n^2);\\
    &\|\hat{g}-g_{(2\mu',2\mu)}\|_2^2\leq \|S\{g_{(2\mu',2\mu)}-g_0\}\|^2_2/\mu'+ \|T\{h_{(2\mu',2\mu)}-h_0,g_{(2\mu',2\mu)}-g_0\}\|^2_2/\mu'
    +O(\delta_n^2/\mu');\\
    &\|\hat{h}-h_{(2\mu',2\mu)}\|_2^2 \leq \mu^{-1} \|S\{g_{(2\mu',2\mu)}-g_0\}\|^2_2+ \mu^{-1} \|T\{h_{(2\mu',2\mu)}-h_0,g_{(2\mu',2\mu)}-g_0\}\|^2_2+O(\mu^{-1}\delta_n^2).
\end{align*}
        \item For the weak metric result, we use triangle inequality and Lemma~\ref{lemma:bias3}. For $R_n=\|w_h\|_2^2 \mu^{\min(\beta_h+1,2)}+\|w_g'\|^2_2(\mu')^{\min(\beta_g'+1,2)}+\|w_g\|^2_2(\mu')^{\min(\beta_g+1,2)} +\delta_n^2$,
        \begin{align*}
            &\|S(\hat{g}-g_0)\|_2^2 \leq 2 \|S\{\hat{g}-g_{(2\mu',2\mu)}\}\|_2^2 + 2 \|S\{g_{(2\mu',2\mu)}-g_0\}\|_2^2 \\
            &\leq  8\|S\{g_{(2\mu',2\mu)}-g_0\}\|^2_2+6\|T\{h_{(2\mu',2\mu)}-h_0,g_{(2\mu',2\mu)}-g_0\}\|^2_2+O(\delta_n^2) \\
            &=O\{\|w_h\|_2^2 \mu^{\min(\beta_h+1,2)}+\|w_g'\|^2_2(\mu')^{\min(\beta_g'+1,2)}+\|w_g\|^2_2(\mu')^{\min(\beta_g+1,2)} +\delta_n^2\}
            =O(R_n),
            \\
              &\|T(\hat{h}-h_0,\hat{g}-g_0)\|_2^2 \leq 2 \|T \{\hat{h}-h_{(2\mu',2\mu)},\hat{g}-g_{(2\mu',2\mu)}\}\|_2^2 + 2 \|T \{h_{(2\mu',2\mu)}-h_0,g_{(2\mu',2\mu)}-g_0\}\|_2^2 \\
            &\leq  6\|S\{g_{(2\mu',2\mu)}-g_0\}\|^2_2+8\|T\{h_{(2\mu',2\mu)}-h_0,g_{(2\mu',2\mu)}-g_0\}\|^2_2+O(\delta_n^2)
            =O(R_n).
        \end{align*}
        \item For the strong metric result, we use triangle inequality and Lemma~\ref{lemma:bias3}:
           \begin{align*}
            &\|\hat{g}-g_0\|_2^2 \leq 2 \|\hat{g}-g_{(2\mu',2\mu)}\|_2^2 + 2 \|g_{(2\mu',2\mu)}-g_0\|_2^2 \\
            &\leq 2(\mu')^{-1} \|S\{g_{(2\mu',2\mu)}-g_0\}\|^2_2+ 2(\mu')^{-1} \|T\{h_{(2\mu',2\mu)}-h_0,g_{(2\mu',2\mu)}-g_0\}\|^2_2 \\
            &+2 \|g_{(2\mu',2\mu)}-g_0\|_2^2
            +O\{(\mu')^{-1}\delta_n^2\} \\
            &=O\left\{(\mu')^{-1} R_n+\|w_h\|_2^2 (\mu')^{-1}\mu^{\min(\beta_h+1,2)}+\|w_g'\|^2_2(\mu')^{\min(\beta_g',1)} +(\mu')^{-1}\delta_n^2\right\} \\
            &=O\left\{(\mu')^{-1} R_n\right\}, \\
             &\|\hat{h}-h_0\|_2^2 \leq 2 \|\hat{h}-h_{(2\mu',2\mu)}\|_2^2 + 2 \|h_{(2\mu',2\mu)}-h_0\|_2^2 \\
            &\leq 2\mu^{-1} \|S\{g_{(2\mu',2\mu)}-g_0\}\|^2_2+ 2\mu^{-1} \|T\{h_{(2\mu',2\mu)}-h_0,g_{(2\mu',2\mu)}-g_0\}\|^2_2 \\
            &+2 \|h_{(2\mu',2\mu)}-h_0\|_2^2
            +O(\mu^{-1}\delta_n^2) \\
            &=O\left\{\mu^{-1} R_n+\|w_h\|_2^2 \mu^{\min(\beta_h,1)}
            +\|w_g\|^2_2\mu^{-1}(\mu')^{\min(\beta_g+1,2)}
            +\mu^{-1}\delta_n^2\right\}
            =O(\mu^{-1} R_n).
        \end{align*}
        \item In particular, when $\mu=\mu'$, $\|\hat{g}-g_0\|_2^2=O(\mu^{-1}R_n) $ and $\|\hat{h}-h_0\|_2^2=O(\mu^{-1}R_n)$ where
        $$
            R_n=\|w_h\|_2^2 \mu^{\min(\beta_h+1,2)}+\|w_g'\|^2_2\mu^{\min(\beta_g'+1,2)}+\|w_g\|^2_2\mu^{\min(\beta_g+1,2)} +\delta_n^2. \qedhere
       $$
    \end{enumerate}

\end{proof}

\section{New techniques to control bias}\label{sec:bias}

We control the bias of Algorithm~\ref{algo:sequential} using standard techniques. However, to control the bias of Algorithm~\ref{algo:simultaneous}, we develop what appear to be new techniques. To lighten notation, we abbreviate $\mu I=\mu$ and $\mu' I=\mu'$ when it is clear from context.

\subsection{Algorithm~\ref{algo:sequential}}

\begin{lemma}[Bias algebra; c.f. Lemma 3 of \cite{bennett2023source}]\label{lemma:algebra}
    For any regularization parameter value $\mu>0$, source parameter value $\beta>0$,  and singular values $(\sigma_j)$ satisfying $\sup_j \sigma_j\leq 1$, we have that
   $\mu^2\sup_j \frac{\sigma_j^{2\beta}}{(\sigma_j^2+\mu)^2} \leq \mu^{\min(\beta,2)}$ 
    and $\mu^2\sup_j \frac{\sigma_j^{2(\beta+1)}}{(\sigma_j^2+\mu)^2} \leq \mu^{\min(\beta+1,2)}$.
\end{lemma}

\begin{proof}
We state the proof for completeness.
\begin{enumerate}
    \item For the first result, if $\beta \geq 2$ then it suffices to show $\sup_j \frac{\sigma_j^{2\beta}}{(\sigma_j^2+\mu)^2} \leq 1$. Clearly $\frac{\sigma_j^{2\beta}}{(\sigma_j^2+\mu)^2} \leq \frac{\sigma_j^{2\beta}}{\sigma_j^4}=\sigma_j^{2(\beta-2)}\leq 1$ since $\beta\geq 2$ and $\sup_j \sigma_j\leq 1$. If $\beta<2$ then it suffices to show $f(x)=\mu^2 \frac{x^{\beta}}{(x+\mu)^2}\leq \mu^{\beta}$. The first order condition yields $x_*=\frac{\beta\mu}{2-\beta}$ and $f(x_*)=\frac{1}{4}(2-\beta)^{2-\beta}\beta^{\beta}\mu^{\beta}\leq \mu^{\beta}$ since $\beta<2$.
    \item For the second result, if $\beta+1 \geq 2$ then it suffices to show $\sup_j \frac{\sigma_j^{2(\beta+1)}}{(\sigma_j^2+\mu)^2} \leq 1$. Clearly $\frac{\sigma_j^{2(\beta+1)}}{(\sigma_j^2+\mu)^2} \leq \frac{\sigma_j^{2(\beta+1)}}{\sigma_j^4}=\sigma_j^{2(\beta+1-2)}\leq 1$ since $\beta \geq 1$ and $\sup_j \sigma_j\leq 1$. If $\beta+1<2$ then it suffices to show $f(x)=\mu^2 \frac{x^{(\beta+1)}}{(x+\mu)^2}\leq \mu^{\beta+1}$. The first order condition yields $x_*=\frac{(\beta+1)\mu}{1-\beta}$ and $f(x_*)=\frac{1}{4}(1-\beta)^{1-\beta}(\beta+1)^{\beta+1}\mu^{\beta+1}\leq \mu^{\beta+1}$ since $\beta<1$. \qedhere
\end{enumerate}
\end{proof}

\begin{proof}[Proof of Lemma~\ref{lemma:bias}]
    We state the proof for completeness. To lighten notation, we study $h_{\mu}=h_*$ and abbreviate $T_h=T$. 
    \begin{enumerate}
        \item  By Lemma~\ref{lemma:innermax_weak},
    $
    h_*=\argmin_{h\in\mathcal{H}} L_1(h,g_0)+\mu\|h\|_2^2=\|T(h-h_0)\|^2_2+\mu\|h\|_2^2.
    $
    Taking the Gateaux derivative, the first order condition yields $2T^*T(h_*-h_0)+2\mu h_*=0$ and hence $h_*=(T^*T+\mu)^{-1}(T^*T)h_0$. Hence
    $$
    h_*-h_0=(T^*T+\mu)^{-1}\{(T^*T)-(T^*T+\mu)\}h_0=-\mu(T^*T+\mu)^{-1} h_0.
    $$
   Using $h_0=(T^*T)^{\beta/2}w_0$,
    \begin{align*}
        \|h_*-h_0\|_2^2&=\|-\mu(T^*T+\mu)^{-1}  (T^*T)^{\beta/2}w_0\|^2_2\leq \mu^2 \|(T^*T+\mu)^{-1}  (T^*T)^{\beta/2}\|^2_{\op}\cdot \|w_0\|_2^2, \\
        \|T(h_*-h_0)\|_2^2&=\|-\mu T(T^*T+\mu)^{-1}  (T^*T)^{\beta/2}w_0\|^2_2\leq \mu^2 \|T(T^*T+\mu)^{-1}  (T^*T)^{\beta/2}\|^2_{\op}\cdot \|w_0\|_2^2.
    \end{align*}
        \item By Lemma~\ref{lemma:algebra}, $\mu^2 \|(T^*T+\mu)^{-1}  (T^*T)^{\beta/2}\|^2_{\op}=\mu^2\sup_j \frac{\sigma_j^{2\beta}}{(\sigma_j^2+\mu)^2} \leq \mu^{\min(\beta,2)}$ and $\mu^2 \|T(T^*T+\mu)^{-1}  (T^*T)^{\beta/2}\|^2_{\op}=\mu^2\sup_j \frac{\sigma_j^{2(\beta+1)}}{(\sigma_j^2+\mu)^2} \leq \mu^{\min(\beta+1,2)}$. \qedhere
    \end{enumerate}
\end{proof}

\subsection{Algorithm~\ref{algo:simultaneous}}

To lighten notation, we write $(g_*,h_*)=(g_{\mu',\mu},h_{\mu',\mu})$. We also abbreviate $T_h=T(h,0)$ and $T_g=T(0,g)$, so that $T(h,g)=T_h(h)+T_g(g)$.

\begin{lemma}[Rewriting regularization bias]\label{lemma:bias2}
We have
   $$
     g_*-g_0=(I-A_{G}A_H)^{-1}(\tilde{g}-A_G\tilde{h}),\quad 
    h_*-h_0=(I-A_{H}A_G)^{-1}(\tilde{h}-A_H\tilde{g})
$$
where 
\begin{align*}
    &\tilde{g}=-\mu'(S^*S+T_g^*T_g+\mu')^{-1}g_0,\quad 
    \tilde{h}=-\mu(T_h^*T_h+\mu)^{-1} h_0,\\
    &A_G=(S^*S+T_g^*T_g+\mu')^{-1}T_g^*T_h,\quad 
  A_H=(T_h^*T_h+\mu)^{-1}T_h^*T_g.\end{align*}
If Assumptions~\ref{assumption:source},~\ref{assumption:source2}, and~\ref{assumption:source3} hold then
\begin{align*}
    \tilde{g}&=-\mu'(S^*S+T_g^*T_g+\mu')^{-1}(S^*S)^{\beta'_g/2}w_g'
    =-\mu'(S^*S+T_g^*T_g+\mu')^{-1}(T_g^*T_g)^{\beta_g/2}w_g\\
    \tilde{h}&=-\mu(T_h^*T_h+\mu)^{-1} (T_h^*T_h)^{\beta_h/2}w_h.
\end{align*}
\end{lemma}

\begin{proof}
    By Lemmas~\ref{lemma:innermax_weak} and~\ref{lemma:innermax_weak2},
    \begin{align*}
         &(g_*,h_*)=\argmin_{g\in\mathcal{G},h\in\mathcal{H}} L'_1(g)+L_1(h,g)+\mu'\|g\|_2^2+\mu\|h\|_2^2 \\
         &=\|S(g-g_0)\|^2_2+\|T(h-h_0,g-g_0)\|^2_2+\mu'\|g\|_2^2+\mu\|h\|_2^2 \\
         &=\|S(g-g_0)\|^2_2+\|T_h(h-h_0)\|^2_2+\|T_g(g-g_0)\|^2_2+2\langle T_h(h-h_0),T_g(g-g_0) \rangle +\mu'\|g\|_2^2+\mu\|h\|_2^2.
    \end{align*}
Taking the Gateaux derivative, the first order conditions yield, after dividing by two,
$$
    (S^*S+T_g^*T_g)(g_*-g_0)+T_g^*T_h(h_*-h_0)+\mu'g_*=0,\quad 
    T_h^*T_h(h_*-h_0)+T_h^*T_g(g_*-g_0)+\mu h_*=0.
$$
Rearranging each expression,
\begin{align*}
    g_*&=(S^*S+T_g^*T_g+\mu')^{-1}\{(S^*S+T_g^*T_g) g_0-T_g^*T_h(h_*-h_0)\},  \\
    h_*&=(T_h^*T_h+\mu)^{-1}\{T_h^*T_h h_0-T_h^*T_g(g_*-g_0)\};\\
  g_*-g_0&=(S^*S+T_g^*T_g+\mu')^{-1}\{-\mu' g_0-T_g^*T_h(h_*-h_0)\}=\tilde{g}-A_{G}(h_*-h_0), \\
    h_*-h_0&=(T_h^*T_h+\mu)^{-1}\{-\mu  h_0-T_h^*T_g(g_*-g_0)\}=\tilde{h}-A_{H}(g_*-g_0).
\end{align*}
Combining the final two expressions yields the former the result. The latter result is immediate.
\end{proof}

For operators $A$ and $B$, we write $A\geq B$ in the Loewner sense when $A-B$ is positive semidefinite. 

\begin{lemma}[Rewriting pre-factors]\label{lemma:pre}
We have
\begin{align*}
(I-A_GA_H)^{-1}=(S^*S+R+\mu')^{-1}(S^*S+T_g^*T_g+\mu') \\
    (I-A_HA_G)^{-1}=\{T_h^*(I-Q)T_h+\mu\}^{-1}(T_h^*T_h+\mu).
\end{align*}
for some $R\geq 0$ and some $Q\geq 0$ satisfying $I-Q\geq cI$ with $c=\|(S^*S+T_g^*T_g+\mu')^{1/2}(S^*S+\mu')^{-1} (S^*S+T_g^*T_g+\mu')^{1/2}\|_{\op}^{-1}$.
\end{lemma}

\begin{proof}
   Since $B^{-1}=(AB)^{-1}A$, by Lemma~\ref{lemma:pass},
    \begin{align*}
        (I-A_GA_H)^{-1}
        &=\{I-(S^*S+T_g^*T_g+\mu')^{-1}T_g^*T_h(T_h^*T_h+\mu)^{-1}T_h^*T_g\}^{-1}\\
        &=\{S^*S+T_g^*T_g+\mu'-T_g^*T_h(T_h^*T_h+\mu)^{-1}T_h^*T_g\}^{-1}(S^*S+T_g^*T_g+\mu') \\
        &=\{S^*S+T_g^* I T_g+\mu'-T_g^*T_hT_h^*(T_hT_h^*+\mu)^{-1}T_g\}^{-1}(S^*S+T_g^*T_g+\mu') \\
        &=\{S^*S+T_g^*\mu (T_hT_h^*+\mu)^{-1}T_g+\mu'\}^{-1}(S^*S+T_g^*T_g+\mu').
    \end{align*}
    We take $R=T_g^*\mu (T_hT_h^*+\mu)^{-1}T_g\geq 0$. Similarly,
    \begin{align*}
        (I-A_HA_G)^{-1}
        &=\{I-(T_h^*T_h+\mu)^{-1}T_h^*T_g (S^*S+T_g^*T_g+\mu')^{-1}T_g^*T_h\}^{-1} \\
        &=\{T_h^*T_h+\mu-T_h^*T_g (S^*S+T_g^*T_g+\mu')^{-1}T_g^*T_h\}^{-1}(T_h^*T_h+\mu)\\
        &=\{T_h^*(I-Q)T_h+\mu\}^{-1}(T_h^*T_h+\mu)
    \end{align*}
    where $Q=T_g (S^*S+T_g^*T_g+\mu')^{-1}T_g^*\geq 0$. 
    
    Finally, we characterize $c$. By \citet[Theorem IX.4.2]{bhatia2013matrix}, $\sigma_j(A^*B)\leq \frac{1}{2}\sigma_j(AA^*+BB^*)$. Taking $A^*=T_g (S^*S+T_g^*T_g+\mu')^{-1/2}$ and $B=(S^*S+T_g^*T_g+\mu')^{-1/2}T_g^*$, we have that for each $j$,
 \begin{align*}
     \sigma_j(Q)
     &\leq \sigma_j\{(S^*S+T_g^*T_g+\mu')^{-1/2}T_g^*T_g (S^*S+T_g^*T_g+\mu')^{-1/2}\} \\
     &=\sigma_j\{I-(S^*S+T_g^*T_g+\mu')^{-1/2}(S^*S+\mu') (S^*S+T_g^*T_g+\mu')^{-1/2}\}\\
     &=\sigma_j[I-\{(S^*S+T_g^*T_g+\mu')^{1/2}(S^*S+\mu')^{-1} (S^*S+T_g^*T_g+\mu')^{1/2}\}^{-1}] \\
     &\leq \sigma_1[I-\{(S^*S+T_g^*T_g+\mu')^{1/2}(S^*S+\mu')^{-1} (S^*S+T_g^*T_g+\mu')^{1/2}\}^{-1}] \\
     &=1-\frac{1}{\sigma_1\{(S^*S+T_g^*T_g+\mu')^{1/2}(S^*S+\mu')^{-1} (S^*S+T_g^*T_g+\mu')^{1/2}\}} 
     =1-c
 \end{align*}
for $c=\|(S^*S+T_g^*T_g+\mu')^{1/2}(S^*S+\mu')^{-1} (S^*S+T_g^*T_g+\mu')^{1/2}\|_{\op}^{-1}$. Therefore $Q\leq (1-c)I$ in the Loewner sense.
\end{proof}

\begin{assumption}[High level conditions]\label{assumption:high}
    Suppose that the following conditions hold:
    \begin{enumerate}
    \item $\|T_g(S^*S+T_g^*T_g+\mu')^{-1}T_g^*\|_{\op}=O(1)$;
    \item  $\|T_g(S^*S+T_g^*T_g+\mu')^{-1}\|_{\op}=O\{(\mu')^{-1/2}\}$;
    \item $\|T_h\{T_h^*(I-Q)T_h+\mu\}^{-1}T_h^*\|_{\op}=O(1)$; 
    \item $\|T_h\{T_h^*(I-Q)T_h+\mu\}^{-1}\|_{\op} =O(\mu^{-1/2})$; 
    \item $\|\{T_h^*(I-Q)T_h+\mu\}^{-1}T_h^*\|_{\op} =O (\mu^{-1/2})$; 
    \item  $\|\{T_h^*(I-Q)T_h+\mu\}^{-1}\|_{\op} =O(\mu^{-1})$;   
    \item $\|S(S^*S+R+\mu')^{-1}S^*\|_{\op}=O(1)$; 
    \item $\|T_g(S^*S+R+\mu')^{-1}T_g^*\|_{\op}=O(1)$; 
    \item $\|S(S^*S+R+\mu')^{-1}\|_{\op}=O\{(\mu')^{-1/2}\}$; 
    \item $\|(S^*S+R+\mu')^{-1}S^*\|_{\op}=O\{(\mu')^{-1/2}\}$; 
    \item $\|T_g(S^*S+R+\mu')^{-1}\|_{\op}=O\{(\mu')^{-1/2}\}$; 
    \item $\|(S^*S+R+\mu')^{-1}T_g^*\|_{\op}=O\{(\mu')^{-1/2}\}$; 
    \item $\|S(S^*S+R+\mu')^{-1}T_g^*\|_{\op}=O(1)$; 
    \item $\|(S^*S+R+\mu')^{-1}\|_{\op} =O\{(\mu')^{-1}\}$. 
\end{enumerate}
\end{assumption}

\begin{lemma}\label{lemma:high}
      Suppose Assumptions~\ref{assumption:source},~\ref{assumption:source2},~\ref{assumption:source3}, and~\ref{assumption:high} hold. Then 
    \begin{align*}
    \|h_*-h_0\|_2^2&=O\left\{\|w_h\|_2^2 \mu^{\min(\beta_h,1)} +\|w_g\|^2_2\mu^{-1}(\mu')^{\min(\beta_g+1,2)} \right\}\\
  \|T_h(h_*-h_0)\|_2^2&=O\left\{\|w_h\|_2^2 \mu^{\min(\beta_h+1,2)}+ \|w_g\|^2_2(\mu')^{\min(\beta_g+1,2)} \right\}\\
  \|g_*-g_0\|_2^2&=O\left\{\|w_h\|_2^2 (\mu')^{-1}\mu^{\min(\beta_h+1,2)}+\|w_g'\|^2_2(\mu')^{\min(\beta_g',1)} \right\}\\
   \|S(g_*-g_0)\|_2^2&=O\left\{\|w_h\|_2^2 \mu^{\min(\beta_h+1,2)}+\|w_g'\|^2_2(\mu')^{\min(\beta_g'+1,2)}\right\} \\
   \|T_g(g_*-g_0)\|_2^2&=O\left\{\|w_h\|_2^2 \mu^{\min(\beta_h+1,2)}+ \|w_g\|^2_2(\mu')^{\min(\beta_g+1,2)} \right\}.
\end{align*}
\end{lemma}

\begin{proof}
To lighten notation, we abbreviate $\|\cdot\|=\|\cdot\|_{\op}$. 
\begin{enumerate}
    \item  By Lemma~\ref{lemma:bias2},  $\tilde{g}=-G'w_g'=-Gw_g$ and $\tilde{h}=-Hw_h$ where
    \begin{align*}
        &G'=\mu'(S^*S+T_g^*T_g+\mu')^{-1}(S^*S)^{\beta'_g/2},\quad G=\mu'(S^*S+T_g^*T_g+\mu')^{-1}(T_g^*T_g)^{\beta_g/2} \\
   &H=\mu(T_h^*T_h+\mu)^{-1} (T_h^*T_h)^{\beta_h/2}.
    \end{align*}

 \item To bound $\|h_*-h_0\|^2_2$, it suffices to control $\|(I-A_HA_G)^{-1}H\|$ and $\|(I-A_HA_G)^{-1}A_HG\|$ by Lemma~\ref{lemma:bias2} and the characterization above. By Lemma~\ref{lemma:pre} and triangle inequality,
\begin{align*}
&\|(I-A_HA_G)^{-1}H\|
=\|\{T_h^*(I-Q)T_h+\mu\}^{-1}\mu(T_h^*T_h)^{\beta_h/2}\| \\
&=\|\{T_h^*(I-Q)T_h+\mu\}^{-1}(T_h^*T_h+\mu)(T_h^*T_h+\mu)^{-1}\mu(T_h^*T_h)^{\beta_h/2}\| \\
&\leq 
\|\{T_h^*(I-Q)T_h+\mu\}^{-1}T_h^*T_h(T_h^*T_h+\mu)^{-1}\mu(T_h^*T_h)^{\beta_h/2}\| \\
&\quad + \|\{T_h^*(I-Q)T_h+\mu\}^{-1}\mu(T_h^*T_h+\mu)^{-1}\mu(T_h^*T_h)^{\beta_h/2}\| \\
&\leq 
\|\{T_h^*(I-Q)T_h+\mu\}^{-1}T_h^*\|\|T_h(T_h^*T_h+\mu)^{-1}\mu(T_h^*T_h)^{\beta_h/2}\| \\
&\quad + \mu\|\{T_h^*(I-Q)T_h+\mu\}^{-1}\|\|(T_h^*T_h+\mu)^{-1}\mu(T_h^*T_h)^{\beta_h/2}\| \\
&=O\left\{\mu^{-1/2} \|T_h(T_h^*T_h+\mu)^{-1}\mu(T_h^*T_h)^{\beta_h/2}\|+ \|(T_h^*T_h+\mu)^{-1}\mu(T_h^*T_h)^{\beta_h/2}\| \right\}
\end{align*}
where in the last step we use Assumptions~\ref{assumption:high}.5 and~\ref{assumption:high}.6. After squaring, Lemma~\ref{lemma:algebra} gives the first term in the bound.

Next we turn to
\begin{align*}
&\|(I-A_HA_G)^{-1}A_HG\|
=\|\{T_h^*(I-Q)T_h+\mu\}^{-1}T_h^*T_g\mu'(S^*S+T_g^*T_g+\mu')^{-1}(T_g^*T_g)^{\beta_g/2}\| \\
&\leq \|\{T_h^*(I-Q)T_h+\mu\}^{-1}T_h^*\|\|T_g\mu'(S^*S+T_g^*T_g+\mu')^{-1}(T_g^*T_g)^{\beta_g/2}\|.
\end{align*}
The former factor is $O(\mu^{-1/2})$ by Assumption~\ref{assumption:high}.5. We bound the latter factor as
\begin{align*}
    &\|T_g(S^*S+T_g^*T_g+\mu')^{-1}\mu'(T_g^*T_g)^{\beta_g/2}\| \\
    &= \|T_g(S^*S+T_g^*T_g+\mu')^{-1}(T_g^*T_g+\mu')(T_g^*T_g+\mu')^{-1}\mu'(T_g^*T_g)^{\beta_g/2}\| \\
    &\leq \|T_g(S^*S+T_g^*T_g+\mu')^{-1}T_g^*T_g(T_g^*T_g+\mu')^{-1}\mu'(T_g^*T_g)^{\beta_g/2}\|
    \\
    &\quad +
    \|T_g(S^*S+T_g^*T_g+\mu')^{-1}\mu'(T_g^*T_g+\mu')^{-1}\mu'(T_g^*T_g)^{\beta_g/2}\| \\
    &\leq 
    \|T_g(S^*S+T_g^*T_g+\mu')^{-1}T_g^*\|\|T_g(T_g^*T_g+\mu')^{-1}\mu'(T_g^*T_g)^{\beta_g/2}\|
    \\
    &\quad +
    \mu'\|T_g(S^*S+T_g^*T_g+\mu')^{-1}\|\|(T_g^*T_g+\mu')^{-1}\mu'(T_g^*T_g)^{\beta_g/2}\|\\
    &=O\left\{\|T_g(T_g^*T_g+\mu')^{-1}\mu' (T_g^*T_g)^{\beta_g/2}\|
    +(\mu')^{1/2}\|(T_g^*T_g+\mu')^{-1}\mu'(T_g^*T_g)^{\beta_g/2}\|
    \right\}
\end{align*}
where in the last step we use Assumptions~\ref{assumption:high}.1 and~\ref{assumption:high}.2.  After squaring, Lemma~\ref{lemma:algebra} gives the second term in the bound.
    \item The argument for $ \|T_h(h_*-h_0)\|_2^2$ is similar. First we control
       \begin{align*}
&\|T_h(I-A_HA_G)^{-1}H\|
=\|T_h\{T_h^*(I-Q)T_h+\mu\}^{-1}\mu(T_h^*T_h)^{\beta_h/2}\| \\
&=\|T_h\{T_h^*(I-Q)T_h+\mu\}^{-1}(T_h^*T_h+\mu)(T_h^*T_h+\mu)^{-1}\mu(T_h^*T_h)^{\beta_h/2}\| \\
&\leq 
\|T_h\{T_h^*(I-Q)T_h+\mu\}^{-1}T_h^*T_h(T_h^*T_h+\mu)^{-1}\mu(T_h^*T_h)^{\beta_h/2}\|\\
&\quad +
\|T_h\{T_h^*(I-Q)T_h+\mu\}^{-1}\mu(T_h^*T_h+\mu)^{-1}\mu(T_h^*T_h)^{\beta_h/2}\| \\
&\leq \|T_h\{T_h^*(I-Q)T_h+\mu\}^{-1}T_h^*\|\|T_h(T_h^*T_h+\mu)^{-1}\mu(T_h^*T_h)^{\beta_h/2}\|\\
&\quad +
\mu\|T_h\{T_h^*(I-Q)T_h+\mu\}^{-1}\|\|(T_h^*T_h+\mu)^{-1}\mu(T_h^*T_h)^{\beta_h/2}\| \\
&=O\left\{\|T_h(T_h^*T_h+\mu)^{-1}\mu(T_h^*T_h)^{\beta_h/2}\|+\mu^{1/2}\|(T_h^*T_h+\mu)^{-1}\mu(T_h^*T_h)^{\beta_h/2}\|\right\}
\end{align*}
where in the last step we use Assumptions~\ref{assumption:high}.3 and~\ref{assumption:high}.4. After squaring, Lemma~\ref{lemma:algebra} gives the first term in the bound.  
Next we study
\begin{align*}
&\|T_h(I-A_HA_G)^{-1}A_HG\|\\
&=\|T_h\{T_h^*(I-Q)T_h+\mu\}^{-1}T_h^*T_g\mu'(S^*S+T_g^*T_g+\mu')^{-1}(T_g^*T_g)^{\beta_g/2}\| \\
&\leq \|T_h\{T_h^*(I-Q)T_h+\mu\}^{-1}T_h^*\|\|T_g(S^*S+T_g^*T_g+\mu')^{-1}\mu'(T_g^*T_g)^{\beta_g/2}\|.
\end{align*}
The former factor is $O(1)$ by Assumption~\ref{assumption:high}.3. We bound the latter factor as above, which gives the second term in the bound.
    \item To bound $\|g_*-g_0\|^2_2$,  it suffices to control $\|(I-A_GA_H)^{-1}G'\|$ and $\|(I-A_GA_H)^{-1}A_GH\|$ by Lemma~\ref{lemma:bias2} and the characterization above. By Lemma~\ref{lemma:pre} and triangle inequality,
    \begin{align*}
&\|(I-A_GA_H)^{-1}G'\|=\|(S^*S+R+\mu')^{-1}\mu'(S^*S)^{\beta'_g/2}\| \\
&=\|(S^*S+R+\mu')^{-1}(S^*S+\mu')(S^*S+\mu')^{-1}\mu'(S^*S)^{\beta'_g/2}\| \\
&\leq \|(S^*S+R+\mu')^{-1}S^*S(S^*S+\mu')^{-1}\mu'(S^*S)^{\beta'_g/2}\| \\
&\quad + \|(S^*S+R+\mu')^{-1}\mu'(S^*S+\mu')^{-1}\mu'(S^*S)^{\beta'_g/2}\| \\
&\leq \|(S^*S+R+\mu')^{-1}S^*\|\|S(S^*S+\mu')^{-1}\mu'(S^*S)^{\beta'_g/2}\| \\
&\quad + \mu'\|(S^*S+R+\mu')^{-1}\|\|(S^*S+\mu')^{-1}\mu'(S^*S)^{\beta'_g/2}\| \\
&=O\left\{(\mu')^{-1/2}\|S(S^*S+\mu')^{-1}\mu'(S^*S)^{\beta'_g/2}\|
+\|(S^*S+\mu')^{-1}\mu'(S^*S)^{\beta'_g/2}\|
\right\}
\end{align*}
where in the last step we use Assumptions~\ref{assumption:high}.10 and~\ref{assumption:high}.14. After squaring, Lemma~\ref{lemma:algebra} gives the second term in the bound.  

Next we turn to
\begin{align*}
&\|(I-A_GA_H)^{-1}A_GH\| 
=\|(S^*S+R+\mu')^{-1}T_g^*T_h \mu(T_h^*T_h+\mu)^{-1} (T_h^*T_h)^{\beta_h/2}\| \\
&\leq \|(S^*S+R+\mu')^{-1}T_g^*\|\|T_h \mu(T_h^*T_h+\mu)^{-1} (T_h^*T_h)^{\beta_h/2}\| \\
&=O\left\{\right (\mu')^{-1/2} \|T_h \mu(T_h^*T_h+\mu)^{-1} (T_h^*T_h)^{\beta_h/2}\| \}
    \end{align*}
    where in the last line we use Assumption~\ref{assumption:high}.12. After squaring, Lemma~\ref{lemma:algebra} gives the first term in the bound.  
    
    \item The argument for $ \|S(g_*-g_0)\|_2^2$ is similar. First we control
       \begin{align*}
  &\|S(I-A_GA_H)^{-1}G'\|=\|S(S^*S+R+\mu')^{-1}\mu'(S^*S)^{\beta'_g/2}\| \\
&=\|S(S^*S+R+\mu')^{-1}(S^*S+\mu')(S^*S+\mu')^{-1}\mu'(S^*S)^{\beta'_g/2}\| \\
&\leq 
\|S(S^*S+R+\mu')^{-1}S^*S(S^*S+\mu')^{-1}\mu'(S^*S)^{\beta'_g/2}\| \\
&\quad + \|S(S^*S+R+\mu')^{-1}\mu'(S^*S+\mu')^{-1}\mu'(S^*S)^{\beta'_g/2}\| \\
&\leq 
\|S(S^*S+R+\mu')^{-1}S^*\|\|S(S^*S+\mu')^{-1}\mu'(S^*S)^{\beta'_g/2}\|
\\
&\quad +
\mu'\|S(S^*S+R+\mu')^{-1}\|\|(S^*S+\mu')^{-1}\mu'(S^*S)^{\beta'_g/2}\| \\
&=O\left\{\|S(S^*S+\mu')^{-1}\mu'(S^*S)^{\beta'_g/2}\|
+(\mu')^{1/2}\|(S^*S+\mu')^{-1}\mu'(S^*S)^{\beta'_g/2}\|
\right\}
\end{align*}
where in the last line we use Assumptions~\ref{assumption:high}.7 and~\ref{assumption:high}.9. After squaring, Lemma~\ref{lemma:algebra} gives the second term in the bound.  
Next we turn to
\begin{align*}
&\|S(I-A_GA_H)^{-1}A_GH\|
=\|S(S^*S+R+\mu')^{-1}T_g^*T_h \mu(T_h^*T_h+\mu)^{-1} (T_h^*T_h)^{\beta_h/2}\| \\
&\leq \|S(S^*S+R+\mu')^{-1}T_g^*\|\|T_h \mu(T_h^*T_h+\mu)^{-1} (T_h^*T_h)^{\beta_h/2}\| \\
&=O\{\|T_h \mu(T_h^*T_h+\mu)^{-1} (T_h^*T_h)^{\beta_h/2}\|\}
 \end{align*}
 where in the last line we use Assumption~\ref{assumption:high}.13. After squaring, Lemma~\ref{lemma:algebra} gives the first term in the bound.  
 
 \item The argument for $\|T_g(g_*-g_0)\|_2^2$ is also similar. First we control
       \begin{align*}
       &\|T_g(I-A_GA_H)^{-1}G\|=
       \|T_g(S^*S+R+\mu')^{-1}\mu'(T_g^*T_g)^{\beta_g/2}\|
       \\
       &=\|T_g(S^*S+R+\mu')^{-1}(T_g^*T_g+\mu')(T_g^*T_g+\mu')^{-1}\mu'(T_g^*T_g)^{\beta_g/2}\| \\
       &\leq 
       \|T_g(S^*S+R+\mu')^{-1}T_g^*T_g(T_g^*T_g+\mu')^{-1}\mu'(T_g^*T_g)^{\beta_g/2}\| \\
       &\quad +
       \|T_g(S^*S+R+\mu')^{-1}\mu'(T_g^*T_g+\mu')^{-1}\mu'(T_g^*T_g)^{\beta_g/2}\| \\
       &\leq 
       \|T_g(S^*S+R+\mu')^{-1}T_g^*\|\|T_g(T_g^*T_g+\mu')^{-1}\mu'(T_g^*T_g)^{\beta_g/2}\|\\
       &\quad +
       \mu'\|T_g(S^*S+R+\mu')^{-1}\|\|(T_g^*T_g+\mu')^{-1}\mu'(T_g^*T_g)^{\beta_g/2}\|\\
       &=O\left\{\|T_g(T_g^*T_g+\mu')^{-1}\mu'(T_g^*T_g)^{\beta_g/2}\|+(\mu')^{1/2}\|(T_g^*T_g+\mu')^{-1}\mu'(T_g^*T_g)^{\beta_g/2}\|\right\}
       \end{align*}
       where in the last line we use Assumptions~\ref{assumption:high}.8 and~\ref{assumption:high}.11. After squaring, Lemma~\ref{lemma:algebra} gives the second term in the bound.  
       
       Finally, we turn to
\begin{align*}
       &\|T_g(I-A_GA_H)^{-1}A_GH\|
=\|T_g(S^*S+R+\mu')^{-1}T_g^*T_h \mu(T_h^*T_h+\mu)^{-1} (T_h^*T_h)^{\beta_h/2}\| \\
&\leq \|T_g(S^*S+R+\mu')^{-1}T_g^*\|\|T_h \mu(T_h^*T_h+\mu)^{-1} (T_h^*T_h)^{\beta_h/2}\| \\
&=O\left\{\|T_h \mu(T_h^*T_h+\mu)^{-1} (T_h^*T_h)^{\beta_h/2}\|\right\}
    \end{align*}
    where in the last line we use Assumption~\ref{assumption:high}.8. After squaring, Lemma~\ref{lemma:algebra} gives the first term in the bound.  \qedhere
\end{enumerate}
  
\end{proof}

\begin{proof}[Proof of Lemma~\ref{lemma:bias3}]
    The result is an immediate consequence of Lemmas~\ref{lemma:high} and~\ref{lemma:verify}. The proof of Lemma~\ref{lemma:verify} is given below.
\end{proof}

\subsection{Useful operator theory}

\begin{lemma}[Pass through]\label{lemma:pass}
    For any operators $U$ and $V$ and any scalar $\mu$, $(UV+\mu)^{-1}U=U(VU+\mu)^{-1}$ when the products are well defined and the inverses exist.
\end{lemma}
\begin{proof}
    Write $U(VU+\mu)=(UV+\mu)U$.
\end{proof}

\begin{lemma}[Discard denominator]\label{lemma:denom}
    Suppose $A$ is positive definite and $B$ is positive semidefinite. Then for any operator $C$,
    $
    \|C^*(A+B)^{-1}C\|_{\op}\leq \|C^*A^{-1}C\|_{\op}$.
\end{lemma}

\begin{proof}
    In the Loewner sense, $A+B\geq A >0$, so $0<(A+B)^{-1}\leq A^{-1}$ by \citet[Proposition V.1.6]{bhatia2013matrix}. Therefore $A^{-1}-(A+B)^{-1} \geq 0$ and in particular $C^*\{A^{-1}-(A+B)^{-1}\}C \geq 0$ using \citet[Lemma V.1.5]{bhatia2013matrix}.
\end{proof}

\begin{lemma}[Relative completeness]\label{lemma:completeness}
   Suppose $A$ and $B$ are bounded linear operators between Hilbert spaces: $A\in \mathcal{L}(X,Y)$, and $B\in\mathcal{L}(X,Z)$. If $A(x)=0$ implies $B(x)=0$, and $A$ has closed range, then there exists a linear operator $M$ such that $B=MA$ and $\|M\|_{\op}=O(1)$.
\end{lemma}
\begin{proof}
Decompose $X=\textsc{null}(A) \oplus\{\textsc{null}(A)\}^{\perp}$, and consider the restricted operator $A_0: \{\textsc{null}(A)\}^{\perp} \rightarrow  \textsc{range}(A)$ by $ A_0(x)=A(x)$. This is a bijective operator between Banach spaces, since we suppose that $\textsc{range}(A)$ is closed. An application of the bounded inverse theorem states that the inverse $
A_0^{-1}:  \textsc{range}(A) \rightarrow \{\textsc{null}(A)\}^{\perp}$ is bounded. In particular, there exists $C>0$ such that
$
\|A_0^{-1}(y)\| \leq C\|y\|$, for all $y \in \textsc{range}(A).
$
Equivalently, $A$ is bounded below in $\{\textsc{null}(A)\}^{\perp}$: $\|x\| \leq C\|A(x)\|$ for all $x\in \{\textsc{null}(A)\}^{\perp}$. Thus we can write $\gamma:=\inf_{{x \in\{\textsc{null}(A)\}^{\perp} \;,\;\|x\|=1}}\|A(x)\|>0$.

This enables us to construct a bounded left inverse of $A$ in its range. Define $L: \textsc{range}(A) \rightarrow\{\textsc{null}(A)\}^{\perp}$ as $L\{A(x)\}=x$ with $x \in\{\textsc{null}(A)\}^{\perp}$. This is a well-defined and bounded operator:
if $A(x_1)=A(x_2)$ with $x_1,x_2 \in\{\textsc{null}(A)\}^{\perp}$, then $A\left(x_1-x_2\right)=0$, so $x_1-x_2 \in \textsc{null}(A) \cap\{\textsc{null}(A)\}^{\perp}=\{0\}$; hence $x_1=x_2$.
To establish the boundedness condition, take $y=A(x)$ with $x \in\{\textsc{null}(A)\}^{\perp}$, then
$\|L (y)\|=\|x\| \leq \frac{1}{\gamma}\|A(x)\|=\frac{1}{\gamma}\|y\|$.

Now decompose $Y = \textsc{range}(A) \oplus\{\textsc{range}(A)\}^\perp $, and we extend $L$ to all $Y$ by letting $\tilde{L}(y_R+y_{R^{\perp}})=L (y_R)$ with $y_R \in \textsc{range}(A)$ and  $y_{R^{\perp}} \in \{\textsc{range}(A)\}^{\perp}$. Such an extension preserves the same operator norm: $
\|\tilde{L}\|_{\op}=\|L\|_{\op} \leq \frac{1}{\gamma}$.

Finally, set $M:=B \tilde{L} \in \mathcal{L}(Y, Z)$. Such an $M$ satisfies $\|M\|_{\op}\leq\|B\|_{\op}\|\tilde L\|_{\op}\leq \|B\|_{\op}\gamma^{-1}$. Moreover, $B = MA$. To prove the latter, take any $x \in X$ and represent it as $x=u+v$ with $u \in \textsc{null}(A)$, $v \in\{\textsc{null}(A)\}^{\perp}$. Then
$$
M A(x)=B \tilde{L} A(u+v)=B \tilde{L}\{A (v)\}=B LA (v) = B (v)= B(v) + B(u)= B(x)
$$
by $M=B \tilde{L}$ and $x=(u+v)$; $A (u)=0$; $A (v) \in \textsc{range}(A)$; $L\{A (v)\}=v$; and $B(u)=0$ by the hypothesized property that $A(u)=0$ implies $B(u)=0$.
\end{proof}

The operator norm $\|M\|_{\op}$ can be interpreted as a relative condition number between the operators $A$ and $B$: it is bounded by $\|L\|_{\op}\|B\|_{\op}$, where $L$ is the left inverse of $A$.

\begin{lemma}[Relative alignment]\label{lemma:alignment}
 Suppose $(A,B,C)$ are operators and $\mu>0$ is a scalar, where $(A,B,A-B)$ are positive semidefinite. If $\langle x,y\rangle\geq 0$ implies $\langle x,(A-B)y\rangle\geq 0$ then $\|C (C^* A C + \mu I)^{-1}\|_{\op}  \leq \|C (C^*B C + \mu I)^{-1}\|_{\op}$.
\end{lemma}

\begin{proof}

Define the function $f:[0,1]\rightarrow \mathbb{R}$ by
$
f(t) = \| C [C^* \{tA+(1-t)B\} C + \mu I]^{-1} \|_{\op},
$
which is differentiable.\footnote{Since $f$ is 1-Lipschitz, it is differentiable almost everywhere in the interval $[0,1]$ by Rademacher's theorem. The differential equality we obtain only needs to hold almost everywhere to establish our desired bound.} We prove that $f(1) \leq f(0)$ by controlling the derivative of $f(t)$.
\begin{enumerate}
    \item Define the operator-valued function
$
W(t) = C [C^* \{tA+(1-t)B\} C + \mu I]^{-1}.
$
Then
$
f(t) = \| W(t) \|_{\op} = \sigma_{\max}\{W(t)\},
$
where $\sigma_{\max}\{W(t)\}$ is the largest singular value of $W(t)$.
    \item The derivative of $f(t)$ is
$$
    \frac{\partial}{\partial t} f(t) =  \frac{\partial}{\partial t} \sigma_{\max}\{W(t)\} 
    = \frac{\partial}{\partial t} \sup_{\|u\|\leq 1,\|v\|\leq 1} \langle u , W(t)v\rangle = \langle u(t) , \left\{\frac{\partial}{\partial t} W(t)\right\}v(t)\rangle
$$
where $u(t)$ and $v(t)$ are the initial left and right  singular functions of $W(t)$, i.e.
$$
W(t)v(t) = \sigma_{\max}\{W(t)\}u(t),\quad u(t)^{*}W(t) = \sigma_{\max}\{W(t)\}v(t)^{*}.
$$
The final equality appeals to \citet[Theorem 4.1]{Bonnans}. In particular, the derivative formula applies their result on the closed unit ball endowed with the weak topology, which is compact by Banach-Alaoglu theorem. The continuity conditions are satisfied because $W(t)$ and $\frac{\partial}{\partial t} W(t)$ are compact operators.
    \item Next we compute the derivative of $W(t)$. 
     Recall that $
\frac{\partial}{\partial t} \{X^{-1}(t)\} = -X^{-1}(t) \frac{\partial X(t)}{\partial t} X^{-1}(t)
$. Set $X(t)=C^* \{tA+(1-t)B\} C + \mu I$. Then  $\frac{\partial}{\partial t} X(t) = C^* (A-B) C$ and therefore 
\begin{align*}
    \frac{\partial}{\partial t} W(t) 
    &= \frac{\partial}{\partial t} CX^{-1}(t) 
    =-CX^{-1}(t) \frac{\partial X(t)}{\partial t} X^{-1}(t) \\
    &= - C [C^* \{tA+(1-t)B\} C + \mu I]^{-1} C^*  (A-B)C [C^* \{tA+(1-t)B\} C + \mu I]^{-1} \\
    &= -W(t)C^* (A-B) W(t).
\end{align*}
    \item Collecting results, the desired derivative is
\begin{align*}
    &\frac{\partial}{\partial t} f(t) 
    = \langle u(t) , \left\{\frac{\partial}{\partial t} W(t)\right\}v(t)\rangle
    = -\langle u(t) , W(t)C^* (A-B) W(t)v(t)\rangle \\
    &= -\langle C W(t)^*u(t) ,(A-B) W(t)v(t)\rangle 
    = -\langle C\sigma_{\max}\{W(t)\}v(t) , (A-B)  \sigma_{\max}\{W(t)\} u(t)\rangle \\
    &= -\sigma_{\max}\{W(t)\}^2 \cdot \langle C v(t) ,  (A-B)  u(t)\rangle 
    = -f(t)^2 \cdot \langle C v(t) ,  (A-B) u(t)\rangle 
    =-f(t)^2 \gamma(t)
\end{align*}
with $\gamma(t) := \langle C v(t) ,  (A-B) u(t)\rangle$.
\item We solve the differential equation $
\frac{1}{f(s)^2} \frac{\partial f(s)}{\partial s} = -\gamma(s).$
Integrating both sides,
$$
 -\int_0^t \gamma(s) \partial s=\int_0^t \frac{1}{f(s)^2}  \partial f(s) = -\frac{1}{f(t)} + \frac{1}{f(0)}.
$$
Therefore, 
$f(t)  = \frac{1}{\frac{1}{f(0)} +  \int_0^{t}\gamma(s)\,\partial s}$ and hence $f(1)  = \frac{1}{\frac{1}{f(0)} +  \int_0^{1}\gamma(s)\,\partial s}$.
\item We argue that relative alignment of $A-B$ implies $\gamma(s)\geq 0$. Note that
\begin{align*}
    \langle C v(t) , u(t)\rangle &= \langle u(t), C v(t) \rangle= u(t)^* C X^{-1}(t)X(t)v(t) \\
    &=u(t)^* W(t)X(t) v(t) 
    =\sigma_{\max}\{W(t)\}v(t)^*X(t)v(t) \geq 0
\end{align*}
because $X(t)$ is positive semidefinite as the sum of three positive semidefinite operators. Therefore, by hypothesis, $\gamma(t)=\langle C v(t) , (A-B) u(t)\rangle\geq 0$.

\item In summary, we have shown $f(1)  = \frac{1}{\frac{1}{f(0)} +  \int_0^{1}\gamma(s)\,\partial s} \leq  \frac{1}{\frac{1}{f(0)}}=f(0)$. \qedhere 
\end{enumerate}
\end{proof}

In the final step, we use $\int_0^{1}\gamma(s)\,\partial s \geq 0$. Assumption~\ref{assumption:alignment} is a strong sufficient condition that guarantees $\gamma(s)\geq 0$. Assumption~\ref{assumption:weak-alignment} is a weaker condition that guarantees $\int_0^{1}\gamma(s)\,\partial s \geq 0$.

\subsection{Verifying high level conditions}

\begin{proof}[Proof of Lemma~\ref{lemma:consequence}]
    The result is immediate from Lemma~\ref{lemma:completeness}.
\end{proof}

\begin{lemma}\label{lemma:alignment_interp}
    Assumption~\ref{assumption:alignment} holds if and only if
\begin{enumerate}
        \item $\langle x,y \rangle \geq 0$ implies $\langle x, M^*M y \rangle\geq 0 $;
         \item $\langle x,y \rangle \geq 0$ implies $\langle x, M^*\mu(T_hT_h^*+\mu)^{-1}M y \rangle\geq 0 $; 
        \item  $\langle x,y \rangle \geq 0$ implies $\langle x, (I-Q-cI) y \rangle\geq 0 $.
    \end{enumerate}
    A similar statement holds for Assumption~\ref{assumption:weak-alignment}.
\end{lemma}

\begin{proof}
 The first and second results use the definition of the adjoint. The third uses $
        \langle x, (I-Q-cI) y \rangle=(1-c)\langle x, y \rangle - \langle x, Q y \rangle
        $.
\end{proof}

\begin{lemma}[Verifying high level conditions]\label{lemma:verify}
  Assumptions~\ref{assumption:posedness},~\ref{assumption:completeness}, and~\ref{assumption:alignment} imply Assumption~\ref{assumption:high}. The same is true replacing Assumption~\ref{assumption:alignment} with Assumption~\ref{assumption:weak-alignment}.   
\end{lemma}

\begin{proof}
We state the argument under Assumption~\ref{assumption:alignment}; the argument under Assumption~\ref{assumption:weak-alignment} is identical.
Again we abbreviate $\|\cdot\|=\|\cdot\|_{\op}$. 

By Lemma~\ref{lemma:pre} and \citet[Theorem IX.2.1]{bhatia2013matrix}, 
\begin{align*}
    c^{-1}
    &=\|(S^*S+T_g^*T_g+\mu')^{1/2}(S^*S+\mu')^{-1} (S^*S+T_g^*T_g+\mu')^{1/2}\|\\
    &\leq \|(S^*S+T_g^*T_g+\mu')^{1/2}(S^*S+\mu')^{-1/2}\| \|(S^*S+\mu')^{-1/2} (S^*S+T_g^*T_g+\mu')^{1/2}\|\\
    &=\|(S^*S+\mu')^{-1/2} (S^*S+T_g^*T_g+\mu')^{1/2}\|^2
     \leq \|(S^*S+\mu')^{-1} (S^*S+T_g^*T_g+\mu')\| \\
    &= \|I+(S^*S+\mu')^{-1}T_g^*T_g\|
    \leq 1+\|(S^*S+\mu')^{-1}T_g^*T_g\| 
    =O(1)
\end{align*}
where in the last line we use Assumption~\ref{assumption:posedness}.
\begin{enumerate}
    \item By Lemma~\ref{lemma:denom}, $\|T_g(S^*S+T_g^*T_g+\mu')^{-1}T_g^*\|\leq \|T_g(T_g^*T_g+\mu')^{-1}T_g^*\|\leq 1$. 
    
\item Under Assumption~\ref{assumption:completeness}, Lemma~\ref{lemma:consequence} implies 
\begin{align*}
  \|T_g(S^*S+T_g^*T_g+\mu)^{-1}\| &= \|MS\{S^*(I+M^*M)S+\mu\}^{-1}\| \leq \|M\| \|S\{S^*(I+M^*M)S+\mu\}^{-1}\|.
\end{align*}
Focusing on the latter factor, Assumption~\ref{assumption:alignment}.1 and Lemmas~\ref{lemma:alignment} and~\ref{lemma:alignment_interp} imply   
$$\|S\{S^*(I+M^*M)S+\mu\}^{-1}\|\leq \|S(S^*S+\mu)^{-1}\|= O(\mu^{-1/2}).$$ 

\item Since $T_h^*(I-Q)T_h\geq cT_h^*T_h$ by Lemma~\ref{lemma:pre}, an argument similar to Lemma \ref{lemma:denom} gives 
$$\|T_h\{T_h^*(I-Q)T_h+\mu\}^{-1}T_h^*\|\leq \|T_h(cT_h^*T_h+\mu)^{-1}T_h^*\|= c^{-1}\|T_h(T_h^*T_h+\mu/c)^{-1}T_h^*\|\leq c^{-1}.$$

\item Assumption~\ref{assumption:alignment}.2 and Lemmas~\ref{lemma:alignment} and~\ref{lemma:alignment_interp} imply   
\begin{align*}
  \|T_h\{T_h^*(I-Q)T_h+\mu\}^{-1}\| &\leq  \|T_h(cT_h^*T_h+\mu)^{-1}\|=c^{-1}\|T_h(T_h^*T_h+\mu/c)^{-1}\|= O(\mu^{-1/2}).
\end{align*}

\item See result 4. 

\item Since $T_h^*(I-Q)T_h\geq 0$ by Lemma~\ref{lemma:pre}, $\|\{T_h^*(I-Q)T_h+\mu\}^{-1}\|\leq \mu^{-1}$. 

\item Since $R\geq 0$ by Lemma~\ref{lemma:pre}, Lemma~\ref{lemma:denom} gives $\|S(S^*S+R+\mu')^{-1}S^*\|\leq \|S(S^*S+\mu')^{-1}S^*\|\leq 1.$ 

\item Under Assumption~\ref{assumption:completeness},  Lemma~\ref{lemma:consequence} implies 
$$\|T_g(S^*S+R+\mu')^{-1}T_g^*\|=\|MS(S^*S+R+\mu')^{-1}S^*M^*\|\leq \|M\|\|S(S^*S+R+\mu')^{-1}S^*\|\|M^*\|.$$
Then appeal to result 7. 

\item Under Assumption~\ref{assumption:completeness},  Lemmas~\ref{lemma:pre} and~\ref{lemma:consequence} imply that
$$R =T_g^*\mu(T_hT_h^*+\mu)^{-1}T_g= S^*M^*\mu(T_hT_h^*+\mu)^{-1}MS.$$ 
Assumption~\ref{assumption:alignment}.1 and Lemmas~\ref{lemma:alignment} and~\ref{lemma:alignment_interp} imply  
\begin{align*}
    \|S(S^*S+R+\mu')^{-1}\| &= \|S[S^*\left\{I+M^*\mu(T_hT_h^*+\mu)^{-1}M\right\}S+\mu']^{-1}\|\\
    &\leq \|S(S^*S+\mu')^{-1}\|= O\{(\mu')^{-1/2}\}.
\end{align*}

\item See result 9. 

\item Under Assumption~\ref{assumption:completeness}, Lemma~\ref{lemma:consequence} implies  $\|T_g(S^*S+R+\mu')^{-1}\| = \|MS(S^*S+R+\mu')^{-1}\|\leq \|M\|\|S(S^*S+R+\mu')^{-1}\|$.  Then appeal to result 9. 

\item See result 11. 

\item Under Assumption~\ref{assumption:completeness}, Lemma~\ref{lemma:consequence} implies 
\begin{align*}
    &\|S(S^*S+R+\mu')^{-1}T_g^*\| = \|S(S^*S+R+\mu')^{-1}S^*M^*\|\leq  \|S(S^*S+R+\mu')^{-1}S^*\|\|M^*\|.
\end{align*}
Then appeal to result 7. 

\item Since $S^*S+R\geq 0$ by Lemma~\ref{lemma:pre}, $\|(S^*S+R+\mu')^{-1}\|\leq (\mu')^{-1}$. \qedhere
\end{enumerate}
\end{proof}

\section{Proof of Theorem~\ref{theorem:ci}}\label{sec:ci}

Theorem~\ref{theorem:ci} summarizes nonasymptotic Gaussian approximation and variance estimation results, which we now state and prove. Our nonasymptotic results apply to casual scalars and causal functions, though we defer details on the latter to Appendix~\ref{sec:local}.

\subsection{Nonasymptotic refinements}

We pointwise approximate causal functions such as Example~\ref{ex:het}, taking the limit where the bandwidth $\lambda$ of the weighting $\ell_{\lambda}$ vanishes. Formally, we write $\theta_0(v)=\lim_{\lambda\rightarrow 0}\theta_{\lambda}(v)$, where $v=x_1$ in Example~\ref{ex:het}. For each $\theta_{\lambda}(v)$, we write the influence function as
\begin{align*}
    \psi_{\lambda}(B_1,B_2,B_3,B_4)&=\ell_{\lambda}(V)[h_1(B_1)+h_3(B_3)\{Y-h_2(B_2)\}+h_4(B_4)\{h_2(B_2)-h_1(B_1)\}]-\theta_{\lambda}(v)\\
    &=h_{1,\lambda}(B_1)+h_3(B_3)\{Y_{\lambda}-h_{2,\lambda}(B_2)\}+h_4(B_4)\{h_{2,\lambda}(B_2)-h_{1,\lambda}(B_1)\}-\theta_{\lambda}(v).
\end{align*}
By construction, the first moment of $\psi_{\lambda}$ is zero, and the second, third, and fourth moments $(\sigma_{\lambda}^2,\kappa_{\lambda}^3,\chi_{\lambda}^4)$ diverge as $\lambda\downarrow 0$. Finally, we write the pointwise approximation error as $\Delta_{\lambda}(v)=n^{1/2}\sigma_{\lambda}^{-1}|\theta_{\lambda}(v)-\theta_0(v)|$. 

For causal scalars, $\ell_{\lambda}(V)=1$, $(\sigma^2,\kappa^3,\chi^4)$ are fixed constants, and there is no pointwise approximation error.

\begin{theorem}[Finite sample Gaussian approximation for Algorithm~\ref{algo:dml}]\label{theorem:gaussian}
    Suppose Assumptions~\ref{assumption:orthogonal},~\ref{assumption:regular}(i), and~\ref{assumption:regular}(ii) hold. Then with probability $1-\epsilon$, $
\sup_{z\in\mathbb{R}} \left| \mathbb{P} \left\{\frac{n^{1/2}}{\sigma}(\hat{\theta}-\theta_0)\leq z\right\}-\Phi(z)\right|\leq 0.4748\left(\frac{\kappa}{\sigma}\right)^3 n^{-1/2}+\frac{\Delta}{(2\pi)^{1/2}}+\epsilon,
$ where $\Phi$ is the standard Gaussian distribution and 
{\small 
\begin{align*}
    \Delta&=\frac{7L}{2\epsilon  \sigma}
    \bigg\{
    (1+\bar{h}_4)\|\hat{h}_1-h_1\|_2
    +(\bar{h}_3+\bar{h}_4)\|\hat{h}_2-h_2\|_2
    +\bar{\sigma}_y\|\hat{h}_3-h_3\|_2
    +\bar{\sigma}_2\|\hat{h}_4-h_4\|_2 \\
    &\quad +n^{1/2}\|\hat{h}_1-h_1\|_2\|\hat{h}_4-h_4\|_2
    +n^{1/2}\|\hat{h}_2-h_2\|_2\|\hat{h}_3-h_3\|_2
    +n^{1/2}\|\hat{h}_2-h_2\|_2\|\hat{h}_4-h_4\|_2
    \bigg\}.
\end{align*}
}
    If in addition Assumption~\ref{assumption:regular}(iii) holds, then the same holds updating $\Delta$ to be
    {\small
\begin{align*}
    \Delta&=\frac{4 L}{\epsilon^{1/2} \sigma}
\bigg\{
(1+\bar{h}_4+\bar{h}_4')\|\hat{h}_1-h_1\|_2
+(\bar{h}_3+\bar{h}_3'+\bar{h}_4+\bar{h}_4')\|\hat{h}_2-h_2\|_2
 +\bar{\sigma}_y\|\hat{h}_3-h_3\|_2
+\bar{\sigma}_2\|\hat{h}_4-h_4\|_2
\bigg\}\\
&\quad +\frac{1}{2\sigma}\bigg\{
n^{1/2}\|T_1(\hat{h}_1-h_1)\|_2\|\hat{h}_4-h_4\|_2 \wedge n^{1/2}\|\hat{h}_1-h_1\|_2\|T_4(\hat{h}_4-h_4)\|_2\\
&\quad\quad+
n^{1/2}\|T_2(\hat{h}_2-h_2)\|_2\|\hat{h}_3-h_3\|_2 \wedge n^{1/2}\|\hat{h}_2-h_2\|_2\|T_3(\hat{h}_3-h_3)\|_2 \\
&\quad\quad+
n^{1/2}\|T_2(\hat{h}_2-h_2)\|_2\|\hat{h}_4-h_4\|_2 \wedge n^{1/2}\|\hat{h}_2-h_2\|_2\|T_4(\hat{h}_4-h_4)\|_2
\bigg\}.
\end{align*}
}

For causal functions, the same holds replacing $(\hat{\theta},\theta_0,\Delta)$ with $\{\hat{\theta}_{\lambda}(v),\theta_0(v),\Delta+\Delta_{\lambda}(v)\}$ and indexing several quantities by $\lambda$; see Appendix~\ref{sec:local}.
\end{theorem}

\begin{theorem}[Finite sample variance estimation for Algorithm~\ref{algo:dml}]\label{theorem:variance}
    Suppose Assumptions~\ref{assumption:regular}(i) and~(iii) hold. Then with probability $1-\epsilon'$, $
|\hat{\sigma}^2-\sigma^2|\leq \Delta'+2(\Delta')^{1/2}\{(\Delta'')^{1/2}+\sigma\}+\Delta'',
$ where $\Delta''=\left(\frac{2}{\epsilon'}\right)^{1/2}\chi^2 n^{-1/2}$ and
{ \small
$$
\Delta'=7(\hat{\theta}-\theta_0)^2+\frac{84 L}{\epsilon'}\left[
   \|\hat{h}_1-h_1\|_2^2
    +\{(\bar{h}_3')^2+(\bar{h}_4')^2\}\|\hat{h}_2-h_2\|_2^2
    +\{(\bar{h}_4')^2+\bar{\sigma}_y^2\}\|\hat{h}_3-h_3\|_2^2
    +\bar{\sigma}_2^2\|\hat{h}_4-h_4\|_2^2
    \right].
$$
}

For causal functions, the same holds, indexing several quantities by $\lambda$; see Appendix~\ref{sec:local}.
\end{theorem}

Theorems~\ref{theorem:gaussian} and~\ref{theorem:variance} give nonasymptotic Gaussian approximation and variance estimation for mediated, time varying, and long term treatment effects with generic machine learning. The results hold with or without proxy variables, and apply to nonparametric causal functions. 

\begin{proof}[Proof of Theorem~\ref{theorem:ci}]
        By Theorem~\ref{theorem:gaussian}, $\hat{\theta}\overset{p}{\rightarrow}\theta_0$ and 
$\lim_{n\rightarrow\infty} \mathbb{P}\left\{\theta_0 \in  \left(\hat{\theta}\pm 1.96\frac{\sigma}{n^{1/2}}\right)\right\}=0.95.$
For the desired result, it suffices that $\hat{\sigma}^2\overset{p}{\rightarrow}\sigma^2$, which follows from Theorem~\ref{theorem:variance}.
\end{proof}

\subsection{Neyman orthogonality}\label{sec:neyman}

In this appendix, we expand the notation to eliminate some subscripts. We denote the norm $\mathcal{R}(h)=\|h-h_0\|_2^2$ and $\mathcal{P}(h)=\|T(h-h_0)\|_2^2$, where the operator $T$ is relative to the definition of $h_0$. We write the nuisances as $(\nu_0,\delta_0,\alpha_0,\eta_0)=(h_1,h_2,h_3,h_4)$. Let $W$ concatenate all of the random variables in an observation. Let $\psi_0(w)=\psi(w,\theta_0,\nu_0,\delta_0,\alpha_0,\eta_0)$ where 
$$
\psi(w,\theta,\nu,\delta,\alpha,\eta)=\nu(w)+\alpha(w)\{y-\delta(w)\}+\eta(w)\{\delta(w)-\nu(w)\}-\theta
$$
and we suppress the indexing by $\lambda$ for causal functions.

Let $s(w),t(w),u(w),v(w)$ be functions and let $\tau,\zeta \in  \mathbb{R}$ be scalars. The Gateaux derivative of $\psi(w,\theta,\nu,\delta,\alpha,\eta)$ with respect to its argument $\nu$ in the direction $s$ is
$
\{\partial_{\nu} \psi(w,\theta,\nu,\delta,\alpha,\eta)\}(s)=\frac{\partial}{\partial \tau} \psi(w,\theta,\nu+\tau s,\delta,\alpha,\eta)|_{\tau=0}.
$
The cross derivative of $\psi(w,\theta,\nu,\delta,\alpha,\eta)$ with respect to its arguments $(\nu,\delta)$ in the directions $(s,t)$ is
$
\{\partial^2_{\nu,\delta} \psi(w,\theta,\nu,\delta,\alpha,\eta)\}(s,t)=\frac{\partial^2}{\partial \tau \partial \zeta} \psi(w,\theta,\nu+\tau s,\delta+\zeta t, \alpha, \eta)|_{\tau=0,\zeta=0}.
$

\begin{lemma}[Calculation of derivatives]\label{lemma:deriv}
The first derivatives are 
$\{\partial_{\nu} \psi(w,\theta,\nu,\delta,\alpha,\eta)\}(s)
=s(w)\{1-\eta(w)\}$, 
$\{\partial_{\delta} \psi(w,\theta,\nu,\delta,\alpha,\eta)\}(t)
    =t(w)\{\eta(w)-\alpha(w)\}$, 
    $\{\partial_{\alpha} \psi(w,\theta,\nu,\delta,\alpha,\eta)\}(u)=u(w)\{y-\delta(w)\}$, and 
    $\{\partial_{\eta} \psi(w,\theta,\nu,\delta,\alpha,\eta)\}(v)=v(w)\{\delta(w)-\nu(w)\}$.
The second derivatives are $\{\partial^2_{\nu,\delta} \psi(w,\theta,\nu,\delta,\alpha,\eta)\}(s,t)=0$, $\{\partial^2_{\nu,\alpha} \psi(w,\theta,\nu,\delta,\alpha,\eta)\}(s,u)=0$, $ \{\partial^2_{\nu,\eta} \psi(w,\theta,\nu,\delta,\alpha,\eta)\}(s,v)=-v(w)s(w)$, $\{\partial^2_{\delta,\alpha} \psi(w,\theta,\nu,\delta,\alpha,\eta)\}(t,u)=-u(w)t(w)$, $\{\partial^2_{\delta,\eta} \psi(w,\theta,\nu,\delta,\alpha,\eta)\}(t,v)=v(w)t(w)$, and $\{\partial^2_{\alpha,\eta} \psi(w,\theta,\nu,\delta,\alpha,\eta)\}(u,v)=0$.
\end{lemma}

\begin{proof}
    The result is immediate from the definition of Gateaux differentiation.
\end{proof}

\begin{lemma}[Neyman orthogonality]\label{lemma:neyman}
If Assumption~\ref{assumption:orthogonal} holds then $\psi$ is Neyman orthogonal with respect to $(\nu,\delta,\alpha,\eta)$.
\end{lemma}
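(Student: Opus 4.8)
The plan is to verify the definition of Neyman orthogonality directly: I must show that the Gateaux derivative of the moment functional with respect to each nuisance component $(\nu,\delta,\alpha,\eta)$, evaluated at the truth $(\nu_0,\delta_0,\alpha_0,\eta_0)$, vanishes in every direction in the respective hypothesis space. Write the population moment as $\Psi(\theta;\nu,\delta,\alpha,\eta)=\mathbb{E}[\nu(W_1)+\alpha(W)\{Y-\delta(W)\}+\eta(W)\{\delta(W)-\nu(W)\}]-\theta$. Since $\theta$ enters additively, it suffices to handle the nuisance part $\Phi(\nu,\delta,\alpha,\eta)=\mathbb{E}[\nu(W_1)+\alpha(W)\{Y-\delta(W)\}+\eta(W)\{\delta(W)-\nu(W)\}]$. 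For a direction $(\bar\nu,\bar\delta,\bar\alpha,\bar\eta)$ in $\mathcal V\times\mathcal D\times\mathcal A\times\mathcal M$, I compute $\left.\frac{d}{dr}\right|_{r=0}\Phi\big((\nu_0,\delta_0,\alpha_0,\eta_0)+r(\bar\nu-\nu_0,\bar\delta-\delta_0,\bar\alpha-\alpha_0,\bar\eta-\eta_0)\big)$ and show it equals zero, which is exactly Neyman orthogonality with respect to each argument.

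The key steps, carried out one argument at a time. (i) \emph{Perturb $\nu$:} the $r$-derivative is $\mathbb{E}[\{\bar\nu(W_1)-\nu_0(W_1)\}-\eta_0(W)\{\bar\nu(W_1)-\nu_0(W_1)\}]=\mathbb{E}[\{\bar\nu-\nu_0\}\{1-\eta_0\}]$, which is zero by the first equation in Assumption~\ref{assumption:neyman}. (ii) \emph{Perturb $\delta$:} the derivative is $\mathbb{E}[-\alpha_0(W)\{\bar\delta(W)-\delta_0(W)\}+\eta_0(W)\{\bar\delta(W)-\delta_0(W)\}]=\mathbb{E}[\{\bar\delta-\delta_0\}\{\eta_0-\alpha_0\}]$, zero by the second equation. (iii) \emph{Perturb $\alpha$:} the derivative is $\mathbb{E}[\{\bar\alpha(W)-\alpha_0(W)\}\{Y-\delta_0(W)\}]$, zero by the third equation. (iv) \emph{Perturb $\eta$:} the derivative is $\mathbb{E}[\{\bar\eta(W)-\eta_0(W)\}\{\delta_0(W)-\nu_0(W)\}]$, zero by the fourth equation. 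Because the moment function is linear in $\alpha$ and $\eta$ and bilinear in $(\alpha,\delta)$ and $(\eta,\nu)$ and $(\eta,\delta)$, these Gateaux derivatives are exact (the second-order term in $r$ is constant in $r$, so the derivative at $r=0$ is clean), and the four displayed identities are precisely what is needed. One should also note the cross-terms: perturbing, say, $\delta$ also interacts with the $\eta_0\bar\delta$ piece, but since we evaluate all other arguments at their true values $\alpha_0,\eta_0$, every term collects into one of the four inner products above.

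The main obstacle, such as it is, is bookkeeping rather than mathematics: one must be careful that when differentiating in one direction, the remaining three arguments are held fixed \emph{at their true values}, so that products like $\alpha(W)\delta(W)$ contribute only $\alpha_0(W)\bar\delta(W)$-type terms and no $\bar\alpha\bar\delta$ cross-term survives at first order. I would present the computation as a single display expanding $\Phi$ at the perturbed point, group terms by which nuisance is perturbed, and invoke Assumption~\ref{assumption:neyman} line by line. A remark afterward can flag that the restriction of the perturbation directions to the hypothesis spaces $(\mathcal V,\mathcal D,\mathcal A,\mathcal M)$ is exactly why those spaces appear in the assumption — orthogonality is a statement relative to the directions one actually estimates over, which is the subtlety the surrounding text emphasizes.
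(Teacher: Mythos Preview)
Your proposal is correct and follows essentially the same route as the paper: both compute the four Gateaux derivatives of the moment function at the truth and observe that each one coincides with one of the four inner products in Assumption~\ref{assumption:neyman}. The paper simply packages the derivative calculations into separate propositions (computing $\partial_\nu\psi_0,\partial_\delta\psi_0,\partial_\alpha\psi_0,\partial_\eta\psi_0$ first and then taking expectations), while you do the same computation inline.
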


\begin{proof}
    The result is immediate from the first derivatives in Lemma~\ref{lemma:deriv}.
\end{proof}

\begin{lemma}[Verifying Neyman orthogonality]
    Examples~\ref{ex:direct} and~\ref{ex:het} are Neyman orthogonal. So are the other examples, using the efficient influence functions derived in other works.
\end{lemma}

\begin{proof}
    By the law of iterated expectations, it is straightforward to verify Assumption~\ref{assumption:orthogonal} for each example. By Lemma~\ref{lemma:neyman}, this suffices for Neyman orthogonality.
\end{proof}

\subsection{Gaussian approximation}

Partition the observations into $L$ folds. Denote the $\ell$th fold by $I_{\ell}$. Train $(\hat{\nu}_{\ell},\hat{\delta}_{\ell},\hat{\alpha}_{\ell},\hat{\eta}_{\ell})$ on observations in the complement of $I_{\ell}$, i.e. $I_{\ell}^c$. Let $n_{\ell}=|I_{\ell}|=n/L$ be the number of observations in $I_{\ell}$. Denote by $\mathbb{E}_{\ell}(\cdot)=n_{\ell}^{-1}\sum_{i\in I_{\ell}}(\cdot)$ the average over observations in $I_{\ell}$. Denote by $\mathbb{E}_n(\cdot)=n^{-1}\sum_{i=1}^n(\cdot)$ the average over all observations in the sample.

We define the foldwise target as $\hat{\theta}_{\ell}=\mathbb{E}_{\ell} [\hat{\nu}_{\ell}(W)+\hat{\alpha}_{\ell}(W)\{Y-\hat{\delta}_{\ell}(W)\}+\hat{\eta}_{\ell}(W)\{\hat{\delta}_{\ell}(W)-\hat{\nu}_{\ell}(W)\}]$. We define the foldwise oracle as $\bar{\theta}_{\ell}=\mathbb{E}_{\ell} [\nu_0(W)+\alpha_0(W)\{Y-\delta_0(W)\}+\eta_0(W)\{\delta_0(W)-\nu_0(W)\}]$. We define the overall target as $
    \hat{\theta}=\frac{1}{L}\sum_{\ell=1}^L \hat{\theta}_{\ell}$. We define the overall oracle as 
$
    \bar{\theta}=\frac{1}{L}\sum_{\ell=1}^L \bar{\theta}_{\ell}.
$
Finally, let $(\bar{\alpha},\bar{\eta},\bar{\alpha}',\bar{\eta}',\bar{\sigma}_1,\bar{\sigma}_2)=(\bar{h}_3,\bar{h}_4,\bar{h}_3',\bar{h}_4',\bar{\sigma}_y,\bar{\sigma}_2)$.

\begin{lemma}[Taylor expansion]\label{lemma:Taylor}
Let $s=\hat{\nu}_{\ell}-\nu_0$, $t=\hat{\delta}_{\ell}-\delta_0$, $ u= \hat{\alpha}-\alpha_0$, and $v=\hat{\eta}-\eta_0$. Then $n_{\ell}^{1/2}(\hat{\theta}_{\ell}-\bar{\theta}_{\ell})=\sum_{j=1}^{7} \Delta_{j{\ell}}$ where the first derivative terms are $\Delta_{1{\ell}}=n_{\ell}^{1/2}\mathbb{E}_{\ell}[s(W)\{1-\eta_0(W)\}]$, $ \Delta_{2{\ell}}=n_{\ell}^{1/2}\mathbb{E}_{\ell}[t(W)\{\eta_0(W)-\alpha_0(W)\}]$, $\Delta_{3{\ell}}=n_{\ell}^{1/2}\mathbb{E}_{\ell}[u(W)\{Y-\delta_0(W)\}]$, $\Delta_{4{\ell}}=n_{\ell}^{1/2}\mathbb{E}_{\ell}[v(W)\{\delta_0(W)-\nu_0(W)\}]$, 
and the second derivative terms are $ \Delta_{5{\ell}}=\frac{n_{\ell}^{1/2}}{2}\mathbb{E}_{\ell} \{-s(W)v(W)\}$, $ \Delta_{6{\ell}}=\frac{n_{\ell}^{1/2}}{2}\mathbb{E}_{\ell} \{-t(W)u(W)\}$, $\Delta_{7{\ell}}=\frac{n_{\ell}^{1/2}}{2}\mathbb{E}_{\ell} \{t(W)v(W)\}$.
\end{lemma}

\begin{proof}
An exact Taylor expansion gives $\psi(w,\theta_0,\hat{\nu}_{\ell},\hat{\delta}_{\ell},\hat{\alpha}_{\ell},\hat{\eta}_{\ell})-\psi_0(w)$ equal to 
\begin{align*}
&\{\partial_{\nu} \psi_0(w)\}(s)
+\{\partial_{\delta} \psi_0(w)\}(t)
+\{\partial_{\alpha} \psi_0(w)\}(u)
+\{\partial_{\eta} \psi_0(w)\}(v) \\
&+\frac{1}{2}\{\partial^2_{\nu,\delta} \psi_0(w)\}(s,t)
+\frac{1}{2}\{\partial^2_{\nu,\alpha} \psi_0(w)\}(s,u)
+\frac{1}{2}\{\partial^2_{\nu,\eta} \psi_0(w)\}(s,v)\\
&+\frac{1}{2}\{\partial^2_{\delta,\alpha} \psi_0(w)\}(t,u)
+\frac{1}{2}\{\partial^2_{\delta,\eta} \psi_0(w)\}(t,v)
+\frac{1}{2}\{\partial^2_{\alpha,\eta} \psi_0(w)\}(u,v).
\end{align*}
Averaging over observations in $I_{\ell}$, $\hat{\theta}_{\ell}-\bar{\theta}_{\ell}
    =\mathbb{E}_{\ell}\{\psi(W,\theta_0,\hat{\nu}_{\ell},\hat{\delta}_{\ell},\hat{\alpha}_{\ell},\hat{\eta}_{\ell})\}-\mathbb{E}_{\ell}\{\psi_0(W)\}$. Finally appeal to Lemma~\ref{lemma:deriv}.
\end{proof}

\begin{lemma}[Residuals]\label{lemma:resid}
Suppose the conditions of Theorem~\ref{theorem:gaussian} hold. Then with probability $1-\epsilon/L$, the first derivative terms have the bounds $|\Delta_{1\ell}|\leq t_1=\left(\frac{7L}{\epsilon}\right)^{1/2}(1+\bar{\eta})\{\mathcal{R}(\hat{\nu}_{\ell})\}^{1/2}$, $|\Delta_{2\ell}|\leq t_2=\left(\frac{7L}{\epsilon}\right)^{1/2}(\bar{\alpha}+\bar{\eta})\{\mathcal{R}(\hat{\delta}_{\ell})\}^{1/2}$, $|\Delta_{3\ell}|\leq t_3=\left(\frac{7L}{\epsilon}\right)^{1/2}\bar{\sigma}_1\{\mathcal{R}(\hat{\alpha}_{\ell})\}^{1/2}$, $|\Delta_{4\ell}|\leq t_4=\left(\frac{7L}{\epsilon}\right)^{1/2}\bar{\sigma}_2\{\mathcal{R}(\hat{\eta}_{\ell})\}^{1/2} $  while the second derivative terms have the bounds $ |\Delta_{5\ell}|\leq t_5= \frac{7L^{1/2}}{2\epsilon}\{n\mathcal{R}(\hat{\nu}_{\ell})\mathcal{R}(\hat{\eta}_{\ell})\}^{1/2}$, $ |\Delta_{6\ell}|\leq t_6= \frac{7L^{1/2}}{2\epsilon}\{n\mathcal{R}(\hat{\delta}_{\ell})\mathcal{R}(\hat{\alpha}_{\ell})\}^{1/2}$, $|\Delta_{7\ell}|\leq t_7= \frac{7L^{1/2}}{2\epsilon}\{n\mathcal{R}(\hat{\delta}_{\ell})\mathcal{R}(\hat{\eta}_{\ell})\}^{1/2}$.
\end{lemma}

\begin{proof}
For simplicity, we focus on one first derivative term and one second derivative term; the rest are similar.
\begin{enumerate}
    \item Markov inequality implies $ \mathbb{P}(|\Delta_{1\ell}|>t_1)\leq \frac{\mathbb{E}(\Delta^2_{1\ell})}{t_1^2}$ and $ \mathbb{P}(|\Delta_{5\ell}|>t_5)\leq \frac{\mathbb{E}(|\Delta_{5\ell}|)}{t_5}$.
    \item The law of iterated expectations implies $ \mathbb{E}(\Delta^2_{1\ell})=\mathbb{E}\{\mathbb{E}(\Delta^2_{1\ell}\mid I^c_{\ell})\}$ and $  \mathbb{E}(|\Delta_{5\ell}|)=\mathbb{E}\{\mathbb{E}(|\Delta_{5\ell}|\mid I^c_{\ell})\}$.
   
    \item 
    Conditional on $I_{\ell}^c$, $(s,t,u,v)$ are nonrandom. Moreover, observations within fold $I_{\ell}$ are independent and identically distributed. Hence by Assumption~\ref{assumption:orthogonal}, $\mathbb{E}(\Delta^2_{1\ell}\mid I^c_{\ell})$ equals
    \begin{align*}
        &\mathbb{E} \left([n_{\ell}^{1/2}\mathbb{E}_{\ell}\{s(W)-s(W)\eta_0(W)\}\}]^2 \mid I^c_{\ell}\right) \\
        &=\mathbb{E} \left[ \frac{n_{\ell}}{n^2_{\ell}} \sum_{i,j\in I_{\ell}} \{s(W_i)-s(W_i)\eta_0(W_i)\}\{s(W_j)-s(W_j)\eta_0(W_j)\} \mid I^c_{\ell}\right] \\
        &= \frac{n_{\ell}}{n^2_{\ell}} \sum_{i,j\in I_{\ell}}\mathbb{E} \left[ \{s(W_i)-s(W_i)\eta_0(W_i)\}\{s(W_j)-s(W_j)\eta_0(W_j)\} \mid I^c_{\ell}\right] \\
        &= \frac{n_{\ell}}{n^2_{\ell}} \sum_{i\in I_{\ell}}\mathbb{E} \left[ \{s(W_i)-s(W_i)\eta_0(W_i)\}^2 \mid I^c_{\ell}\right] 
        =\mathbb{E}[s(W)^2\{1-\eta_0(W)\}^2\mid I^c_{\ell}] 
        \leq (1+\bar{\eta})^2 \mathcal{R}(\hat{\nu}_{\ell}).
    \end{align*}
    By Cauchy Schwarz, $ \mathbb{E}(|\Delta_{5\ell}|\mid I^c_{\ell})
        =\frac{n_{\ell}^{1/2}}{2}  \mathbb{E}\{|-s(W)v(W)|\mid I^c_{\ell}\} $ is bounded by
    \begin{align*}
   \frac{n_{\ell}^{1/2}}{2} [\mathbb{E}\{s(W)^2\mid I^c_{\ell}\}]^{1/2} [\mathbb{E}\{v(W)^2\mid I^c_{\ell}\}]^{1/2} 
   =\frac{n_{\ell}^{1/2}}{2}  \{\mathcal{R}(\hat{\nu}_{\ell})\}^{1/2}\{\mathcal{R}(\hat{\eta}_{\ell})\}^{1/2}.
    \end{align*}
    
    \item Collecting results gives $ \mathbb{P}(|\Delta_{1\ell}|>t_1)\leq \frac{(1+\bar{\eta})^2 \mathcal{R}(\hat{\nu}_{\ell})}{t_1^2}=\frac{\epsilon}{7L}$ and $ \mathbb{P}(|\Delta_{5\ell}|>t_5)\leq \frac{n_{\ell}^{1/2} \{\mathcal{R}(\hat{\nu}_{\ell})\}^{1/2}\{\mathcal{R}(\hat{\eta}_{\ell})\}^{1/2}}{2t_5}=\frac{\epsilon}{7L}$. Therefore with probability $1-\epsilon/L$, $|\Delta_{1\ell}|\leq t_1=\left(\frac{7L}{\epsilon}\right)^{1/2}(1+\bar{\eta})\{\mathcal{R}(\hat{\nu}_{\ell})\}^{1/2}$ and  $|\Delta_{5\ell}|\leq t_5=\frac{7L}{2\epsilon}n_{\ell}^{1/2}\{\mathcal{R}(\hat{\nu}_{\ell})\}^{1/2}\{\mathcal{R}(\hat{\eta}_{\ell})\}^{1/2}$ and similarly for all $j\in\{1,...,7\}$.  \qedhere 
\end{enumerate}
\end{proof}

\begin{lemma}[Residuals: Alternative path]\label{lemma:resid_alt}
Suppose the conditions of Theorem~\ref{theorem:gaussian} hold. 
Then with probability $1-\epsilon/L$, the first derivative terms have the bounds of Lemma~\ref{lemma:resid}, while the second derivative terms have the bounds 
\begin{align*}
    |\Delta_5| &\leq t_5= \left(\frac{7L}{4\epsilon}\right)^{1/2}(\bar{\eta}+\bar{\eta}')\{\mathcal{R}(\hat{\nu}_{\ell})\}^{1/2} +(4L)^{-1/2} [\{n\mathcal{P}(\hat{\nu}_{\ell})\mathcal{R}(\hat{\eta}_{\ell})\}^{1/2} \wedge \{n\mathcal{R}(\hat{\nu}_{\ell})\mathcal{P}(\hat{\eta}_{\ell})\}^{1/2}],\\
    |\Delta_6| &\leq t_6= \left(\frac{7L}{4\epsilon}\right)^{1/2}(\bar{\alpha}+\bar{\alpha}')\{\mathcal{R}(\hat{\delta}_{\ell})\}^{1/2} +(4L)^{-1/2} [\{n\mathcal{P}(\hat{\delta}_{\ell})\mathcal{R}(\hat{\alpha}_{\ell})\}^{1/2} \wedge \{n\mathcal{R}(\hat{\delta}_{\ell})\mathcal{P}(\hat{\alpha}_{\ell})\}^{1/2}], \\
    |\Delta_7| &\leq t_7= \left(\frac{7L}{4\epsilon}\right)^{1/2}(\bar{\eta}+\bar{\eta}')\{\mathcal{R}(\hat{\delta}_{\ell})\}^{1/2} +(4L)^{-1/2} [\{n\mathcal{P}(\hat{\delta}_{\ell})\mathcal{R}(\hat{\eta}_{\ell})\}^{1/2} \wedge \{n\mathcal{R}(\hat{\delta}_{\ell})\mathcal{P}(\hat{\eta}_{\ell})\}^{1/2}].
\end{align*}
\end{lemma}

\begin{proof}
See Lemma~\ref{lemma:resid} for $(t_1,t_2,t_3,t_4)$. We focus on $t_5$; $(t_6,t_7)$ are similar.
\begin{enumerate}
\item 
    Write $2\Delta_{5\ell}=n_{\ell}^{1/2}\mathbb{E}_{\ell} \{-s(W)v(W)\}=\Delta_{5'\ell}+\Delta_{5''\ell}$ where $\Delta_{5'\ell}=n_{\ell}^{1/2}\mathbb{E}_{\ell} [-s(W)v(W)+\mathbb{E}\{s(W)v(W)\mid I^c_{\ell}\}]$ and $  \Delta_{5''\ell}= n_{\ell}^{1/2} \mathbb{E}\{-s(W)v(W)\mid I^c_{\ell}\}$.

\item Consider the former term. By Markov inequality, $\mathbb{P}(|\Delta_{5'\ell}|>t)\leq \frac{\mathbb{E}(\Delta^2_{5'\ell})}{t^2}$. By the law of iterated expectations,
$      \mathbb{E}(\Delta^2_{5'\ell})=\mathbb{E}\{\mathbb{E}(\Delta^2_{5'\ell}\mid I^c_{\ell})\}.
$
We bound the conditional moment. Conditional on $I_{\ell}^c$, $(s,t,u,v)$ are nonrandom. Moreover, observations within fold $I_{\ell}$ are independent and identically distributed. Since $\Delta_{5'\ell}$ has conditional mean zero by construction, $\mathbb{E}(\Delta^2_{5'\ell}\mid I^c_{\ell})$ equals
    \begin{align*}
  & \mathbb{E}\left\{\left(n_{\ell}^{1/2}\mathbb{E}_{\ell} [-s(W)v(W)+\mathbb{E}\{s(W)v(W)\mid I^c_{\ell}\}]\right)^2 \mid I^c_{\ell}\right\} \\
        &= \mathbb{E}\left( \frac{n_{\ell}}{n^2_{\ell}}\sum_{i,j \in I_{\ell}} [-s(W_i)v(W_i)+\mathbb{E}\{s(W_i)v(W_i)\mid I^c_{\ell}\}][-s(W_j)v(W_j)+\mathbb{E}\{s(W_j)v(W_j)\mid I^c_{\ell}\}] \mid I^c_{\ell}\right) \\
        &= \frac{n_{\ell}}{n^2_{\ell}}\sum_{i,j \in I_{\ell}} \mathbb{E}\left(  [-s(W_i)v(W_i)+\mathbb{E}\{s(W_i)v(W_i)\mid I^c_{\ell}\}][-s(W_j)v(W_j)+\mathbb{E}\{s(W_j)v(W_j)\mid I^c_{\ell}\}] \mid I^c_{\ell}\right) \\
        &= \frac{n_{\ell}}{n^2_{\ell}}\sum_{i \in I_{\ell}} \mathbb{E}\left(  [-s(W_i)v(W_i)+\mathbb{E}\{s(W_i)v(W_i)\mid I^c_{\ell}\}]^2 \mid I^c_{\ell}\right) \\
        &=\mathbb{E}([s(W)v(W)-\mathbb{E}\{s(W)v(W)\mid I^c_{\ell}\} ]^2\mid I^c_{\ell}) 
        \leq \mathbb{E} \{ s(W)^2v(W)^2\mid I^c_{\ell}\}
        \leq (\bar{\eta}+\bar{\eta}')^2\mathcal{R}(\hat{\nu}_{\ell}).
    \end{align*}
    Collecting results gives $\mathbb{P}(|\Delta_{5'\ell}|>t)\leq \frac{(\bar{\eta}+\bar{\eta}')^2\mathcal{R}(\hat{\nu}_{\ell})}{t^2}=\frac{\epsilon}{7L}$.
    Therefore with probability $1-3\epsilon/(7L)$, $|\Delta_{5'\ell}|\leq t=\left(\frac{7L}{\epsilon}\right)^{1/2}(\bar{\eta}+\bar{\eta}')\{\mathcal{R}(\hat{\nu}_{\ell})\}^{1/2}$ and similarly for $j\in\{5,6,7\}$.

\item Consider the latter term. Specializing to nonparametric confounding bridges, if $\mathbb{E}\{h_0(B)|C\}=\mathbb{E}\{g_0(A)|C\}$ and $\mathbb{E}\{g_0(A)|C'\}=\mathbb{E}(Y|C')$, then the arguments of $(\nu,\delta,\alpha,\eta)$, and hence $(s,t,u,v)$, are $(B,A,C',C)$, respectively. Therefore $\mathbb{E}\{-s(W)v(W)\mid I^c_{\ell}\}=\mathbb{E}[  \mathbb{E}\{-s(B)\mid C, I^c_{\ell}\}v(C) \mid I^c_{\ell}] $ is bounded by
\begin{align*}
 \{\mathbb{E}( [\mathbb{E}\{s(B)\mid C, I^c_{\ell}\}]^2 \mid I^c_{\ell})\}^{1/2}[\mathbb{E}\{v(C)^2\mid I^c_{\ell}\}]^{1/2} 
   =\{\mathcal{P}(\hat{\nu}_{\ell})\}^{1/2}\{\mathcal{R}(\hat{\eta}_{\ell})\}^{1/2}.
\end{align*}
Hence
$
\Delta_{5''\ell} \leq n_{\ell}^{1/2}\{\mathcal{P}(\hat{\nu}_{\ell})\}^{1/2}\{\mathcal{R}(\hat{\eta}_{\ell})\}^{1/2}=L^{-1/2}\{n\mathcal{P}(\hat{\nu}_{\ell})\mathcal{R}(\hat{\eta}_{\ell})\}^{1/2}.
$
Likewise
$
    \Delta_{5''\ell} 
    \leq n_{\ell}^{1/2}\{\mathcal{R}(\hat{\nu}_{\ell})\}^{1/2}\{\mathcal{P}(\hat{\eta}_{\ell})\}^{1/2}=L^{-1/2}\{n\mathcal{R}(\hat{\nu}_{\ell})\mathcal{P}(\hat{\eta}_{\ell})\}^{1/2}.
$

    \item Combining terms yields the desired result. \qedhere
\end{enumerate}
\end{proof}

\begin{lemma}[Oracle approximation]\label{lemma:Delta}
Suppose the conditions of Theorem~\ref{theorem:gaussian} hold. Then with probability $1-\epsilon$, $\frac{n^{1/2}}{\sigma}|\hat{\theta}-\bar{\theta}|\leq \Delta$ where $\Delta$ equals
\begin{align*}
 &\frac{7L}{2\epsilon  \sigma}
    \bigg[
    (1+\bar{\eta})\{\mathcal{R}(\hat{\nu}_{\ell})\}^{1/2}
    +(\bar{\alpha}+\bar{\eta})\{\mathcal{R}(\hat{\delta}_{\ell})\}^{1/2}
    +\bar{\sigma}_1\{\mathcal{R}(\hat{\alpha}_{\ell})\}^{1/2}
    +\bar{\sigma}_2\{\mathcal{R}(\hat{\eta}_{\ell})\}^{1/2} \\
    &\quad +\{n\mathcal{R}(\hat{\nu}_{\ell})\mathcal{R}(\hat{\eta}_{\ell})\}^{1/2}
    +\{n\mathcal{R}(\hat{\delta}_{\ell})\mathcal{R}(\hat{\alpha}_{\ell})\}^{1/2}
    +\{n\mathcal{R}(\hat{\delta}_{\ell})\mathcal{R}(\hat{\eta}_{\ell})\}^{1/2}
    \bigg].
\end{align*}
\end{lemma}

\begin{proof}
We proceed in steps.
\begin{enumerate}
    \item By Lemma~\ref{lemma:Taylor}, write
    $
    n^{1/2}(\hat{\theta}-\bar{\theta})
    =\frac{n^{1/2}}{n_{\ell}^{1/2}}\frac{1}{L} \sum_{\ell=1}^L n_{\ell}^{1/2} (\hat{\theta}_{\ell}-\bar{\theta}_{\ell}) 
    = L^{1/2}\frac{1}{L} \sum_{\ell=1}^L \sum_{j=1}^3 \Delta_{j\ell}.
$

    \item 
    Define the events
$
\mathcal{E}_{\ell}=\{\text{for all } j \in \{1,...,7\},\; |\Delta_{j\ell}|\leq t_j\}$, $ \mathcal{E}=\cap_{\ell=1}^L \mathcal{E}_{\ell}$, and $\mathcal{E}^c=\cup_{\ell=1}^L \mathcal{E}^c_{\ell}.
$
Hence by the union bound and Lemma~\ref{lemma:resid},
$
\mathbb{P}(\mathcal{E}^c)\leq \sum_{\ell=1}^L \mathbb{P}(\mathcal{E}^c_{\ell}) \leq L\frac{\epsilon}{L}=\epsilon.
$
    \item     Therefore with probability $1-\epsilon$,
\begin{align*}
 n^{1/2}|\hat{\theta}-\bar{\theta}|&\leq L^{1/2}\frac{1}{L} \sum_{\ell=1}^L \sum_{j=1}^7 |\Delta_{jk}| \leq L^{1/2}\frac{1}{L} \sum_{\ell=1}^L \sum_{j=1}^7 t_j 
 =L^{1/2}\sum_{j=1}^7 t_j.
\end{align*}
Finally, we simplify $(t_j)$. Note that $7^{1/2}<7/2$ and that for $\epsilon\leq 1$, $\epsilon^{-1/2}\leq \epsilon^{-1}$. \qedhere
\end{enumerate}
\end{proof}

\begin{lemma}[Oracle approximation: Alternative path]\label{lemma:Delta_alt}
Suppose the conditions of Theorem~\ref{theorem:gaussian} hold. Then with probability $1-\epsilon$, $\frac{n^{1/2}}{\sigma}|\hat{\theta}-\bar{\theta}|\leq \Delta$ where $\Delta$ equals
\begin{align*}
   &\frac{4 L}{\epsilon^{1/2} \sigma}
\bigg[
(1+\bar{\eta}+\bar{\eta}')\{\mathcal{R}(\hat{\nu}_{\ell})\}^{1/2}
+(\bar{\alpha}+\bar{\alpha}'+\bar{\eta}+\bar{\eta}')\{\mathcal{R}(\hat{\delta}_{\ell})\}^{1/2}
 +\bar{\sigma}_1\{\mathcal{R}(\hat{\alpha}_{\ell})\}^{1/2}
+\bar{\sigma}_2\{\mathcal{R}(\hat{\eta}_{\ell})\}^{1/2}
\bigg]\\
 &+\frac{1}{2\sigma}\bigg[
\{n\mathcal{P}(\hat{\nu}_{\ell})\mathcal{R}(\hat{\eta}_{\ell})\}^{1/2} \wedge \{n\mathcal{R}(\hat{\nu}_{\ell})\mathcal{P}(\hat{\eta}_{\ell})\}^{1/2}
+
\{n\mathcal{P}(\hat{\delta}_{\ell})\mathcal{R}(\hat{\alpha}_{\ell})\}^{1/2} \wedge \{n\mathcal{R}(\hat{\delta}_{\ell})\mathcal{P}(\hat{\alpha}_{\ell})\}^{1/2} \\
&\quad\quad+
\{n\mathcal{P}(\hat{\delta}_{\ell})\mathcal{R}(\hat{\eta}_{\ell})\}^{1/2} \wedge \{n\mathcal{R}(\hat{\delta}_{\ell})\mathcal{P}(\hat{\eta}_{\ell})\}^{1/2}
\bigg].
\end{align*}
\end{lemma}

\begin{proof}
As in Lemma~\ref{lemma:Delta}, Lemmas~\ref{lemma:Taylor} and~\ref{lemma:resid_alt} imply that with probability $1-\epsilon$, $
 n^{1/2}|\hat{\theta}-\bar{\theta}|\leq L^{1/2}\sum_{j=1}^3 t_j.
$
Note $7^{1/2}+(7/4)^{1/2}<4$ when combining terms.
\end{proof}

\begin{proof}[Proof of Theorem~\ref{theorem:gaussian}]
The steps of \citet[Theorem 1]{chernozhukov2021simple} generalize to our setting, using our new $\Delta$ defined in Lemmas~\ref{lemma:Delta} and~\ref{lemma:Delta_alt}.
\end{proof}

\subsection{Variance estimation}

Recall that $\mathbb{E}_{\ell}(\cdot)=n_{\ell}^{-1}\sum_{i\in I_{\ell}}(\cdot)$ means the average over observations in $I_{\ell}$ and $\mathbb{E}_n(\cdot)=n^{-1}\sum_{i=1}^n(\cdot)$ means the average over all observations in the sample. For $i \in  I_{\ell}$, define $ \psi_0(W_i)=\psi(W_i,\theta_0,\nu_0,\delta_0,\alpha_0,\eta_0)$ and $
    \hat{\psi}(W_i)=\psi(W_i,\hat{\theta},\hat{\nu}_{\ell},\hat{\delta}_{\ell},\hat{\alpha}_{\ell},\hat{\eta}_{\ell})$.

    \begin{lemma}[Foldwise second moment]\label{lemma:foldwise2}
$
\mathbb{E}_{\ell}[\{\hat{\psi}(W)-\psi_0(W)\}^2]\leq 7\left\{(\hat{\theta}-\theta_0)^2+\sum_{j=8}^{13} \Delta_{j\ell}\right\},
$
where
$    \Delta_{8\ell} =\mathbb{E}_{\ell}\{s(W_i)^2\}$, $
    \Delta_{9\ell}=\mathbb{E}_{\ell}[u(W_i)^2\{Y-\delta_0(W_i)\}^2]$, $
    \Delta_{10\ell}=\mathbb{E}_{\ell}[v(W_i)^2\{\delta_0(W_i)-\nu_0(W_i)\}^2]$, $
     \Delta_{11\ell} =\mathbb{E}_{\ell}\{\hat{\alpha}_{\ell}(W_i)^2t(W_i)^2\}$, $
    \Delta_{12\ell}=\mathbb{E}_{\ell}\{\hat{\eta}_{\ell}(W_i)^2t(W_i)^2\}$, $
    \Delta_{13\ell}=\mathbb{E}_{\ell}\{\hat{\eta}_{\ell}(W_i)^2u(W_i)^2\}$.

\end{lemma}

\begin{proof}
Write $\hat{\psi}(W_i)-\psi_0(W_i)$ equal to
\begin{align*}
  &\hat{\nu}_{\ell}(W_i)+\hat{\alpha}_{\ell}(W_i)\{Y_i-\hat{\delta}_{\ell}(W_i)\}+\hat{\eta}_{\ell}(W_i)\{\hat{\delta}_{\ell}(W_i)-\hat{\nu}_{\ell}(W_i)\}-\hat{\theta}\\
    &\quad -\left[\nu_0(W)+\alpha_0(W_i)\{Y_i-\delta_0(W_i)\}+\eta_0(W_i)\{\delta_0(W_i)-\nu_0(W_i)\}-\theta_0\right] \\
    &\quad\pm \hat{\alpha}_{\ell}\{Y-\delta_0(W_i)\}\pm \hat{\eta}_{\ell}\{\delta_0(W_i)-\nu_0(W_i)\}\\
    &=(\theta_0-\hat{\theta})+s(W_i)+u(W_i)\{Y-\delta_0(W_i)\}+v(W_i)\{\delta_0(W_i)-\nu_0(W_i)\}\\
    &\quad -\hat{\alpha}_{\ell}(W_i)t(W_i)+\hat{\eta}_{\ell}(W_i)t(W_i)-\hat{\eta}_{\ell}(W_i)u(W_i).
\end{align*}
Apply parallelogram law across the seven terms, and take $\mathbb{E}_{\ell}(\cdot)$ of both sides.
\end{proof}

\begin{lemma}[Residuals]\label{lemma:resid2}
Suppose the conditions of Theorem~\ref{theorem:variance} hold. Then with probability $1-\epsilon'/(2L)$,
$
    |\Delta_{8\ell}|\leq t_8=\frac{12L}{\epsilon'}\mathcal{R}(\hat{\nu}_{\ell})$, $
    |\Delta_{9\ell}|\leq t_9=\frac{12L}{\epsilon'}\bar{\sigma}_1^2 \mathcal{R}(\hat{\alpha}_{\ell})$, $
    |\Delta_{10\ell}|\leq t_{10}=\frac{12L}{\epsilon'}\bar{\sigma}_2^2 \mathcal{R}(\hat{\eta}_{\ell})$, $
     |\Delta_{11\ell}|\leq t_{11}=\frac{12L}{\epsilon'}(\bar{\alpha}')^2 \mathcal{R}(\hat{\delta}_{\ell})$, $
    |\Delta_{12\ell}|\leq t_{12}=\frac{12L}{\epsilon'}(\bar{\eta}')^2 \mathcal{R}(\hat{\delta}_{\ell})$, $
    |\Delta_{13\ell}|\leq t_{13}=\frac{12L}{\epsilon'}(\bar{\eta}')^2 \mathcal{R}(\hat{\alpha}_{\ell}).$
\end{lemma}

\begin{proof}
The steps are analogous to Lemma~\ref{lemma:resid}.
\end{proof}

\begin{lemma}[Oracle approximation]\label{lemma:Delta2}
Suppose the conditions of Lemma~\ref{lemma:resid2} hold. Then with probability $1-\epsilon'/2$, $\mathbb{E}_n[\{\hat{\psi}(W)-\psi_0(W)\}^2]\leq \Delta'$ where 
\begin{align*}
    \Delta'=7(\hat{\theta}-\theta_0)^2+\frac{84 L}{\epsilon'}\left[
    \mathcal{R}(\hat{\nu}_{\ell})
    +\{(\bar{\alpha}')^2+(\bar{\eta}')^2\}\mathcal{R}(\hat{\delta}_{\ell})
    +\{(\bar{\eta}')^2+\bar{\sigma}_1^2\}\mathcal{R}(\hat{\alpha}_{\ell})
    +\bar{\sigma}_2^2\mathcal{R}(\hat{\eta}_{\ell})
    \right].
\end{align*}
\end{lemma}

\begin{proof}
The steps are analogous to Lemma~\ref{lemma:Delta}, appealing to Lemmas~\ref{lemma:foldwise2} and~\ref{lemma:resid2}.
\end{proof}

\begin{lemma}[Markov inequality]\label{lemma:other_half}
If $\chi<\infty$, then with probability $1-\epsilon'/2$
$
|\mathbb{E}_n\{\psi_0(W)^2\}-\sigma^2|\leq \Delta''=\left(\frac{2}{\epsilon'}\right)^{1/2}\frac{\chi^2}{n^{1/2}}.
$
\end{lemma}

\begin{proof}
The steps of \citet[Proposition S11]{chernozhukov2021simple} generalize to our setting, using our new moments.
\end{proof}

\begin{proof}[Proof of Theorem~\ref{theorem:variance}]
The steps of \citet[Theorem 3]{chernozhukov2021simple} generalize to our setting, using our new $(\Delta',\Delta'')$ defined in Lemmas~\ref{lemma:Delta2} and~\ref{lemma:other_half}, respectively.
\end{proof}

\section{Extension to causal functions}\label{sec:local}

We revisit causal functions to clarify how the main inference result encompasses them. We pointwise approximate the causal function $\theta_0(v)$, e.g. the heterogeneous long term effect for the subpopulation with $V=v$, by the local functional $\theta_{\lambda}(v)$, e.g. the heterogeneous long term effect for the subpopulation with $V$ close to $v$. We emphasize which quantities depend on the bandwidth by indexing by $\lambda$, e.g. $\sigma_{\lambda}^2$ is the variance of the approximating Gaussian when $\lambda>0$. Moreover, we write $h_{j,\lambda}=\ell_{\lambda}h_j$ where $h_{j,\lambda}$ is the nuisance of the local functional and $h_j$ is the nuisance of the corresponding global functional. For example, $h_{j,\lambda}$ is the nuisance for the (approximate) heterogeneous long term effect, while $h_j$ is the nuisance for the average long term effect.

\subsection{Main result}

\begin{theorem}[Key quantities for causal functions in Algorithm~\ref{algo:dml}]\label{theorem:local}
    Suppose that Assumption~\ref{assumption:orthogonal} holds. Suppose that bounded balancing weight, residual variance, density, derivative, and kernel conditions hold, which are defined below. Then for the local functional $\theta_{\lambda}$, suppressing the index $v$,
$
\kappa_{\lambda}/\sigma_{\lambda} \lesssim \lambda^{-1/6}$, $\sigma_{\lambda} \asymp \lambda^{-1/2}$, $\kappa_{\lambda}\lesssim \lambda^{-2/3}$, $\chi_{\lambda}\lesssim \lambda^{-3/4},
$
 and 
$
\bar{\sigma}_{1,\lambda}\lesssim \lambda^{-1}\bar{\sigma}_{1}$, $\bar{\sigma}_{2,\lambda}\lesssim \lambda^{-1}\bar{\sigma}_{2} $, $\Delta_{\lambda} \lesssim n^{1/2} \lambda^{s+1/2},
$
where $s$ is the order of differentiability defined below. Moreover,
$
\|\hat{h}_{1,\lambda}-h_{1,\lambda}\|_2\lesssim \lambda^{-1}\|\hat{h}_1-h_1\|_2$, $\|T_1(\hat{h}_{1,\lambda}-h_{1,\lambda})\|_2\lesssim \lambda^{-1} \|T_1(\hat{h}_1-h_1)\|_2$, $\|\hat{h}_{2,\lambda}-h_{2,\lambda}\|_2\lesssim \lambda^{-1}\|\hat{h}_2-h_2\|_2$, $\|T_2(\hat{h}_{2,\lambda}-h_{2,\lambda})\|_2\lesssim \lambda^{-1} \|T_2(\hat{h}_2-h_2)\|_2.
$
\end{theorem}

\begin{corollary}[Multiple robustness to ill posedness: Causal functions]\label{cor:ci_local}
    Suppose Assumptions~\ref{assumption:orthogonal} and~\ref{assumption:regular} hold as well as the regularity conditions of Theorem~\ref{theorem:local}. 
Finally assume the following are $o_p(1)$: the bandwidth rates $
n^{-1/2}\lambda^{-3/2}$ and $n^{1/2}\lambda^{s+1/2}
$;
the individual rates {\small $(\lambda^{-1}+\lambda^{-1/2}\bar{h}_4+\lambda^{-1/2}\bar{h}_4')\|\hat{h}_1-h_1\|_2$,
   $(\lambda^{-1/2}\bar{h}_3+\lambda^{-1}\bar{h}_3'+\lambda^{-1/2}\bar{h}_4+\lambda^{-1}\bar{h}_4')\|\hat{h}_2-h_2\|_2$,
    $(\bar{h}_4'+\lambda^{-1}\bar{\sigma}_y\|\hat{h}_3-h_3\|_2$,
 $\lambda^{-1}\bar{\sigma}_2\|\hat{h}_4-h_4\|_2$}; and the product rates
\begin{enumerate}
    \item {\small $\lambda^{-1/2}n^{1/2}\{\|\hat{h}_1-h_1\|_2\|\hat{h}_4-h_4\|_2 \wedge \|T_1(\hat{h}_1-h_1)\|_2\|\hat{h}_4-h_4\|_2 \wedge \|\hat{h}_1-h_1\|_2\|T_4(\hat{h}_4-h_4)\|_2\} $};
     \item {\small $\lambda^{-1/2}n^{1/2}\{\|\hat{h}_2-h_2\|_2\|\hat{h}_3-h_3\|_2 \wedge\|T_2(\hat{h}_2-h_2)\|_2\|\hat{h}_3-h_3\|_2 \wedge \|\hat{h}_2-h_2\|_2\|T_3(\hat{h}_3-h_3)\|_2\} $};
      \item {\small $\lambda^{-1/2}n^{1/2}\{\|\hat{h}_2-h_2\|_2\|\hat{h}_4-h_4\|_2 \wedge \|T_2(\hat{h}_2-h_2)\|_2\|\hat{h}_4-h_4\|_2 \wedge \|\hat{h}_2-h_2\|_2\|T_4(\hat{h}_4-h_4)\|_2\} $}.
\end{enumerate}
Then
$
\hat{\theta}_{\lambda}\overset{p}{\rightarrow}\theta_0$, $ \frac{\sqrt{n}}{\sigma_{\lambda}}(\hat{\theta}_{\lambda}-\theta_0)\overset{d}{\rightarrow}\mathcal{N}(0,1)$, and $
\mathbb{P} \{\theta_0 \in  (\hat{\theta}_{\lambda}\pm 1.96\hat{\sigma}_{\lambda} n^{-1/2} )\}\rightarrow 0.95.
$
\end{corollary}

\subsection{Extended notation}

While proving this result, we use the notation of Appendix~\ref{sec:neyman} to eliminate some subscripts. We emphasize which quantities are diverging sequences for local functionals by indexing with the bandwidth $\lambda$. We study a function of the variable $V\subset W$, i.e. $\theta_0(v)$, which we approximate with $\theta_{0,\lambda}(v)=\mathbb{E}\{\ell_{\lambda}(V)\nu_0(W)\}=\mathbb{E}\{\nu_{0,\lambda}(W)\}$. Denote the localized moment function
$$
\psi_{\lambda}(W,\theta_{\lambda},\nu_{\lambda},\delta_{\lambda},\alpha,\eta)= \nu_{\lambda}(W)+\alpha(W)\{Y_{\lambda}-\delta_{\lambda}(W)\}+\eta(W)\{\delta_{\lambda}(W)-\nu_{\lambda}(W)\}-\theta_{\lambda}
$$
where $\nu_{\lambda}(W)=\ell_{\lambda}(V) \nu(W)$, $\delta_{\lambda}(W)=\ell_{\lambda}(V) \delta(W)$, and $  Y_{\lambda}= \ell_{\lambda}(V)Y$. To use Theorem~\ref{theorem:gaussian}, we reduce the rates for the localized nuisances $(\hat{\nu}_{\lambda},\hat{\delta}_{\lambda})$ to the rates for the global nuisances $(\hat{\nu},\hat{\delta})$. Intuitively, we expect the former to be slower than the latter.

The moments of the localized moment function are
$
0=\mathbb{E}\{\psi_{0,\lambda}(W)\}$, $\sigma_{\lambda}^2=\mathbb{E}\{\psi_{0,\lambda}(W)^2\}$, $\kappa_{\lambda}^3=\mathbb{E}\{|\psi_{0,\lambda}(W)|^3\}$, and $\chi_{\lambda}^4=\mathbb{E}\{\psi_{0,\lambda}(W)^4\}.
$
The moments $(\sigma_{\lambda},\kappa_{\lambda},\chi_{\lambda})$ are indexed by $\lambda$, so a complete analysis must also characterize how these parameters diverge as the bandwidth $\lambda$ vanishes. Doing so will verify the regularity condition on moments and also pin down the nonparametric rate of Gaussian approximation $\sigma_{\lambda} n^{-1/2}$.

Finally, the residual variances must also be updated. With localization, they become
$
\mathbb{E}[\{Y_{\lambda}-\delta_{0,\lambda}(W)\}^2 \mid W ]\leq \bar{\sigma}_{1,\lambda}^2$ and $\mathbb{E}[\{\delta_{0,\lambda}(W)-\nu_{0,\lambda}(W_1)\}^2 \mid W_1 ]\leq \bar{\sigma}_{2,\lambda}^2.
$
A complete analysis must also characterize how these parameters diverge as bandwidth $\lambda$ vanishes.

We restate the conclusions of Theorem~\ref{theorem:local} that we wish to prove in this alternative notation. Suppose that the global residual variances are finite. Suppose bounded balancing weight, residual, density, derivative, and kernel conditions hold. Then for local functionals, 
$
\kappa_{\lambda}/\sigma_{\lambda} \lesssim \lambda^{-1/6}$, $\sigma_{\lambda} \asymp \lambda^{-1/2}$, $\kappa_{\lambda}\lesssim \lambda^{-2/3}$, $\chi_{\lambda}\lesssim \lambda^{-3/4}
$
and
$
\bar{\sigma}_{1,\lambda}\lesssim \lambda^{-1}\bar{\sigma}_{1}$, $\bar{\sigma}_{2,\lambda}\lesssim \lambda^{-1}\bar{\sigma}_{2} $, $\Delta_{\lambda} \lesssim n^{1/2} \lambda^{s+1/2}
$
where $s$ is the order of differentiability. Moreover,
$
\mathcal{R}(\hat{\nu}_{\ell,\lambda})\lesssim \lambda^{-2} \mathcal{R}(\hat{\nu}_{\ell})$, $\mathcal{P}(\hat{\nu}_{\ell,\lambda})\lesssim \lambda^{-2} \mathcal{P}(\hat{\nu}_{\ell})$, $\mathcal{R}(\hat{\delta}_{\ell,\lambda})\lesssim \lambda^{-2} \mathcal{R}(\hat{\delta}_{\ell})$, $\mathcal{P}(\hat{\delta}_{\ell,\lambda})\lesssim \lambda^{-2} \mathcal{P}(\hat{\delta}_{\ell}).
$

\subsection{Oracle moments}

To lighten notation, we write the local weighting as $\ell=\ell_{\lambda}$. We also suppress the arguments of functions and define $
    U_0=\nu_0-\mathbb{E}(\nu_0)$, $
    U_1=Y-\delta_0$, $
    U_2=\delta_0-\nu_0$
so that
$
\psi_{0,\lambda}=\ell\cdot  (U_0+\alpha_0U_1+\eta_0U_2).
$
Finally, we lighten notation by defining
$
\|W\|_{\mathbb{P},q}=\{\mathbb{E}(W^q)\}^{1/q}.
$

\begin{lemma}[Oracle moments for local functionals]\label{lemma:local}
Suppose there exist 
$$(\underline \alpha, \bar{\alpha},\underline \eta, \bar{\eta}, \underline{\sigma}_0, \bar{\sigma}_0, \underline{\sigma}_1, \bar{\sigma}_1,\underline{\sigma}_2, \bar{\sigma}_2,\underline f, \bar f, \bar f', \lambda_0)$$ bounded away from zero and above such that the following conditions hold.
\begin{enumerate}
\item Control of balancing weights: $\underline{\alpha}\leq \|\alpha_0\|_{\infty}\leq \bar{\alpha}$, $
    \underline{\eta}\leq \|\eta_0\|_{\infty}\leq \bar{\eta}$.
\item Control of residual moments: for $q\in\{2,3,4\}$, $\underline{\sigma}_0\leq \|U_0|V\|_{\mathbb{P},q} \leq \bar{\sigma}_0$, $\underline{\sigma}_1\leq \|U_1|W\|_{\mathbb{P},q} \leq \bar{\sigma}_1$, $\underline{\sigma}_2\leq \|U_2|W_1\|_{\mathbb{P},q} \leq \bar{\sigma}_2.$
    \item Bounded density: the density $f_V$ obeys, for all $v'\in N_{\lambda_0}(v)=(v':|v'-v|\leq \lambda_0)$,
    $
\underline f \leq f_V(v') \leq \bar f$ and $|\partial f_V(v')| \leq \bar f'.
    $
\end{enumerate}
Then 
$
\frac{\kappa_{\lambda}}{\sigma_{\lambda}} \lesssim \lambda^{-1/6}$, $\sigma_{\lambda} \asymp \lambda^{-1/2}$, $\kappa_{\lambda}\lesssim \lambda^{-2/3} $, $\chi_{\lambda}\lesssim \lambda^{-3/4}.
$
\end{lemma}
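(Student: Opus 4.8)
The plan is to reduce every moment of $\psi_{0,h}=\ell_h(V)\{U_0+\alpha_0 U_1+\eta_0 U_2\}$ to the corresponding moment of the Nadaraya--Watson weight $\ell_h$ alone, using that the ``inner'' factor $U_0+\alpha_0 U_1+\eta_0 U_2$ has conditional $q$-th moments (given $V$, or $W$, $W_1$ as appropriate) bounded above and below by constants depending only on $(\bar\alpha,\bar\eta,\bar\sigma_0,\bar\sigma_1,\bar\sigma_2,\underline\sigma_0)$. First I would establish, uniformly over $h\le h_0$, the elementary kernel estimates $\omega=\mathbb{E}\{K((V-v)/h)\}\asymp h$ and $\mathbb{E}\{\ell_h(V)^q\}\asymp h^{1-q}$ for $q=2,3,4$. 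This is the standard change of variables $\mathbb{E}\{K((V-v)/h)^q\}=h\int K(u)^q f_V(v+hu)\,\mathrm{d}u$; the bounded-density condition $\underline f\le f_V\le\bar f$ on $N_{h_0}(v)$ together with the kernel conditions (compact support, $\int K=1$, finite $\int K^q$) pins this integral between $\underline f\int K^q$ and $\bar f\int K^q$, and dividing by $\omega^q\asymp h^q$ gives the claim (the derivative bound $\bar f'$ is needed only for the bias term $\Delta_h$ in Theorem~\ref{thm:local}, not here).

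For the upper bounds on $\sigma_h,\kappa_h,\zeta_h$, for $q\in\{2,3,4\}$ I would write $\mathbb{E}\{|\psi_{0,h}|^q\}=\mathbb{E}\big[\ell_h(V)^q\,\mathbb{E}\{|U_0+\alpha_0 U_1+\eta_0 U_2|^q\mid V\}\big]$, which is legitimate since $\ell_h$ is $\sigma(V)$-measurable. By the triangle inequality in $\mathbb{L}_q$ and Conditions~1--2, $\|U_0+\alpha_0 U_1+\eta_0 U_2\mid V\|_{\mathbb{P},q}\le\bar\sigma_0+\bar\alpha\bar\sigma_1+\bar\eta\bar\sigma_2=:\bar C_q$, so $\mathbb{E}\{|\psi_{0,h}|^q\}\le\bar C_q^q\,\mathbb{E}\{\ell_h^q\}\lesssim h^{1-q}$, which yields $\sigma_h\lesssim h^{-1/2}$, $\kappa_h\lesssim h^{-2/3}$, and $\zeta_h\lesssim h^{-3/4}$.

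The one-sided bound $\sigma_h\gtrsim h^{-1/2}$ is where the structure of the moment function enters. I would first show $\mathbb{E}\{\psi_0(W)\mid W_1\}=U_0$, i.e.\ $\mathbb{E}\{\alpha_0(Y-\delta_0)\mid W_1\}=0$ and $\mathbb{E}\{\eta_0(\delta_0-\nu_0)\mid W_1\}=0$: in the unconfounded case these follow at once from $\delta_0=\mathbb{E}[Y\mid W]$ and $\nu_0=\mathbb{E}[\delta_0\mid W_1]$ by iterated expectations, and in the proximal case from the confounding-bridge identities underlying Assumption~\ref{assumption:neyman} (the same conditional-mean calculations used in the proofs of Propositions~\ref{prop:long_term}--\ref{prop:mediation}). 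Conditional Jensen then gives $\mathbb{E}\{\psi_0^2\mid W_1\}\ge(\mathbb{E}\{\psi_0\mid W_1\})^2=U_0^2$, so, using $\ell_h\ge0$ and $\sigma(V)\subseteq\sigma(W_1)$,
\[
\sigma_h^2=\mathbb{E}\big\{\ell_h^2\,\mathbb{E}(\psi_0^2\mid W_1)\big\}\ge\mathbb{E}\{\ell_h^2 U_0^2\}=\mathbb{E}\big\{\ell_h^2\,\mathbb{E}(U_0^2\mid V)\big\}\ge\underline\sigma_0^2\,\mathbb{E}\{\ell_h^2\}\gtrsim h^{-1},
\]
the last step by Condition~2 with $q=2$. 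Combining with the upper bound gives $\sigma_h\asymp h^{-1/2}$, whence $\kappa_h/\sigma_h\lesssim h^{-2/3}/h^{-1/2}=h^{-1/6}$.

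\emph{Main obstacle.} The routine part is the change-of-variables bookkeeping in Step~1 and tracking constants carefully enough to state the finite-sample bounds rather than only the $\asymp$ relations. The genuinely delicate point is the conditional orthogonality $\mathbb{E}\{\psi_0\mid W_1\}=U_0$ driving the lower bound: unlike the unconditional Neyman orthogonality of Lemma~\ref{lemma:neyman}, this is a conditional statement, and in the proximal setting it must be extracted from the nested integral-equation definitions of $(\alpha_0,\eta_0)$ rather than from Assumption~\ref{assumption:neyman} verbatim. A secondary nuisance is the mismatch between $\mathbb{E}[\nu_0]$ and the exact localized centering $\theta_{0,h}$, but since this perturbs $\psi_{0,h}$ only by an $O(1)$ additive constant it leaves the $h$-scaling of every moment unchanged.
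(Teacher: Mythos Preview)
Your argument is correct and tracks the paper closely on the kernel calculus and the upper bounds: the paper performs the same change of variables to get $\|\ell\|_{\mathbb{P},q}\asymp h^{-(q-1)/q}$ and the same conditioning-plus-triangle-inequality bound $\|\psi_{0,h}\|_{\mathbb{P},q}\le(\bar\sigma_0+\bar\alpha\bar\sigma_1+\bar\eta\bar\sigma_2)\|\ell\|_{\mathbb{P},q}$. The lower bound on $\sigma_h$ is where you diverge. The paper expands $\psi_{0,h}^2$ and argues that the three cross terms $\mathbb{E}[\ell^2\alpha_0 U_0U_1]$, $\mathbb{E}[\ell^2\eta_0 U_0U_2]$, $\mathbb{E}[\ell^2\alpha_0\eta_0 U_1U_2]$ vanish by Assumption~\ref{assumption:neyman}, leaving the diagonal $\sigma_h^2=\mathbb{E}[\ell^2(U_0^2+\alpha_0^2U_1^2+\eta_0^2U_2^2)]\ge(\underline\sigma_0^2+\underline\alpha^2\underline\sigma_1^2+\underline\eta^2\underline\sigma_2^2)\|\ell\|_{\mathbb{P},2}^2$. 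Your Jensen route instead isolates the single conditional identity $\mathbb{E}[\psi_0\mid W_1]=U_0$ and yields only the constant $\underline\sigma_0^2$; this gives the same rate with a cleaner dependence on the orthogonality structure, and you correctly flag that the conditional statement is the crux (and is subtler in the proximal case than the unconditional Assumption~\ref{assumption:neyman}). One small correction: the paper \emph{does} use $\bar f'$ in this lemma, via a first-order Taylor expansion of $f_V(v-uh)$ in $h$ to pin $\omega$ between $\underline f/2$ and $2\bar f$; your direct sandwich of $\int K(u)f_V(v+uh)\,\mathrm{d}u$ by $\underline f$ and $\bar f$ tacitly assumes $K\ge 0$, which excludes higher-order kernels.
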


\begin{proof}
We extend \citet[Lemma 3.4]{chernozhukov2018global}. We proceed in steps.
\begin{enumerate}
    \item Observe that $\sigma_{\lambda}^2
     =\mathbb{E}\{\ell^2\cdot (U_0^2+\alpha_0^2U_1^2+\eta_0^2U_2^2+2\alpha_0U_0U_1+2\eta_0U_0U_2+2\alpha_0\eta_0U_1U_2)\}$ equals $\mathbb{E}\{\ell^2\cdot (U_0^2+\alpha_0^2U_1^2+\eta_0^2U_2^2)\}$ by Assumption~\ref{assumption:orthogonal}. In particular, we use
$
      \mathbb{E}[\{\alpha(W)-\alpha_0(W)\}U_1]=0$ and $\mathbb{E}[\{\eta(W)-\eta_0(W)\}U_2]=0.
$
Hence
$$
(\underline{\sigma}_0^2+\underline{\alpha}^2\underline{\sigma}_1^2 +\underline{\eta}^2\underline{\sigma}_2^2)\|\ell\|_{\mathbb{P},2}
\leq  \sigma_{\lambda}^2\leq  
(\bar{\sigma}_0^2+\bar{\alpha}^2\bar{\sigma}_1^2 +\bar{\eta}^2\bar{\sigma}_2^2)\|\ell\|_{\mathbb{P},2}.
$$
Moreover, $
  \|\psi_{0,\lambda}\|_{\mathbb{P},q} \leq   (\bar{\sigma}_0+\bar{\alpha}\bar{\sigma}_1 +\bar{\eta}\bar{\sigma}_2)\|\ell\|_{\mathbb{P},q}.
$
In summary,
$
 \| \ell\|_{\mathbb{P},2}  \lesssim \sigma_{\lambda} \lesssim \| \ell\|_{\mathbb{P},2}$ and $
\|\psi_{0,\lambda}\|_{\mathbb{P},q}
\lesssim  \| \ell \|_{\mathbb{P},q}.
$
    
    \item 
    Consider the change of variables $u=(v'-v)/\lambda$ so that $\mathrm{d} u  = \lambda^{-1} \mathrm{d} v'$. Hence
    \begin{align*}
        \| \ell\|^q_{\mathbb{P},q} \omega^q &=
        \| \ell  \omega\|^q_{\mathbb{P},q} 
        = \int \lambda^{-q}\left|K \left(\frac{v'-v}{\lambda}\right)\right|^q f_V(v') \mathrm{d} v' 
        =  \int \lambda^{-(q-1) }|K (u)|^q f_V(v - u \lambda)  \mathrm{d} u .
    \end{align*}
It follows that
$
 \lambda^{-(q - 1)/q}  \underline f^{1/q}  \left(\int |K|^q\right)^{1/q} 
 \leq \| \ell\|_{\mathbb{P},q} \omega 
 \leq  \lambda^{-(q - 1)/q}  \bar f^{1/q}  \left(\int |K|^q\right)^{1/q}.
$
Further, we have that
$
\omega  = \int  \lambda^{-1} K\left(\frac{v'-v}{\lambda}\right) f_V(v') \mathrm{d} v' = \int   K(u) f_V(v- u \lambda) \mathrm{d} u
$
and 
$
\int   K(u) f_V(v-0u) \mathrm{d} u=\int   K(u) f_V(v) \mathrm{d} u=f_V(v).
$
Using the Taylor expansion in $\lambda$ around $\lambda=0$ and the Holder inequality, there exist some $\tilde{\lambda}\in[0,\lambda]$ such that
$$
|\omega - f_V(v)| =  \left|    \lambda \int   K(u) \partial_v f_V(v- u \tilde{\lambda}) u \mathrm{d} u \right |  \leq   \lambda \bar f' \int |u|| K(u)| du.
$$
Hence there exists some $\lambda_1\in(\lambda,\lambda_0)$ depending only on $(K, \bar f', \underline{f}, \bar f)$ such that
$ \underline{f}/2  \leq \omega \leq 2 \bar f.$
In summary,
$$
 \lambda^{- (q - 1)/q}  \underline f^{1/q}  \left(\int |K|^q\right)^{1/q} \frac{1}{2 \bar f} \leq \| \ell\|_{\mathbb{P},q}  \leq  \lambda^{-(q - 1)/q}  \bar f^{1/q}  \left(\int |K|^q\right)^{1/q} \frac{2}{\underline{f}}
$$  
which implies
$
 \lambda^{- (q - 1)/q}  \lesssim \| \ell\|_{\mathbb{P},q}  \lesssim  \lambda^{-(q - 1)/q}.
$
    \item 
For all $\lambda < \lambda_1$,
$
\sigma_{\lambda} \asymp \| \ell\|_{\mathbb{P},2}$, $\|\psi_{0,\lambda}\|_{\mathbb{P},2}\lesssim \|\ell\|_{\mathbb{P},q}$, $\|\ell\|_{\mathbb{P},q}\asymp \lambda^{-(q-1)/q}.
$ \qedhere
\end{enumerate}
\end{proof}

\subsection{Residual variances and mean square rates}

\begin{lemma}[Residual variance for local functionals]\label{lemma:bounded_RR_local}
Suppose there exist $(\bar{\sigma}_1,\bar{\sigma}_2,\underline f, \bar f, \bar f', \lambda_0,\bar{K})$ bounded away from zero and above such that the following conditions hold.
\begin{enumerate}
  \item Bounded residual variance: 
  $
   \|U_1|W\|_{\mathbb{P},2} \leq \bar{\sigma}_1$, $\|U_2|W_1\|_{\mathbb{P},2} \leq \bar{\sigma}_2.
  $
    \item Bounded density: the density $f_V$ obeys, for all $v'\in N_{\lambda_0}(v)$,
    $
  0< \underline f \leq f_V(v') \leq \bar f$, $|\partial f_V(v')| \leq \bar f'.
    $
    \item Bounded kernel: $|K(u)|\leq \bar{K}$.
\end{enumerate}
Then
$
\bar{\sigma}_{1,\lambda}\lesssim \lambda^{-1}\bar{\sigma}_{1}$ and $\bar{\sigma}_{2,\lambda}\lesssim \lambda^{-1}\bar{\sigma}_{2}.
$
\end{lemma}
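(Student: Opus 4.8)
The plan is to reduce the localized residual variances to the global ones by isolating the Nadaraya--Watson weight $\ell_h(V)$, which is deterministic conditional on $W$ (respectively $W_1$), and bounding it uniformly by a constant multiple of $h^{-1}$ using the bounded kernel and the control of $\omega$ already established in Lemma~\ref{lemma:local}.

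First I would unwind the definitions from Section~\ref{sec:theory} and the localized moment function. Since $Y_h=\ell_h(V)Y$, $\delta_{0,h}(W)=\ell_h(V)\delta_0(W)$, and $\nu_{0,h}(W_1)=\ell_h(V)\nu_0(W_1)$, with $V=W_\ell\subset W_1\subset W$, we have
\[
Y_h-\delta_{0,h}(W)=\ell_h(V)\{Y-\delta_0(W)\}=\ell_h(V)U_1,\qquad
\delta_{0,h}(W)-\nu_{0,h}(W_1)=\ell_h(V)U_2 .
\]
Because $V$ is a function of $W$ and of $W_1$, the factor $\ell_h(V)$ pulls out of the relevant conditional expectations, so
\[
\mathbb{E}[\{Y_h-\delta_{0,h}(W)\}^2\mid W]=\ell_h(V)^2\,\mathbb{E}[U_1^2\mid W]\le \ell_h(V)^2\bar\sigma_1^2,
\]
and likewise $\mathbb{E}[\{\delta_{0,h}(W)-\nu_{0,h}(W_1)\}^2\mid W_1]\le \ell_h(V)^2\bar\sigma_2^2$.

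Next I would bound $\ell_h(V)$ uniformly. Recall $\ell_h(V)=(h\omega)^{-1}K\{(V-v)/h\}$ with $\omega=\int K(u)f_V(v-uh)\,\mathrm{d}u$. The bounded kernel assumption gives $|K\{(V-v)/h\}|\le \bar K$, and step 2 of the proof of Lemma~\ref{lemma:local} (a first-order Taylor expansion of $\omega$ in $h$, using the bounded density and its derivative) supplies a threshold $h_1\in(0,h_0)$, depending only on $(K,\underline f,\bar f,\bar f')$, such that $\omega\ge \underline f/2$ for all $h<h_1$. Hence for $h<h_1$,
\[
0\le \ell_h(V)\le \frac{2\bar K}{\underline f}\,h^{-1}\lesssim h^{-1}.
\]
Substituting into the two displays above yields $\mathbb{E}[\{Y_h-\delta_{0,h}(W)\}^2\mid W]\lesssim h^{-2}\bar\sigma_1^2$ and $\mathbb{E}[\{\delta_{0,h}(W)-\nu_{0,h}(W_1)\}^2\mid W_1]\lesssim h^{-2}\bar\sigma_2^2$, i.e. $\bar\sigma_{h,1}\lesssim h^{-1}\bar\sigma_1$ and $\bar\sigma_{h,2}\lesssim h^{-1}\bar\sigma_2$, with explicit constants recorded along the way.

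There is essentially no hard analytical obstacle here beyond importing the lower bound on $\omega$ from Lemma~\ref{lemma:local}; the only points requiring minor care are verifying that $V$ is measurable with respect to both $W$ and $W_1$ so that $\ell_h(V)$ genuinely factors out of each conditional expectation, and keeping track of the bandwidth threshold $h_1$ below which the density-based bound on $\omega$ is valid (so that the stated asymptotic inequalities hold for all sufficiently small $h$).
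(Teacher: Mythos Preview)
Your proposal is correct and follows essentially the same approach as the paper's proof: factor the localization weight $\ell_h(V)$ out of the conditional second moment (using $V\subset W_1\subset W$), then bound $\|\ell_h\|_\infty\le 2\bar K/(\underline f\,h)$ via the bounded kernel assumption and the lower bound $\omega\ge\underline f/2$ imported from the proof of Lemma~\ref{lemma:local}. The paper's argument is slightly terser but identical in substance.
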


\begin{proof}
Write
$
\bar{\sigma}_{1,\lambda}=\|\ell \cdot  U_1 |W\|_{\mathbb{P},2} \leq \|\ell\|_{\infty}\| U_1 |W\|_{\mathbb{P},2}.
$
By the proof of Lemma~\ref{lemma:local},
$
\|\ell\|_{\infty}=\left\| \frac{1}{\lambda \omega}K\left(\frac{v'-v}{\lambda}\right) \right\|_{\infty}\leq \bar{K}\frac{1}{\lambda \omega}\leq \bar{K}\frac{2}{\lambda \underline f}.
$
Therefore
$
\bar{\sigma}_{1,\lambda}\leq \bar{K}\frac{2}{\lambda \underline f} \cdot \bar{\sigma}_{\lambda} \lesssim \lambda^{-1}\bar{\sigma}_{1}.
$
The argument for $\bar{\sigma}_{2,\lambda}$ is identical.
\end{proof}

A natural choice of estimator $\hat{\nu}_{\lambda}$ for $\nu_{0,\lambda}$ is the localization $\ell_{\lambda}$ times an estimator $\hat{\nu}$ for $\nu_0$. We prove that this choice translates global nuisance parameter rates into local nuisance parameter rates under mild regularity conditions.

\begin{lemma}[Translating global rates to local rates]\label{lemma:translate_RR}
Suppose the conditions of Lemma~\ref{lemma:bounded_RR_local} hold. Then
$
\mathcal{R}(\hat{\nu}_{\ell,\lambda})\lesssim \lambda^{-2} \mathcal{R}(\hat{\nu}_{\ell})$, $\mathcal{P}(\hat{\nu}_{\ell,\lambda})\lesssim \lambda^{-2} \mathcal{P}(\hat{\nu}_{\ell})$, $\mathcal{R}(\hat{\delta}_{\ell,\lambda})\lesssim \lambda^{-2} \mathcal{R}(\hat{\delta}_{\ell})$, $\mathcal{P}(\hat{\delta}_{\ell,\lambda})\lesssim \lambda^{-2} \mathcal{P}(\hat{\delta}_{\ell}).
$
\end{lemma}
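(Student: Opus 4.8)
The plan is to exploit the exact factorization of the localized nuisance errors and reduce everything to a single sup-norm bound on the kernel weight. Since $\hat{\nu}_{\ell,h}(W) = \ell_h(W_{\ell})\hat{\nu}_{\ell}(W)$ and $\nu_{0,h}(W) = \ell_h(W_{\ell})\nu_0(W)$, subtraction gives $\hat{\nu}_{\ell,h}(W) - \nu_{0,h}(W) = \ell_h(W_{\ell})\{\hat{\nu}_{\ell}(W) - \nu_0(W)\}$, and likewise $\hat{\delta}_{\ell,h} - \delta_{0,h} = \ell_h(W_{\ell})(\hat{\delta}_{\ell} - \delta_0)$. The only analytic input I would quote is $\|\ell_h\|_{\infty} \lesssim h^{-1}$, which is already established inside the proof of Lemma~\ref{lemma:bounded_RR_local} (namely $\|\ell_h\|_{\infty} \le 2\bar K/(h\underline f)$, using $\omega \ge \underline f/2$ from Lemma~\ref{lemma:local}).

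For the two mean square error claims I would simply bound $\ell_h$ by its sup norm pointwise: $\mathcal{R}(\hat{\nu}_{\ell,h}) = \mathbb{E}[\ell_h(W_{\ell})^2\{\hat{\nu}_{\ell}(W) - \nu_0(W)\}^2 \mid I^c_{\ell}] \le \|\ell_h\|_{\infty}^2\,\mathcal{R}(\hat{\nu}_{\ell}) \lesssim h^{-2}\mathcal{R}(\hat{\nu}_{\ell})$, and identically for $\hat{\delta}_{\ell,h}$. For the two projected mean square error claims the one extra observation is that the localization variable $W_{\ell}$ is a pre-treatment (baseline) variable, hence measurable with respect to the conditioning block $B$ (it lies in $A_1$ as well as in $B_1$). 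Consequently $\ell_h(W_{\ell})$ factors out of the inner conditional expectation, $\mathbb{E}\{\hat{\nu}_{\ell,h}(A) - \nu_{0,h}(A) \mid B, I^c_{\ell}\} = \ell_h(W_{\ell})\,\mathbb{E}\{\hat{\nu}_{\ell}(A) - \nu_0(A) \mid B, I^c_{\ell}\}$, and squaring and taking the outer expectation yields $\mathcal{P}(\hat{\nu}_{\ell,h}) \le \|\ell_h\|_{\infty}^2\,\mathcal{P}(\hat{\nu}_{\ell}) \lesssim h^{-2}\mathcal{P}(\hat{\nu}_{\ell})$; the argument for $\hat{\delta}_{\ell,h}$ is the same.

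Honestly there is no serious obstacle here — this lemma is a bookkeeping step whose content is entirely the scaling $\|\ell_h\|_{\infty} \asymp h^{-1}$ borrowed from the preceding lemma. The only point I would take care to state explicitly is the factorization of $\ell_h(W_{\ell})$ out of the $B$-conditional expectation in the projected norm, since that is exactly what makes the hypotheses of Lemma~\ref{lemma:bounded_RR_local} (a bounded kernel and a density bounded below on the relevant neighborhood) sufficient: they are precisely what controls $\|\ell_h\|_{\infty}$, and nothing further about $\hat{\nu}_{\ell}$ or $\hat{\delta}_{\ell}$ is needed.
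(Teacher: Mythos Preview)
Your proposal is correct and matches the paper's approach exactly: factor the localized error as $\ell_h(W_\ell)\{\hat{\nu}_\ell-\nu_0\}$, pull out $\|\ell_h\|_\infty^2$, and quote $\|\ell_h\|_\infty\lesssim h^{-1}$ from the preceding lemma. The paper writes out only the $\mathcal{R}(\hat{\nu}_{\ell,h})$ case and dispatches the others with ``the remaining results are identical''; your explicit remark that $W_\ell$ is $B$-measurable so that $\ell_h(W_\ell)$ slides through the inner conditional expectation in the $\mathcal{P}$ bounds simply spells out a detail the paper leaves tacit.
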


\begin{proof}
We generalize \citet[Lemma 10]{chernozhukov2021simple}.
Write
\begin{align*}
    \mathcal{R}(\hat{\nu}_{\ell,\lambda})
    =\mathbb{E}[\{\ell_{\lambda}(V)\hat{\nu}_{\ell}(W)-\ell_{\lambda}(V)\nu_0(W)\}^2\mid I^c_{\ell}] 
    \leq \|\ell_{\lambda}\|^2_{\infty} \mathcal{R}(\hat{\nu}_{\ell}).
\end{align*}
From the proof of Lemma~\ref{lemma:bounded_RR_local}, $\|\ell_{\lambda}\|_{\infty}\lesssim \lambda^{-1}$. The remaining results are identical.
\end{proof}

\subsection{Approximation error}

Finally, we characterize the finite sample approximation error $\Delta_{\lambda}=
n^{1/2} \sigma^{-1}|\theta_{0,\lambda}-\theta_0|$ where 
$
\theta_{0}=\lim_{\lambda \rightarrow 0} \theta_{0,\lambda}.
$ 
Here, $\Delta_{\lambda}$ is bias from approximating a causal function using sequence of local functionals.
We define $m(v)= \mathbb{E} \left\{\nu_0(W) \mid V=v\right\}$ 
to lighten notation.

\begin{lemma}[Approximation error from localization \citep{chernozhukov2018global}]\label{lemma:approx}
Suppose there exist constants $(\lambda_0,s, \bar g_{s}, \bar f_{s}, \underline f, \bar{g})$ bounded away from zero and above such that the following conditions hold.
\begin{enumerate}
    \item Differentiability: on $N_{\lambda_0}(v)=\{v':|v'-v|\leq \lambda_0\}$, $m(v')$ and $f_V(v')$ are differentiable to the integer order $d$.
    \item Bounded derivatives: let $s= d\wedge o$ where $o$ is the order of the kernel $K$. Let $\partial^s_v$ denote the $s$ order derivative $\partial^{s}/(\partial v)^{s}$. Assume
    $
\sup_{ v' \in N_{\lambda_0}(v)}  \| \partial^{s}_v (m(v') f_V(v')) \|_{op} \leq \bar g_{s}$, $\sup_{ v' \in N_{\lambda_0}(v) } \|\partial^{s}_v f_V(v')  \|_{op} \leq \bar f_{s}$, $\inf_{ v' \in N_{\lambda_0}(v) } f_V(v') \geq \underline{f}.
$
\item Bounded conditional formula:
$
m(v)f_V(v)\leq \bar{g}.
$
\end{enumerate}
Then there exist constants $(C,\lambda_1)$ depending only on $(\lambda_0,K, s, \bar g_{s}$,  $\bar f_{s}$, $\underline f, \bar{g})$ such that for all $\lambda_1\in(\lambda,\lambda_0)$, $|\theta_{0,\lambda}-\theta_0|\leq C \lambda^s.$
In summary,
$
\Delta_{\lambda} \lesssim n^{1/2} \lambda^{s+1/2}.
$
\end{lemma}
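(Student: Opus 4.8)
The plan is to read $\theta_{0,h}$ as a purely deterministic Nadaraya--Watson average and to bound $|\theta_{0,h}-\theta_0^{\lim}|$ by the classical order-$s$ kernel bias argument (following \cite{chernozhukov2018global}), then convert this into the stated bound on $\Delta_h$ via $\sigma_h\asymp h^{-1/2}$ from Lemma~\ref{lemma:local}. Writing $g(v')=m(v')f_V(v')$ and using that $\ell_h$ depends on $W_{\ell}=V$ only, iterated expectations gives $\theta_{0,h}=\mathbb{E}\{\ell_h(V)m(V)\}=\mathbb{E}\{K((V-v)/h)m(V)\}/\mathbb{E}\{K((V-v)/h)\}$; after the change of variables $u=(v'-v)/h$ the powers of $h$ cancel between numerator and denominator, so that
$$
\theta_{0,h}=\frac{N_h}{D_h},\qquad N_h=\int K(u)\,g(v+uh)\,\mathrm{d}u,\qquad D_h=\int K(u)\,f_V(v+uh)\,\mathrm{d}u,
$$
where $D_h$ is exactly the quantity $\omega$ analyzed in the proof of Lemma~\ref{lemma:local}. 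A standard continuity argument (using $\int K=1$) identifies $\theta_0^{\lim}=\lim_{h\to0}\theta_{0,h}=g(v)/f_V(v)=m(v)$.

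First I would bound $N_h$ and $D_h$ separately. Taylor expanding $g$ and $f_V$ on $N_{h_0}(v)$ to order $s=d\wedge o$ with Lagrange remainder and invoking the order-$o$ kernel moment conditions ($\int u^j K(u)\,\mathrm{d}u=0$ for $1\le j\le o-1$), every monomial of degree $1,\dots,s-1$ integrates against $K$ to zero, leaving
$$
|N_h-g(v)|\le \frac{h^{s}}{s!}\Big(\sup_{v'\in N_{h_0}(v)}\|\partial^{s}_v g(v')\|_{op}\Big)\int|u|^{s}|K(u)|\,\mathrm{d}u\;\lesssim\;\bar g_{s}\,h^{s},
$$
and, identically, $|D_h-f_V(v)|\lesssim\bar f_{s}\,h^{s}$, with constants depending only on $(K,s)$. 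The two regimes $d\le o$ and $o\le d$ are handled by the same computation (when $o\le d$ the surviving degree-$o$ term is itself of size $O(h^{o})=O(h^{s})$). As in the proof of Lemma~\ref{lemma:local}, there is $h_1\in(h,h_0)$ with $D_h\ge\underline{f}/2>0$ whenever $h<h_1$.

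Second I would combine these through the ratio identity
$$
\theta_{0,h}-\theta_0^{\lim}=\frac{N_h}{D_h}-\frac{g(v)}{f_V(v)}=\frac{\{N_h-g(v)\}\,f_V(v)-g(v)\,\{D_h-f_V(v)\}}{D_h\,f_V(v)},
$$
so that, using $f_V(v)\le\bar f$, $g(v)=m(v)f_V(v)\le\bar g$, and $D_h f_V(v)\ge\underline{f}^{2}/2$,
$$
|\theta_{0,h}-\theta_0^{\lim}|\;\le\;\frac{\bar f\,|N_h-g(v)|+\bar g\,|D_h-f_V(v)|}{\underline{f}^{2}/2}\;\le\;C\,h^{s}
$$
for a constant $C$ depending only on $(h_0,K,s,\bar g_{s},\bar f_{s},\underline{f},\bar g)$, which is the first claim. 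The ``in summary'' statement then follows by substituting $\sigma=\sigma_h\asymp h^{-1/2}$ (Lemma~\ref{lemma:local}) into $\Delta_h=n^{1/2}\sigma^{-1}|\theta_{0,h}-\theta_0^{\lim}|$, yielding $\Delta_h\lesssim n^{1/2}h^{1/2}\cdot h^{s}=n^{1/2}h^{s+1/2}$.

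I expect the only mildly delicate point to be the Taylor step: one must keep careful track that smoothness is available only to order $d$ while kernel cancellation reaches only degree $o-1$, so the effective rate is $s=d\wedge o$ and the remainder constant must be expressed through $\bar g_{s},\bar f_{s}$ and $\int|u|^{s}|K(u)|\,\mathrm{d}u$ rather than a generic smoothness constant. Everything else is routine bookkeeping with the uniform bounds on the shrinking neighborhood $N_{h_0}(v)$ and the positivity of $\underline{f}$.
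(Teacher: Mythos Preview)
Your proposal is correct and follows the standard order-$s$ kernel bias argument; note that the paper does not give its own proof of this lemma but simply quotes it from \cite{chernozhukov2018global}, and your argument is precisely the classical one underlying that reference. One cosmetic point: the stated constants $(C,h_1)$ are to depend only on $(h_0,K,s,\bar g_s,\bar f_s,\underline f,\bar g)$, so in the ratio step you should bound $|N_h-g(v)|/D_h$ directly by $2|N_h-g(v)|/\underline f$ rather than introducing $\bar f$, which is not among the listed constants; the second term already uses only $\bar g$ and $\underline f$.
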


\section{Proofs of propositions and corollaries}\label{sec:analytic}

\subsection{Analytical examples of relative well posedness}

\begin{proof}[Proof of Proposition~\ref{prop:linear}]
    Here, $(g_0,h_0)$ are real numbers solving $\mathbb{E}(YC')=g\mathbb{E}(AC')$ and $ h\mathbb{E}(BC)=g\mathbb{E}(AC)$. Clearly $S:g\mapsto \mathbb{E}(AC') g$ has operator norm $|\mathbb{E}(AC')|$ and $T:(0,g)\mapsto \mathbb{E}(BC) \cdot 0-\mathbb{E}(AC)g$ has operator norm $|\mathbb{E}(AC)|$.
\end{proof}

\begin{proof}[Proof of Proposition~\ref{prop:gaussian}]
    By \citet[eq. 2.7]{hoderlein2011demand},
    $\sigma_{j}(T_g)=|\rho_A|^{j}$ and  $\sigma_{j}(S)=|\rho'|^{j}$.  By Mehler's formula, the right singular functions coincide as Hermite polynomials of $A$. Therefore, 
 $\|(S^*S+\mu'I)^{-1}T_g^*T_g\|_{\op}=\frac{\sigma_j^2(T)}{\sigma_j^2(S)+\mu'}=\frac{|\rho_A|^{2j}}{|\rho'|^{2j}+\mu'}\leq \frac{|\rho_A|^{2j}}{|\rho'|^{2j}}$.
\end{proof}

\begin{proposition}[Nonlinear models with multivariate Gaussian data]
Suppose that $(A,B,C,C')$ are each multivariate normal vectors with mean zero and identity covariance, possibly of differing dimensions. Assumption~\ref{assumption:posedness} holds when $\max_{k \leq r \wedge r'} \frac{|\rho_k|}{|\rho_k'|} 
= O(1)$, where $\rho_{k}$ and $\rho_{k}'$
are the $k$th canonical correlations between $(A,C)$ and $(A, C')$, respectively. In the maximization, $r$ and $r'$ are the ranks of $\textsc{cov}(A,C)$ and $\textsc{cov}(A,C')$, respectively.
\end{proposition}

\begin{proof}
    The result is a straightforward generalization of Proposition~\ref{prop:gaussian}, using the tensorized Mehler's formula.
\end{proof}

The quantity $\max _{k \leq r \wedge r'} \frac{|\rho_k|}{|\rho_k'|}$ uniformly compares the relevance of the two instruments $C$ and $C'$ for $A$, in each canonical direction. We require that $C'$ is at least as strong an instrument as $C$ in every such direction. Here,  $k\leq r \wedge r'$ means  that we only compare the nondegenerate directions.

The result naturally generalizes to random vectors that are not mean zero and that do not have identity covariances, using heavier notation.

\subsection{Corollaries with compounding ill posedness}

\begin{proof}[Proof of Corollary~\ref{cor:limit_sequential}]
    The result is immediate from Lemma~\ref{lemma:innermax_weak}.
\end{proof}

\begin{proof}[Proof of Corollary~\ref{cor:L2}]
    To lighten notation, let $\bar{r}=\max\{\delta_n,\|\hat{g}-g_0\|_2\}$. We minimize the mean square bound in Theorem~\ref{theorem:L2}: $\|\hat{h}-h_0\|_2^2=O\{\mu^{\min(\beta,1)}\|w_h\|^2_2+\mu^{-1}\bar{r}^2\}$. 
    
    In the case $\beta \geq  1$, the first order condition is $\|w_h\|^2_2-\mu^{-2}\bar{r}^2=0$, suggesting $\mu\asymp \bar{r}$, $R_n\asymp \mu^2+\bar{r}^2\asymp \bar{r}^2$, and $\mu^{-1}R_n\asymp \bar{r}$. 
    
    In the case $\beta < 1$, the first order condition is $\mu^{\beta-1}\beta\|w_h\|^2_2-\mu^{-2}\bar{r}^2=0$, suggesting $\mu\asymp \bar{r}^{\frac{2}{\beta+1}}$, $R_n\asymp \mu^{\beta+1}+\bar{r}^2\asymp \bar{r}^2$, and $\mu^{-1}R_n\asymp \bar{r}^{2-\frac{2}{\beta+1}}=\bar{r}^{\frac{2\beta}{\beta+1}}$.
\end{proof}

\begin{proof}[Proof of Corollary~\ref{cor:L2_compound}]
    By Corollary~\ref{cor:L2}, we set $\mu_g=\delta_n^{\frac{2}{\min(\beta_g',1)+1}}$ to obtain $\|\hat{g}-g_0\|_2^2=O\left\{\delta_n^{2\textsc{well}(\beta_g')}\right\}$, which dominates $\delta_n^2$. Hence $\bar{r}^2=\delta_n^{2\textsc{well}(\beta_g')}$. Thus by Corollary~\ref{cor:L2}, we set $\mu_h=\bar{r}^\frac{2}{\min(\beta_h,1)+1}$ to obtain $\|T(\hat{h}-h_0)\|_2^2=O \left(\bar{r}^2\right) $ and $\|\hat{h}-h_0\|_2^2=O\left\{\bar{r}^{2\textsc{well}(\beta_h)}\right\}$.
\end{proof}

\begin{proposition}[Less robustness to ill posedness]\label{prop:robust}
    Suppose the conditions of Corollary~\ref{cor:L2_compound} hold for $(\hat{h}_1,\hat{h}_2,\hat{h}_3,\hat{h}_4)$. Write the largest critical radius as $\bar{\delta}_n=\tilde{O}(n^{-\alpha})$, and the source conditions as $(\vec{\beta}_1,\beta_2,\vec{\beta}_3,\beta_4)$.\footnote{Recall that for several leading examples, $(h_1,h_3)$ are nested NPIVs while $(h_2,h_4)$ are NPIVs.} Set the regularizations as in Corollary~\ref{cor:L2_compound}. Suppose $\sigma\asymp n^{\gamma}$. Then product rate condition of Theorem~\ref{theorem:ci} are satisfied when
    (i)
    $ \gamma + \alpha \left\{
  \textsc{well}(\beta_{1g}')\textsc{well}(\beta_{1h}) + 1
  \right\}>1/2;
    $
  %
    (ii) $ \gamma + \alpha \left\{
  \textsc{well}(\beta_{3g}')\textsc{well}(\beta_{3h}) + 1
  \right\}>1/2;
    $
    (iii) $
    \gamma + \alpha  \left\{
 \textsc{well}(\beta_2) \vee \textsc{well}(\beta_{4}) +1
  \right\} >1/2.$
\end{proposition}

\begin{proof}
To begin, recall the rates we have derived.   By Corollary~\ref{cor:L2},
$\|S(\hat{g}-g_0)\|_2=O (\bar{\delta}_n)$ and $\|\hat{g}-g_0\|_2=O \left\{\bar{\delta}_n^{\textsc{well}(\beta_g')}\right\}$. By Corollary~\ref{cor:L2_compound},
    $\|T(\hat{h}-h_0)\|_2=O \left\{\bar{\delta}_n^{\textsc{well}(\beta_g')}\right\}$ and $\|\hat{h}-h_0\|_2=O \left\{\bar{\delta}_n^{\textsc{well}(\beta_h)\textsc{well}(\beta_g')}\right\}$.
\begin{enumerate}
    \item The first product rate condition of Theorem~\ref{theorem:ci} is satisfied when $$
  n^{1/2}\sigma^{-1}\{\|T_1(\hat{h}_1-h_1)\|_2\|\hat{h}_4-h_4\|_2 \wedge \|\hat{h}_1-h_1\|_2\|T_4(\hat{h}_4-h_4)\|_2\}=o_p(1).
    $$
    Since $h_1$ is a nested NPIV and $h_4$ is an NPIV, the former term is
    $$
  n^{1/2}\sigma^{-1}\|T_1(\hat{h}_1-h_1)\|_2\|\hat{h}_4-h_4\|_2
  =O\left[n^{\frac{1}{2}-\gamma-\alpha\left\{
 \textsc{well}(\beta_{1g}')
  +
 \textsc{well}(\beta_4)
  \right\}}\right].
    $$
    Meanwhile the latter term is
    $$
  n^{1/2}\sigma^{-1}\|\hat{h}_1-h_1\|_2\|T_4(\hat{h}_4-h_4)\|_2
  =O\left[n^{\frac{1}{2}-\gamma-\alpha\left\{
  \textsc{well}(\beta_{1h})\textsc{well}(\beta_{1g}') + 1
  \right\}}\right].
    $$
    In summary, the first product rate condition requires
    $$
    \left[1/2-\gamma-\alpha\left\{
 \textsc{well}(\beta_{1g}')
  +
 \textsc{well}(\beta_4)
  \right\} \right]\wedge 
  \left[1/2-\gamma-\alpha\left\{
  \textsc{well}(\beta_{1h})\textsc{well}(\beta_{1g}') + 1
  \right\} \right]<0.
    $$
    Rearranging,
    $
    \frac{1}{2} < \gamma + \alpha \left[ \left\{
 \textsc{well}(\beta_{1g}')
  +
 \textsc{well}(\beta_4)
  \right\} \vee  \left\{
  \textsc{well}(\beta_{1h})\textsc{well}(\beta_{1g}') + 1
  \right\}\right].
    $
    The latter branch of the maximum weakly dominates the former since $\textsc{well}(\beta)\in[0,1/2]$ and $\textsc{well}(\beta_{1g}')$ appears in both branches.
    \item The second product rate condition is similar.
    Again, the latter branch weakly dominates the former.
    \item For the third product rate condition, we require 
    $$
    n^{1/2}\sigma^{-1}\{\|T_2(\hat{h}_2-h_2)\|_2\|\hat{h}_4-h_4\|_2 \wedge \|\hat{h}_2-h_2\|_2\|T_4(\hat{h}_4-h_4)\|_2\}.
    $$
    Since $(h_2,h_4)$ are NPIVs, the former term is  
    $$
  n^{1/2}\sigma^{-1}\|T_2(\hat{h}_2-h_2)\|_2\|\hat{h}_4-h_4\|_2
  =O\left[n^{\frac{1}{2}-\gamma-\alpha\left\{
 1
  +
 \textsc{well}(\beta_4)
  \right\}}\right].
    $$
    Meanwhile the latter term is
    $$
  n^{1/2}\sigma^{-1}\|\hat{h}_2-h_2\|_2\|T_4(\hat{h}_4-h_4)\|_2
  =O\left[n^{\frac{1}{2}-\gamma-\alpha\left\{
  \textsc{well}(\beta_{2}) + 1
  \right\}}\right].
    $$
    In summary, the third product rate condition requires
    $$
    \left[1/2-\gamma-\alpha\left\{
 1
  +
 \textsc{well}(\beta_4)
  \right\} \right]\wedge 
  \left[1/2-\gamma-\alpha\left\{
  \textsc{well}(\beta_{2}) + 1
  \right\} \right]<0.
    $$
    Rearranging,
    $
    \frac{1}{2} < \gamma + \alpha \left[ \left\{
 1
  +
 \textsc{well}(\beta_4)
  \right\} \vee  \left\{
  \textsc{well}(\beta_{2}) + 1
  \right\}\right].
    $ \qedhere
\end{enumerate}

\end{proof}

\begin{proof}[Proof of Corollary~\ref{cor:general}]
    The result is immediate from Theorem~\ref{theorem:general}.
\end{proof}

\subsection{Corollaries without compounding ill posedness}

\begin{proof}[Proof of Corollary~\ref{cor:limit_simultaneous}]
    The result is immediate from Lemmas~\ref{lemma:innermax_weak} and~\ref{lemma:innermax_weak2}.
\end{proof}

\begin{proof}[Proof of Corollary~\ref{cor:L2_compound2}]
    To lighten notation, let $\underline{\beta}=\min(\beta_h,\beta_g',\beta_g)$ and $\|\bar{w}\|_2^2=\max(\|w_h\|_2^2,\|w_g'\|^2_2,\|w_g\|^2_2)$. We minimize the mean square bound in Theorem~\ref{theorem:joint}: $\|\hat{h}-h_0\|_2^2=O\{\mu^{\min(\underline{\beta},1)}\|\bar{w}\|_2^2+\mu^{-1}\delta_n^2\}$. 
    
    In the case $\underline{\beta} \geq  1$, the first order condition is $\|\bar{w}\|^2_2-\mu^{-2}\delta_n^2=0$, suggesting $\mu\asymp \delta_n$, $R_n\asymp \mu^2+\delta_n^2\asymp \delta_n^2$, and $\mu^{-1}R_n\asymp \delta_n$. 
    
    In the case $\underline{\beta} < 1$, the first order condition is $\mu^{\underline{\beta}-1}\underline{\beta}\|\bar{w}\|^2_2-\mu^{-2}\delta_n^2=0$, suggesting $\mu\asymp \delta_n^{\frac{2}{\underline{\beta}+1}}$, $R_n\asymp \mu^{\underline{\beta}+1}+\delta_n^2\asymp \delta_n^2$, and $\mu^{-1}R_n\asymp \delta_n^{2-\frac{2}{\underline{\beta}+1}}=\delta_n^{\frac{2\underline{\beta}}{\underline{\beta}+1}}$.
\end{proof}

\begin{proof}[Proof of Proposition~\ref{prop:robust2}]
By Corollary~\ref{cor:L2_compound2}, the projected rates are $O(\bar{\delta}_n)$ and the mean square rates are $O\left\{\bar{\delta}_n^{\textsc{well}(\underline{\beta}_j)}\right\}$ where $\underline{\beta}_j=\min(\vec{\beta}_j)$.
The first product rate condition of Theorem~\ref{theorem:ci} is satisfied when $$
  n^{1/2}\sigma^{-1}\{\|T_1(\hat{h}_1-h_1)\|_2\|\hat{h}_4-h_4\|_2 \wedge \|\hat{h}_1-h_1\|_2\|T_4(\hat{h}_4-h_4)\|_2\}=o_p(1).
    $$
   The former term is
    $
  n^{1/2}\sigma^{-1}\|T_1(\hat{h}_1-h_1)\|_2\|\hat{h}_4-h_4\|_2
  =O\left[n^{\frac{1}{2}-\gamma-\alpha\left\{
 1
  +
 \textsc{well}(\underline{\beta}_4)
  \right\}}\right].
    $
    Meanwhile the latter term is
    $
  n^{1/2}\sigma^{-1}\|\hat{h}_1-h_1\|_2\|T_4(\hat{h}_4-h_4)\|_2
  =O\left[n^{\frac{1}{2}-\gamma-\alpha\left\{
 \textsc{well}(\underline{\beta}_{1}) + 1
  \right\}}\right].
    $
    In summary, the first product rate condition requires
    $$
    \left[1/2-\gamma-\alpha\left\{
  1
  +
 \textsc{well}(\underline{\beta}_4)
  \right\} \right]\wedge 
  \left[1/2-\gamma-\alpha\left\{
   \textsc{well}(\underline{\beta}_{1}) + 1
  \right\} \right]<0.
    $$
    Rearranging,
    $
    \frac{1}{2} < \gamma + \alpha \left[ \left\{
 \textsc{well}(\underline{\beta}_1)
  +
 1
  \right\} \vee  \left\{
  \textsc{well}(\underline{\beta}_4) + 1
  \right\}\right].
    $
The other product rate conditions are similar. 
\end{proof}

\begin{proof}[Proof of Proposition~\ref{prop:negative}]
    The argument is similar to Proposition~\ref{prop:robust2}, however product rate conditions are of the form $n^{1/2}\sigma^{-1}\|\hat{h}_1-h_1\|_2\|\hat{h}_4-h_4\|_2=o_p(1)$, leading to $\frac{1}{2} < \gamma + \alpha \left\{
 \textsc{well}(\underline{\beta}_1)
  +
 \textsc{well}(\underline{\beta}_4)
  \right\} $. For causal scalars, $\gamma=0$. At best, $\alpha=\textsc{well}(\underline{\beta}_1)=\textsc{well}(\underline{\beta}_4)=1/2$, so the inequality fails.
\end{proof}

\section{Simulation and application details}\label{sec:details}

\subsection{Nested NPIV design}

Let $g_0(A)=A_1^3$ and $h_0(B)=f(B_1)$, where $f:\mathbb{R}\rightarrow \mathbb{R}$ is one of four possible functions from \cite{dikkala2020minimax}. 
For simplicity, we write $p=dim(A)=dim(B)=dim(C)=dim(C')$. We define $F:(x_1,...,x_p)\mapsto (x_1^{1/3},...,x_p^{1/3})$ and $1_p=(1,...,1)^\top \in\mathbb{R}^p$.

Each observation is generated as follows. Independently draw the instruments $C\sim \mathcal{N}(0,... I_p)$ and $C'\sim \mathcal{N}(0,... I_p)$. Next draw the noise terms $U\sim \mathcal{N}(0,1)$, $U_A\sim\mathcal{N}\{0,\min(1,|C_1|)\}$, $U_B \sim \mathcal{N}(0,0.1)$,  and $U_Y\sim\mathcal{N}\{0,\min(1,|C_1'|)\}$. 
Finally, set $B=C+1_p \cdot U+1_p \cdot U_B$, $A=F\{h_0(B)\cdot 1_p+U \cdot 1_p+C'+U_A \cdot 1_p\}$, and $Y=g_0(A)+U+U_Y$.

\begin{proposition}[Nested NPIV simulation]
    The nested NPIV simulation design satisfies $\mathbb{E}\{h_0(B)|C\}=\mathbb{E}\{g_0(A)|C\}$ and $\mathbb{E}\{g_0(A)|C'\}=\mathbb{E}(Y|C')$.
\end{proposition}

\begin{proof}
    Clearly $\mathbb{E}\{Y-g_0(A)|C'\}=\mathbb{E}\{U+U_Y|C'\}=0$. Moreover, $\mathbb{E}\{g_0(A)-h_0(B)|C\}=\mathbb{E}\{A_1^{3}-h_0(B)|C'\}=\mathbb{E}\{U+C_1'+U_A|C\}=0$.
\end{proof}

\subsection{Coverage design}

To begin, we recap the linear design of \cite{dukes2023proximal} in our notation. Each variable is a scalar except for $X\in\mathbb{R}^2$. Each observation is generated as follows:
\begin{enumerate}
    \item $(X_1,X_2,U)^{\top}\sim \mathcal{N}\{(0.25,0.25,0)^{\top},\Sigma\}$ where $\Sigma=\begin{pmatrix}
        0.25 & 0.00 & 0.05 \\
        0.00 & 0.25 & 0.05 \\
        0.05 & 0.05 & 1.00    
    \end{pmatrix}$;
    \item $D|X,U\sim \textsc{Bernoulli}[1+\exp\{(0.5,0.5)^{\top} X+0.4 U\}]^{-1}$;
    \item $Z|X,D,U \sim \mathcal{N}\{0.2-0.52D+(0.2,0.2)^{\top} X -U,1\}$;
    \item $W|X,U \sim \mathcal{N} \{0.3+(0.2,0.2)^{\top} X-0.6U,1\})$;
    \item $M| X,D,U \sim \mathcal{N} \{-0.3D-(0.5,0.5)^{\top} X+0.4U,1\}$;
    \item $Y|X,D,M,W,U = 2+2D+M+2W-(1,1)^{\top} X-U+2\mathcal{N}(0,1)$.
\end{enumerate}

 Recall that $h_0$ is an outcome confounding bridge that solves $\mathbb{E}\{h(X,D,W)|X,Z,D=0\}=\mathbb{E}\{g_0(X,1,M,W)|X,Z,D=0\}$, where $g_0$ solves
$\mathbb{E}\{g(X,D,M,W)|X,Z,D=1,M\}=\mathbb{E}(Y|X,Z,D=1,M)$. 

Recall that $h_0'$ is a treatment confounding bridge that solves 
$\mathbb{E}\{h'(X,Z,D,M)|X,D=1,M,W\}=\mathbb{E}\left\{g_0'(X,Z,0)\frac{\mathbb{P}(D=0|X,M,W)}{\mathbb{P}(D=1|X,M,W)}|X,D=1,M,W\right\}$, 
where $g_0'$ solves
$\mathbb{E}\{g'(X,Z,D)|X,D=0,W\}=\mathbb{E}\left\{\frac{1}{\mathbb{P}(D=0|X,W)}|X,D=0,W\right\}$. 

We write the nuisances as
$h_1(X,W)=h_0(X,0,W)$, 
    $h_2(X,M,W)=g_0(X,1,M,W)$, 
    $h_3(X,Z,D,M)=1_{D=1}h_0'(X,Z,1,M)$, and   
    $h_4(X,Z,D)=1_{D=0}g_0'(X,Z,0)$.

Consider the notation $\mathbb{E}(A|B,C)=\beta_{A0}+\beta_{AB}B+\beta_{AC}C$ and  $\mathbb{V}(A|B,C)=v^2_{A|B,C}$. Also define $\mathbb{P}(D=0|X,U)=[1+\exp\{-(\pi_0+\pi_X^{\top}X+
\pi_UU)\}]^{-1}$ and $
\log \left\{\frac{\mathbb{P}(D=0|X,M,U)}{\mathbb{P}(D=1|X,M,U)}\right\}=\rho_0+\rho_X^{\top}X+\rho_MM+\rho_UU.
$

\begin{proposition}[Linear coverage simulation; c.f. Supplementary Material of \cite{dukes2023proximal}]\label{prop:dukes}
    The linear design for coverage simulations satisfies 
     {\small 
    \begin{align*}
        h_1(X,W)&=\nu_0+\nu_X^{\top} X+\nu_W W, \\
        h_2(X,M,W)&=\delta_0+\delta_X^{\top}X+\delta_MM+\delta_WW, \\
        h_3(X,Z,D,M)&=1_{D=1}\left[1 + \exp\left\{-(\eta_0+\eta_X^{\top}X+\eta_ZZ)\right\}\right] \exp\left(\alpha_0+\alpha_X^{\top}X+\alpha_ZZ+\alpha_MM\right), \\
        h_4(X,Z,D)&=1_{D=0}\left[1 + \exp\left\{-(\eta_0+\eta_X^{\top}X+\eta_ZZ)\right\}\right],
    \end{align*}
    }
    where $(\nu,\delta,\alpha,\eta)$ can be expressed in terms of $(\beta,v,\pi,\rho)$ as follows:
    {\small 
    \begin{align*}
    (\nu_0,\nu_X^{\top},\nu_W)
        &=\left(\delta_0+\delta_M\beta_{M0}+\delta_W\beta_{W0}-\nu_W\beta_{W0},
        \delta_M\beta_{MX}^{\top}+\delta_W\beta_{WX}^{\top}+\delta_X^{\top}-\nu_W\beta_{WX}^{\top},
        \frac{ \delta_M\beta_{MU}+\delta_W\beta_{WU}}{\beta_{WU}}\right); \\
    (\delta_0,\delta_X^{\top},\delta_M,\delta_W)
        &=\left(\beta_{Y0}+\beta_{YD}+\beta_{YW}\beta_{W0}-\delta_W\beta_{W0},
        \beta_{YW}\beta_{WX}^{\top}+\beta_{YX}^{\top}-\delta_W\beta_{WX}^{\top},
        \beta_{YM},
        \frac{\beta_{YW}\beta_{WU}+\beta_{YU}}{\beta_{WU}}\right); \\
    (\alpha_0,\alpha_X^{\top},\alpha_Z,\alpha_M)
        &=\left\{\rho_0-\alpha_Z(\beta_{Z0}+\beta_{ZD})-\frac{\alpha_Z^2v^2_{Z|D,X,U}}{2},
        \rho_X^{\top}-\alpha_Z\beta_{ZX}^{\top},
       \frac{\rho_U}{\beta_{ZU}},
     \rho_M\right \}; \\
      (\eta_0,\eta_X^{\top},\eta_Z)
        &=\left(\pi_0-\eta_Z\beta_{Z0}+\frac{\eta_Z^2v^2_{Z|D,X,U}}{2},
        \pi_X^{\top}-\eta_Z\beta_{ZX}^{\top},
        \frac{\pi_U}{\beta_{ZU}}\right).
    \end{align*}
    }
    Moreover, $\rho$ can be expressed in terms of $(\beta,v,\pi)$ as 
    {\small
     \begin{align*}
    (\rho_0,\rho_X^{\top},\rho_M,\rho_U)
    &=\left\{\frac{\beta_{MD}}{v^2_{M|D,X,U}}(\beta_{M0}+\beta_{MD}/2)+\pi_0,
    \frac{\beta_{MD}}{v^2_{M|D,X,U}}\beta_{MX}^{\top}+\pi_X^{\top},
    \frac{-\beta_{MD}}{v^2_{M|D,X,U}},
    \frac{\beta_{MD}}{v^2_{M|D,X,U}}\beta_{MU}+\pi_U)\right\}.
 \end{align*}
 } Finally, in this design, $\theta_0=4.05$.
\end{proposition}

Next, we extend the linear design to a nonlinear design. Instead of observing $(X_1,X_2,Z,D,M,Y,W)$, we observe $(\tilde{X}_1,\tilde{X}_2,\tilde{Z},\tilde{D},\tilde{M},\tilde{Y},\tilde{W})=\{f(X_1),f(X_2),f(Z),D,f(M),Y,f(W)\}$ where $f:\mathbb{R}\rightarrow \mathbb{R}$ is one of four possible functions.
Let $\tilde{\theta}_0$ be the causal parameter in the nonlinear design, and let $(\tilde{h}_1,\tilde{h}_2,\tilde{h}_3,\tilde{h}_4)$ be the nuisances.

\begin{proposition}[Nonlinear coverage simulation]
    The nonlinear design for coverage simulations satisfies $\tilde{h}_1(\tilde{X}_1,\tilde{X}_2,\tilde{W})=h_1\{f^{-1}(\tilde{X}_1),f^{-1}(\tilde{X}_2),f^{-1}(\tilde{W})\}$,
     {\small 
    \begin{align*}
        \tilde{h}_2(\tilde{X}_1,\tilde{X}_2,\tilde{M},\tilde{W})&=h_2\{f^{-1}(\tilde{X}_1),f^{-1}(\tilde{X}_2),f^{-1}(\tilde{M}),f^{-1}(\tilde{W})\}, \\
        \tilde{h}_3(\tilde{X}_1,\tilde{X}_2,\tilde{Z},\tilde{D},\tilde{M})&=h_3\{f^{-1}(\tilde{X}_1),f^{-1}(\tilde{X}_2),f^{-1}(\tilde{Z}),\tilde{D},f^{-1}(\tilde{M})\}, \\
        \tilde{h}_4(\tilde{X}_1,\tilde{X}_2,\tilde{Z},\tilde{D})&=h_4\{f^{-1}(\tilde{X}_1),f^{-1}(\tilde{X}_2),f^{-1}(\tilde{Z}),\tilde{D}\},
    \end{align*}
    }
    where $(h_1,h_2,h_3,h_4)$ are characterized in Proposition~\ref{prop:dukes}. Hence $\tilde{h}_1$ is linear in $f^{-1}(\tilde{X})$ yet nonlinear in $\tilde{X}$, and so on. In this design, $\tilde{\theta}_0=4.05$.
\end{proposition}

\begin{proof}
By Proposition~\ref{prop:dukes}, $0=\mathbb{E}\{Y-h_2(X,M,W)|X,Z,D=1,M\}$, which equals
   \begin{align*}
  &\mathbb{E}[Y-h_2\{f^{-1}(\tilde{X}_1),f^{-1}(\tilde{X}_2),f^{-1}(\tilde{M}),f^{-1}(\tilde{W})\}|f^{-1}(\tilde{X}_1),f^{-1}(\tilde{X}_2), f^{-1}(\tilde{Z}),\tilde{D}=1,f^{-1}(\tilde{M})] \\
    &= \mathbb{E}\{Y- \tilde{h}_2(\tilde{X}_1,\tilde{X}_2,\tilde{Z},\tilde{D})|\tilde{X}_1,\tilde{X}_2, \tilde{Z},\tilde{D}=1,\tilde{M}\}
    \end{align*}
    and similarly for the other bridge functions.
\end{proof}

\subsection{Implementation details}

All estimators are implemented in a \texttt{python} package called \texttt{nnpiv} that accompanies the paper as supplementary material. In general, we follow the standard tuning procedures of previous work on NPIV \citep{dikkala2020minimax,bennett2020deepgeneralizedmethodmoments}.

\textbf{Benchmark: 2SLS.} We employ an iterated two stage least squares (2SLS) method using the command \texttt{tsls.tsls()}. Sequential and simultaneous estimation coincide.

\textbf{Bechmark: Series.} We implement the iterated 2SLS method after applying a cubic transformation to the data. Again, sequential and simultaneous estimation coincide. 

\textbf{Benchmark: Regularized series.} In addition to the cubic transformation, we also introduce regularization via \texttt{ElasticNetCV} with cross-validation (\texttt{cv=3}). This method blends both the $\ell_1$ and $\ell_2$ penalties of the lasso and ridge. We implement it in the command \texttt{tsls.regtsls()}.

\textbf{Proposal: RKHS.} Our estimator has a closed form solution, derived in Appendix~\ref{sec:rkhs} and implemented via the command \texttt{rkhsiv.RKHSIVL2}. We use the Gaussian kernel, which has a hyperparameter called the lengthscale controlling the width of the kernel, and hence its sensitivity to local variations. We set the lengthscale by cross validation, which gives the value $0.0013$. We set the ridge regularization parameters $(\mu,\mu')=n^{-1.6}$, following the choice of \cite{dikkala2020minimax}.

\textbf{Proposal: Neural network.} We optimize the objective using the optimistic Adam algorithm of \cite{Daskalakis}, as described in Appendix~\ref{sec:nn} and implemented via the command \texttt{agmm2.AGMM2L2(learnerh, learnerg, adversary1, adversary2)}. 
Following \cite{dikkala2020minimax,bennett2020deepgeneralizedmethodmoments}, the architecture consists of fully connected layers with dropout layers throughout to prevent overfitting, with a dropout probability of $0.1$, hidden layer width set to $100$,  and \texttt{LeakyReLU} activation for both the learner and the adversary models. Following \cite{dikkala2020minimax,bennett2020deepgeneralizedmethodmoments}, the learning rates for both the learner and adversary are set to $1 \times 10^{-4}$, with a weight-decay parameter of $1 \times 10^{-3}$ for the learner and $1 \times 10^{-4}$ for the adversary. The ridge regularization parameters are $(\mu,\mu')= 1 \times 10^{-12}$, similar to those works.  The model is trained with a batch size of $100$ samples. 
Since we consider joint estimation of $(\hat{h},\hat{g})$, we double the number of epochs of \cite{dikkala2020minimax} to $600$. 
The final neural network model is computed by averaging models, burning in the initial $200$ epochs to allow the model to stabilize. 

\textbf{Standard practice: Debiased machine learning.} In Algorithm~\ref{algo:dml}, we use five folds. When using neural networks, we reduce the number of epochs to $500$ because the inference process is computationally expensive. We increase the burn-in period to $400$ epochs to enhance model stability, particularly for accurate estimation of the treatment bridge function.

\textbf{Computational resources.} The simulations were run on a cluster partition with $2$ nodes, each equipped with $110$ Intel "Sapphire Rapids" CPUs and $600$ GB of memory. For the neural network, we utilized a separate cluster partition with $64$ Intel "Ice Lake" CPUs, $4$ NVIDIA A100 GPUs per node, and $600$ GB of memory.

\subsection{Proxy mediation analysis of US Job Corps} 

We directly extend the real world application of \cite{dukes2023proximal}, from parametric estimation to semiparametric estimation. In the Job Corps study, participants were randomized to receive eligibility for  the Job Corps job training program. The treatment is whether an individual received job training in the first year after randomization. The mediator is the fraction of weeks employed in the second year after randomization. The outcome is the number of arrests in the fourth year after randomization.
 The sample consists of $10,775$ participants, with $257$ participants removed from the analysis due to missing mediator information.

The parametric estimate is derived from \cite{dukes2023proximal}. 

For the RKHS estimate, we use the \texttt{RKHS2IVL2} command with a Gaussian kernel and lengthscale $0.1$, chosen by cross validation. The ridge regularization is tuned as in the simulations.

For the neural network estimate, the learning and regularization parameters are set as in the simulations. Since the application is less computationally intensive than the coverage simulations, we increase the hidden layer width to $400$. The final model is computed using $500$ epochs, with a burn-in period of $350$ epochs.

\subsection{Heterogeneous long term effects of Project STAR} 

We directly extend the real world application of \cite{athey2020combining}, from average long term  effects to heterogeneous long term  effects. We use data from the Tennessee Student Teacher
Achievement Ratio experiment (Project STAR), where each student was randomly assigned to either a small or regular kindergarten class. The short term outcome is the third grade test score. Observational data come from the New York City (NYC) public school system, and include the same variables, along with fourth to eighth grade test scores.

The full sample consists of $11,600$ students: $3,497$ from STAR, and $8,103$ from NYC. Due to missing values in the prior ability variable, analysis of heterogeneous long term effects was conducted on $1,954$ students from STAR, and $8,103$ students from NYC. We refer to the subpopulation with non-missing prior ability as the selected sample. Figure~\ref{fig:star_ate_selection} illustrates the average long term effects for this subpopulation, which are lower than the average long term effects for the full population reported in Section~\ref{sec:experiments}.

We compute localization weights to assess heterogeneous effects with respect to prior ability $X_1$. For the localization, we use a Gaussian kernel with bandwidth chosen according to Silverman’s rule: $\lambda = 0.9 \min \left( \hat{\sigma}_{X_1}, \frac{\textsc{IQR}_{X_1}}{1.34} \right) n^{-\frac{1}{5}}$, where $\hat{\sigma}_{X_1}$ is the standard  deviation and $\textsc{IQR}_{X_1}$  is the interquantile range of $X_1$. 

Oracle average long term effects are estimated as in \cite{athey2020combining}. Oracle heterogeneous long term effects are localized averages thereof.

\begin{figure}
   \captionsetup[subfigure]{justification=Centering}
\begin{subfigure}[t]{0.48\textwidth}
         \centering
        \resizebox{\textwidth}{!}{%
       \includegraphics[width=\textwidth]{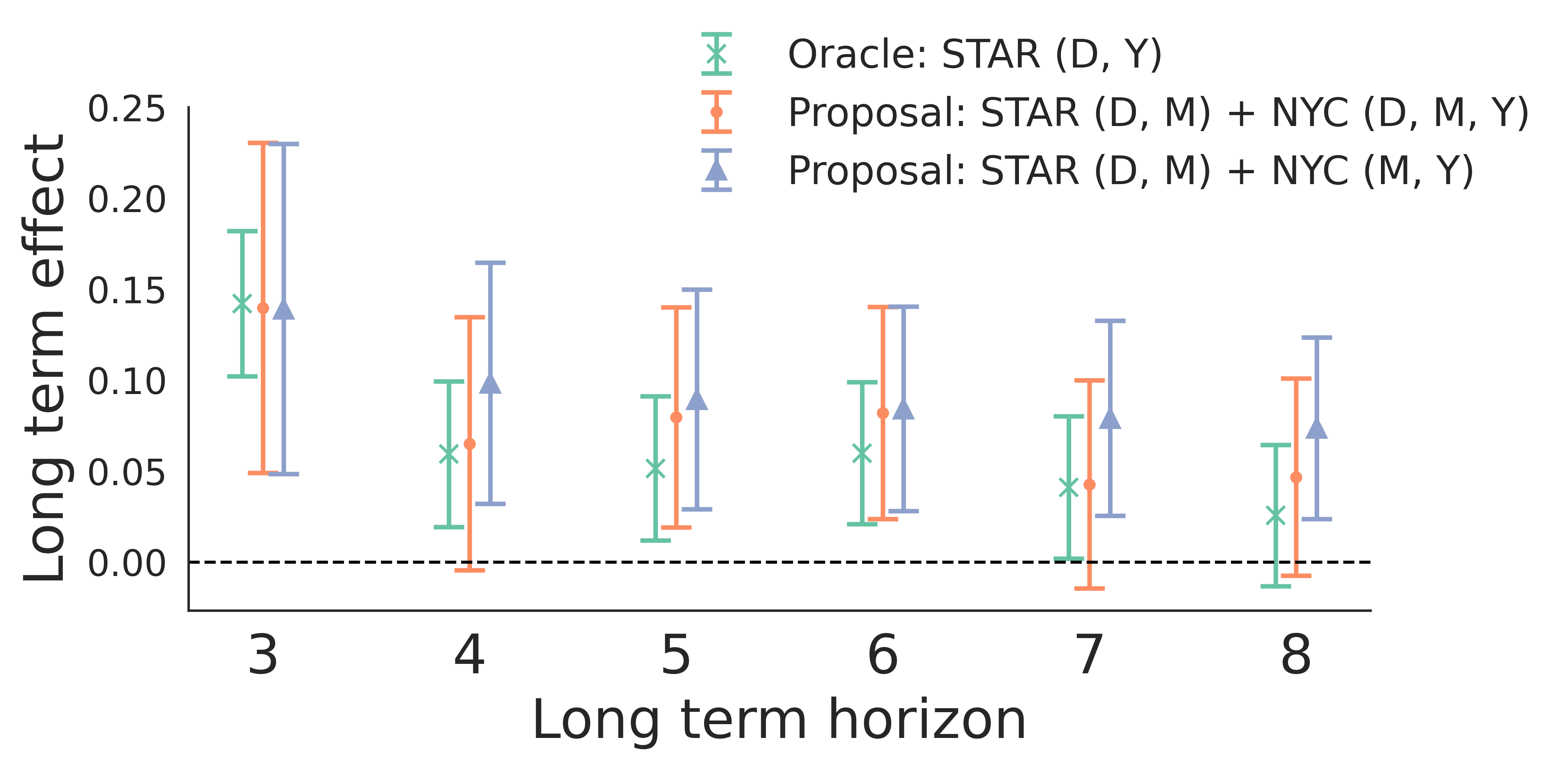}
        }
    \caption{RKHS}
\end{subfigure}\hspace{\fill} 
\begin{subfigure}[t]{0.48\textwidth}
          \centering
        \resizebox{\textwidth}{!}{%
      \includegraphics[width=\textwidth]{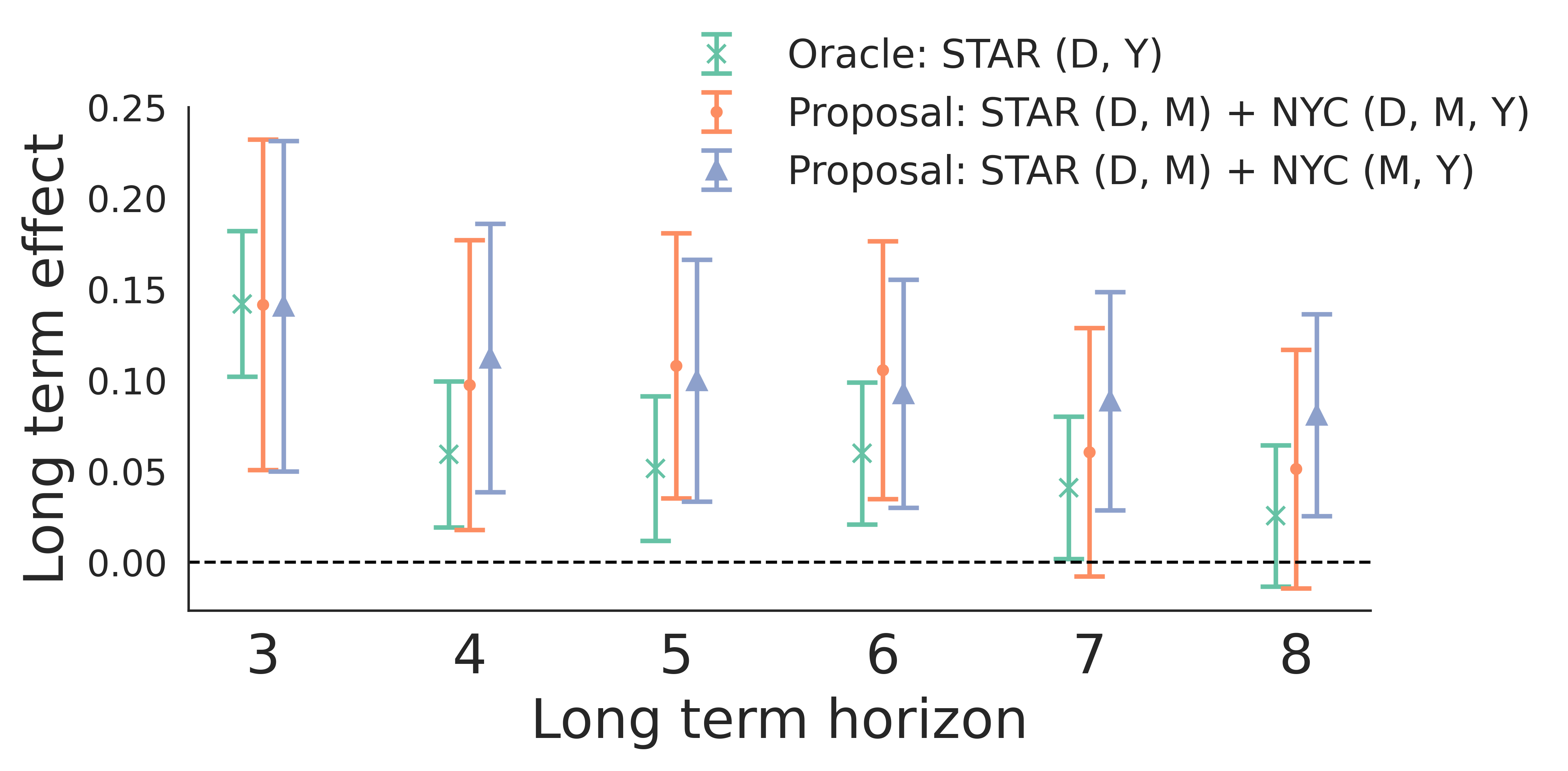}
        }
    \caption{Neural network}
\end{subfigure}
\caption{Average long term  treatment effect over different horizons for selected sample}\label{fig:star_ate_selection}
\end{figure}

For our RKHS proposal, we use the Gaussian kernel.  The lengthscale is chosen in a data driven way, following the median distance heuristic. For our neural network proposal, we choose the same hyperparameters as in the simulations.

Section~\ref{sec:experiments} reports heterogeneous long term effects of small kindergarten class size on seventh grade test scores. Figure~\ref{fig:longterm_cate_grades} reports heterogeneous long term effects of small kindergarten class size over different horizons: third through eight grade test scores. Across horizons and function spaces, our proposals recover the oracle well.

\begin{figure}
   \captionsetup[subfigure]{justification=Centering}
\begin{subfigure}[t]{0.48\textwidth}
         \centering
        \resizebox{\textwidth}{!}{%
       \includegraphics[width=\textwidth]{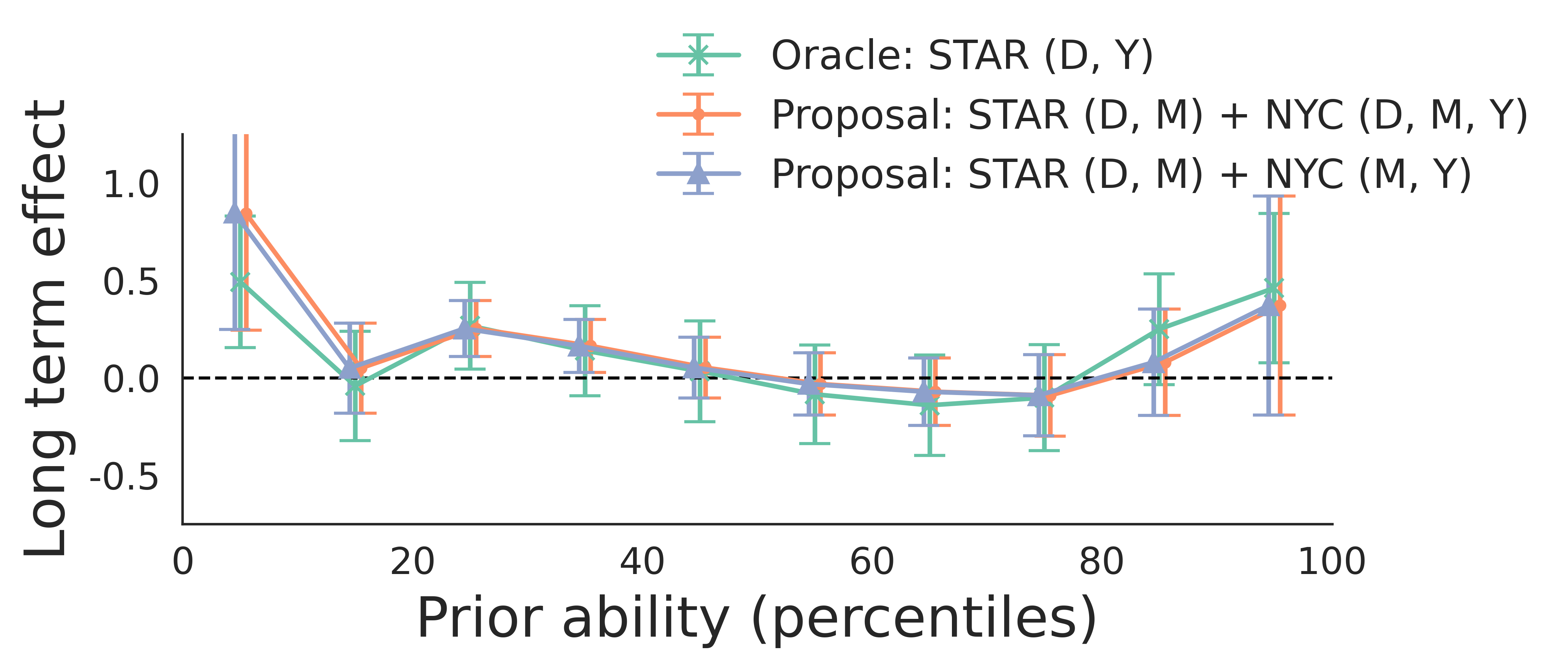}
        }
        \vspace{-40pt}
    \caption{RKHS, third grade}
\end{subfigure}\hspace{\fill} 
\begin{subfigure}[t]{0.48\textwidth}
          \centering
        \resizebox{\textwidth}{!}{%
      \includegraphics[width=\textwidth]{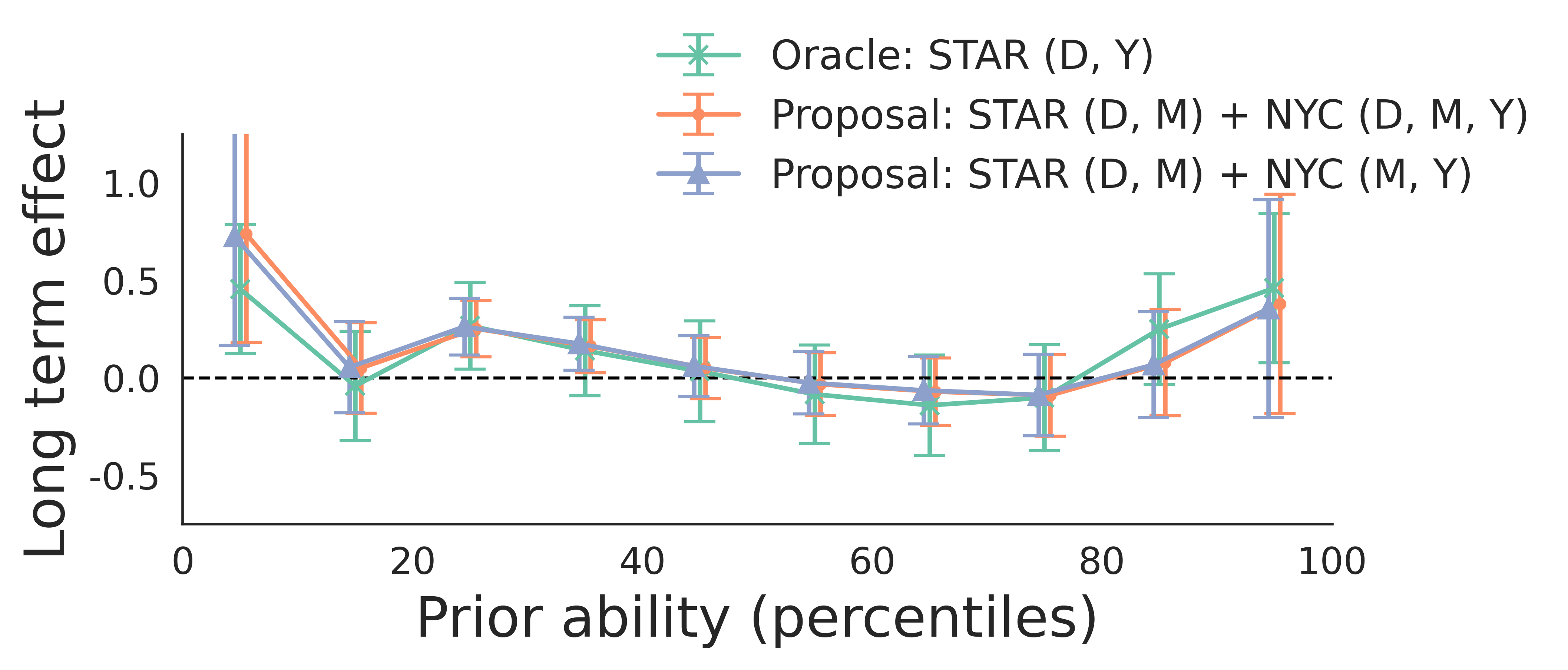}
        }
        \vspace{-40pt}
    \caption{Neural network, third grade}
\end{subfigure}


  \captionsetup[subfigure]{justification=Centering}
\begin{subfigure}[t]{0.48\textwidth}
         \centering
        \resizebox{\textwidth}{!}{%
       \includegraphics[width=\textwidth]{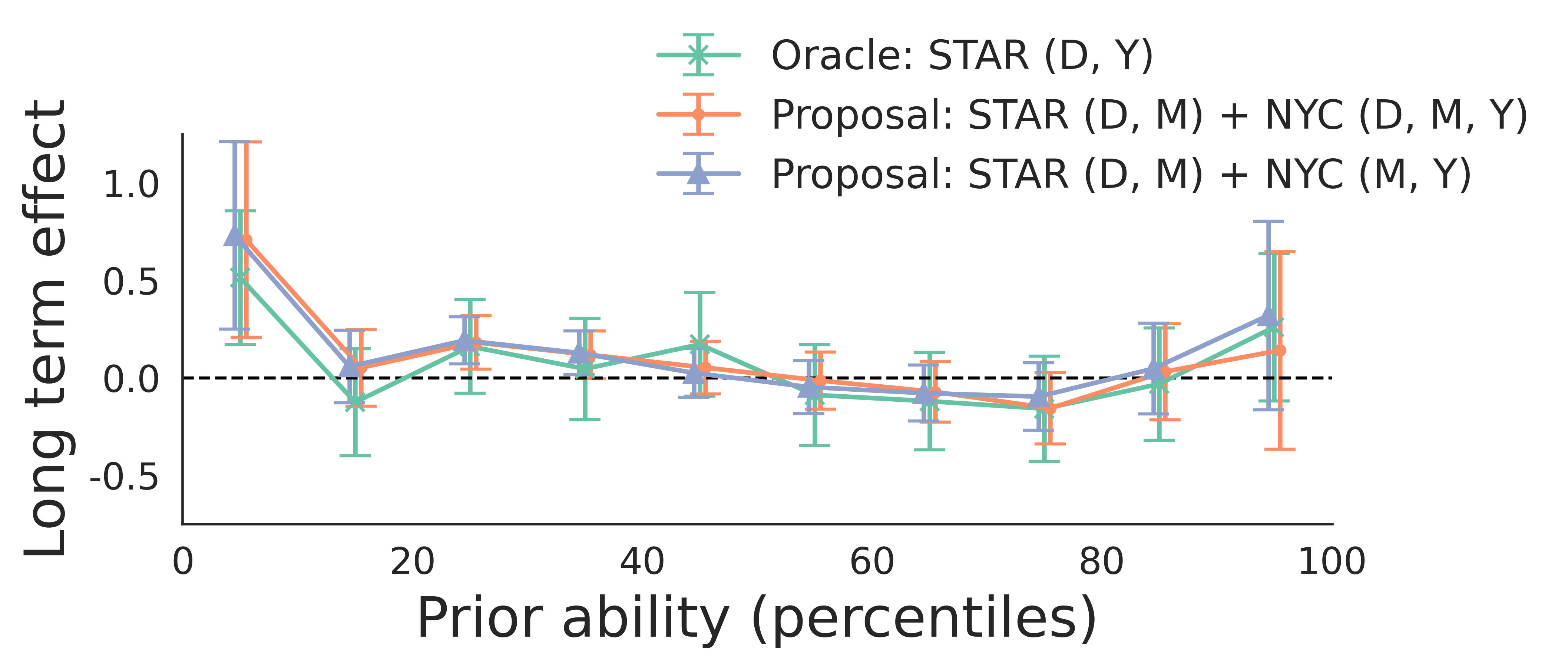}
        }
        \vspace{-40pt}
    \caption{RKHS, fourth grade}
\end{subfigure}\hspace{\fill} 
\begin{subfigure}[t]{0.48\textwidth}
          \centering
        \resizebox{\textwidth}{!}{%
      \includegraphics[width=\textwidth]{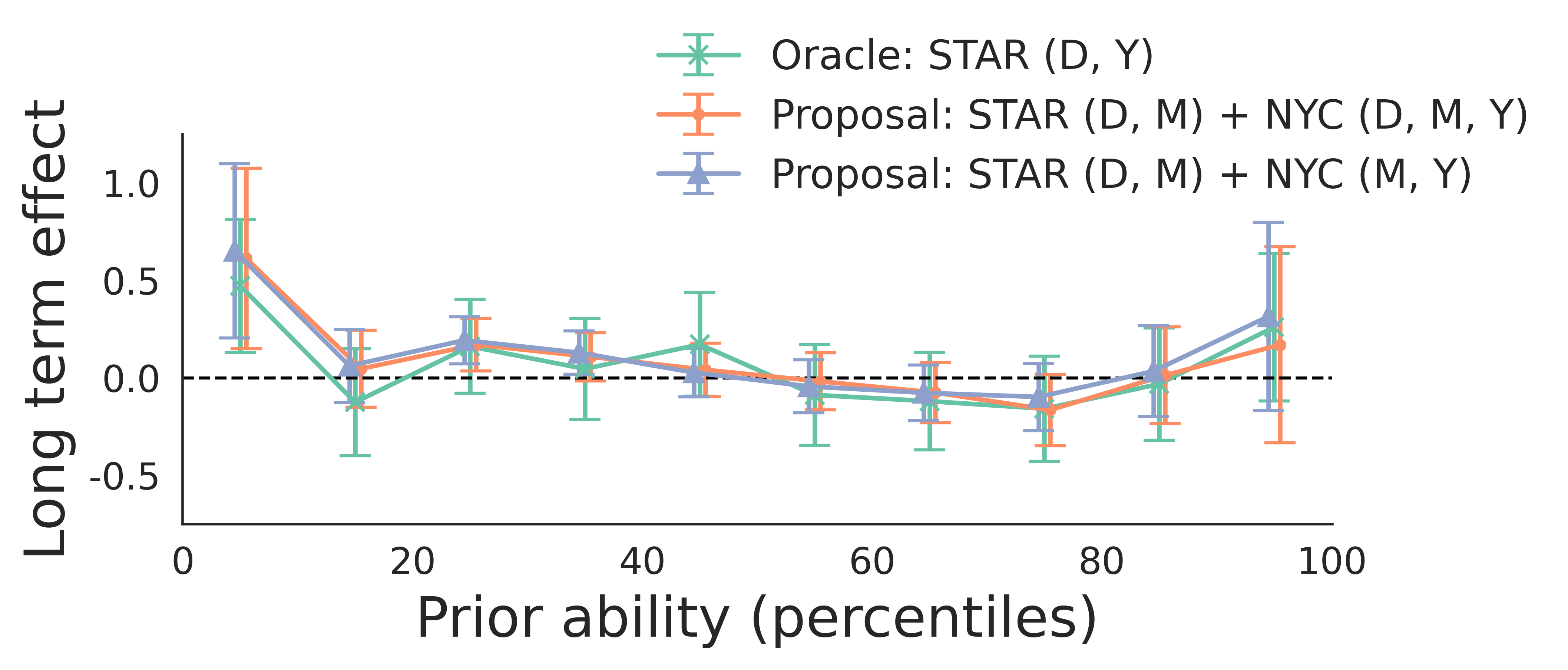}
        }
        \vspace{-40pt}
    \caption{Neural network, fourth grade}
\end{subfigure}


  \captionsetup[subfigure]{justification=Centering}
\begin{subfigure}[t]{0.48\textwidth}
         \centering
        \resizebox{\textwidth}{!}{%
       \includegraphics[width=\textwidth]{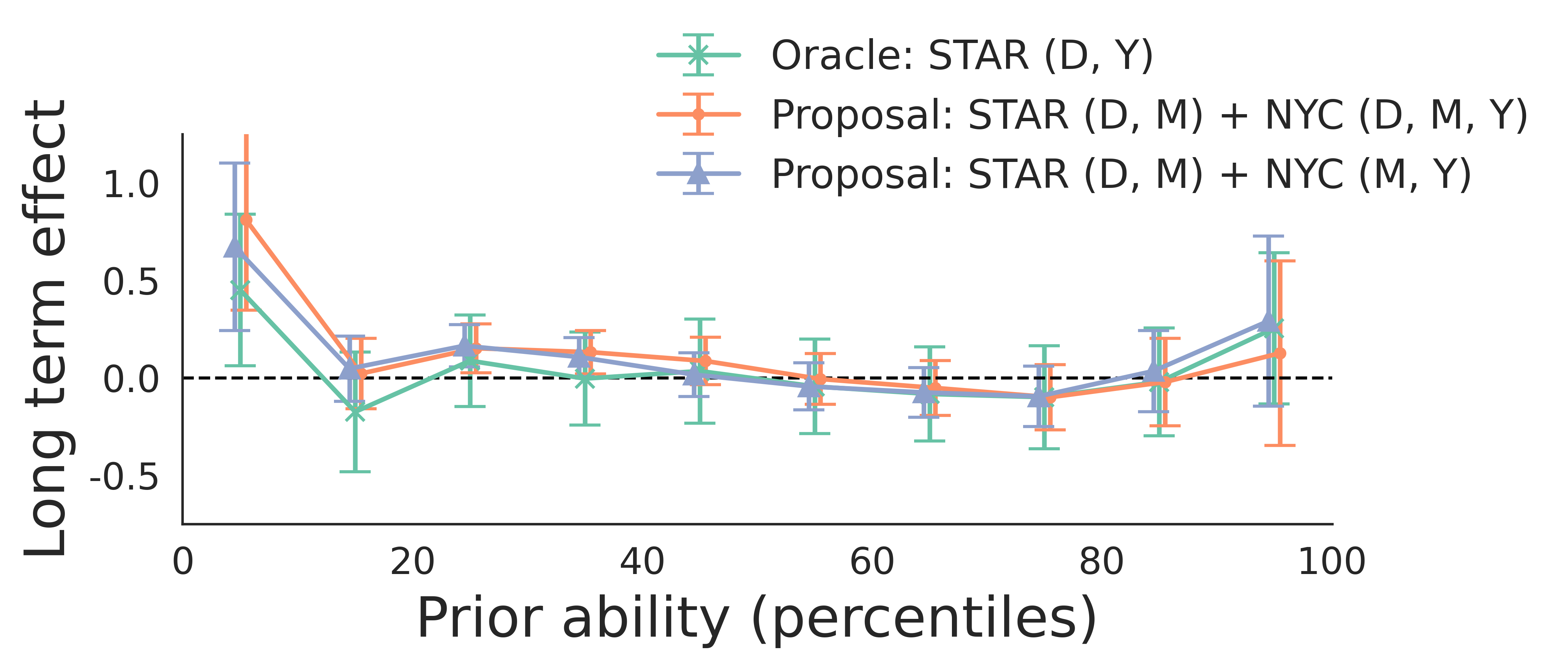}
        }
        \vspace{-40pt}
    \caption{RKHS, fifth grade}
\end{subfigure}\hspace{\fill} 
\begin{subfigure}[t]{0.48\textwidth}
          \centering
        \resizebox{\textwidth}{!}{%
      \includegraphics[width=\textwidth]{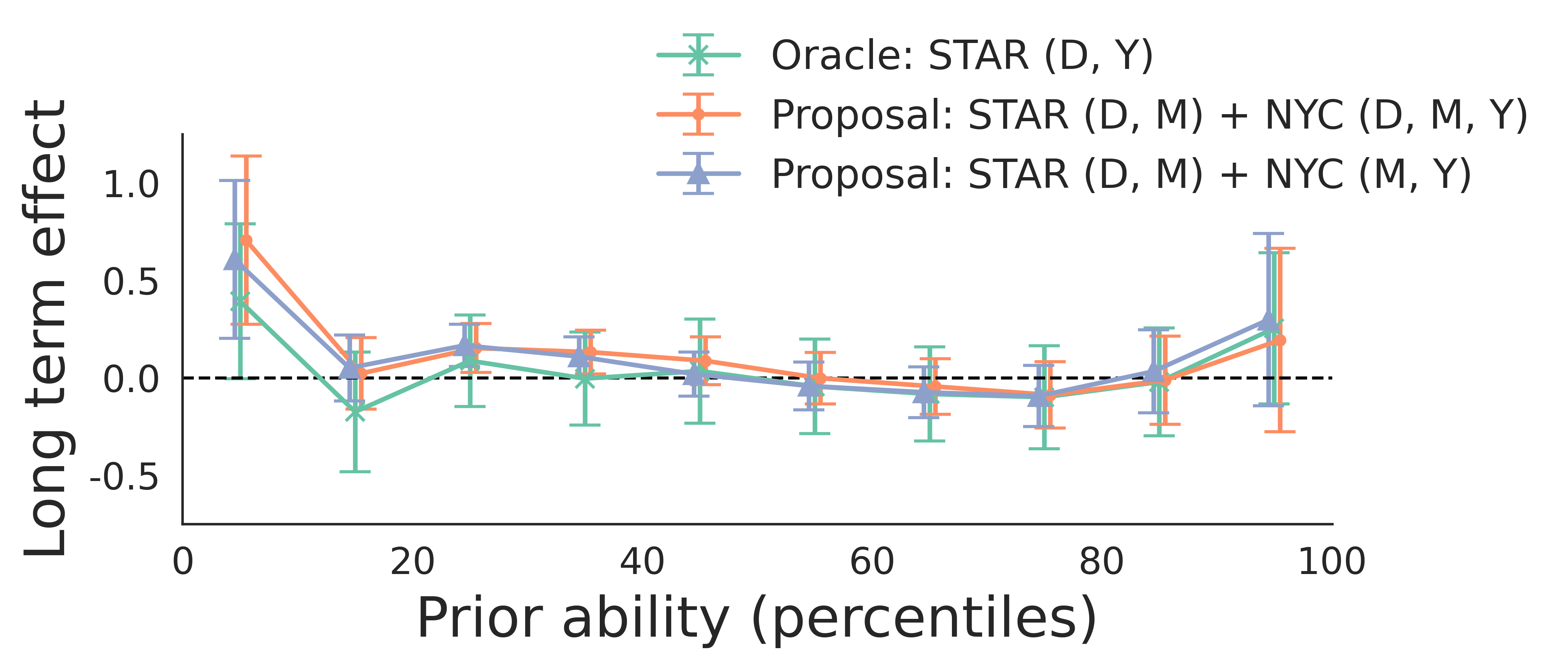}
        }
        \vspace{-40pt}
    \caption{Neural network, fifth grade}
\end{subfigure}


  \captionsetup[subfigure]{justification=Centering}
\begin{subfigure}[t]{0.48\textwidth}
         \centering
        \resizebox{\textwidth}{!}{%
       \includegraphics[width=\textwidth]{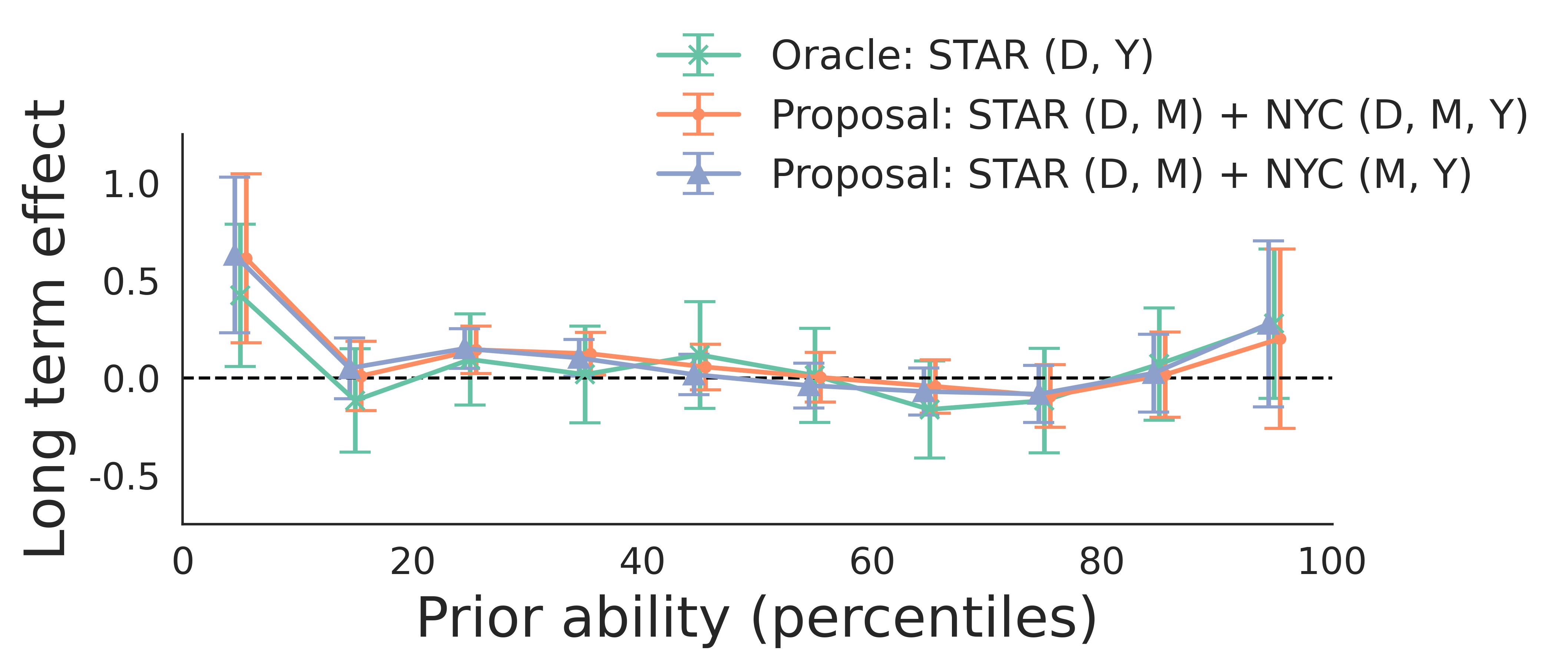}
        }
        \vspace{-40pt}
    \caption{RKHS, sixth grade}
\end{subfigure}\hspace{\fill} 
\begin{subfigure}[t]{0.48\textwidth}
          \centering
        \resizebox{\textwidth}{!}{%
      \includegraphics[width=\textwidth]{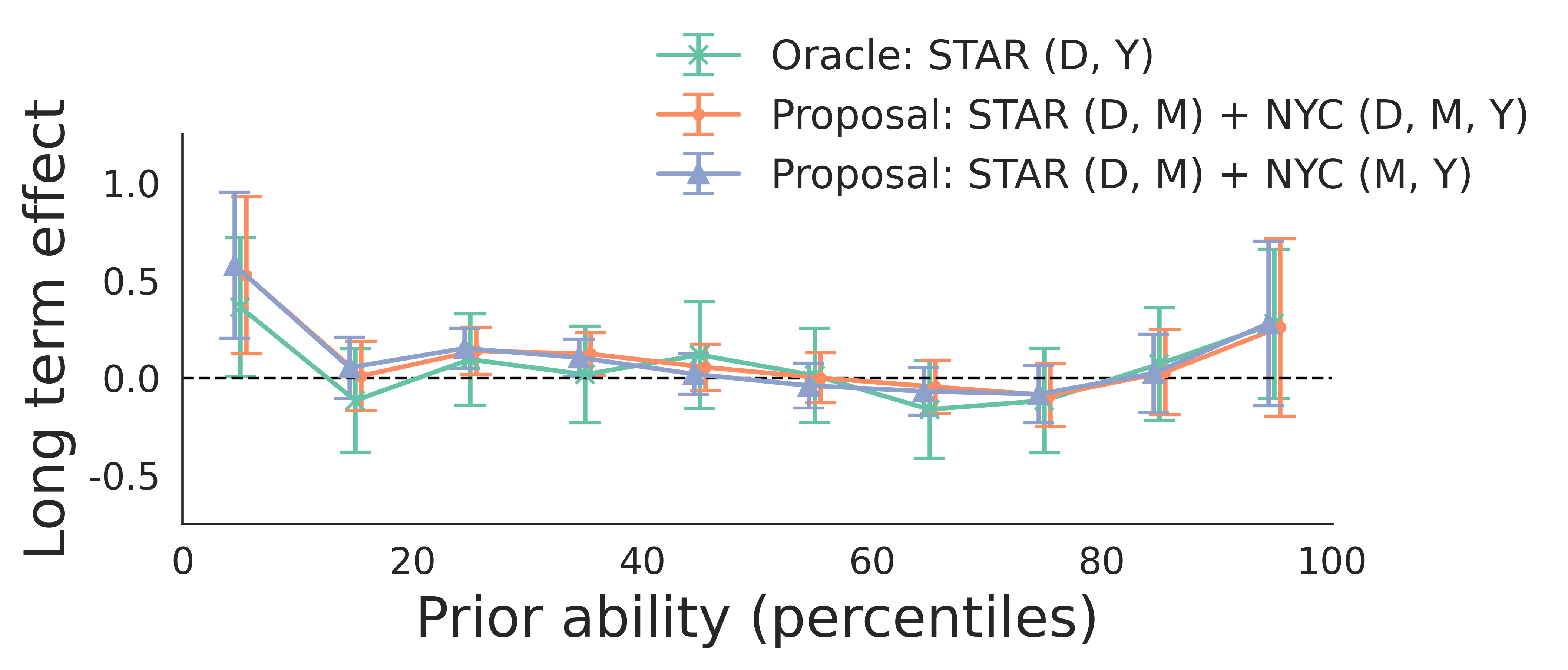}
        }
        \vspace{-40pt}
    \caption{Neural network, sixth grade}
\end{subfigure}


  \captionsetup[subfigure]{justification=Centering}
\begin{subfigure}[t]{0.48\textwidth}
         \centering
        \resizebox{\textwidth}{!}{%
       \includegraphics[width=\textwidth]{figures/plot_longterm_cate_Y7_rkhs.png}
        }
        \vspace{-40pt}
    \caption{RKHS, seventh grade}
\end{subfigure}\hspace{\fill} 
\begin{subfigure}[t]{0.48\textwidth}
          \centering
        \resizebox{\textwidth}{!}{%
      \includegraphics[width=\textwidth]{figures/plot_longterm_cate_Y7_agmm2.png}
        }
        \vspace{-40pt}
    \caption{Neural network, seventh grade}
\end{subfigure}


  \captionsetup[subfigure]{justification=Centering}
\begin{subfigure}[t]{0.48\textwidth}
         \centering
        \resizebox{\textwidth}{!}{%
       \includegraphics[width=\textwidth]{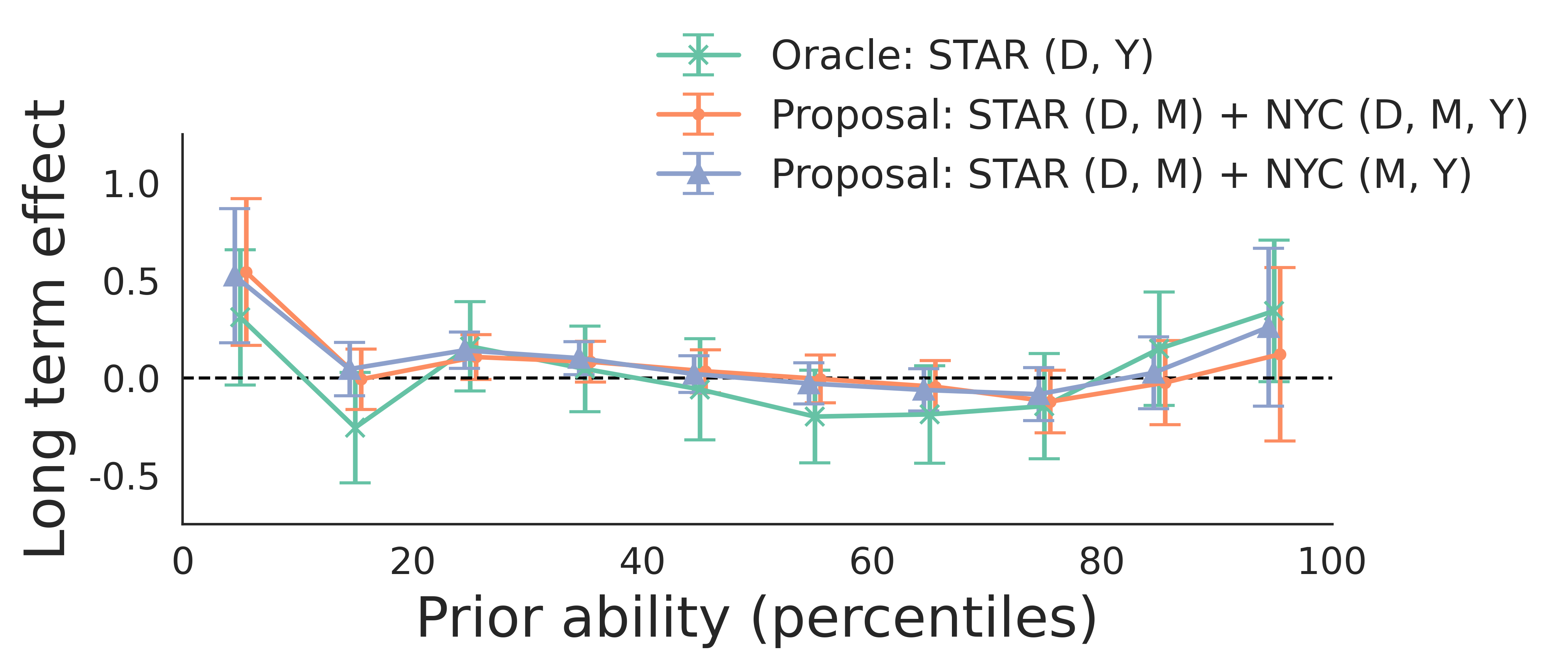}
        }
        \vspace{-40pt}
    \caption{RKHS, eight grade}
\end{subfigure}\hspace{\fill} 
\begin{subfigure}[t]{0.48\textwidth}
          \centering
        \resizebox{\textwidth}{!}{%
      \includegraphics[width=\textwidth]{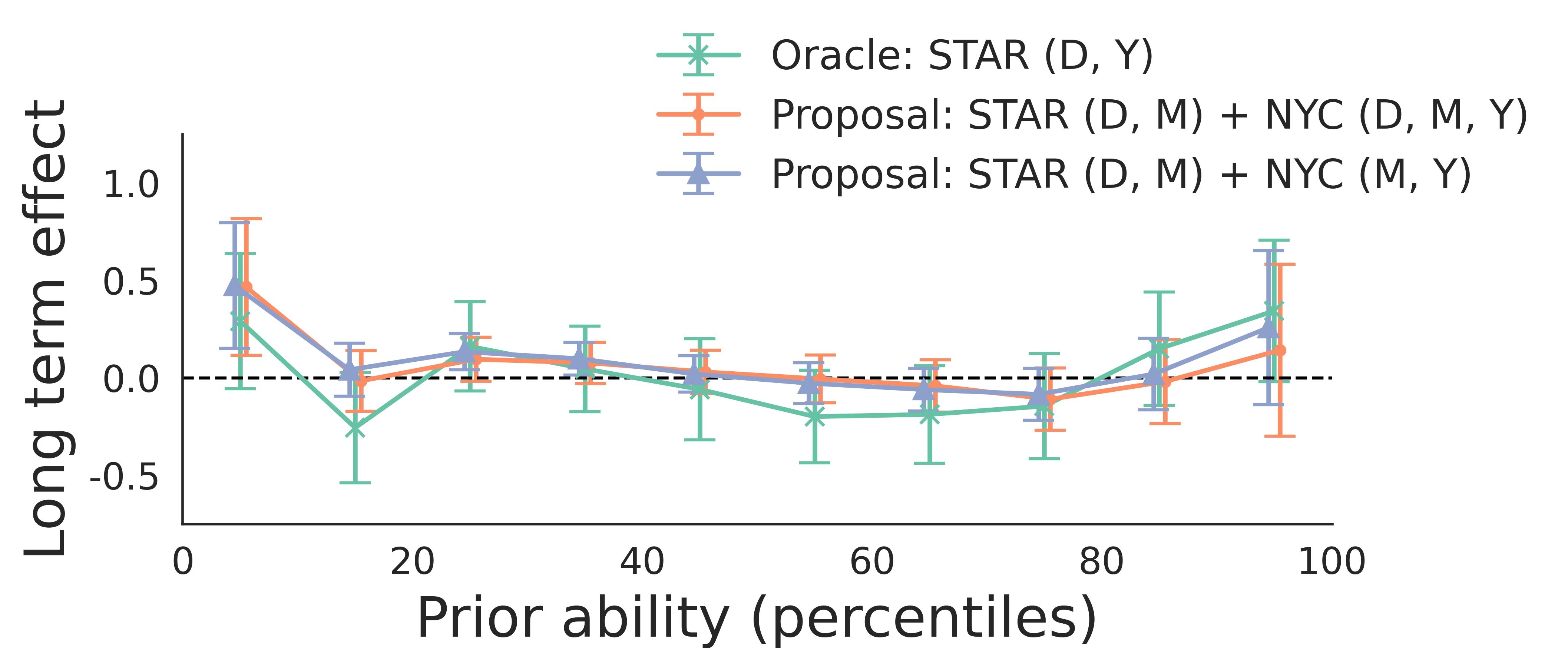}
        }
        \vspace{-40pt}
    \caption{Neural network, eight grade}
\end{subfigure}

\caption{Heterogeneous long term treatment effects with respect to prior ability, over different horizons.}\label{fig:longterm_cate_grades}
\end{figure}

\section{Computation details}\label{sec:tuning}

In this appendix, we give RKHS and neural network derivations for Algorithm~\ref{algo:simultaneous}:
\begin{align*}
    (\hat{g},\hat{h})&=\argmin _{g\in\mathcal{G}, h \in \mathcal{H}} 
    \max_{f' \in \mathcal{F}'} \mathbb{E}_n\left[2\left\{g(A)-Y\right\} f'(C')-f'(C')^2\right]
     +\mu'\E_n\{g(A)^2\} \\
    &\quad +
    \max_{f \in \mathcal{F}} \mathbb{E}_n\left[2\left\{h(B)-g(A)\right\} f(C)-f(C)^2\right]   
    +\mu\E_n\{h(B)^2\}. 
\end{align*}
Results for Algorithms~\ref{algo:sequential} and~\ref{algo:npiv_general} are similar and can be shared upon request. So can derivations for lasso and random forest function spaces.

In formal remarks, we also study the ``subsetted'' modification
\begin{align*}
    (\hat{g},\hat{h})&=\argmin _{g\in\mathcal{G}, h \in \mathcal{H}} 
    \max_{f' \in \mathcal{F}} \mathbb{E}_p\left[2\left\{g(A)-Y\right\} f'(C')-f'(C')^2\right]
     +\mu'\E_n\{g(A)^2\} \\
    &\quad +
    \max_{f \in \mathcal{F}} \mathbb{E}_q\left[2\left\{h(B)-g(A)\right\} f(C)-f(C)^2\right]   
    +\mu\E_n\{h(B)^2\}
\end{align*}
where $[p]$ and $[q]$ partition $[n]=(1,...,n)$, so $p+q=n$. This modification applies the estimator to conditional moments over different subpopulations  indexed by $[p]$ and $[q]$.

For the index set $[p]$, let $I_{[p]}\in\mathbb{R}^{p\times n}$ be the matrix of ones and zeros such that $V_{[p]}=I_{[p]}V$ gives the elements of $V$ whose indices are in $[p]$.

Finally,  we abbreviate $\mu I=\mu$ and $\mu' I=\mu'$ when it is clear from context.

\subsection{RKHS: Closed form}\label{sec:rkhs}

Let $V_{g,h}'=g(A)-Y$ and $V_{g,h}=h(B)-g(A)$. Let $\Phi_C:\mathcal{F}\rightarrow\mathbb{R}^n$ be an operator with $i$th row $\langle \phi(C_i),\cdot \rangle_{\mathcal{F}}$. Define $\Phi_{C'}$ analogously, replacing $C_i$ with $C_i'$. Let $K_C$ and $K_{C'}$ be the corresponding kernel matrices.

\textbf{Maximizers.}

\begin{lemma}[Existence of maximizers]\label{lemma:max_exist}
  There exist coefficients $\hat{\gamma}_{g,h},\hat{\gamma}'_{g,h}\in\mathbb{R}^n$ such that maximizers take the form $\hat{f}_{g,h}=\Phi_C^* \hat{\gamma}_{g,h}$ and $\hat{f}'_{g,h}=\Phi_{C'}^*\hat{\gamma}'_{g,h}$.
\end{lemma}

\begin{remark}[Subsetted estimator]\label{remark:max_exist}
    For the subsetted estimator, the same results hold but with $\hat{\gamma}_{g,h;[q]}\in\mathbb{R}^q $ and $\hat{\gamma}'_{g,h;[p]}\in\mathbb{R}^p$, acting on appropriately modified feature operators $\Phi^*_{C;    [q]}$ and $\Phi^*_{C';[p]}$.
\end{remark}

\begin{proof}
      Write the objectives for the maximizers as
      \begin{align*}
          \mathcal{E}'(f')&=\mathbb{E}_n\left\{2V'_{g,h} f'(C')-f'(C')^2\right\},\quad 
          \mathcal{E}(f)=\mathbb{E}_n\left\{2V_{g,h} f(C)-f(C)^2\right\}.
      \end{align*}

We prove the former result; the latter is similar. By the Riesz representation theorem,
$$
\mathcal{E}(f)=\mathbb{E}_n\left\{2V_{g,h} \langle f, \phi(C)\rangle_{\mathcal{F}}-\langle f, \phi(C)\rangle_{\mathcal{F}}^2\right\}.
$$
For an RKHS, evaluation is a continuous functional represented as the inner product with the feature map. Due to the ridge penalty, the stated objective has a maximizer $\hat{f}_{g,h}$ that obtains the maximum.

To lighten notation, we suppress the indexing of $\hat{f}_{g,h}$ by $(g,h)$ for the rest of this argument. Write $\hat{f}=\hat{f}_n+\hat{f}^{\perp}_n$ where $\hat{f}_n\in row(\Phi_C)$ and $\hat{f}_n^{\perp}\in null(\Phi_C)$. Substituting this decomposition of $\hat{f}$ into the objective, we see that
$
\mathcal{E}(\hat{f})=\mathcal{E}(\hat{f}_n).
$
Hence if $\hat{f}$ is a maximizer, then there exists $\hat{f}_n$ that is also a maximizer.
\end{proof}

\begin{lemma}[Formula of maximizers]\label{lemma:max}
    The explicit formula for the coefficients is $\hat{\gamma}_{g,h}=K_C^{\dagger}\vec{V}_{g,h}$ and $\hat{\gamma}'_{g,h}=K_{C'}^{\dagger}\vec{V}'_{g,h}$.
\end{lemma}

\begin{remark}[Subsetted estimator]\label{remark:max}
    For the subsetted estimator, the same results hold but with $\hat{\gamma}_{g,h;[q]} =K_{C;[q,q]}^{\dagger}\vec{V}_{g,h;[q]}$ and $\hat{\gamma}'_{g,h;[p]}=K_{C';[p,p]}^{\dagger}\vec{V}'_{g,h;[p]}$.
\end{remark}

\begin{proof}
   We prove the former result; the latter is similar. Write the objective as
   $
   \mathcal{E}(f)= 2\langle f, \hat{\mu}_{g,h}\rangle_{\mathcal{F}}-\langle f, \hat{T}_C f\rangle_{\mathcal{F}}, 
   $
   where $\hat{\mu}_{g,h}=\mathbb{E}_n\{V_{g,h}\phi(C)\}=\frac{1}{n}\Phi_C^* \vec{V}_{g,h}$ and $\hat{T}_C=\mathbb{E}_n\{\phi(C)\otimes \phi(C)^*\}=\frac{1}{n}\Phi_C^*\Phi_C$. Hence by Lemma~\ref{lemma:max_exist},
    $$
   \mathcal{E}(\gamma)= 2\langle \Phi_C^* \gamma_{g,h}, \hat{\mu}_{g,h}\rangle_{\mathcal{F}}-\langle \Phi_C^* \gamma_{g,h}, \hat{T}_C \Phi_C^* \gamma_{g,h}\rangle_{\mathcal{F}}=\frac{2}{n}\gamma_{g,h}^{\top}\Phi_C \Phi_C^* \vec{V}_{g,h}-\frac{1}{n}\gamma_{g,h}^{\top} \Phi_C \Phi_C^*\Phi_C \Phi_C^* \gamma_{g,h}.
   $$
   Since $K_C=\Phi_C\Phi_C^*$, the first order condition yields $K_C\vec{V}_{g,h}=K_C^2 \hat{\gamma}_{g,h}$, i.e. $\hat{\gamma}_{g,h}=K_C^{\dagger}\vec{V}_{g,h}$ where $K_C^{\dagger}$ is the pseudoinverse of $K_C$.
\end{proof}

\subsubsection{Minimizers}

Let $\Phi_A:\mathcal{H}\rightarrow\mathbb{R}^n$ be an operator with $i$th row $\langle \phi(A_i),\cdot \rangle_{\mathcal{H}}$. Define $\Phi_B$ analogously, replacing $A_i$ with $B_i$. Let $K_A$ and $K_B$ be the corresponding kernel matrices.

\begin{lemma}[Existence of minimizers]\label{lemma:min_exist}
    There exist coefficients $\alpha,\beta \in\mathbb{R}^n$ such that minimizers take the form $\hat{g}=\Phi_A^*\hat{\alpha}$ and $\hat{h}=\Phi_B^*\hat{\beta}$.
\end{lemma}

\begin{remark}[Subsetted estimator]\label{remark:min_exist}
    The result remains true for the subsetted estimator.
\end{remark}

\begin{proof}
    To begin, write the objective $\mathcal{E}(g,h)$ as 
  \begin{align*}
   \mathbb{E}_n\left\{2V'_{g,h} \hat{f}_{g,f}'(C')-\hat{f}_{g,h}'(C')^2\right\}
     +\mu'\E_n\{g(A)^2\} 
    +
     \mathbb{E}_n\left\{2V_{g,h} \hat{f}_{g,h}(C)-\hat{f}_{g,h}(C)^2\right\}   
    +\mu\E_n\{h(B)^2\}.
     \end{align*}
     By Lemmas~\ref{lemma:max_exist} and~\ref{lemma:max},
     \begin{align*}
         \hat{f}_{g,f}'(C')
     &=\langle \hat{f}_{g,f}',  \phi(C')\rangle_{\mathcal{F}}
     =\langle \Phi_{C'}^*K_{C'}^{\dagger}\vec{V}'_{g,h},  \phi(C')\rangle_{\mathcal{F}} \\
     \hat{f}_{g,h}(C)
     &=\langle \hat{f}_{g,f},  \phi(C)\rangle_{\mathcal{F}}
     =\langle \Phi_{C}^*K_{C}^{\dagger}\vec{V}_{g,h},  \phi(C)\rangle_{\mathcal{F}}.
     \end{align*}
     Hence $(g,h)$ only appear via $V'_{g,h}=g(A)-Y$, $V_{g,h}=h(B)-g(A)$, and directly as $g(A)$ and $h(B)$. In all of these expressions, they can be further expressed as $g(A)=\langle g,\phi(A)\rangle_{\mathcal{G}}$ and $h(B)=\langle h,\phi(B)\rangle_{\mathcal{H}}$, which is a linear functional. The overall objective is quadratic in such terms, so the stated objective has maximizers $(\hat{g},\hat{h})$ that obtain the maximum.

     By a similar argument to Lemma~\ref{lemma:max_exist}, for any $(\hat{g},\hat{h})$ attaining the maximum, $\mathcal{E}(\hat{g},\hat{h})=\mathcal{E}(\hat{g}_n,\hat{h}_n)$ where $\hat{g}_n\in row(\Phi_A)$ and $\hat{h}_n\in row(\Phi_B)$.
\end{proof}

\begin{lemma}[Properties of pseudo-inverse]\label{lemma:pseudo1}
    For any square symmetric matrix $K\in\mathbb{R}^{n\times n}$, its eigendecomposition is $K=U\Sigma U^{\top}$ where $\Sigma\in\mathbb{R}^{r\times r}$ has nonzero diagonal entries and $r\leq n$. Its pseudo-inverse is $K^{\dagger}=U\Sigma^{-1} U^{\top}$. Moreover $K^{\dagger}K=KK^{\dagger}=UU^{\top}$, which is a projection. 
\end{lemma}

To lighten notation, let $K_C^{\dagger}K_C=P_C$.

\begin{proposition}[Formula of minimizers]\label{prop:min}
    The explicit formula for the coefficients is
    \begin{align*}
    \hat{\beta} &= \left[K_A\left\{-P_C+\left(P_{C'}+P_C+\mu'\right)K_A\left(K_BP_CK_A\right)^{\dagger}K_B\left(P_C+\mu\right)\right\}K_B\right]^{\dagger}K_AP_{C'}Y\\
    \hat{\alpha}&=  \left(K_BP_CK_A\right)^{\dagger}K_B\left(P_C+\mu\right)K_B\hat{\beta}      
    \end{align*}
\end{proposition}

\begin{proof}
We proceed in steps.

\begin{enumerate}
    \item Write the objective $\mathcal{E}(g,h)$ as
    $$
   2\langle \hat{f}'_{g,h}, \hat{\mu}'_{g,h}\rangle_{\mathcal{F}}-\langle \hat{f}'_{g,h}, \hat{T}_{C'} \hat{f}'_{g,h}\rangle_{\mathcal{F}}  
     +\mu'\langle g,\hat{T}_A g\rangle_{\mathcal{G}} 
    +
    2\langle \hat{f}_{g,h}, \hat{\mu}_{g,h}\rangle_{\mathcal{F}}-\langle \hat{f}_{g,h}, \hat{T}_C \hat{f}_{g,h}\rangle_{\mathcal{F}}  
    +\mu\langle h,\hat{T}_B h\rangle_{\mathcal{H}}
    $$
    where 
    $\hat{\mu}'_{g,h}=\frac{1}{n}\Phi_{C'}^* \vec{V}'_{g,h}$, 
    $\hat{\mu}_{g,h}=\frac{1}{n}\Phi_C^* \vec{V}_{g,h}$, and the covariance operators are defined analogously to Lemma~\ref{lemma:max}. Hence by Lemma~\ref{lemma:max},
    \begin{align*}
        \mathcal{E}(g,h)
        &=2\langle \Phi_{C'}^*K_{C'}^{\dagger}\vec{V}'_{g,h}, \hat{\mu}'_{g,h}\rangle_{\mathcal{F}}-\langle \Phi_{C'}^*K_{C'}^{\dagger}\vec{V}'_{g,h}, \hat{T}_{C'} \Phi_{C'}^*K_{C'}^{\dagger}\vec{V}'_{g,h}\rangle_{\mathcal{F}}  
     +\mu'\langle g,\hat{T}_A g\rangle_{\mathcal{G}}  \\
    &+
    2\langle \Phi_{C}^*K_{C}^{\dagger}\vec{V}_{g,h}, \hat{\mu}_{g,h}\rangle_{\mathcal{F}}-\langle \Phi_{C}^*K_{C}^{\dagger}\vec{V}_{g,h}, \hat{T}_C \Phi_{C}^*K_{C}^{\dagger}\vec{V}_{g,h}\rangle_{\mathcal{F}}  
    +\mu\langle h,\hat{T}_B h\rangle_{\mathcal{H}} \\
    %
    %
    &=\frac{2}{n}
    (\vec{V}'_{g,h})^{\top}K_{C'}^{\dagger}\Phi_{C'}\Phi_{C'}^* \vec{V}'_{g,h}
    -
    \frac{1}{n}(\vec{V}'_{g,h})^{\top}K_{C'}^{\dagger}\Phi_{C'} \Phi_{C'}^*\Phi_{C'}  \Phi_{C'}^*K_{C'}^{\dagger}\vec{V}'_{g,h} 
     +\mu'\langle g,\hat{T}_A g\rangle_{\mathcal{G}}  \\
    &+
    \frac{2}{n}\vec{V}_{g,h}^{\top}K_{C}^{\dagger}\Phi_{C} \Phi_C^* \vec{V}_{g,h}
    -\frac{1}{n}\vec{V}_{g,h}^{\top}K_{C}^{\dagger}\Phi_{C} \Phi_{C}^*\Phi_{C} \Phi_{C}^*K_{C}^{\dagger}\vec{V}_{g,h}  
    +\mu\langle h,\hat{T}_B h\rangle_{\mathcal{H}} \\
    &=
    \frac{1}{n}(\vec{V}'_{g,h})^{\top} P_{C'}\vec{V}'_{g,h}
     +\mu'\langle g,\hat{T}_A g\rangle_{\mathcal{G}}  +
    \frac{1}{n}\vec{V}_{g,h}^{\top}P_C\vec{V}_{g,h}
    +\mu\langle h,\hat{T}_B h\rangle_{\mathcal{H}}.
    \end{align*}
    \item Let $Y,G,H\in\mathbb{R}^n$ be defined with $G_i=g(A_i)$ and $H_i=h(B_i)$. In this notation,
    \begin{align*}
        \frac{1}{n}(\vec{V}'_{g,h})^{\top} P_{C'}\vec{V}'_{g,h} 
        &
        =\frac{1}{n}(Y^{\top}P_{C'}Y-2G^{\top}P_{C'}Y+G^{\top}P_{C'}G)
        ,\quad \mu'\langle g,\hat{T}_A g\rangle_{\mathcal{G}} 
        = \frac{\mu'}{n} G^{\top}G \\
        \frac{1}{n}\vec{V}_{g,h}^{\top}P_C\vec{V}_{g,h}
        &
        =\frac{1}{n}(H^{\top}P_CH-2G^{\top}P_CH+G^{\top}P_CG),\quad 
        \mu\langle h,\hat{T}_B h\rangle_{\mathcal{H}} 
        =\frac{\mu}{n} H^{\top}H.
    \end{align*}
   Combining with $G=\Phi_Ag=K_A\alpha$ and $H=\Phi_B h=K_B\beta$ from Lemma~\ref{lemma:min_exist},
    \begin{align*}
        n\mathcal{E}(\alpha,\beta)&=Y^{\top}P_{C'}Y-2G^{\top}(P_{C'}Y+P_CH)+G^{\top}(P_{C'}+P_C+\mu')G+H^{\top}(P_C+\mu)H \\
        &=Y^{\top}P_{C'}Y-2\alpha^{\top}K_A(P_{C'}Y+P_CK_B\beta)+\alpha^{\top}K_A(P_{C'}+P_C+\mu') K_A\alpha\\
        &\quad +\beta^{\top}K_B (P_C+\mu) K_B\beta. 
    \end{align*}
   \item  The first order conditions yield
\begin{align*}
    0&=-2K_A(P_{C'}Y+P_CK_B\hat{\beta})+2 K_A(P_{C'}+P_C+\mu') K_A\hat{\alpha} \\
        0&=-2K_BP_C K_A\hat{\alpha}+2K_B (P_C+\mu) K_B \hat{\beta}.
       \end{align*}
    Rearranging and taking pseudo-inverses, we arrive at two equations:
\begin{align*}
    K_AP_{C'}Y+K_AP_CK_B\hat{\beta}&=K_A(P_{C'}+P_C+\mu') K_A\hat{\alpha} \\
    K_BP_C K_A\hat{\alpha}=K_B(P_C+\mu) K_B \hat{\beta} &\Longrightarrow \hat{\alpha} = \left(K_BP_CK_A\right)^{\dagger}K_B\left(P_C+\mu\right)K_B\hat{\beta}.
\end{align*}

\item Substituting the latter into the former,
    $$
      K_AP_{C'}Y+K_AP_CK_B\hat{\beta}=K_A(P_{C'}+P_C+\mu') K_A\left(K_BP_CK_A\right)^{\dagger}K_B\left(P_C+\mu\right)K_B\hat{\beta},
    $$
    and solving for $\hat{\beta}$,
  $$
    \hat{\beta} = \left[K_A\left\{
    -P_C+\left(P_{C'}+P_C+\mu'\right)K_A\left(K_BP_CK_A\right)^{\dagger}K_B\left(P_C+\mu\right)\right\}K_B\right]^{\dagger}K_AP_{C'}Y. \qedhere
    $$   
\end{enumerate}
\end{proof}

\begin{remark}[Subsetted estimator]\label{remark:min}
    The explicit formula for the coefficients is
    \begin{align*}
    \hat{\beta} &= \left[K_A\left\{-\tilde{P}_C+\left(\tilde{P}_{C'}+\tilde{P}_C+\mu'\right)K_A\left(K_B\tilde{P}_CK_A\right)^{\dagger}K_B\left(\tilde{P}_C+\mu\right)\right\}K_B\right]^{\dagger}K_A\tilde{P}_{C'}Y\\
    \hat{\alpha}&=  \left(K_B\tilde{P}_CK_A\right)^{\dagger}K_B\left(\tilde{P}_C+\mu\right)K_B\hat{\beta}      
    \end{align*}
where $\tilde{P}_{C'}=\frac{n}{p}I_{[p]}^{\top}P_{C';[p,p]}I_{[p]}$ and $\tilde{P}_{C}=\frac{n}{q}I_{[q]}^{\top}P_{C;[q,q]}I_{[q]}$. Note that $P_{C';[p,p]}=(K_{C';[p,p]})^{\dagger}K_{C';[p,p]}$ and  $K_{C';[p,p]}=I_{[p]}K_{C'}I_{[p]}^{\top}$.
\end{remark}

\begin{proof}
We proceed in steps.

\begin{enumerate}
    \item Write the objective $\mathcal{E}(g,h)$ as
    \begin{align*}
        &2\langle \hat{f}'_{g,h}, \hat{\mu}'_{g,h;[p]}\rangle_{\mathcal{F}}-\langle \hat{f}'_{g,h}, \hat{T}_{C';[p,p]} \hat{f}'_{g,h}\rangle_{\mathcal{F}}  
     +\mu'\langle g,\hat{T}_A g\rangle_{\mathcal{G}} \\
    &\quad +
    2\langle \hat{f}_{g,h}, \hat{\mu}_{g,h;[q]}\rangle_{\mathcal{F}}-\langle \hat{f}_{g,h}, \hat{T}_{C;[q,q]} \hat{f}_{g,h}\rangle_{\mathcal{F}}  
    +\mu\langle h,\hat{T}_B h\rangle_{\mathcal{H}}
     \end{align*}
    where 
    $\hat{\mu}'_{g,h;[p]}=\frac{1}{p}\Phi_{C';[p]}^* \vec{V}'_{g,h;[p]}$, 
    $\hat{\mu}_{g,h;[q]}=\frac{1}{q}\Phi_{C;[q]}^* \vec{V}_{g,h;[q]}$, and the covariance operators are defined analogously to Remark~\ref{remark:max}. Hence by Remark~\ref{remark:max} and the same argument as in Proposition~\ref{prop:min},
    \begin{align*}
        \mathcal{E}(g,h)
    &=
    \frac{1}{p}(\vec{V}'_{g,h;[p]})^{\top} P_{C';[p,p]}\vec{V}'_{g,h;[p]}
     +\mu'\langle g,\hat{T}_A g\rangle_{\mathcal{G}}  +
    \frac{1}{q}\vec{V}_{g,h;[q]}^{\top}P_{C;[q,q]}\vec{V}_{g,h;[q]}
    +\mu\langle h,\hat{T}_B h\rangle_{\mathcal{H}}.
    \end{align*}
    \item Let $Y,G,H\in\mathbb{R}^n$ be defined with $G_i=g(A_i)$ and $H_i=h(B_i)$ as before. Now, let $\tilde{P}_{C'}=\frac{n}{p}I_{[p]}^{\top}P_{C';[p,p]}I_{[p]} \in \mathbb{R}^{n\times n}$ and
    $\tilde{P}_C=\frac{n}{q}I_{[q]}^{\top}P_{C';[q,q]}I_{[q]} \in \mathbb{R}^{n\times n}$. Then
    \begin{align*}
        \frac{1}{p}(\vec{V}'_{g,h;[p]})^{\top} P_{C';[p,p]}\vec{V}'_{g,h;[p]} 
        &
        =\frac{1}{n}(Y^{\top}\tilde{P}_{C'} Y-2G^{\top}\tilde{P}_{C'}Y+G^{\top}\tilde{P}_{C'}G)
        \\
        \mu'\langle g,\hat{T}_A g\rangle_{\mathcal{G}} 
        &= \frac{\mu'}{n} G^{\top}G \\
        \frac{1}{q}\vec{V}_{g,h;[q]}^{\top}P_{C;[q,q]}\vec{V}_{g,h;[q]}
        &
        =\frac{1}{n}(H^{\top}\tilde{P}_CH-2G^{\top}\tilde{P}_CH+G^{\top}\tilde{P}_CG)\\ 
        \mu\langle h,\hat{T}_B h\rangle_{\mathcal{H}} 
        &=\frac{\mu}{n} H^{\top}H.
    \end{align*}
    Hereafter we use the same argument as in Proposition~\ref{prop:min}. \qedhere
\end{enumerate}
\end{proof}

\subsubsection{Nystr\"om approximation}

Computation of kernel methods may be demanding due to the inversions of matrices that scale with $n$ such as $K_B\in\mathbb{R}^{n\times n}$. One solution is Nystr\"om approximation. We now provide alternative expressions for the minimizers $(\hat{g},\hat{h})$ that lend themselves to Nystr\"om approximation, then describe the procedure.

\begin{lemma}[Minimizer sufficient statistics]\label{lemma:n2}
The minimizers may be expressed as 
\begin{align*}
    \hat{g}&=\left(\Phi_B^*P_C \Phi_A\right)^{\dagger}\Phi_B^* (P_C+\mu) \Phi_B \hat{h} \\
    \hat{h}&=\left[\Phi_A^*\left\{-P_C+\left(P_{C'}+P_C+\mu'\right)\Phi_A\left(\Phi_B^*P_C\Phi_A\right)^{\dagger}\Phi_B^*\left(P_C+\mu\right)\right\}\Phi_B\right]^{\dagger}\Phi_A^*P_{C'}Y.
\end{align*}
\end{lemma}

\begin{proof}
We proceed in steps.
\begin{enumerate}
    \item By the proof of Proposition~\ref{prop:min},
       with $G=\Phi_Ag$ and $H=\Phi_Bh$,
    \begin{align*}
        n\mathcal{E}(g,h)&=Y^{\top}P_{C'}Y-2G^{\top}(P_{C'}Y+P_CH)+G^{\top}(P_{C'}+P_C+\mu')G+H^{\top}(P_C+\mu)H \\
        &=Y^{\top}P_{C'}Y-2g^*\Phi_A^*(P_{C'}Y+P_C\Phi_Bh)+g^*\Phi_A^*(P_{C'}+P_C+\mu')\Phi_Ag+h^*\Phi_B^*(P_C+\mu)\Phi_Bh.
    \end{align*}
\item Informally, the first order conditions yield
\begin{align*}
    0&=-2\Phi_A^*(P_{C'}Y+P_C\Phi_B\hat{h})+2 \Phi_A^*(P_{C'}+P_C+\mu') \Phi_A\hat{g} \\
        0&=-2\Phi_B^*P_C \Phi_A\hat{g}+2\Phi_B^* (P_C+\mu) \Phi_B \hat{h}.
       \end{align*}
       See \citet[Proof of Proposition 2]{de2005risk} for the formal way of deriving the first order condition, which incurs additional notation. 
    Rearranging and taking pseudo-inverses, we arrive at two equations:
\begin{align*}
    \Phi_A^*(P_{C'}+P_C+\mu') \Phi_A\hat{g}&=\Phi_A^*(P_{C'}Y+P_C\Phi_B\hat{h}) \\
    \Phi_B^*P_C \Phi_A\hat{g}=\Phi_B^* (P_C+\mu) \Phi_B \hat{h} 
    &\Longrightarrow \hat{g} = \left(\Phi_B^*P_C \Phi_A\right)^{\dagger}\Phi_B^* (P_C+\mu) \Phi_B \hat{h}.
\end{align*}

\item Substituting the latter into the former,
    $$
      \Phi_A^*P_{C'}Y+\Phi_A^*P_C\Phi_B\hat{h}
      =\Phi_A^*(P_{C'}+P_C+\mu') \Phi_A \left(\Phi_B^*P_C \Phi_A\right)^{\dagger}\Phi_B^* (P_C+\mu) \Phi_B \hat{h},
    $$
    and solving for $\hat{h}$,
    $$
    \hat{h} = \left[\Phi_A^*\left\{-P_C+\left(P_{C'}+P_C+\mu'\right)\Phi_A\left(\Phi_B^*P_C\Phi_A\right)^{\dagger}\Phi_B^*\left(P_C+\mu\right)\right\}\Phi_B\right]^{\dagger}\Phi_A^*P_{C'}Y. \qedhere
    $$
    \end{enumerate}
\end{proof}

\begin{remark}[Subsetted estimator]\label{remark:n2}
    The subsetted minimizers may be expressed as
    \begin{align*}
    \hat{g}&=\left(\Phi_B^*\tilde{P}_C \Phi_A\right)^{\dagger}\Phi_B^* (\tilde{P}_C+\mu) \Phi_B \hat{h} \\
    \hat{h}&=\left[\Phi_A^*\left\{-\tilde{P}_C+\left(\tilde{P}_{C'}+\tilde{P}_C+\mu'\right)\Phi_A\left(\Phi_B^*\tilde{P}_C\Phi_A\right)^{\dagger}\Phi_B^*\left(\tilde{P}_C+\mu\right)\right\}\Phi_B\right]^{\dagger}\Phi_A^*\tilde{P}_{C'}Y.
\end{align*}
\end{remark}

\begin{proof}
    The argument is analogous to Remark~\ref{remark:min}.
\end{proof}

\begin{lemma}[Properties of pseudo-inverse]\label{lemma:pseudo2}
    Continuing the notation of Lemma~\ref{lemma:pseudo1}, if $\Phi=U\Sigma^{1/2}V^{\top}$ and $K=\Phi\Phi^*$, then $P=UU^{\top}=K^{\dagger}K=\Phi\Phi^{\dagger}$. Remark~\ref{remark:min} relates $\tilde{P}$ to $P$.
\end{lemma}

Combining Lemmas~\ref{lemma:n2} and~\ref{lemma:pseudo2}, we conclude that sufficient statistics for $(\hat{g},\hat{h})$ are feature operators. Within the feature operator $\Phi$, the $i$th row $\langle \phi(X_i),\cdot\rangle$ may be viewed as an infinite dimensional vector.

Nystr\"om approximation is a way to approximate infinite dimensional vectors with finite dimensional ones. It uses the substitution
$
\phi(x)\mapsto \check{\phi}(x)= (K_{\mathcal{S}\mathcal{S}})^{-\frac{1}{2}}K_{\mathcal{S}x}
$, where
$\mathcal{S}$ is a subset of $s=|\mathcal{S}|\ll n$ observations called landmarks. $K_{\mathcal{S}\mathcal{S}}\in\mathbb{R}^{s\times s}$ is defined such that $(K_{\mathcal{S}\mathcal{S}})_{ij}=k(X_i,X_j)$ for $i,j\in\mathcal{S}$. Similarly, $K_{\mathcal{S}x}\in\mathbb{R}^s$ is defined such that $(K_{\mathcal{S}x})_i=k(X_i,x)$ for $i\in\mathcal{S}$. 

In summary, the approximate sufficient statistics are of the form $\check{\Phi}\in\mathbb{R}^{n\times s}$, i.e. a matrix whose $i$th row $\langle \check{\phi}(X_i),\cdot\rangle$ may be viewed as a vector in $\mathbb{R}^s$.
\subsection{Neural network: Stochastic gradient descent}\label{sec:nn}

Suppose the function classes are neural networks. Algorithm~\ref{algo:simultaneous} takes the form
\begin{align*}
    (\hat{g},\hat{h})&=\arg \min _{\theta_1,\theta_2} 
    \max_{\omega_1, \omega_2} \bigg\{\mathbb{E}_n\left[2\left\{g_{\theta_1}(A)-Y\right\} f_{\omega_1}'(C')-f_{\omega_1}'(C')^2\right]
     +\mu'\E_n\{g_{\theta_1}(A)^2\} \\
    &\quad + \mathbb{E}_n\left[2\left\{h_{\theta_2}(B)-g_{\theta_1}(A)\right\} f_{\omega_2}(C)-f_{\omega_2}(C)^2\right]   
    +\mu\E_n\{h_{\theta_2}(B)^2\}\bigg\}
\end{align*}
where $\theta_1, \theta_2, \omega_1,\omega_2$ are weights of the neural networks.

We use the optimistic Adam algorithm of \cite{Daskalakis}, which is a type of stochastic gradient descent.  This approach is standard in the literature \citep{dikkala2020minimax}.

\begin{remark}[Subsetted estimator]
To adapt the algorithm, it suffices to make one simple modification: for observations outside of the subset, we set the adversary's loss to zero.
 \end{remark}

\newpage 

\bibliographystyle{apalike}

\end{document}